\documentclass[review]{elsarticle}
\pdfoutput=1
\usepackage{lineno,hyperref}
\usepackage{algorithm}
\usepackage{adjustbox}
\usepackage{graphicx}
\usepackage{amssymb}
\usepackage{amsmath}
\usepackage{amsthm}
\usepackage{amsfonts}
\usepackage{url}
\usepackage{hyperref}
\usepackage{multirow}
\usepackage{float}
\usepackage{subfig}
\usepackage{scalefnt}
\usepackage{lscape}
\usepackage{bm}
\usepackage{color}
\usepackage{rotating}
\usepackage {cleveref}
\usepackage{longtable}
\usepackage{algpseudocode}
\usepackage{tikz}
\usepackage{changepage}
\usepackage{geometry}
\usepackage{lipsum}
\usepackage{tabularx,ragged2e,booktabs,caption}
\usepackage{array}
\usepackage{paralist}
\usepackage{stackengine}
\usepackage{siunitx}
\usepackage[normalem]{ulem}
\journal{Applied Soft Computing}

\newtheorem{proposition}{Proposition}
\newcommand{\argmin}{\operatornamewithlimits{arg\;min}}

\newcolumntype{C}{>{\Centering\arraybackslash}X}
\def\BibTeX{{\rm B\kern-.05em{\sc i\kern-.025em b}\kern-.08em
		T\kern-.1667em\lower.7ex\hbox{E}\kern-.125emX}}

\newcommand\tikzmark[1]{%
  \tikz[remember picture,overlay]\node[inner sep=2pt] (#1) {};}

\algnewcommand\algorithmicinput{\textbf{Input:}}
\algnewcommand\INPUT{\item[\algorithmicinput]}

\algnewcommand\algorithmicoutpu{\textbf{Output:}}
\algnewcommand\OUTPUT{\item[\algorithmicoutpu]}

\begin{document}

\begin{frontmatter}

\title{Fuzzy clustering algorithms with distance metric learning and entropy regularization}

\author[CIn]{Sara I.R. Rodr\'iguez}
\ead{sirr@cin.ufpe.br}

\author[CIn]{Francisco de A.T. de Carvalho\corref{cor1}}
\ead{fatc@cin.ufpe.br}

\cortext[cor1]{Corresponding Author. tel.:+55-81-21268430; fax:+55-81-21268438}

\address[CIn]{Centro de Inform\'atica, Universidade Federal de
Pernambuco, Av. Jornalista Anibal Fernandes, s/n - Cidade Universit\'aria, CEP 50740-560, Recife (PE), Brazil}

\begin{abstract}
The clustering methods have been used in a variety of fields such as image processing, data mining, pattern recognition, and statistical analysis. Generally, the clustering algorithms consider all variables equally relevant or not correlated for the clustering task. Nevertheless, in real situations, some variables can be correlated or may be more or less relevant or even irrelevant for this task. This paper proposes partitioning fuzzy clustering algorithms based on Euclidean, City-block and Mahalanobis distances and entropy regularization. These methods are an iterative three steps algorithms which provide a fuzzy partition, a representative for each fuzzy cluster, and the relevance weight of the variables or their correlation by minimizing a suitable objective function. Several experiments on synthetic and real datasets, including its application to noisy image texture segmentation, demonstrate the usefulness of these adaptive clustering methods.
\end{abstract}

\begin{keyword}
Fuzzy clustering \sep Prototype-based clustering \sep Distance metric learning \sep Maximum-entropy regularization
\end{keyword}

\end{frontmatter}

%\linenumbers

\section{Introduction} \label{sect:introduction}
Clustering refers to a procedure that groups similar objects  %together 
while separating dissimilar ones apart \cite{bezdek2013pattern,havens2012fuzzy,wu2012generalization,pimentel2013multivariate}. Clustering algorithms are an efficient tool for image processing, data mining, pattern recognition, and statistical analysis \cite{vapnik1998statistical,gan2007data,namburu2017soft,zhao2015multiobjective}. The most popular clustering algorithms provide hierarchical and partitioning structures. Hierarchical methods deliver an output represented by a hierarchical structure of groups known as a dendrogram, i.e., a nested sequence of partitions of the input data, whereas partitioning methods create a partition of the input data into a fixed number of clusters based on either distance or density criteria of a dataset, typically by optimizing an objective function. An advantage of partitioning methods is its ability to manipulate large datasets, since the construction of dendrogram by the hierarchical approach may be computationally impractical in some applications. 

Partitioning clustering methods were performed mainly in two different ways: hard and fuzzy. In hard clustering, the clusters are disjoint and non-overlapping. In this case, any pattern may belong to one and only one group. On the other hand, in fuzzy clustering, an object may belong to all clusters with a specific membership degree. The membership degree is essential for discovering intricate relations which may arise between a given data object and all clusters \cite{kaufman2009finding}. The fuzzy clustering methods are peculiarly effective when the boundaries between groups of data are ambiguous.

In the past few decades, various partitioning clustering algorithms have been proposed as K-Means \cite{macqueen1967some}, Fuzzy C-Means (FCM) \cite{bezdek2013pattern} and the FCM, which takes into account entropy regularization (FCM-ER) \cite{sadaaki1997}. One drawback of these clustering algorithms is that they treat each feature of data point as equally relevant and independent from others. This assumption is not always be satisfied in real applications, especially in high dimensional data clustering where some variables may not be tightly related to the topic of interest. In contrast, a distance metric with good quality should identify essential features and discriminate relevant and irrelevant features \cite{modha2003feature,Chan2004,DiGov77,Huang_05,Tsai2008,zhu2014evolving}. Recent studies have shown that learning the distance function from the data can improve the performance effectively. The advantage of this idea is that the clustering method can recognize groups of different shapes and sizes.

The Weighted Fuzzy C-Means (WFCM) \cite{wang2004improving} method proposed by Wang et al. applies a weighted Euclidean distance in FCM formulation to improve the performance of the FCM clustering algorithm. Later, Deng et al. \cite{deng2011eew} developed an Enhanced Entropy-Weighting Subspace Clustering algorithm (EEW-SC) for high dimensional gene expression data clustering analysis by integrating the within-cluster and between-class information simultaneously. 
%Later, Ye et al. present the Adaptive Metric Learning (AML) \cite{ye2007adaptive} method where unsupervised distance metric learning and clustering are applied simultaneously to obtain maximum separability among different clusters. 
Ref. \cite{hanmandlu2013color} shows a fuzzy co-clustering approach using a multi-dimensional distance function as the dissimilarity measure and entropy as the regularization term for image segmentation. Later, Rodr\'iguez and Carvalho \cite{rodriguez2017fuzzy} presented a fuzzy clustering algorithm based on Adaptive Euclidean distance and entropy regularization named AFCM-ER. Aiming to simplify the presentation and the discussion of the experimental results of Section \ref{sect:experimentRes}, hereafter we adopt the notation AFCM-ER-LP-L2.

Traditionally, the squared Euclidean %$\ell_2$-norm
distance is used to compare the objects and the prototypes in the Fuzzy C-Means algorithms, but theoretical studies indicate that methods based on City-Block distances are more robust concerning the presence of outliers in the dataset than those based on squared Euclidean %$\ell_2$-norm 
distance. For this purpose, Ref. \cite{sadaaki1997} proposed an entropy regularized FCM clustering algorithm where the squared Euclidean %$\ell_2$-norm 
and %the 
City-Block %$\ell_1$-norm 
are used as a dissimilarity measure. Despite the usefulness of these algorithms, %(hereafter named, respectively, FCM-ER-L2 and FCM-ER-L1), 
the variables have the same importance for the clustering task. To solve the problem, Rodr\'iguez and Carvalho \cite{rodriguez2018fuzzy} introduced the fuzzy clustering algorithm based on Adaptive City-block distance and entropy regularization.% (hereafter referred to as AFCM-ER-LP-L1). 

This paper extends Refs. \cite{sadaaki1997,hanmandlu2013color} proposing new partitioning fuzzy clustering methods based on suitable adaptive distances and entropy regularization. 

The main contributions are as follows:
\begin{itemize}
	\item Adaptive distances are proposed, taking into account the relevance of the variables in the clustering process, which allows recognizing clusters of different shapes and sizes. Besides, these adaptive distances change in each iteration of the algorithm and can be different from one cluster to another. It was used as dissimilarity measures, the Euclidean distance that is one of the most used in the literature, but also the Mahalanobis distance to consider the covariance of the data points. This last distance is defined by a positive definite symmetric matrix that can be different from one cluster to another. Additionally, methods based on City-Block distance were employed because they show better robustness in noisy data environment than those based on Euclidean and Mahalanobis distances.	
	\item Additionally, we use both kinds of adaptive distances, local (the set of relevant variables is different for each cluster) and global (the set of relevant variables is the same to all clusters), because in some situations local adaptive distances may not be appropriate, for example, when the internal dispersion of the clusters are almost the same. As in Ferreira and de Carvalho \cite{ferreira2014kernel}, the derivation of the expressions of the relevance weights of the variables was done considering two cases. In the first case, the sum of the weights of the variables (global constraint) or the sum of the weights of the variables on each cluster (local constraint) must be equal to one \cite{huang2005automated}. In the second case, the product of the weights of the variables (global constraint) or the product of the weights of the variables on each cluster (local constraint) must be equal to one \cite{de2006partitional,diday1977classification,gustafson1979fuzzy}. An advantage of the product constraint over the sum is that it requires the tuning of fewer parameters.
\end{itemize}

The paper is organized as follows. Section \ref{sect:fcaer} reviews two works closely related to the proposed approaches. Section \ref{sect:proposedmethods} presents the proposed fuzzy clustering algorithms based on Adaptive %quadratic 
Euclidean, Mahalanobis and City-block distances and entropy regularization. Experiments and results are reported in Section \ref{sect:experimentRes}. Finally, conclusions are drawn in Section \ref{sect:conclusion}.

\section{%Fuzzy clustering algorithms with entropy regularization}
Related work
}\label{sect:fcaer}
Several maximum entropy clustering algorithms %and their variants
are available in the literature which aims to search for global regularity and obtain the smoothest reconstructions from the available data. This section briefly describes two algorithms closely related to our approaches.

Let $E = \{e_1, \ldots, e_N\}$ be a set of $N$ objects. Each object $e_i \, (1 \leq i \leq N)$ is described by the vector $\mathbf{x}_i = (x_{i1},\ldots,x_{iP})$, with $x_{ij} \in \rm I\!R \, (1 \leq j \leq P)$. Let $\mathcal{D}= \{ \mathbf{x}_1,\ldots,\mathbf{x}_N \}$ be the dataset.

It is assumed that the fuzzy clustering algorithms considered in this paper provide:
\begin{itemize}
	\item A fuzzy partition represented by the matrix $\mathbf{U} = (\mathbf{u}_1,\ldots,\mathbf{u}_N) = (u_{ik})_{\substack{1 \leq i \leq N\\1 \leq k \leq C}}$, where $u_{ik}$ is the membership degree of object $e_i$ into the fuzzy cluster $k$ and $\mathbf{u}_i = (u_{i1},\ldots,u_{iC})$;
	\item A matrix $\mathbf{G} = (\mathbf{g}_1, \ldots, \mathbf{g}_C)= (g_{kj})_{\substack{1 \leq k \leq C\\1 \leq j \leq P}}$ where the component $\mathbf{g}_k = (g_{k1}, \ldots,  g_{kP})$ is the representative (prototype)  of fuzzy cluster $k$, where $g_{kj} \in \rm I\!R$.
\end{itemize}
\subsection{%The FCM-ER-L2 and FCM-ER-L1 algorithms
Fuzzy clustering algorithms based on Euclidean and City-Block distances and entropy regularization
}
The first approach \cite{sadaaki1997}, hereafter named FCM-ER, is a %the 
variant of the FCM algorithm, which takes into account entropy regularization. It involves the minimization of the following objective function:

\begin{equation} \label{eq:fcmer}
J_{FCM-ER}=\sum_{k=1}^{C} \sum_{i=1}^{N}(u_{ik}) d(\textbf{x}_{i}, \textbf{g}_{k}) + T_u \sum_{k=1}^{C} \sum_{i=1}^{N} (u_{ik}) \ln(u_{ik})
\end{equation}

Subject to: $u_{ik} \in [0,1]$ and $\sum_{k=1}^{C}(u_{ik})=1$

In Equation \ref{eq:fcmer}, $d$ is a dissimilarity function %that
which compares the object $e_i$ and the cluster prototype $\mathbf{g}_k$.
FCM-ER is named FCM-ER-L2 when $d$ is the squared Euclidean distance such that $d(\mathbf{x}_{i}, \mathbf{g}_{k}) = \sum_{j=1}^{P}(x_{ij} - g_{kj})^2$.
Additionally, FCM-ER is named FCM-ER-L1 when $d$ is the City-Block distance
such that $d(\mathbf{x}_{i}, \mathbf{g}_{k}) = \sum_{j=1}^{P}|x_{ij} - g_{kj}|$. 

The first term in the Equation (\ref{eq:fcmer}) denotes the total heterogeneity of the fuzzy partition as the sum of the heterogeneity of the fuzzy clusters; the second term is related to the entropy which serves as a regulating factor during minimization process. The parameter $T_u$ is the weight factor in the entropy term.

\subsection{%The AFCM-ER-LP-L2 and AFCM-ER-LP-L1 algorithms
Fuzzy clustering algorithms based on Adaptive Euclidean and City-Block distances and entropy regularization
}
Rodr\'iguez and de Carvalho \cite{rodriguez2017fuzzy,rodriguez2018fuzzy} introduced AFCM-ER-LP, a variant of FCM-ER with adaptive distances, where in addition to the matrix $\mathbf{U}$ of membership degrees and the matrix $\mathbf{G}$ of prototypes, it is also provided:
%The AFCM-ER-LP-L2 and AFCM-ER-LP-L1 algorithms are another work of interest. Despite the matrix $\mathbf{U}$ of membership degrees and the vector $\mathbf{G}$ of prototypes, it provides:
\begin{itemize}
	\item A matrix of relevance weights $\mathbf{V} = (\mathbf{v}_1, \ldots, \mathbf{v}_C) = (v_{kj})_{\substack{1 \leq k \leq C\\1 \leq j \leq P}}$ where $v_{kj}$ is the relevance weight of the $j$-th variable of the $k-th$ fuzzy cluster and $\mathbf{v}_k = (v_{k1}, \ldots, v_{kP})$.
\end{itemize}

The corresponding algorithm of this method is based the minimization of the following objective function:

\begin{eqnarray} \label{eq:oberpl}
J_{AFCM-ER-LP} &=& \sum_{k=1}^{C} \sum_{i=1}^{N}(u_{ik}) \, d_{\mathbf{v}_k}(\mathbf{x}_{i}, \mathbf{g}_{k}) + T_u \sum_{k=1}^{C} \sum_{i=1}^{N} (u_{ik}) \ln(u_{ik})  \\
&=& \sum_{k=1}^{C} \sum_{i=1}^{N}(u_{ik}) \,  \sum_{j=1}^{P} (v_{kj}) d(x_{ij},g_{kj}) + T_u \sum_{k=1}^{C} \sum_{i=1}^{N} (u_{ik}) \nonumber
\end{eqnarray}

Subject to: $u_{ik} \in [0,1]$, $v_{kj} > 0$, $\sum_{k=1}^{C}(u_{ik})=1$ and $\prod_{j=1}^{P}(v_{kj})=1$. 

AFCM-ER-LP is named AFCM-ER-LP-L2 when $d_{\mathbf{v}_k} \, (1 \leq k \leq C)$ is the squared local adaptive Euclidean distance such that $d_{\mathbf{v}_k}(\mathbf{x}_{i}, \mathbf{g}_{k}) = \sum_{j=1}^{P} v_{kj} d(x_{ij},g_{kj})$, with $d(x_{ij},g_{kj}) = (x_{ij} - g_{kj})^2$.
Additionally, AFCM-ER-LP is named AFCM-ER-LP-L1 when $d_{\mathbf{v}_k}$ is the local adaptive City-Block distance
such that $d_{\mathbf{v}_k}(\mathbf{x}_{i}, \mathbf{g}_{k}) = \sum_{j=1}^{P} v_{kj} d(x_{ij},g_{kj})$, with $d(x_{ij},g_{kj}) = |x_{ij} - g_{kj}|$. 
%For AFCM-ER-LP-L2 algorithm \cite{rodriguez2017fuzzy}, squared $L_2$-norm is used as dissimilarity such that $d(\mathbf{x}_{i},\mathbf{g}_{k}) = \sum_{j=1}^{P}(x_{ij} - g_{kj})^2$. For AFCM-ER-LP-L1 algorithm \cite{rodriguez2018fuzzy}, $L_1$-norm is used as dissimilarity for clustering and $d(\mathbf{x}_{i}, \mathbf{g}_{k}) = \sum_{j=1}^{P}|x_{ij} - g_{kj}|$.

The first %distance-based 
term defines the shape and size of the clusters and encourages agglomeration, while the second term is the negative entropy and is used to control the membership degree. $T_u$ is a weighting parameter that specifies the fuzziness degree; increasing $T_u$ increases the fuzziness of the clusters.

\section{The proposed clustering algorithms with automatic variable selection and entropy regularization}\label{sect:proposedmethods}

This section presents new partitioning fuzzy clustering algorithms based on feature-weight learning that measure the heterogeneity of the fuzzy partition as the sum of the heterogeneity in each fuzzy cluster, where the distance-based term defines the shape and size of the groups and encourages agglomeration. Additionally, it is employed an entropy term which serves as a regulating factor during the minimization process.

One of the proposed adaptive distance is defined by a local co-variance matrix introduced by Gustafson and Kessel \cite{gustafson1979fuzzy} that changes in each iteration of the algorithm and is different from one cluster to another. In this case, the algorithm is named AFCM-ER-Mk and it is based on the minimization of the following objective function:

\begin{eqnarray}\label{eq:functionGMk}
J_{AFCM-ER-Mk} &=& \sum_{k=1}^{C}\sum_{i=1}^{N} (u_{ik}) \, d_{\mathbf{M}_k}(\mathbf{x}_{i}, \mathbf{g}_{k}) + T_u\sum_{k=1}^{C}\sum_{i=1}^{N} (u_{ik}) \ln(u_{ik}) \\
&=& \sum_{k=1}^{C}\sum_{i=1}^{N} (u_{ik}) (\mathbf{x}_i-\mathbf{g}_k)^T \mathbf{M}_k(\mathbf{x}_i-\mathbf{g}_k) + T_u\sum_{k=1}^{C}\sum_{i=1}^{N} (u_{ik}) \ln(u_{ik}) \nonumber
\end{eqnarray}

Subject to: $u_{ik} \in [0,1]$, $\sum_{k=1}^{C}(u_{ik})=1$ and $det(\mathbf{M}_k)=1$.

If the adaptive distance is defined by a global co-variance matrix that changes in each iteration of the algorithm and is the same for all clusters, the algorithm is named AFCM-ER-M and it is based on the minimization of the following objective function:

\begin{eqnarray}\label{eq:functionGM}
J_{AFCM-ER-M} &=& \sum_{k=1}^{C}\sum_{i=1}^{N} (u_{ik}) \, d_{\mathbf{M}}(\mathbf{x}_{i}, \mathbf{g}_{k}) + T_u\sum_{k=1}^{C}\sum_{i=1}^{N} (u_{ik}) \ln(u_{ik}) \\
&=& \sum_{k=1}^{C}\sum_{i=1}^{N} (u_{ik}) (\mathbf{x}_i-\mathbf{g}_k)^T \mathbf{M}(\mathbf{x}_i-\mathbf{g}_k) + T_u\sum_{k=1}^{C}\sum_{i=1}^{N} (u_{ik}) \ln(u_{ik})
\end{eqnarray}

Subject to: $u_{ik} \in [0,1]$, $\sum_{k=1}^{C}(u_{ik})=1$ and $det(\mathbf{M})=1$.

For both cases, besides the matrix of membership degrees and the vector of prototypes, %it is also returned 
the global co-variance matrix $\mathbf{M}$ and the local co-variance matrix $\mathbf{M}_k$ %estimated locally for each cluster 
are also returned for AFCM-ER-M and AFCM-ER-Mk, respectively.

Another proposed adaptive distance takes into account the relevance of the variables for the clustering task. As an extension of Ref. \cite{hanmandlu2013color}, this set of relevant variables is the same to all clusters and the sum of the variables weights is equal to one ($v_{j}\geq 0$ and $\sum_{j=1}^{P}v_{j}=1$). This variant is named AFCM-ER-GS and the corresponding algorithms involve the minimization of:

\begin{eqnarray}\label{eq:functionsg}
J_{AFCM-ER-GS} &=& \sum_{k=1}^{C}\sum_{i=1}^{N} (u_{ik}) \, d_{\mathbf{v}}(\mathbf{x}_{i}, \mathbf{g}_{k}) + T_u \sum_{k=1}^{C} \sum_{i=1}^{N} (u_{ik}) \ln(u_{ik}) + T_v \sum_{j=1}^{P} (v_{j}) \ln(v_{j}) \\
&=& \sum_{k=1}^{C}\sum_{i=1}^{N} (u_{ik}) \sum_{j=1}^{P} (v_{j}) d(x_{ij},g_{kj}) + T_u \sum_{k=1}^{C} \sum_{i=1}^{N} (u_{ik}) \ln(u_{ik}) + T_v \sum_{j=1}^{P} (v_{j}) \ln(v_{j}) \nonumber
\end{eqnarray}

With $\mathbf{v}=(v_1,\ldots,v_P)$ and subject to: $u_{ik} \in [0,1]$, $v_{j} \in [0,1]$, $\sum_{k=1}^{C}(u_{ik})=1$ and $\sum_{j=1}^{P}(v_{j})=1$. $T_u$ and $T_v$ are weighting parameters that specify the fuzziness degree. Increasing the values of $T_u$ and $T_v$ increases the fuzziness of the clusters.
%$T_u$ is a weighting parameter that specifies the fuzziness degree.

AFCM-ER-GS is named AFCM-ER-GS-L2 when $d_{\mathbf{v}}$ is the squared global adaptive Euclidean distance such that $d_{\mathbf{v}}(\mathbf{x}_{i}, \mathbf{g}_{k}) = \sum_{j=1}^{P} v_{j} d(x_{ij},g_{kj})$, with $d(x_{ij},g_{kj}) = (x_{ij} - g_{kj})^2$. In this case, the objective function becomes:

\begin{align}\label{eq:functionsgl2}
J_{AFCM-ER-GS-L2}=\sum_{k=1}^{C}\sum_{i=1}^{N} (u_{ik}) \sum_{j=1}^{P} (v_{j}) (x_{ij}-g_{kj})^2 + T_u \sum_{k=1}^{C} \sum_{i=1}^{N} (u_{ik}) \ln(u_{ik}) + T_v \sum_{j=1}^{P} (v_{j}) \ln(v_{j})
\end{align}

Furthermore, AFCM-ER-GS is named AFCM-ER-GS-L1 when $d_{\mathbf{v}}$ is the global adaptive City-Block distance such that $d_{\mathbf{v}}(\mathbf{x}_{i}, \mathbf{g}_{k}) = \sum_{j=1}^{P} v_{j} d(x_{ij},g_{kj})$, with $d(x_{ij},g_{kj}) = |x_{ij} - g_{kj}|$. In this case, the objective function becomes:

\begin{align}\label{eq:functionsgl1}
J_{AFCM-ER-GS-L1}=\sum_{k=1}^{C}\sum_{i=1}^{N} (u_{ik}) \sum_{j=1}^{P} (v_{j}) |x_{ij}-g_{kj}| + T_u \sum_{k=1}^{C} \sum_{i=1}^{N} (u_{ik}) \ln(u_{ik}) + T_v \sum_{j=1}^{P} (v_{j}) \ln(v_{j})
\end{align}

Another alternative is that the product of the weights of the variables is equal to one. This dissimilarity function is parameterized by the vector of relevance weights $\mathbf{v}=(v_{1},...,v_{P})$, in which $v_{j}>0$ and $\prod_{j=1}^{P}v_{j}=1$. The advantage compared to the sum is that it requires the setting  %tuning 
of one less parameter. This aproach is named AFCM-ER-GP and its objective function is defined as:

\begin{eqnarray}\label{eq:functionpg}
J_{AFCM-ER-GP} &=& \sum_{k=1}^{C}\sum_{i=1}^{N} (u_{ik}) \, d_{\mathbf{v}}(\mathbf{x}_{i}, \mathbf{g}_{k}) + T_u \sum_{k=1}^{C} \sum_{i=1}^{N} (u_{ik}) \ln(u_{ik}) \\
&=& \sum_{k=1}^{C}\sum_{i=1}^{N} (u_{ik}) \sum_{j=1}^{P} (v_{j}) d(x_{ij},g_{kj}) + T_u \sum_{k=1}^{C} \sum_{i=1}^{N} (u_{ik}) \ln(u_{ik}) \nonumber
\end{eqnarray}

Subject to: $u_{ik} \in [0,1]$, $v_{j} > 0$, $\sum_{k=1}^{C}(u_{ik})=1$ and $\prod_{j=1}^{P}(v_{j})=1$. $T_u$ is a weighting parameter that specifies the fuzziness degree.

AFCM-ER-GP is named AFCM-ER-GP-L2 when $d_{\mathbf{v}}$ is the squared global adaptive Euclidean distance. In this case, the objective function becomes:

\begin{align}\label{eq:functionpgl2}
J_{AFCM-ER-GP-L2}=\sum_{k=1}^{C}\sum_{i=1}^{N} (u_{ik}) \sum_{j=1}^{P} (v_{j}) (x_{ij}-g_{kj})^2
+T_u\sum_{k=1}^{C}\sum_{i=1}^{N}(u_{ik}) \ln(u_{ik})
\end{align}

Furthermore, AFCM-ER-GP is named AFCM-ER-GP-L1 when $d_{\mathbf{v}}$ is the global adaptive City-Block distance. In this case, the objective function becomes:

\begin{align}\label{eq:functionpgl1}
J_{AFCM-ER-GP-L1}=\sum_{k=1}^{C}\sum_{i=1}^{N} (u_{ik}) \sum_{j=1}^{P} (v_{j}) |x_{ij}-g_{kj}|
+T_u\sum_{k=1}^{C}\sum_{i=1}^{N}u_{ik}\ln(u_{ik})
\end{align}

The algorithms AFCM-ER-GS-L2, AFCM-ER-GS-L1, AFCM-ER-GP-L2 and AFCM-ER-GP-L1 return the matrix of membership degrees, the vector of prototype for each fuzzy cluster and the vector of relevance weights $\mathbf{v}=(v_{1},...,v_{P})$, where $v_j$ is the relevance weight of the $j$-th variable estimated globally.

Additionally, a %suitable 
variable-wise dissimilarity with relevance weight for the variables estimated locally is also considered, where the sum of the wights is equal to one \cite{huang2005automated} and the City-Block distance compares the objects and the prototypes. The dissimilarity function is parameterized by the vector of relevance weights $\mathbf{v}_k=(v_{k1},...,v_{kP})$, in which $v_{kj}\geq 0$ and $\sum_{j=1}^{P}v_{kj}=1$, and it is associated with the $k$-th fuzzy cluster $(k=1,...,C)$. This approach is named AFCM-ER-LS-L1 and its objective function is defined as:

\begin{align}\label{eq:functionsl}
J_{AFCM-ER-LS-L1}=\sum_{k=1}^{C}\sum_{i=1}^{N} (u_{ik}) \sum_{j=1}^{P} (v_{kj}) |x_{ij}-g_{kj}| + T_u \sum_{k=1}^{C} \sum_{i=1}^{N} (u_{ik}) \ln(u_{ik}) + T_v \sum_{j=1}^{P} (v_{kj}) \ln(v_{kj})
\end{align}

Subject to: $u_{ik} \in [0,1]$, $v_{kj} \in [0,1]$, $\sum_{k=1}^{C}(u_{ik})=1$ and $\sum_{j=1}^{P}(v_{j})=1$. $T_u$ and $T_v$ are weighting parameters that control the degree of fuzziness of the clusters. %$T_u$ is a weighting parameter that specifies the fuzziness degree.

The AFCM-ER-LS-L1 algorithm, besides the matrix of membership degrees and the vector of prototypes, returns the matrix of relevance weights $\mathbf{V} = (\mathbf{v}_1, \ldots, \mathbf{v}_C) = (v_{kj})_{\substack{1 \leq k \leq C\\1 \leq j \leq P}}$ where $v_{kj}$ is the relevance weight of the $j$-th variable in the fuzzy cluster $k$ and $\mathbf{v}_k = (v_{k1}, \ldots, v_{kP})$.

\subsection{Optimization steps% of the algorithms
}
This section provides the optimization steps of the algorithms aiming to compute the prototypes, the fuzzy partition, and the covariance matrix, or the relevance weights of the variables. %For the algorithms, 
The minimization of the objective functions is performed iteratively in three steps (representation, weighting, and assignment).

\subsubsection{Representation step}\label{sect:prototype}
This step %section 
provides the solution for the optimal computation of the prototype vector associated with each fuzzy cluster. During the representation step, the matrix of membership degree $\mathbf{U}$, the global matrix $\mathbf{M}$ for AFCM-ER-M or the local %the 
matrices $\mathbf{M}_k$ %for each cluster 
for AFCM-ER-Mk and the relevance weights of the variables for the other approaches are kept fixed. Then, the adequacy criterion for the algorithms is minimized concerning to the %the matrix $\mathbf{G}$ of 
prototypes. 

It is observed that %the choice of 
the dissimilarity function plays an essential role in the computation of the %components of the 
prototypes. This paper provides an exact solution for each of the three possible choices of the dissimilarity functions.%: the Mahalanobis, Euclidean, and City-Block distances.

\textit{Case 1}: %Dissimilarity functions based on the Mahalanobis distance: 
If the dissimilarity function between the objects and the prototypes is the Mahalanobis distance $(\mathbf{x}_i-\mathbf{g}_k)^T \mathbf{M}(\mathbf{x}_i-\mathbf{g}_k)$, then taking the partial derivative of $J_{AFCM-ER-M}$ (see Eq. (\ref{eq:functionGM})) concerning $\mathbf{g}_{k}$ we have:

\begin{equation}\label{eq:dercentroidm}
  \frac{\partial J_{AFCM-ER-M}}{\partial \mathbf{g}_{k}}=-2\sum_{i=1}^{N}\mathbf{M}(\mathbf{x}_i-\mathbf{g}_k)=0
\end{equation}

\begin{flalign}\label{eq:protM}
\text{Solving Equation \ref{eq:dercentroidm}, $\mathbf{g}_k$ is defined as:}\quad\quad
    \mathbf{g}_{k}=\frac{\sum_{i=1}^{N}u_{ik} \mathbf{x}_{i}}{\sum_{i=1}^{N}u_{ik}}&&
  \end{flalign}  
  
Similarly, if the dissimilarity function is $(\mathbf{x}_i-\mathbf{g}_k)^T \mathbf{M}_k(\mathbf{x}_i-\mathbf{g}_k)$ (Eq. \ref{eq:functionGMk}), $\mathbf{g}_k$ is computed as in Eq. (\ref{eq:protM}).

\textit{Case 2}: If the dissimilarity function between the objects and the prototypes is the squared global adaptive Euclidean distance, then taking the partial derivative of $J_{AFCM-ER-GS-L2}$ (see Eq. (\ref{eq:functionsgl2})) concerning $g_{kj}$ we have:
%Dissimilarity functions based on the squared Euclidean distance: The AFCM-ER-GS-L2 algorithm uses the squared Euclidean distance ($d(x_{ij},g_{kj})=(x_{ij}-g_{kj})^2$) to compare objects and prototypes. Taking the partial derivative of $J_{AFCM-ER-GS-L2}$ concerning $g_{kj}$ we have:

\begin{equation}\label{eq:dercentroidE}
  \frac{\partial J_{AFCM-ER-GS-L2}}{\partial g_{kj}}= -2 \sum_{i=1}^{N} (u_{ik}) (x_{ij}-g_{kj}) =0
\end{equation}

  \begin{flalign}\label{eq:prot}
    \text{Solving Equation \ref{eq:dercentroidE} yields:}\quad g_{kj}=\frac{\sum_{i=1}^{N}u_{ik} x_{ij}}{\sum_{i=1}^{N}u_{ik}}&&
  \end{flalign}
  
Following a similar reasoning, the prototype $g_{kj}$ of the $k$-th cluster which minimizes the clustering criterion $J_{AFCM-ER-GP-L2}$ (see Eq. (\ref{eq:functionpgl2})) is computed as in Eq. (\ref{eq:prot}).
  
\textit{Case 3}: %Dissimilarity functions based on the City-Block distance: 
If the dissimilarity function %between the objects and the prototypes 
is the global adaptive City-Block distance, then the minimization problem of Equations (\ref{eq:functionsgl1}), (\ref{eq:functionpgl1}) and (\ref{eq:functionsl}) with respect to $g_{kj}$ leads to the minimization of $\sum_{i=1}^{N}|y_i-az_i|$, where $y_i= (u_{ik}) x_{ij}$, $z_i=(u_{ik})$ and $a=g_{kj}$. There is no algebraic solution for this problem, but an algorithmic solution is known and to solve it the following algorithm \cite{jajuga1991l1} can be used:

\begin{enumerate}
\item Rank $(y_i,z_i)$ such that $\frac{y_{i1}}{z_{i1}} \leq ... \leq \frac{y_{iN}}{z_{iN}}$;
\item To $-\sum_{l=1}^{N}|z_{il}|$ add successive values of $2|z_{il}|$ and find r such that $-\sum_{l=1}^{N}|z_{il}|+2\sum_{s=1}^{r}|z_{is}|<0$ and $-\sum_{l=1}^{N}|z_{il}|+2\sum_{s=1}^{r+1}|z_{is}|>0$;
\item Then $a=\frac{y_{ir}}{z_{ir}}$;
\item If $-\sum_{l=1}^{N}|z_{il}|+2\sum_{s=1}^{r}|z_{is}|=0$ and $-\sum_{l=1}^{N}|z_{il}|+2\sum_{s=1}^{r+1}|z_{is}|=0$,then $a=\frac{\frac{y_{ir}}{z_{ir}}+\frac{y_{k(r+1)}}{z_{i(r+1)}}}{2}$.
\end{enumerate}

Alternatively, the %problem 
minimization of Equations (\ref{eq:functionsgl1}), (\ref{eq:functionpgl1}) and (\ref{eq:functionsl}) with respect to $g_{kj}$ also can be solved expressing \cite{jajuga1991l1}:

\begin{equation}\label{eq:met2gkj}
g_{kj}=\frac{\sum_{i=1}^{N} w_{ik} \, x_{ij}}{\sum_{i=1}^{N}w_{ik}}
   \quad\text{where}\quad 
w_{ik}=\frac{u^{(t-1)}_{ik}}{|x_{ij}-g_{kj}^{(t-1)}|}
\end{equation}

\subsubsection{Weighting step}\label{prop:calcv}
This step %section 
provides an optimal solution to compute %for the computation of 
the covariance matrix for the AFCM-ER-M and the AFCM-ER-Mk algorithms, or the relevance weight of the variables for the other approaches, globally for all clusters or locally for each cluster. 
During the weighting step, the prototypes vector $\mathbf{G}$, and the matrix of membership degrees $\mathbf{U}$ are kept fixed.

\begin{proposition}\label{prop:calv}
The covariance matrix or the weights of the variables, which minimize the proposed objective functions are calculated according to the adaptive distance function used:

(a) If the distance function is the local adaptive Mahalanobis distance $d_{\mathbf{M}_k}(\mathbf{x}_i,\mathbf{g}_k) = (\mathbf{x}_i-\mathbf{g}_k)^T \mathbf{M}_k(\mathbf{x}_i-\mathbf{g}_k)$, the positive definite symmetric matrices $\mathbf{M}_k$ which minimizes the criterion $J_{AFCM-ER-Mk}$ (Eq. (\ref{eq:functionGMk})) under $\det(M_k)=1$ is updated according to the following expression:

\begin{equation}\label{eq:covMk}
M_k=[\det(C_k)]^\frac{1}{P} C_k^{-1}\text{ with}\quad C_k=\sum^{N}_{i=1} (u_{ik}) (\mathbf{x}_i-\mathbf{g}_k)(\mathbf{x}_i-\mathbf{g}_k)^T
\end{equation}

(b) If the distance function is the global adaptive Mahalanobis distance $d_{\mathbf{M}}(\mathbf{x}_i,\mathbf{g}_k) = (\mathbf{x}_i-\mathbf{g}_k)^T \mathbf{M}(\mathbf{x}_i-\mathbf{g}_k)$, the positive definite symmetric matrix $\mathbf{M}$ which minimizes the criterion $J_{AFCM-ER-M}$ (Eq. (\ref{eq:functionGM})) under $\det(M)=1$ is updated according to:% the following expression:

\begin{equation}\label{eq:covM}
M=[\det(Q)]^\frac{1}{P} Q^{-1}\text{,}\quad Q=\sum^{C}_{k=1}C_k\text{ and}\quad C_k=\sum^{N}_{i=1} (u_{ik}) (\mathbf{x}_i-\mathbf{g}_k)(\mathbf{x}_i-\mathbf{g}_k)^T
\end{equation}

(c) If the adaptive distance function is given by $d_{\mathbf{v}}(\mathbf{x}_i,\mathbf{g}_k) = \sum_{j=1}^{P}v_{j}d(x_{ij},g_{kj})$, the vector of weights $\boldsymbol{v} = (v_{1}, \ldots, v_{P})$, which minimizes the criterion $J_{AFCM-ER-GS}$ (Eq. (\ref{eq:functionsg})) under $v_j$ $\epsilon$ $[0,1]$ $\forall$ $j$, and $\sum_{j=1}^{P}v_j=1$, has its components $v_j (j=1,...,P)$ computed according to the following expression:

\begin{equation}\label{eq:weightsg}
  v_j=\frac{exp\{-\frac{\sum_{k=1}^{C}\sum_{i=1}^{N} (u_{ik}) d(x_{ij},g_{kj})}{T_v}\}}{\sum_{w=1}^{P}exp\{-\frac{\sum_{k=1}^{C}\sum_{i=1}^{N} (u_{ik}) d(x_{iw},g_{kw})}{T_v}\}}
\end{equation}

If $d$ is the squared global adaptive Euclidean distance then:

\begin{equation}\label{eq:weightsgl2}
v_j=\frac{exp\{-\frac{\sum_{k=1}^{C}\sum_{i=1}^{N} (u_{ik}) (x_{ij}-g_{kj})^2}{T_v}\}}{\sum_{w=1}^{P}exp\{-\frac{\sum_{k=1}^{C}\sum_{i=1}^{N} (u_{ik}) (x_{iw}-g_{kw})^2}{T_v}\}}
\end{equation}

Finally, if $d$ is the City-Block distance then:

\begin{equation}\label{eq:weightsgl1}
v_j=\frac{exp\{-\frac{\sum_{k=1}^{C}\sum_{i=1}^{N} (u_{ik}) |x_{ij}-g_{kj}|}{T_v}\}}{\sum_{w=1}^{P}exp\{-\frac{\sum_{k=1}^{C}\sum_{i=1}^{N} (u_{ik}) |x_{iw}-g_{kw}|}{T_v}\}}
\end{equation}

(d) If the adaptive distance function is given by $d_{\mathbf{v}}(\mathbf{x}_i,\mathbf{g}_k) = \sum_{j=1}^{P}v_{j}d(x_{ij},g_{kj})$, the vector of weights $\boldsymbol{v} = (v_{1}, \ldots, v_{P})$ which minimizes the criterion $J_{AFCM-ER-GP}$ (Eq. (\ref{eq:functionpg})) under $v_j>0$ $\forall$ $j$ and $\prod_{j=1}^{P}v_{j}=1$, has its components $v_j(j=1,...,P)$ computed according to the expression:

\begin{equation}\label{eq:weightpg}
v_{j}=\frac{\left\{\prod_{w=1}^{P}\sum_{k=1}^{C}\sum_{i=1}^{N} (u_{ik}) d(x_{iw},g_{kw})\right\}^\frac{1}{P}}{\sum_{k=1}^{C}\sum_{i=1}^{N} (u_{ik}) d(x_{ij},g_{kj})}
\end{equation}

If $d$ is the squared global adaptive Euclidean distance then:

\begin{equation}\label{eq:weightpgl2}
v_{j}=\frac{\left\{\prod_{w=1}^{P}\sum_{k=1}^{C}\sum_{i=1}^{N}u_{ik}(x_{iw}-g_{kw})^2\right\}^\frac{1}{P}}{\sum_{k=1}^{C}\sum_{i=1}^{N}u_{ik}(x_{ij}-g_{kj})^2}
\end{equation}

Finally, if $d$ is the global adaptive City-Block distance then:

\begin{equation}\label{eq:weightpgl1}
v_{j}=\frac{\left\{\prod_{w=1}^{P}\sum_{k=1}^{C}\sum_{i=1}^{N} (u_{ik}) |x_{iw}-g_{kw}|\right\}^\frac{1}{P}}{\sum_{k=1}^{C}\sum_{i=1}^{N} (u_{ik}) |x_{ij}-g_{kj}|}
\end{equation}

(e) If the adaptive distance function is given by $\sum_{j=1}^{P}v_{kj}|x_{ij}-g_{kj}|$ the vector of weights $\boldsymbol{v}_k = (v_{k1}, \ldots, v_{kP})$ which minimizes the criterion $J_{AFCM-ER-LS-L1}$ (Eq. (\ref{eq:functionsl})) under $v_{kj} \in [0,1]$ $\forall$ $k,j$ and $\sum_{j=1}^{P}v_{kj}=1$ $\forall$ $k$, has its components $v_{kj}(k=1,...C, j=1,...,P)$ computed as follows:

\begin{equation}\label{eq:weightsl}
  v_{kj}=\frac{exp\{-\frac{\sum_{i=1}^{N} (u_{ik}) |x_{ij}-g_{kj}|}{T_v}\}}{\sum_{w=1}^{P}exp\{-\frac{\sum_{i=1}^{N} (u_{ik}) |x_{iw}-g_{kw}|}{T_v}\}}
\end{equation}

\begin{proof}
 The proof is given in \ref{proof-prop-calcv}.
\end{proof}
\end{proposition}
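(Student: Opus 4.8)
The plan is to treat every part as a constrained minimization of the relevant objective over a single block of parameters (the covariance matrix or the weight vector), with the membership matrix $\mathbf{U}$ and the prototypes $\mathbf{G}$ held fixed, and to solve each by the method of Lagrange multipliers. A useful preliminary observation is that the objectives decouple: in the local cases (a) and (e) the term involving $\mathbf{M}_k$ (respectively $\mathbf{v}_k$) depends only on cluster $k$, so each cluster is optimized separately; in the global cases (b), (c) and (d) there is a single block of parameters shared by all clusters. Throughout I would write $C_k = \sum_{i=1}^N u_{ik}(\mathbf{x}_i-\mathbf{g}_k)(\mathbf{x}_i-\mathbf{g}_k)^T$ for the fuzzy scatter matrix of cluster $k$.

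For the Mahalanobis parts (a) and (b), I would first rewrite the distance term using the trace identity $(\mathbf{x}_i-\mathbf{g}_k)^T \mathbf{M}_k (\mathbf{x}_i-\mathbf{g}_k) = \mathrm{tr}\!\left(\mathbf{M}_k (\mathbf{x}_i-\mathbf{g}_k)(\mathbf{x}_i-\mathbf{g}_k)^T\right)$, so that the cluster-$k$ contribution collapses to $\mathrm{tr}(\mathbf{M}_k C_k)$ in (a) and the total to $\mathrm{tr}(\mathbf{M} Q)$ with $Q=\sum_{k=1}^C C_k$ in (b). The problem is then to minimize a linear function of the matrix subject to $\det(\cdot)=1$. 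Forming the Lagrangian and differentiating with the standard matrix-calculus facts $\partial\,\mathrm{tr}(\mathbf{M}C_k)/\partial \mathbf{M} = C_k$ and the Jacobi formula $\partial \det(\mathbf{M})/\partial \mathbf{M} = \det(\mathbf{M})\,\mathbf{M}^{-1}$ (both using symmetry of the matrices), the stationarity condition gives $C_k = \lambda\,\mathbf{M}_k^{-1}$, i.e. $\mathbf{M}_k = \lambda\,C_k^{-1}$. Imposing $\det(\mathbf{M}_k)=1$ forces $\lambda^{P}\det(C_k)^{-1}=1$, hence $\lambda=\det(C_k)^{1/P}$, which yields Eq. (\ref{eq:covMk}); part (b) is identical with $C_k$ replaced by $Q$, giving Eq. (\ref{eq:covM}).

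For the weight vectors I would distinguish the two kinds of constraint. Under the sum constraint with entropy regularization, parts (c) and (e), the objective restricted to the weights has the form $\sum_j v_j D_j + T_v \sum_j v_j \ln v_j$, where $D_j$ collects the accumulated dissimilarities ($D_j=\sum_k\sum_i u_{ik} d(x_{ij},g_{kj})$ globally, or $D_{kj}=\sum_i u_{ik}|x_{ij}-g_{kj}|$ per cluster). Differentiating the Lagrangian with the multiplier for $\sum_j v_j=1$ gives $D_j + T_v(\ln v_j + 1) - \mu = 0$, so $v_j \propto \exp(-D_j/T_v)$; normalizing produces the softmax expressions (\ref{eq:weightsg}) and (\ref{eq:weightsl}), and the L2/L1 versions follow by substituting the corresponding $d$. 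Under the product constraint without entropy, part (d), the objective is simply $\sum_j v_j D_j$; the Lagrangian for $\prod_j v_j = 1$ gives $D_j - \mu/v_j = 0$ (using $\prod_w v_w = 1$), so $v_j = \mu/D_j$, and the constraint forces $\mu = (\prod_w D_w)^{1/P}$, giving Eq. (\ref{eq:weightpg}).

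The routine parts are the algebra of solving for the multipliers; the steps requiring the most care are the matrix differentiation in (a)--(b), where one must correctly invoke the Jacobi formula and exploit symmetry, and the verification that each stationary point is genuinely the global minimizer. For the sum-constrained weights this is immediate because $v\mapsto \sum_j v_j D_j + T_v\sum_j v_j\ln v_j$ is strictly convex on the simplex; for the product-constrained case (d) I would reparametrize via $t_j=\ln v_j$, turning the constraint into the linear $\sum_j t_j = 0$ and the objective into the convex $\sum_j D_j e^{t_j}$, so the critical point is again the unique minimum; for the Mahalanobis cases convexity of the trace over the $\det=1$ manifold (or a direct second-order check) confirms the minimum. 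The non-negativity requirements $v_j\in[0,1]$ hold automatically, since the exponential and reciprocal forms are positive and the normalizations bound them by one.
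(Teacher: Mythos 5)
Your proposal is correct and, in its core derivations, follows the same route as the paper's appendix proof: hold $\mathbf{U}$ and $\mathbf{G}$ fixed, form a Lagrangian for each parameter block (exploiting the decoupling across clusters in the local cases), solve the stationarity condition, and eliminate the multiplier through the constraint; your closed forms coincide exactly with Eqs.~(\ref{eq:covMk}), (\ref{eq:covM}), (\ref{eq:weightsg}), (\ref{eq:weightpg}) and (\ref{eq:weightsl}), and your rewriting of the Mahalanobis term as $\mathrm{tr}(\mathbf{M}C_k)$ is only a cosmetic variant of the paper's term-by-term identity $\partial(\mathbf{y}^T\mathbf{M}\mathbf{y})/\partial\mathbf{M}=\mathbf{y}\mathbf{y}^T$. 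Where you genuinely diverge is in certifying that the stationary points are minima. For the sum-constrained weights your strict-convexity remark is equivalent to the paper's explicit diagonal Hessian with entries $T_v/v_{kj}>0$. For the product-constrained case your reparametrization $t_j=\ln v_j$, which turns the problem into minimizing the convex function $\sum_j D_j e^{t_j}$ over the linear constraint $\sum_j t_j=0$, is in fact a \emph{stronger} certificate than the paper's, which only compares the critical value $P\{J_1\cdots J_P\}^{1/P}$ against the single feasible point $(1,\ldots,1)$ via the AM--GM inequality. Conversely, for the Mahalanobis blocks your justification is the loosest of the three: the objective is linear in $\mathbf{M}$, but the constraint manifold $\{\det\mathbf{M}=1\}$ is not convex, so \emph{convexity of the trace over the manifold} does not literally apply, and a generic second-order check on the Lagrangian is inconclusive since the determinant term is indefinite. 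The paper's own argument (eigendecomposition of $Q$ plus AM--GM, comparing only with $\mathbf{M}=\mathbf{I}$) is no more complete; a watertight version of either argument applies AM--GM directly to the positive eigenvalues of $\mathbf{M}Q$, giving $\mathrm{tr}(\mathbf{M}Q)\geq P\det(\mathbf{M}Q)^{1/P}=P\det(Q)^{1/P}$ for every feasible $\mathbf{M}$, with equality exactly at $\mathbf{M}=(\det Q)^{1/P}Q^{-1}$, or else invokes concavity of $\det(\cdot)^{1/P}$ on positive definite matrices so that $\{\det\mathbf{M}\geq 1\}$ is convex and the linear objective attains its minimum at the stationary boundary point. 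Modulo that one imprecise sentence, your proof matches or improves on the paper's.
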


\subsubsection{Assignment step}
This step provides the solution to compute the matrix $\mathbf{U}$ of membership degree. In the assignment step, %In this step, 
the cluster centroids $\mathbf{G}$ and the matrix $\mathbf{M}$ for AFCM-ER-M, $\mathbf{M}_k$ for AFCM-ER-Mk or the relevance weights of the variables for the other approaches are kept fixed.

\begin{proposition}\label{prop:calu}
The fuzzy partition represented by $\mathbf{U}=(\mathbf{u}_1,\dots,\mathbf{u}_N)$, where $\mathbf{u}_i=(u_{ik},\dots,u_{iC})$ which minimizes the clustering criterion is such that the membership degree $u_{ik}(i=1,\dots,N;k=1,\dots,C)$ of object $e_i$ in the $k$-th fuzzy cluster, under $u_{ik}\in [0,1]$ and $\sum_{k=1}^{C}u_{ik}=1$, is update according to the following assignment rule:

\begin{equation}\label{eq:membership1}
u_{ik}=\frac{\exp\left\{-\frac{\Delta(\mathbf{x}_i,\mathbf{g}_k)}{T_u}\right\}}{\sum_{w=1}^{C}\exp\left\{-\frac{\Delta(\mathbf{x}_i,\mathbf{g}_w)}{T_u}\right\}}
\end{equation}

\noindent where $\Delta$ is the %is the used adaptive 
distance function which compares the $i$-th object and the prototype of the fuzzy cluster $k$. Table \ref{tab:uikrules} specifies %presents 
the assignment rules to obtain the fuzzy partition according to the different adaptive distance functions.

\begin{table}[!htb]
 \caption{Assignment rules for the fuzzy partition according to the distance functions}\label{tab:uikrules}
    \centering
    \begin{tabular} {|c|c|c|}
    \hline
    Distance function $\Delta$ &Algorithms&Rules for $u_{ik}$\\
    \hline
    \multirow{2}{*}{$(\mathbf{x}_i-\mathbf{g}_k)^T \mathbf{M}(\mathbf{x}_i-\mathbf{g}_k)$} &\multirow{2}{*}{ AFCM-ER-M }& \multirow{2}{*}{$\frac{\exp\left\{-\frac{(\mathbf{x}_i-\mathbf{g}_k)^T \mathbf{M}(\mathbf{x}_i-\mathbf{g}_k)}{T_u}\right\}}{\sum_{w=1}^{C}\exp\left\{-\frac{(\mathbf{x}_i-\mathbf{g}_w)^T \mathbf{M}(\mathbf{x}_i-\mathbf{g}_w)}{T_u}\right\}}$}\\
    &&\\
        \hline
    \multirow{2}{*}{$(\mathbf{x}_i-\mathbf{g}_k)^T \mathbf{M}_k(\mathbf{x}_i-\mathbf{g}_k)$} & \multirow{2}{*}{AFCM-ER-Mk} & \multirow{2}{*}{$\frac{\exp\left\{-\frac{(\mathbf{x}_i-\mathbf{g}_k)^T \mathbf{M}_k(\mathbf{x}_i-\mathbf{g}_k)}{T_u}\right\}}{\sum_{w=1}^{C}\exp\left\{-\frac{(\mathbf{x}_i-\mathbf{g}_w)^T \mathbf{M}_w(\mathbf{x}_i-\mathbf{g}_w)}{T_u}\right\}}$}\\
    &&\\
    \hline
    \multirow{2}{*}{$\sum_{j=1}^P v_j (x_{ij}-g_{kj})^2$}& AFCM-ER-GS-L2 & \multirow{2}{*}{$\frac{\exp\left\{-\frac{\sum_{j=1}^{P}v_j(x_{ij}-g_{kj})^2}{T_u}\right\}}{\sum_{w=1}^{C}\exp\left\{-\frac{\sum_{j=1}^{P}v_j(x_{ij}-g_{wj})^2}{T_u}\right\}}$}\\
    & AFCM-ER-GP-L2 & \\
    \hline
     \multirow{2}{*}{$\sum_{j=1}^P v_{j} |x_{ij}-g_{kj}|$} & AFCM-ER-GS-L1 & \multirow{2}{*}{$\frac{\exp\left\{-\frac{\sum_{j=1}^{P}v_{j}|x_{ij}-g_{kj}|}{T_u}\right\}}{\sum_{w=1}^{C}\exp\left\{-\frac{\sum_{j=1}^{P}v_{j}|x_{ij}-g_{wj}|}{T_u}\right\}}$}\\
     & AFCM-ER-GP-L1 &\\
     \hline
     \multirow{2}{*}{$\sum_{j=1}^P v_{kj} |x_{ij}-g_{kj}|$} & \multirow{2}{*}{AFCM-ER-LS-L1} & \multirow{2}{*}{$\frac{\exp\left\{-\frac{\sum_{j=1}^{P}v_{kj}|x_{ij}-g_{kj}|}{T_u}\right\}}{\sum_{w=1}^{C}\exp\left\{-\frac{\sum_{j=1}^{P}v_{wj}|x_{ij}-g_{wj}|}{T_u}\right\}}$}\\
     &  &\\
    \hline
    \end{tabular}
\end{table}

\end{proposition}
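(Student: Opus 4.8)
The plan is to handle every criterion in the proposition with a single argument, exploiting the fact that during the assignment step all quantities other than $\mathbf{U}$ are frozen. For each algorithm the terms that do not involve the $u_{ik}$ (the weight-entropy term $T_v\sum_{j}v_j\ln v_j$ in the sum-constrained variants, and the determinant/product normalizations elsewhere) are constants, so every objective collapses to the same functional form in $\mathbf{U}$,
\begin{equation*}
J(\mathbf{U}) = \sum_{k=1}^{C}\sum_{i=1}^{N} u_{ik}\,\Delta(\mathbf{x}_i,\mathbf{g}_k) + T_u\sum_{k=1}^{C}\sum_{i=1}^{N} u_{ik}\ln(u_{ik}) + \text{const},
\end{equation*}
where $\Delta$ is the distance of the corresponding row of Table \ref{tab:uikrules}. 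Since this expression decouples across the objects $e_i$, and each constraint $\sum_{k=1}^{C} u_{ik}=1$ involves a single $i$, it suffices to minimize, for each fixed $i$, the inner sum over $k$ subject to its own normalization.

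First I would introduce a Lagrange multiplier $\beta_i$ for each equality constraint and form $L = J(\mathbf{U}) - \sum_{i=1}^{N}\beta_i\big(\sum_{k=1}^{C} u_{ik}-1\big)$. Setting $\partial L/\partial u_{ik}=0$ gives $\Delta(\mathbf{x}_i,\mathbf{g}_k) + T_u(\ln u_{ik}+1) - \beta_i = 0$, which solves to $u_{ik} = \exp\{\beta_i/T_u - 1\}\,\exp\{-\Delta(\mathbf{x}_i,\mathbf{g}_k)/T_u\}$. Substituting this into $\sum_{k=1}^{C} u_{ik}=1$ eliminates the multiplier, yielding $\exp\{\beta_i/T_u-1\} = \big(\sum_{w=1}^{C}\exp\{-\Delta(\mathbf{x}_i,\mathbf{g}_w)/T_u\}\big)^{-1}$, and back-substitution reproduces exactly Equation (\ref{eq:membership1}). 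Specialising $\Delta$ to the Mahalanobis forms $(\mathbf{x}_i-\mathbf{g}_k)^T\mathbf{M}(\mathbf{x}_i-\mathbf{g}_k)$ and $(\mathbf{x}_i-\mathbf{g}_k)^T\mathbf{M}_k(\mathbf{x}_i-\mathbf{g}_k)$, to the weighted Euclidean $\sum_{j}v_j(x_{ij}-g_{kj})^2$, and to the weighted City-Block forms then recovers each row of the table in turn.

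Two verifications close the argument. The computed $u_{ik}$ is a ratio of a positive exponential to a sum of positive exponentials, hence automatically lies in $(0,1)$; this shows the box constraints $u_{ik}\in[0,1]$ are inactive at the optimum, so plain Lagrange multipliers for the equality constraints suffice and no additional KKT conditions are needed. To confirm the stationary point is a genuine minimizer rather than a saddle, I would invoke convexity: with $\mathbf{G}$ and the weights/covariances fixed, $J(\mathbf{U})$ is the sum of an affine term in the $u_{ik}$ and the entropy term, and since $u\mapsto T_u\,u\ln u$ has second derivative $T_u/u>0$ on $(0,1)$, the objective is strictly convex on the feasible simplex; the critical point is therefore its unique global minimizer.

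The calculation itself is entirely routine; the only real content is the observation in the first paragraph that every proposed criterion reduces to the same $u_{ik}$-functional once the non-$\mathbf{U}$ terms are held fixed, so one Lagrangian computation settles all algorithms simultaneously. The mildest subtlety to watch is the sign bookkeeping when isolating $u_{ik}$ and the verification that the inequality constraints remain inactive, which is what licenses the use of equality-only Lagrange multipliers.
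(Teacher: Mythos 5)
Your proposal is correct and takes essentially the same route as the paper's proof: a Lagrangian with one multiplier per normalization constraint $\sum_{k=1}^{C}u_{ik}=1$, elimination of the multiplier via the normalization to obtain the softmax rule of Equation (\ref{eq:membership1}), and a second-order check based on the diagonal Hessian with entries $T_u/u_{ik}>0$. Your two extra remarks --- that the solution is automatically strictly positive so the box constraints are inactive and equality-only multipliers suffice, and that strict convexity on the simplex makes the stationary point the unique global minimizer --- are mild sharpenings of the paper's argument, which proves the Mahalanobis case and observes that all other cases follow identically because the optimal $\mathbf{U}$ does not depend on the form of the distance.
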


\begin{proof}
  The proof is given in \ref{proof-prop-calcu}.
\end{proof}

\subsection{The proposed clustering algorithms}	
The clustering algorithms are summarized in Algorithm \ref{alg:alg1}
\begin{algorithm}[!htbp]
\caption{Proposed algorithms}
\label{alg:alg1}
\begin{algorithmic}[1]
\vspace{-4mm}
\Statex
\INPUT\tikzmark{c}
The dataset $\mathcal{D}$; The number $C$ of clusters; The parameter $T_u > 0$ and $T_v > 0$; The maximum number of iterations %parameter 
$T$; %(maximum number of iterations); 
The threshold $\varepsilon>0$ and $\varepsilon<<1$.
\vspace{-6mm}
\Statex
\OUTPUT\tikzmark{c}
The vector of prototypes $\mathbf{G}$; The matrix of membership degrees $\mathbf{U}$; The matrix $\mathbf{M}$, the matrix $\mathbf{M}_k$ or the relevance weights globally for all clusters or locally for each group.
\State Initialization
\begin{quote}
   Set $t = 0$; Randomly initialize the matrix of membership degrees $\mathbf{U}=(u_{ik})_{\substack{1 \leq i \leq N\\1 \leq k \leq C}}$  such that $u_{ik} \geq 0$ and  $\sum_{k=1}^{C} u_{ik}^{(t)}=1$;
\end{quote}
\vspace{-4mm}
\Repeat
\begin{quote}
Set $t = t + 1$
\end{quote}
\vspace{-4mm}
\State {\bf{Step 1: representation}}.
\begin{quote}
For $k=1,\ldots,C;j=1,\ldots,P$, compute the component $g_{kj}$ of the prototype $\mathbf{g}_{k}=(g_{k1},...,g_{kP})$ according to the dissimilarity function considered: i) for the adaptive Mahalanobis distances $g_{kj}$ is computed according to Eq. (\ref{eq:protM}); ii) for the adaptive Euclidean distances,  $g_{kj}$ is computed according to Eq. (\ref{eq:prot});  iii) for the adaptive City-Block distances, $g_{kj}$ is computed according the algorithm described in Section \ref{sect:prototype} or using Eq. (\ref{eq:met2gkj}).
%if the dissimilarity function is based on the City-block distance. And according to Equations \ref{eq:protM} and \ref{eq:prot}, if Euclidean or Mahalanobis distances.
\end{quote}
\State {\bf{Step 2: weighting}}.
\begin{quote}
To obtain the matrices for all or each cluster
\begin{itemize}
\item Compute $\mathbf{M}$ and $\mathbf{M_k}$ according to Equations (\ref{eq:covM}) and (\ref{eq:covMk}), respectively.
\end{itemize}
To obtain the matrix of relevance weights $(k=1,...,C; j=1,...,P)$
\begin{itemize}
	\item Compute the component $v_{kj}$ of the vector of relevance	weights $\mathbf{v}_{k}=(v_{k1},...,v_{kP})$ according to Eq. (\ref{eq:weightsl}) if the objective function is given by Eq. (\ref{eq:functionsl}).
	\item Compute the component $v_{j}$ of the vector of relevance weights $\mathbf{v}=(v_1,...,v_P)$ according to Eq. \ref{eq:weightsg} or Eq. \ref{eq:weightpg} if the objective function is given by Eq. (\ref{eq:functionsg}) or Eq. (\ref{eq:functionpg}) respectively.
\end{itemize}
\end{quote}
\State {\bf{Step 3: assignment}}.
\begin{quote}
Compute the elements $u_{ij}$ of the matrix of membership degrees $\mathbf{U}=(u_{ij})_{\substack{1 \leq i \leq N\\1 \leq j \leq C}}$ according to Equation (\ref{eq:membership1}).
\end{quote}
\Until $max(|u_{ij}^{(t)}-u_{ij}^{(t-1)}|) < \varepsilon$ or $t \geq T$
\end{algorithmic}
\end{algorithm}

\subsubsection{Convergence of proposed algorithms}

The AFCM-ER-M and AFCM-ER-Mk algorithms provide a global co-variance matrix $\mathbf{m}^{*}$ such that $det(\mathbf{m}^{*})=1$, and a local co-variance matrix $\mathbf{M}^{*}$ estimated locally such that $det(\mathbf{M}^{*})=1$ for each cluster respectively, a fuzzy partition $\mathbf{U}^*=(\mathbf{u}_1^*,\dots,\mathbf{u}_N^*)$ and a vector of prototypes $\mathbf{G}^*=(\mathbf{g}_1^*,\dots,\mathbf{g}_C^*)$ such that:
\begin{itemize}
    \item $J_{AFCM-ER-M}(\mathbf{G}^*,\mathbf{m}^*,\mathbf{U}^*)=\min \{J_{AFCM-ER-M}(\mathbf{G},\mathbf{m},\mathbf{U}),\mathbf{G} \in \mathbb{L}^C, \mathbf{m} \in \mathbb{M}, \mathbf{U}\in\mathbb{U}^N\}$
     \item $J_{AFCM-ER-Mk}(\mathbf{G}^*,\mathbf{M}^*,\mathbf{U}^*)=\min \{J_{AFCM-ER-Mk}(\mathbf{G},\mathbf{M},\mathbf{U}),\mathbf{G} \in \mathbb{L}^C, \mathbf{M} \in \mathbb{M}^C, \mathbf{U}\in\mathbb{U}^N\}$
\end{itemize}

where
\begin{itemize}
    \item [$-$] $\mathbb{L}$ is the representation space of the prototypes such that $\mathbf{g}_k \in \mathbb{L} \, (k=1,\dots, C)$ and $\mathbf{G} \in \mathbb{L}^C=\mathbb{L}\times\dots\times\mathbb{L}$. In this paper $\mathbb{L}=\mathbb{R}^P$.
    \item [$-$] $\mathbb{U}$ is the space of fuzzy partition membership degrees such that $\mathbf{u}_i \in \mathbb{U} \, (i=1,\dots,N)$. In this paper $\mathbb{U}=\{\mathbf{u}=(\mathbf{u}_1,\dots,\mathbf{u}_C)\in [0,1]\times\dots\times [0,1]=[0,1]^C:\sum_{k=1}^{C}u_{ik}=1 \mbox{ and } u_{ik} \geq 0\}$ and $\mathbf{U}\in\mathbb{U}^N=\mathbb{U}\times\dots\times\mathbb{U}$.
    \item [$-$] $\mathbb{M}$ is the space of positive definite symmetric matrix with determinant equal to 1, such that $\mathbf{m} \in \mathbb{M}$ and $\mathbf{M} \in \mathbb{M}^C=\mathbb{M}\times\dots\times\mathbb{M}$.
\end{itemize}

Moreover, the AFCM-ER-GS (AFCM-ER-GS-L2 and AFCM-ER-GS-L1) and AFCM-ER-GP (AFCM-ER-GP-L1 and AFCM-ER-GP-L2) algorithms provide a fuzzy partition $\mathbf{U}^*=(\mathbf{u}_1^*,\dots,\mathbf{u}_N^*)$, a vector of prototypes $\mathbf{G}^*=(\mathbf{g}_1^*,\dots,\mathbf{g}_C^*)$ and a relevance weight vector $\mathbf{v}^{*}$ such that:

\begin{itemize}
    \item $J_{AFCM-ER-GS}(\mathbf{G}^*,\mathbf{v}^*,\mathbf{U}^*)=\min\{J_{AFCM-ER-GS}(\mathbf{G},\mathbf{v},\mathbf{U}),\mathbf{G} \in \mathbb{L}^C, \mathbf{v} \in \Xi, \mathbf{U}\in\mathbb{U}^N\}$
    \item $J_{AFCM-ER-GP}(\mathbf{G}^*,\mathbf{v}^*,\mathbf{U}^*)=\min\{J_{AFCM-ER-GP}(\mathbf{G},\mathbf{v},\mathbf{U}),\mathbf{G} \in \mathbb{L}^C, \mathbf{v} \in \Xi, \mathbf{U}\in\mathbb{U}^N\}$
\end{itemize}

where
\begin{itemize}
    \item [$-$] $\Xi$ is the space of vectors of weights such that $\mathbf{v} \in \Xi$. In this paper $\Xi =\{\mathbf{v}=(\mathbf{v}_1,\dots,\mathbf{v}_P) \in \mathbb{R}^P:\mathbf{v}_j>0$ and $\prod_{j=1}^{P}v_j=1\}$ or $\Xi =\{ \mathbf{v}=(v_1,\dots,v_P) \in \mathbb{R}^P: v_j\in [0,1]$ and $\sum_{j=1}^{P}v_j=1\}$.
\end{itemize}

Besides, AFCM-ER-LS-L1 algorithms provide a fuzzy partition $\mathbf{U}^*=(\mathbf{u}_1^*,\dots,\mathbf{u}_N^*)$, a vector of prototypes $\mathbf{G}^*=(\mathbf{g}_1^*,\dots,\mathbf{g}_C^*)$ and a vector of relevance weight vectors $\mathbf{V}^{*}=(\mathbf{v}_1^{*},\dots,\mathbf{v}_C^*)$ such that:
\begin{itemize}
    \item $J_{AFCM-ER-LS-L1}(\mathbf{G}^*,\mathbf{V}^*,\mathbf{U}^*)=\min\{J_{AFCM-ER-LS-L1}(\mathbf{G},\mathbf{V},\mathbf{U}),\mathbf{G} \in \mathbb{L}^C, \mathbf{V} \in \Xi^C, \mathbf{U}\in\mathbb{U}^N\}$
\end{itemize}
where
\begin{itemize}
    \item [$-$] $\Xi$ is the space of vectors of weights such that $\mathbf{v}_k \in \Xi, (k=1,\dots,C)$. In this paper $\Xi =\{ \mathbf{v}=(v_1,\dots,v_P) \in \mathbb{R}^P: v_j\in [0,1]$ and $\sum_{j=1}^{P}v_j=1\}$, and $\mathbf{V}\in \Xi^C=\Xi\times\dots\times\Xi$.
\end{itemize}

Similarly to Ref \cite{diday1976clustering}, the convergence properties of the proposed algorithms can be studied from the series:
\begin{itemize}
    \item $v^{(t)}_{AFCM-ER-M}(\mathbf{G}^{(t)},\mathbf{m}^{(t)},\mathbf{U}^{(t)}) \in \mathbb{L}^C\times\mathbb{M}\times\mathbb{U}^N$ and $u^{(t)}_{AFCM-ER-M}=\\J_{AFCM-ER-M}(v^{(t)}_{AFCM-ER-M})=J_{AFCM-ER-M}(\mathbf{G}^{(t)},\mathbf{m}^{(t)},\mathbf{U}^{(t)})$ where $t=0,1,\dots$ is the iteration number;
    \item $v^{(t)}_{AFCM-ER-Mk}(\mathbf{G}^{(t)},\mathbf{M}^{(t)},\mathbf{U}^{(t)}) \in \mathbb{L}^C\times\mathbb{M}^C\times\mathbb{U}^N$ and $u^{(t)}_{AFCM-ER-Mk}=\\J_{AFCM-ER-Mk}(v^{(t)}_{AFCM-ER-Mk})=J_{AFCM-ER-Mk}(\mathbf{G}^{(t)},\mathbf{M}^{(t)},\mathbf{U}^{(t)})$ where $t=0,1,\dots$ is the iteration number;
    \item $v^{(t)}_{AFCM-ER-GS}(\mathbf{G}^{(t)},\mathbf{v}^{(t)},\mathbf{U}^{(t)}) \in \mathbb{L}^C\times\Xi\times\mathbb{U}^N$ and $u^{(t)}_{AFCM-ER-GS}=\\J_{AFCM-ER-GS}(v^{(t)}_{AFCM-ER-GS})=J_{AFCM-ER-GS}(\mathbf{G}^{(t)},\mathbf{v}^{(t)},\mathbf{U}^{(t)})$ where $t=0,1,\dots$ is the iteration number;
     \item $v^{(t)}_{AFCM-ER-GP}(\mathbf{G}^{(t)},\mathbf{v}^{(t)},\mathbf{U}^{(t)}) \in \mathbb{L}^C\times\Xi\times\mathbb{U}^N$ and $u^{(t)}_{AFCM-ER-GP}=\\J_{AFCM-ER-GP}(v^{(t)}_{AFCM-ER-GP})=J_{AFCM-ER-GP}(\mathbf{G}^{(t)},\mathbf{v}^{(t)},\mathbf{U}^{(t)})$ where $t=0,1,\dots$ is the iteration number;
      \item $v^{(t)}_{AFCM-ER-LS-L1}(\mathbf{G}^{(t)},\mathbf{V}^{(t)},\mathbf{U}^{(t)}) \in \mathbb{L}^C\times\Xi^C\times\mathbb{U}^N$ and $u^{(t)}_{AFCM-ER-LS-L1}=\\J_{AFCM-ER-LS-L1}(v^{(t)}_{AFCM-ER-LS-L1})=J_{AFCM-ER-LS-L1}(\mathbf{G}^{(t)},\mathbf{V}^{(t)},\mathbf{U}^{(t)})$ where $t=0,1,\dots$ is the iteration number;
\end{itemize}
From the initial terms: \\
\noindent $v^{(0)}_{AFCM-ER-M}(\mathbf{G}^{(0)}, \mathbf{m}^{(0)},\mathbf{U}^{(0)})$, $v^{(0)}_{AFCM-ER-Mk}(\mathbf{G}^{(0)},\mathbf{M}^{(0)},\mathbf{U}^{(0)})$, $v^{(0)}_{AFCM-ER-GS}(\mathbf{G}^{(0)},\mathbf{v}^{(0)},\mathbf{U}^{(0)})$, \\
\noindent $v^{(0)}_{AFCM-ER-GP}(\mathbf{G}^{(0)},\mathbf{v}^{(0)},\mathbf{U}^{(0)})$ and $v^{(0)}_{AFCM-ER-LS-L1}(\mathbf{G}^{(0)},\mathbf{V}^{(0)},\mathbf{U}^{(0)})$, 
the algorithms AFCM-ER-M, AFCM-ER-Mk, AFCM-ER-GS, AFCM-ER-GP and AFCM-ER-LS-L1 compute the different terms of the series, $v^{(t)}_{AFCM-ER-M}$, $v^{(t)}_{AFCM-ER-Mk}$, $v^{(t)}_{AFCM-ER-GS}$, $v^{(t)}_{AFCM-ER-GP}$, and $v^{(t)}_{AFCM-ER-LS-L1}$, until the respective convergence (to be demonstrate) when the objective functions $J_{AFCM-ER-M}$, $J_{AFCM-ER-Mk}$, $J_{AFCM-ER-GS}$, $J_{AFCM-ER-GP}$ and $J_{AFCM-ER-LS-L1}$ reach stationary values.

\begin{proposition}\label{prop:convergency1}
\begin{enumerate}[i)]
    \item The series $u^{(t)}_{AFCM-ER-M}=J_{AFCM-ER-M}(v_{AFCM-ER-M}^{(t)})=\\J_{AFCM-ER-M}(\mathbf{G}^{(t)}, \mathbf{m}^{(t)}, \mathbf{U}^{(t)}), t=0,1,\dots$,  decreases at each iteration and converge;
    \item The series $u^{(t)}_{AFCM-ER-Mk}=J_{AFCM-ER-Mk}(v_{AFCM-ER-Mk}^{(t)})=\\J_{AFCM-ER-Mk}(\mathbf{G}^{(t)}, \mathbf{M}^{(t)}, \mathbf{U}^{(t)}), t=0,1,\dots$,  decreases at each iteration and converge;
    \item The series $u^{(t)}_{AFCM-ER-GS}=J_{AFCM-ER-GS}(v_{AFCM-ER-GS}^{(t)})=\\J_{AFCM-ER-GS}(\mathbf{G}^{(t)}, \mathbf{v}^{(t)}, \mathbf{U}^{(t)}), t=0,1,\dots$,  decreases at each iteration and converge;
    \item The series $u^{(t)}_{AFCM-ER-GP}=J_{AFCM-ER-GP}(v_{AFCM-ER-GP}^{(t)})=\\J_{AFCM-ER-GP}(\mathbf{G}^{(t)}, \mathbf{v}^{(t)}, \mathbf{U}^{(t)}), t=0,1,\dots$,  decreases at each iteration and converge;
    \item The series $u^{(t)}_{AFCM-ER-LS-L1}=J_{AFCM-ER-LS-L1}(v_{AFCM-ER-LS-L1}^{(t)})=\\J_{AFCM-ER-LS-L1}(\mathbf{G}^{(t)}, \mathbf{V}^{(t)}, \mathbf{U}^{(t)}), t=0,1,\dots$,  decreases at each iteration and converge;
\end{enumerate}
\end{proposition}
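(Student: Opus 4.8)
The plan is to treat each of the five algorithms as an instance of block (alternating) minimization and to prove the two assertions separately: first that the objective-value series $u^{(t)}$ is monotone non-increasing, and then that it is bounded below, so that convergence follows from the monotone convergence theorem. Since the five objective functions share the same additive structure (a non-negative distance term plus one or two entropy-regularization terms) and the same three-step update scheme, I would carry out the argument once for a generic objective $J(\mathbf{G},W,\mathbf{U})$, where $W$ stands for whichever weighting object the algorithm maintains ($\mathbf{M}$, $\mathbf{M}_k$, $\mathbf{v}$, or $\mathbf{V}$), and then remark that each of the five cases is a specialization.

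The decrease property is the heart of the matter and rests entirely on the fact that each of the three steps solves its sub-problem \emph{exactly}. Fixing $W^{(t)}$ and $\mathbf{U}^{(t)}$, the representation step replaces $\mathbf{G}^{(t)}$ by the minimizer $\mathbf{G}^{(t+1)}$ of $J(\cdot,W^{(t)},\mathbf{U}^{(t)})$ computed via Eq.~(\ref{eq:protM}), Eq.~(\ref{eq:prot}), or the City-block procedure of Section~\ref{sect:prototype}; fixing $\mathbf{G}^{(t+1)}$ and $\mathbf{U}^{(t)}$, the weighting step replaces $W^{(t)}$ by the minimizer $W^{(t+1)}$ furnished by Proposition~\ref{prop:calv}; and fixing $\mathbf{G}^{(t+1)}$ and $W^{(t+1)}$, the assignment step replaces $\mathbf{U}^{(t)}$ by the minimizer $\mathbf{U}^{(t+1)}$ furnished by Proposition~\ref{prop:calu}. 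Chaining these three exact minimizations yields
\begin{equation}\label{eq:chain}
J(\mathbf{G}^{(t+1)},W^{(t+1)},\mathbf{U}^{(t+1)}) \le J(\mathbf{G}^{(t+1)},W^{(t+1)},\mathbf{U}^{(t)}) \le J(\mathbf{G}^{(t+1)},W^{(t)},\mathbf{U}^{(t)}) \le J(\mathbf{G}^{(t)},W^{(t)},\mathbf{U}^{(t)}),
\end{equation}
so that $u^{(t+1)}\le u^{(t)}$ for every $t$. To make each inequality in (\ref{eq:chain}) rigorous I would note that each sub-problem is convex on its feasible set: the assignment sub-problem and the sum-constrained weight sub-problems are strictly convex because the entropy terms are strictly convex and the constraints are linear, so the Lagrangian stationary points from Propositions~\ref{prop:calv} and \ref{prop:calu} are the unique global minimizers; the product-constrained case is convexified by the logarithmic substitution implicit in its Lagrange treatment; and the determinant-constrained Mahalanobis sub-problems are exactly the Gustafson–Kessel minimization, whose optimum is Eq.~(\ref{eq:covMk})/(\ref{eq:covM}).

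For convergence I would show each series is bounded below. In every objective the distance term is non-negative: the Mahalanobis forms $(\mathbf{x}_i-\mathbf{g}_k)^T\mathbf{M}(\mathbf{x}_i-\mathbf{g}_k)$ are non-negative because $\mathbf{M},\mathbf{M}_k$ are positive definite, and the weighted Euclidean and City-block forms are non-negative because $v_j,v_{kj}\ge 0$ and $d\ge 0$. The entropy terms are bounded below: for $u_{ik}\in[0,1]$ with $\sum_{k}u_{ik}=1$ one has $\sum_{k}u_{ik}\ln u_{ik}\ge -\ln C$, whence $T_u\sum_{k}\sum_{i}u_{ik}\ln u_{ik}\ge -N T_u\ln C$, and analogously the weight-entropy term (present in the GS and LS objectives) satisfies $T_v\sum_{j}v_j\ln v_j\ge -T_v\ln P$. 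Hence each $J$ is bounded below by a finite constant, and a non-increasing sequence bounded below converges; this disposes of all five items of the Proposition at once.

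The step I expect to be the main obstacle is justifying the decrease at the City-block representation step for the L1 variants, where no closed-form minimizer exists. Here I would argue that Jajuga's weighted-median procedure of Section~\ref{sect:prototype} returns the \emph{exact} global minimizer of the piecewise-linear univariate sub-problem $\sum_i|y_i-a z_i|$, so that the corresponding inequality in (\ref{eq:chain}) is genuine; the alternative reweighting formula (\ref{eq:met2gkj}) requires more care, since as an IRLS-type fixed-point update a single application need not reach the minimizer, and I would therefore base the descent guarantee on the exact median computation rather than on (\ref{eq:met2gkj}). A secondary delicate point is confirming that the product-constrained weight update (\ref{eq:weightpg}) is a true minimizer rather than merely a critical point of the Lagrangian; I would settle this through the log-convexification already invoked above. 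With these two points secured, the monotonicity in (\ref{eq:chain}) together with the lower bound completes the proof.
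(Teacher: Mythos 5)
Your proposal follows essentially the same route as the paper's own proof: for AFCM-ER-M the paper establishes precisely your chain of inequalities,
$J_{AFCM-ER-M}(\mathbf{G}^{(t)},\mathbf{m}^{(t)},\mathbf{U}^{(t)}) \geq J_{AFCM-ER-M}(\mathbf{G}^{(t+1)},\mathbf{m}^{(t)},\mathbf{U}^{(t)}) \geq J_{AFCM-ER-M}(\mathbf{G}^{(t+1)},\mathbf{m}^{(t+1)},\mathbf{U}^{(t)}) \geq J_{AFCM-ER-M}(\mathbf{G}^{(t+1)},\mathbf{m}^{(t+1)},\mathbf{U}^{(t+1)})$,
justified respectively by the exactness of the representation step (Section \ref{sect:prototype}), Proposition \ref{prop:calv}, and Proposition \ref{prop:calu}, and then concludes by monotone convergence; the other four series are dispatched by the same ``proceeds similarly'' remark that your generic $J(\mathbf{G},W,\mathbf{U})$ formalizes. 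Two points in your write-up are actually more careful than the paper. First, the paper's lower bound is asserted as $J(v_{AFCM-ER-M}^{(t)})\geq 0$, which is not literally true: the entropy terms $T_u\sum_{k=1}^{C}\sum_{i=1}^{N}u_{ik}\ln(u_{ik})$ (and $T_v\sum_{j}v_j\ln(v_j)$ where present) are non-positive, so for large $T_u$ the objective can be negative; your bound, distance term $\geq 0$ together with $\sum_{k}u_{ik}\ln(u_{ik})\geq -\ln C$ and $\sum_{j}v_j\ln(v_j)\geq -\ln P$, giving a finite lower bound such as $-NT_u\ln C - T_v\ln P$, is the correct statement and repairs this slip without altering the structure of the argument. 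Second, your caveat about the L1 representation step is a genuine issue the paper passes over in silence: the descent inequality requires the exact weighted-median minimizer of $\sum_i|y_i - a z_i|$ from the Jajuga procedure, whereas a single application of the reweighting formula of Eq. (\ref{eq:met2gkj}) is an IRLS-type fixed-point step that need not reach (or even descend to) the minimizer, so basing the monotonicity on the median computation, as you do, is the right call. Both refinements strengthen, rather than diverge from, the paper's argument.
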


\begin{proof}
The proof is given in \ref{proof-conv-1}.
\end{proof}

\begin{proposition}\label{prop:convergency2}
\begin{enumerate}[i)]
    \item The series $v_{AFCM-ER-M}^{(t)}=(\mathbf{G}^{(t)},\mathbf{m}^{(t)},\mathbf{U}^{(t)}), t=0,1,\dots,$ converges;
    \item The series $v_{AFCM-ER-Mk}^{(t)}=(\mathbf{G}^{(t)},\mathbf{M}^{(t)},\mathbf{U}^{(t)}), t=0,1,\dots,$ converges;
    \item The series $v_{AFCM-ER-GS}^{(t)}=(\mathbf{G}^{(t)},\mathbf{v}^{(t)},\mathbf{U}^{(t)}), t=0,1,\dots,$ converges;
    \item The series $v_{AFCM-ER-GP}^{(t)}=(\mathbf{G}^{(t)},\mathbf{v}^{(t)},\mathbf{U}^{(t)}), t=0,1,\dots,$ converges;
    \item The series $v_{AFCM-ER-LS-L1}^{(t)}=(\mathbf{G}^{(t)},\mathbf{V}^{(t)},\mathbf{U}^{(t)}), t=0,1,\dots,$ converges.
\end{enumerate}
\end{proposition}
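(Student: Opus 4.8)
The plan is to treat all five algorithms uniformly, since each is an alternating three-block minimization whose three steps were solved exactly in the representation step, in Proposition \ref{prop:calv}, and in Proposition \ref{prop:calu}. For a generic algorithm write the iterate as $v^{(t)}=(\mathbf{G}^{(t)},\Theta^{(t)},\mathbf{U}^{(t)})$, where $\Theta^{(t)}$ denotes the block produced by the weighting step (the covariance matrices $\mathbf{m}^{(t)}$ or $\mathbf{M}^{(t)}$, or the weight vectors $\mathbf{v}^{(t)}$ or $\mathbf{V}^{(t)}$). Because each of the three steps returns the unique minimizer of $J$ with respect to its block while the other two are held fixed, the update $v^{(t)}\mapsto v^{(t+1)}$ is a well-defined point-to-point map $T$, and $v^{(t+1)}=T(v^{(t)})$; for the City-Block variants the weighted-median prototype update is made single-valued by the tie-breaking rule of the representation step. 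I would first record that the feasible region may be reduced to a compact set: the memberships lie in the product $\mathbb{U}^N$ of simplices; under the sum constraint the weights lie in the compact simplex $\Xi$; under the product constraint $\prod_j v_j=1$ and, for the covariance block, under $\det=1$, feasible points could a priori escape to $0$ or $\infty$, but the bound $J(v^{(t)})\le J(v^{(0)})$ coming from the monotone decrease of Proposition \ref{prop:convergency1} keeps every coordinate of $\Theta^{(t)}$ bounded away from $0$ and $\infty$; and the prototypes stay in the convex hull of $\mathcal{D}$ under both the mean and the weighted-median updates. Thus the whole sequence $\{v^{(t)}\}$ lives in a compact set $K$.

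Next I would invoke Proposition \ref{prop:convergency1}, which already gives that $u^{(t)}=J(v^{(t)})$ is nonincreasing and bounded below, hence converges to some $J^{\star}$, so that $u^{(t)}-u^{(t+1)}\to 0$. The decrease over one full iteration splits as the sum of the three nonnegative decreases produced by the representation, weighting and assignment steps; since the total tends to $0$, each partial decrease tends to $0$ as well. Every block subproblem has a unique solution: the entropy term makes $J$ strictly convex in $\mathbf{U}$ and in $\mathbf{v}$ under the sum constraint, while the constraints $\prod_j v_j=1$ and $\det=1$ single out unique minimizers for the weight and covariance blocks via Proposition \ref{prop:calv}, and $\mathbf{G}$ is fixed by the closed-form update. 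Exploiting the strict/strong convexity of each block problem, a vanishing partial decrease forces the corresponding block increment to vanish, so that $\|v^{(t+1)}-v^{(t)}\|\to 0$.

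Finally I would pass to limit points. Since $\{v^{(t)}\}\subset K$ is contained in a compact set and $T$ is continuous on $K$, every subsequential limit $v^{\star}$ satisfies $v^{\star}=T(v^{\star})$, i.e. it is a fixed point simultaneously solving the three stationarity conditions at the value $J^{\star}$. Combining this with $\|v^{(t+1)}-v^{(t)}\|\to 0$ and the uniqueness of the three block minimizers, so that the fixed points realizing $J^{\star}$ are isolated, the set of limit points reduces to a single point, which yields convergence of the whole series $v^{(t)}$ for each of the five algorithms, exactly along the lines of the dynamic-clustering argument of Ref. \cite{diday1976clustering}.

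The hard part will be this last step: \emph{a priori}, the facts $\|v^{(t+1)}-v^{(t)}\|\to 0$ and boundedness do not imply that $v^{(t)}$ converges, since the iterates could drift along a continuum of limit points. The argument therefore rests on showing that at the limiting value $J^{\star}$ the fixed points of $T$ are isolated, which is where the strict convexity and the uniqueness of each of the three exact updates become essential; a secondary technical point is guaranteeing compactness under $\prod_j v_j=1$ and under $\det=1$, both of which are handled by the uniform bound $J(v^{(t)})\le J(v^{(0)})$ together with the nondegeneracy (positive dispersion) of the data.
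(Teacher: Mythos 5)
Your proposal diverges from the paper's argument, and the divergence exposes a genuine gap at exactly the point you flag as ``the hard part.'' The paper's proof is much more elementary: it \emph{assumes} that the monotone series $u^{(t)}_{AFCM-ER-M}$ attains stationarity at a finite iteration $t=T$, i.e.\ $u^{(T)}=u^{(T+1)}$, rewrites this as the chain of equalities $J(\mathbf{G}^{(T)},\mathbf{m}^{(T)},\mathbf{U}^{(T)})=J(\mathbf{G}^{(T+1)},\mathbf{m}^{(T)},\mathbf{U}^{(T)})=J(\mathbf{G}^{(T+1)},\mathbf{m}^{(T+1)},\mathbf{U}^{(T)})=J(\mathbf{G}^{(T+1)},\mathbf{m}^{(T+1)},\mathbf{U}^{(T+1)})$, and then uses uniqueness of each exact block update to conclude $\mathbf{G}^{(T)}=\mathbf{G}^{(T+1)}$, $\mathbf{m}^{(T)}=\mathbf{m}^{(T+1)}$, $\mathbf{U}^{(T)}=\mathbf{U}^{(T+1)}$, hence $v^{(t)}=v^{(T)}$ for all $t\geq T$: the trajectory is eventually constant, so it trivially converges. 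You instead attempt a genuinely asymptotic argument (compact sublevel sets, $\|v^{(t+1)}-v^{(t)}\|\to 0$, subsequential limits are fixed points of the update map $T$), which is more honest about what monotone convergence of $u^{(t)}$ actually delivers, but you never close it: the decisive claim that the fixed points of $T$ realizing the limit value $J^{\star}$ are \emph{isolated} is asserted, not proven. Uniqueness of each block minimizer \emph{given the other two blocks held fixed} does not imply isolation of fixed points of the composed map; there can perfectly well be a continuum of coordinate-wise minimizers, with different $\mathbf{U}$ paired with different $(\mathbf{G},\Theta)$, all at the same objective value, and nothing in the strict block-wise convexity rules this out. Without isolation, asymptotic regularity plus compactness does not yield convergence of the iterates, as you yourself note.

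Two secondary steps also fail for the City-Block variants. First, the prototype subproblem in $g_{kj}$ is convex but not strictly convex (the weighted median can be an entire interval), so your inference ``vanishing partial decrease forces the block increment to vanish'' breaks down there: the tie-breaking rule makes the update single-valued but does not restore uniqueness of the minimizer, and a zero decrease is compatible with the selection jumping between minimizers. Second, the weighted-median selection is not a continuous function of $(\mathbf{U},\mathbf{V})$, so the map $T$ is not continuous on $K$ for the L1 algorithms, and your argument that every subsequential limit satisfies $v^{\star}=T(v^{\star})$ loses its justification precisely in those cases. The paper sidesteps all of these issues by its finite-time stationarity assumption (the standard device of the dynamic-clustering literature it cites); if you want a proof without that assumption, you would need an additional ingredient your sketch does not supply, such as a Zangwill-type global convergence theorem (which only gives convergence of subsequences to coordinate-wise minimizers) strengthened by an isolation or \L{}ojasiewicz-type argument.
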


\begin{proof}
The proof is given in \ref{proof-conv-2}.
\end{proof}

\subsubsection{Time complexity of proposed algorithms}
The computational complexity for computing the prototypes for methods based on Euclidean and Mahalanobis distances is $O(N\times C\times P)$, and $O(N\times C\times P \times \log(N))$ for approaches based on City-Block distance (in which $N$, $C$ and $P$ represent the number of objects, clusters and variables respectively). In the weighting step, the complexity to obtain $\mathbf{M}$ and $\mathbf{M}_k$ for AFCM-ER-M and AFCM-ER-Mk respectively, depends on the matrix inversion method used in the implementation of the clustering algorithm. In this paper, the complexity for obtaining $\mathbf{M}$ for AFCM-ER-M is $O(\max\{N\times C \times P^2,P^3\})$, and $O(C\times\max\{N \times P^2,P^3\})$ to compute $\mathbf{M}_k$ for the AFCM-ER-Mk algorithm. For the other methods, the complexity time to compute the relevance weights is $O(N\times C\times P)$. Finally, for computing the matrix of membership degree for AFCM-ER-M and AFCM-ER-Mk, the complexity time is $O(N\times C\times P^2)$, however, for other approaches, it is $O(N\times C\times P)$. 
Therefore, globally, assuming that the iterative function needs $T$ iterations to converge, we would have:
\begin{itemize}
    \item For AFCM-ER-M, a complexity time of $O(T\times\max\{N\times C \times P^2,P^3\})$.
    \item For AFCM-ER-Mk, a complexity time of $O(T\times C\times\max\{N \times P^2,P^3\})$.
    \item For AFCM-ER-GS-L2 and AFCM-ER-GP-L2, a complexity time of $O(T\times N\times C\times P)$.
    \item Finally, for AFCM-ER-GS-L1, AFCM-ER-GP-L1 and AFCM-ER-LS-L1, a complexity time of $O(T\times C \times N\times P\times \log(N))$.
\end{itemize}

\section{Experimental results}\label{sect:experimentRes}
This section aims to evaluate the performance and illustrates the usefulness of the proposed algorithms by applying it to suitable synthetic and real datasets. All experiments were conducted on the same machine (OS: Windows 10 Professional 64-bits, Memory: 8 GiB, Processor: Intel Core i7-5500U CPU @ 2.40 GHz).

\subsection{Experimental setting}\label{sect:expSetting}
The proposed algorithms were %will be 
compared with five previous most related fuzzy clustering models: FCM-ER-L2 and FCM-ER-L1 \cite{sadaaki1997}, the Fuzzy Co-Clustering algorithm for Images (hereafter we will adopt the notation AFCM-ER-LS-L2) \cite{hanmandlu2013color}, AFCM-ER-LP-L2 \cite{rodriguez2017fuzzy} and AFCM-ER-LP-L1 \cite{rodriguez2018fuzzy} algorithms. 

%The choice of t
The parameter $T_u$ for AFCM-ER-M, AFCM-ER-Mk, FCM-ER-L2, FCM-ER-L1, AFCM-ER-LP-L2, AFCM-ER-LP-L1, AFCM-ER-GP-L2, and AFCM-ER-GP-L1 was obtained without supervision as follows. For each dataset, the value of $T_u$ was varied between $0.01$ to $100$ (with step $0.01$), and the threshold for $T_u$ corresponds to the value of the fuzzifier at which the minimum centroid distance falls under $0.1$ for the first time, similar to Ref. \cite{schwammle2010simple}. For this purpose, before running the algorithms, each dataset is pre-processed so that every feature is standardized to have an average of zero and the standard deviation is equal to one \cite{schwammle2010simple}.

The choice of the parameter $T_u$ for the AFCM-ER-GS-L2, AFCM-ER-GS-L1, AFCM-ER-LS-L2, and AFCM-ER-LS-L1 algorithms followed the same procedure used above, with the value of $T_v$ obtained after several trials and error. The selected parameters correspond to a pair ($T_u$, $T_v$) with the maximum distance; this means parameter selection is made in an unsupervised way. The algorithm described in Section \ref{sect:prototype} was used to compute the prototype %vector 
associated with each fuzzy cluster for algorithms based on City-Block distance. The maximum number of iterations $T$ was 100, $\varepsilon$ was set to $10^{-5}$, and for the datasets, the number of clusters was set equal to the number of a priori classes.

From the fuzzy partition $\mathbf{U} = (\mathbf{u}_1,\ldots,\mathbf{u}_C)$ is obtained a hard partition  $Q=(Q_1,...,Q_C)$, where the cluster $Q_k (k=1,...,C)$ is defined as: $Q_k=\{i \in \{1,...,N\}: u_{ik} \geq u_{im}, \forall m \in \{1,...,C\}\}$. 
The clustering results obtained by the algorithms were compared using two measures: The Hullermeier index ($HUL$) \cite{hullermeier2009fuzzy} and the Adjusted Rand index ($ARI$) \cite{hubert1985comparing}. The $HUL$ index compares the a priori partition of the %synthetic 
datasets with the fuzzy partition provided by the algorithms and the $ARI$ with the crisp partition.

\subsection{Experiment 1}
In the first experiment, we investigate with synthetic datasets performance aspects of the proposed algorithms with Mahalanobis (AFCM-ER-M, AFCM-ER-Mk), Euclidean (AFCM-ER-GS-L2, AFCM-ER-GP-L2) and City-block (AFCM-ER-GS-L1, AFCM-ER-GP-L1, AFCM-ER-LS-L1) distances.

In this experiment was created four synthetic datasets described by two-dimensional vectors generated randomly from a normal distribution. The synthetic datasets were created having classes of different sizes and shapes as in Ref. \cite{de2006partitional}. Each synthetic datasets have 450 points, % each, 
divided into four classes of unequal sizes: two classes of size 150 each, one with 50 and other with 100. Each class in these data were drawn according to a bi-variate normal distribution with vector $\mathbf{\mu}$ and covariance matrix $\mathbf{\Sigma}$ represented by:

\[\mathbf{\mu}=\begin{bmatrix}
    \mu_1 \\
   \mu_2
\end{bmatrix} \quad \text{and} \quad \mathbf{\Sigma}=\begin{bmatrix}
    \sigma_1^2 & \sigma_1\sigma_2\rho \\
    \sigma_1\sigma_2\rho &  \sigma_2^2 
\end{bmatrix}\]

We consider four different data configurations: (1) the class covariance matrices are diagonal, and almost the same; (2) the class covariance matrices are diagonal but unequal; (3) the class covariance matrices are not diagonal but almost the same; (4) the class covariance matrices are not diagonal and are also unequal.

Patterns of each class in data configuration 1 (Fig. \ref{img:syndata} (a)) were drawn according to the following parameters:

\begin{inparaenum}[(1)]
\item Class 1: $\mu_1=45$, $\mu_2=30$, $\sigma_1^2=100$, $\sigma_2^2=9$, $\rho=0.0$;
\item Class 2: $\mu_1=70$, $\mu_2=38$, $\sigma_1^2=81$, $\sigma_2^2=16$, $\rho=0.0$;
\item Class 3: $\mu_1=45$, $\mu_2=42$, $\sigma_1^2=100$, $\sigma_2^2=16$, $\rho=0.0$;
\item Class 4: $\mu_1=42$, $\mu_2=20$, $\sigma_1^2=81$, $\sigma_2^2=9$, $\rho=0.0$
\end{inparaenum}

Patterns of each cluster in data configuration 2 (Fig. \ref{img:syndata}, (b)) were drawn according to the following parameters:

\begin{inparaenum}[(1)]
\item Class 1: $\mu_1=45$, $\mu_2=22$, $\sigma_1^2=144$, $\sigma_2^2=9$, $\rho=0.0$;
\item Class 2: $\mu_1=70$, $\mu_2=38$, $\sigma_1^2=81$, $\sigma_2^2=36$, $\rho=0.0$;
\item Class 3: $\mu_1=50$, $\mu_2=42$, $\sigma_1^2=36$, $\sigma_2^2=81$, $\rho=0.0$;
\item Class 4: $\mu_1=42$, $\mu_2=2$, $\sigma_1^2=9$, $\sigma_2^2=144$, $\rho=0.0$
\end{inparaenum}

Patterns of each class in data configuration 3 (Fig. \ref{img:syndata}, (c)) were drawn according to the following parameters:

\begin{inparaenum}[(1)]
\item Class 1: $\mu_1=45$, $\mu_2=30$, $\sigma_1^2=100$, $\sigma_2^2=9$, $\rho=0.7$;
\item Class 2: $\mu_1=70$, $\mu_2=38$, $\sigma_1^2=81$, $\sigma_2^2=16$, $\rho=0.8$;
\item Class 3: $\mu_1=45$, $\mu_2=42$, $\sigma_1^2=100$, $\sigma_2^2=16$, $\rho=0.7$;
\item Class 4: $\mu_1=42$, $\mu_2=20$, $\sigma_1^2=81$, $\sigma_2^2=9$, $\rho=0.8$
\end{inparaenum}

Finally, the patterns of each class in data configuration 4 (Fig. \ref{img:syndata}, (d)) were drawn according to the following parameters:

\begin{inparaenum}[(1)]
\item Class 1: $\mu_1=45$, $\mu_2=22$, $\sigma_1^2=144$, $\sigma_2^2=9$, $\rho=0.7$;
\item Class 2: $\mu_1=70$, $\mu_2=38$, $\sigma_1^2=81$, $\sigma_2^2=36$, $\rho=0.8$;
\item Class 3: $\mu_1=50$, $\mu_2=42$, $\sigma_1^2=36$, $\sigma_2^2=81$, $\rho=0.7$;
\item Class 4: $\mu_1=42$, $\mu_2=2$, $\sigma_1^2=9$, $\sigma_2^2=144$, $\rho=0.8$
\end{inparaenum}

\begin{figure}[!htb]
\centering
  \subfloat[Dataset 1]{\includegraphics[width=0.25\textwidth]{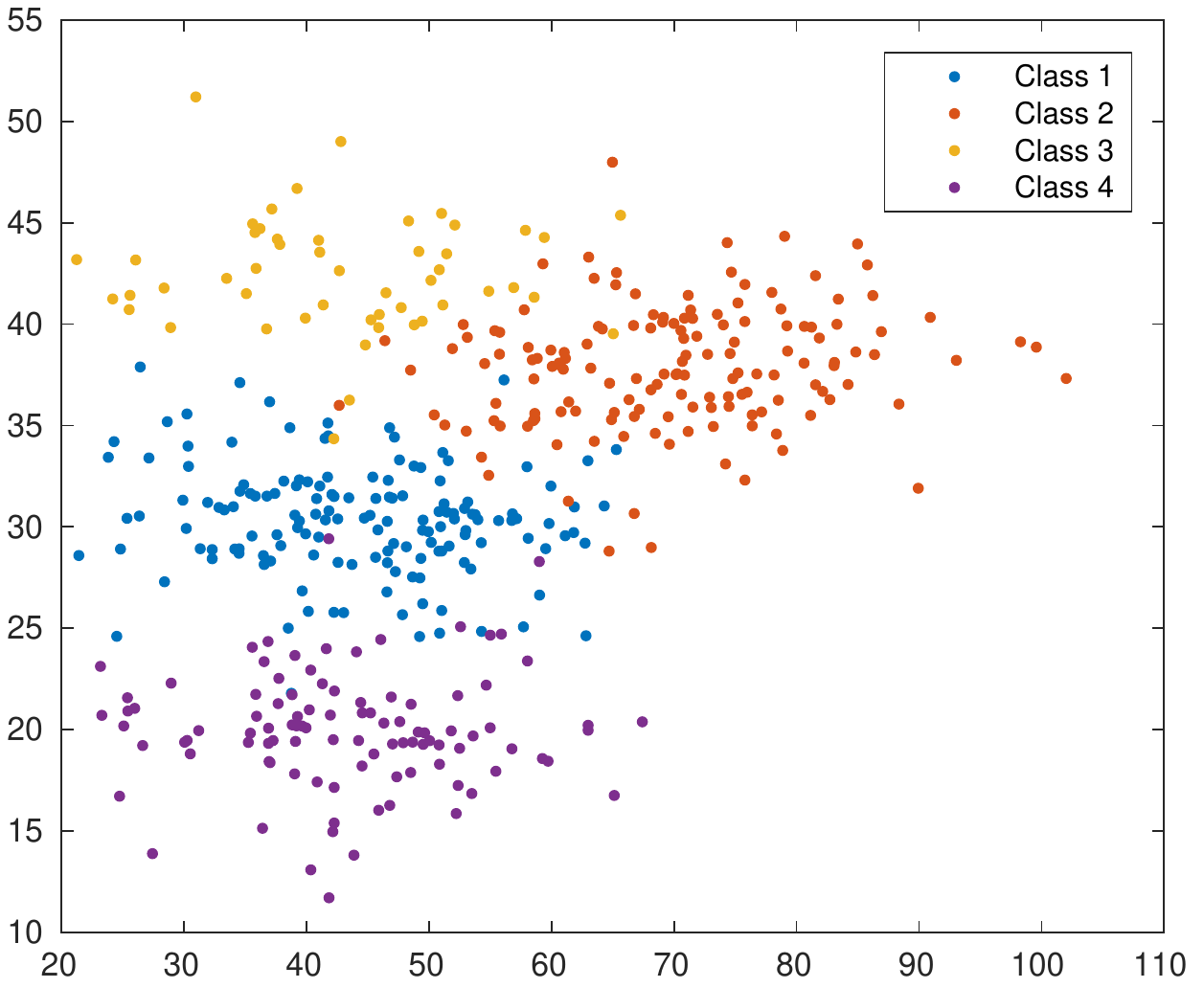}}
  \subfloat[Dataset 2]{\includegraphics[width=0.25 \textwidth]{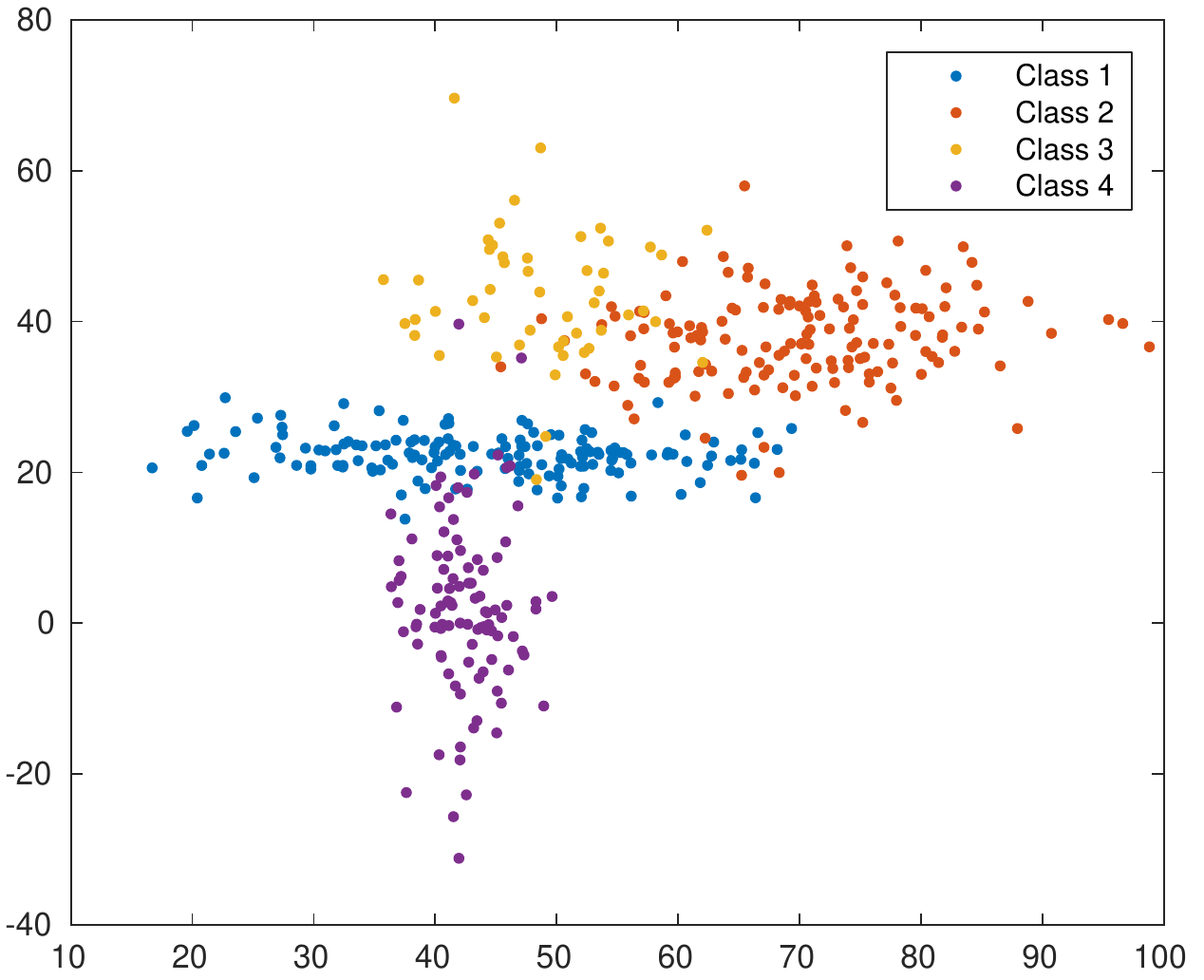}}
  \subfloat[Dataset 3]{\includegraphics[width=0.25 \textwidth]{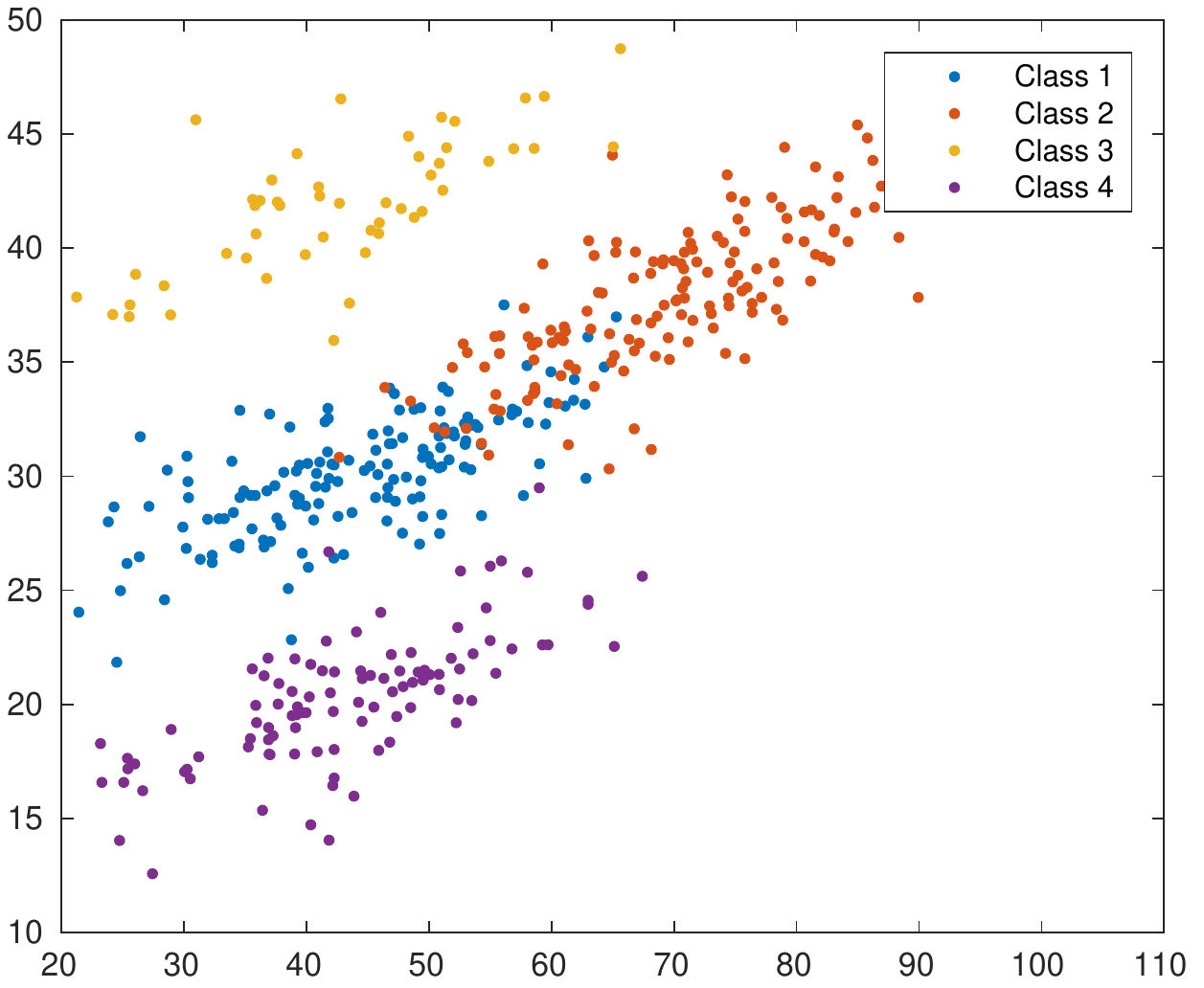}}
  \subfloat[Dataset 4]{\includegraphics[width=0.25 \textwidth]{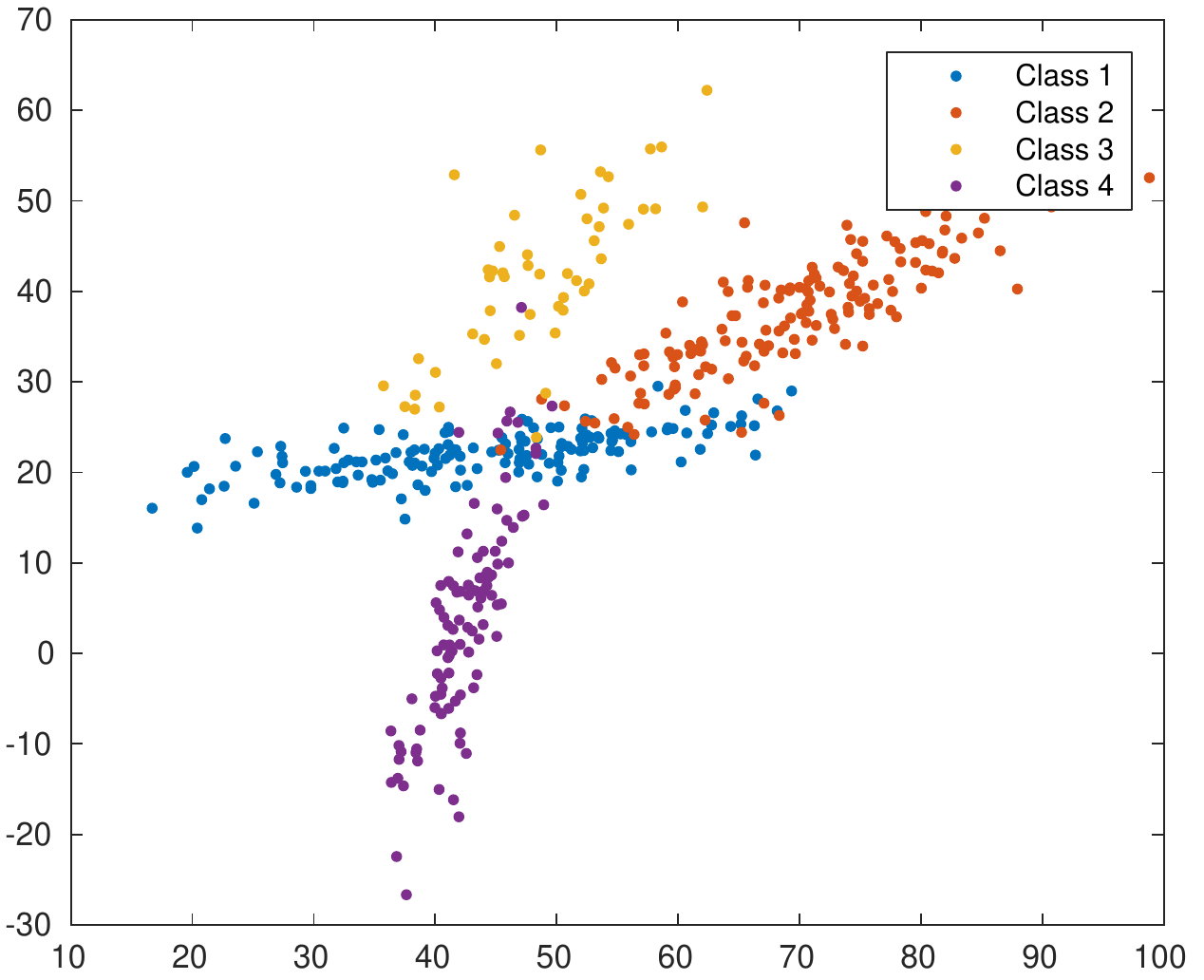}}
  \caption{Clusters drawn from data configurations 1, 2, 3 and 4.}
  \label{img:syndata}
\end{figure}

Fifty replications of each synthetic dataset were carried out in a framework of a Monte Carlo experiment. For each dataset, 50 random initializations of the %clustering 
algorithm are performed. The best result from these 50 repetitions is selected according to their respective objective function. The average and standard deviation of the indexes were calculated based on the 50 Monte Carlo iterations.

\subsubsection{Results}
Table \ref{tab:resSD} shows the values of the mean and the standard deviation for $HUL$ and $ARI$ %indices 
for different methods and data configurations. %Also, to explore the statistical significance of the results, the Friedman test \cite{friedman1937use} can be used to compare the algorithms over the synthetic datasets by ranking each algorithm on each dataset separately. The algorithm obtained the best performance gets the rank of 1, the second best ranks 2, and so on. In case of ties, the average ranks are assigned. Then the average ranks of all algorithms on all datasets are calculated and compared. If the null hypothesis, which is all algorithms are performing equivalently, is rejected under the Friedman test statistic, post-hoc tests such as the Nemenyi test \cite{nemenyi1963} can be used to determine which algorithms perform statistically different. The Nemenyi test compares algorithms in a pairwise manner. According to this test, the performances of the two algorithms are significantly different if the distance of the average ranks exceeds the critical distance. The objective is to determine whether or not there is at least one method that is significantly better than at least one other method at the $\alpha=0.05$ level. If this is the case, we will conduct a post hoc (Nemenyi test) test to determine which pairs of methods differ significantly. Figure \ref{img:hulricFriedN1} shows the comparison of the twelve algorithms against each other with the Nemenyi test. The first row shows the $HUL$ results and the second one the $ARI$ values.

\begin{table}[!ht]
	\caption{Mean and standard deviation (in parenthesis) for the data configurations.}\label{tab:resSD}
	\resizebox{\textwidth}{!}{
		\begin{tabular} {|p{2.82cm}|p{1.1cm}|p{1.1cm}|p{1.1cm}|p{1.1cm}|p{1.1cm}|p{1.1cm}|p{1.1cm}|p{1.1cm}|}
\hline
\multicolumn{1}{|c|}{}&\multicolumn{2}{c|}{Dataset 1}&\multicolumn{2}{c|}{Dataset 2}&\multicolumn{2}{c|}{Dataset 3}&\multicolumn{2}{c|}{Dataset 4}\\
\hline
Algorithms&$HUL$&$ARI$&$HUL$&$ARI$&$HUL$&$ARI$&$HUL$&$ARI$\\
\hline
FCM-ER-L2       &0.7006&0.5991&0.6735&0.5815&0.6314&0.4162&0.6368&0.4613\\
&(0.0231)&(0.0388)&(0.0186)&(0.0343)&(0.0206)&(0.0338)&(0.0220)&(0.0402)\\    
FCM-ER-L1       &\textbf{0.7946}&\textbf{0.6383}&0.7926&0.5693&0.7140&0.4393&0.7549&0.4529\\
&(0.0788)&(0.0815)&(0.0344)&(0.0555)&(0.0659)&(0.0465)&(0.0205)&(0.0344)\\               
AFCM-ER-M       &0.7091&0.5899&0.6190&0.5186&0.7178&\textbf{0.6631}&0.6197&0.4946\\
&(0.0225)&(0.0465)&(0.0365)&(0.0344)&(0.0485)&(0.1366)&(0.0372)&(0.0294)\\               
AFCM-ER-Mk      &0.6685&0.4127&0.7427&0.5990&0.7295&0.4936&\textbf{0.7933}&\textbf{0.6499}\\
&(0.0887)&(0.1111)&(0.0821)&(0.0681)&(0.0901)&(0.1581)&(0.0660)&(0.1128)\\               
AFCM-ER-GP-L2   &0.7063&0.5905&0.6648&0.5621&0.6357&0.4131&0.6347&0.4393\\
&(0.0219)&(0.0399)&(0.0252)&(0.0353)&(0.0203)&(0.0339)&(0.0197)&(0.0378)\\               
AFCM-ER-GP-L1   &0.7739&0.6226&\textbf{0.7995}&0.5819&\textbf{0.7322}&0.4506&0.7545&0.4506\\
&(0.0759)&(0.0801)&(0.0353)&(0.0641)&(0.0600)&(0.0595)&(0.0310)&(0.0464)\\               
AFCM-ER-LP-L2   &0.7055&0.4393&0.7954&\textbf{0.6466}&0.6424&0.3181&0.6942&0.5651\\
&(0.0169)&(0.0529)&(0.0342)&(0.0409)&(0.0130)&(0.0241)&(0.0443)&(0.0390)\\               
AFCM-ER-LP-L1   &0.6904&0.5328&0.7552&0.6452&0.6673&0.3994&0.7176&0.5168\\
&(0.0593)&(0.0769)&(0.0530)&(0.0600)&(0.0365)&(0.0438)&(0.0537)&(0.0533)\\               
AFCM-ER-GS-L2   &0.6836&0.4892&0.6890&0.5193&0.6327&0.3814&0.6515&0.4175\\
&(0.0368)&(0.1425)&(0.0422)&(0.0984)&(0.0229)&(0.0705)&(0.0336)&(0.0889)\\               
AFCM-ER-GS-L1   &0.6663&0.5142&0.7301&0.4444&0.6267&0.3987&0.7034&0.4175\\
&(0.0537)&(0.1054)&(0.0509)&(0.1298)&(0.0523)&(0.0775)&(0.0344)&(0.0876)\\               
AFCM-ER-LS-L2   &0.6936&0.4861&0.6988&0.5556&0.6394&0.3579&0.6483&0.4895\\
&(0.0344)&(0.1475)&(0.0390)&(0.0599)&(0.0235)&(0.0770)&(0.0276)&(0.0467)\\               
AFCM-ER-LS-L1   &0.6528&0.4243&0.7443&0.5432&0.6245&0.3611&0.6959&0.4864\\
&(0.0556)&(0.1392)&(0.0470)&(0.0464)&(0.0517)&(0.0875)&(0.0589)&(0.0708)\\
			\hline	
		\end{tabular}}
	\end{table}

In data configuration 1 (the cluster covariance matrices are diagonal and almost the same), the best results (Table \ref{tab:resSD}) according to the $HUL$ index were obtained by FCM-ER-L1, AFCM-ER-GP-L1 and AFCM-ER-M algorithms with values of 0.7946, 0.7739 and 0.7091 respectively. For the $ARI$ index, the best performance was presented by the FCM-ER-L1 algorithm; moreover, AFCM-ER-GP-L1 and FCM-ER-L2 achieved, respectively, the second and third best values. 
The AFCM-ER-LS-L1, AFCM-ER-GS-L1, and AFCM-ER-Mk algorithms produced the worst clustering results for $HUL$, and the algorithms AFCM-ER-Mk, AFCM-ER-LS-L1, and AFCM-ER-LP-L2 for $ARI$. 
As expected, in this data configuration, almost all the methods with global adaptive distance outperformed their respective variants based on local adaptive distance, i.e., the methods AFCM-ER-M, AFCM-ER-GP-L1, AFCM-ER-GP-L2, AFCM-ER-GS-L1 outperformed, respectively, the methods AFCM-ER-Mk, AFCM-ER-LP-L1, AFCM-ER-LP-L2, AFCM-ER-LS-L1. AFCM-ER-GS-L2 outperforms AFCM-ER-LS-L2 concerning $ARI$ index, but AFCM-ER-LS-L2 surpasses AFCM-ER-GS-L2 regarding $HUL$ index. 

Data configuration 2 presents cluster covariance matrices that are diagonal but unequal. In this case, the best result was provided by the algorithms AFCM-ER-GP-L1, AFCM-ER-LP-L2, and FCM-ER-L1 for $HUL$ and by AFCM-ER-LP-L2, AFCM-ER-LP-L1, and AFCM-ER-Mk %algorithms 
for $ARI$. The algorithms AFCM-ER-M, AFCM-ER-GP-L2, and FCM-ER-L2 obtained the worst performance for the $HUL$ index and AFCM-ER-GS-L1, AFCM-ER-M and AFCM-ER-GS-L2 for the $ARI$ index. For this configuration, almost all the methods with local adaptive distance presented better results compared with their respective variants based on global adaptive distance: the methods AFCM-ER-Mk, AFCM-ER-LP-L2, AFCM-ER-LS-L1, AFCM-ER-LS-L2 surpassed, respectively, the methods AFCM-ER-M, AFCM-ER-GP-L2, AFCM-ER-GS-L1, AFCM-ER-GS-L2. AFCM-ER-LP-L1 surpasses AFCM-ER-GP-L1 concerning $ARI$ index, but AFCM-ER-GP-L1 outperforms AFCM-ER-LP-L1 regarding $HUL$ index. 

In data configuration 3, (the cluster covariance matrices are not diagonal but almost the same) the first, second and third best performance for $HUL$ were presented by AFCM-ER-GP-L1, AFCM-ER-Mk, and AFCM-ER-M respectively. For the $ARI$ index, the best results were achieved, respectively, by AFCM-ER-M, AFCM-ER-Mk, and AFCM-ER-GP-L1. As expected, for this data configuration, AFCM-ER-M and AFCM-ER-Mk were among the best. The worst result was presented, respectively by the algorithms AFCM-ER-LS-L1 and AFCM-ER-GS-L1 for $HUL$ and by, %respectively, 
AFCM-ER-LP-L2 and AFCM-ER-LS-L2 for $ARI$. Finally, for this data configuration and concerning the $ARI$ index, the methods with global adaptive distance outperformed their respective variants based on local adaptive distance, i.e., the methods AFCM-ER-M, AFCM-ER-GP-L1, AFCM-ER-GP-L2, AFCM-ER-GS-L1 and AFCM-ER-GS-L2 outperformed, respectively, the methods AFCM-ER-Mk, AFCM-ER-LP-L1, AFCM-ER-LP-L2, AFCM-ER-LS-L1, and AFCM-ER-LS-L2. 

For data configuration 4, where the cluster covariance matrices are not diagonal and unequal, the algorithm AFCM-ER-Mk outperforms the other approaches for both indices. AFCM-ER-M presented the worst performance at $HUL$, and AFCM-ER-GS-L1 and AFCM-ER-GS-L2 had the worst performance for $ARI$. Finally, for this data configuration and concerning the $ARI$ index, the methods with local adaptive distance outperformed their respective variants based on global adaptive distance, i.e., the methods AFCM-ER-Mk, AFCM-ER-LP-L1, AFCM-ER-LP-L2, AFCM-ER-LS-L1 and AFCM-ER-LS-L2 outperformed, respectively, the methods AFCM-ER-M, AFCM-ER-GP-L1, AFCM-ER-GP-L2, AFCM-ER-GS-L1, and AFCM-ER-GS-L2.

\subsection{Experiment 2}	

We developed another experiment to %check
verify the behavior of the proposed methods in the presence of outliers. For this purpose, a synthetic dataset with $80$ objects described by two-dimensional vectors was generated randomly from a normal distribution according to the following parameters (Fig. \ref{img:syndata2} (a)):

\begin{itemize}
		\item Class 1: $\mu_1=0$, $\mu_2=0$, $\sigma_1^2=0.05$, $\sigma_2^2=0.05$;
		\item Class 2: $\mu_1=0.8$, $\mu_2=0.8$, $\sigma_1^2=0.05$, $\sigma_2^2=0.05$;
\end{itemize}

To evaluate the robustness in the presence of outliers, three different percentages of outliers ($10\%$, $20\%$, $30\%$) have been added to the dataset (Figs. \ref{img:syndata2} (b), (c) and (d)) with $\mu_1=0.8, \mu_2=1, \sigma_1^2=5$ and $\sigma_2^2=5$. Fifty replications of the synthetic dataset were carried out in a framework of a Monte Carlo experiment. % as before. 
For each dataset, 50 random initializations of the clustering algorithm are performed. The average and standard deviation of the indexes were calculated based on the 50 Monte Carlo iterations.

\begin{figure}[!htb]
	\centering
	\subfloat[Data with $0\%$ of Outliers]{\includegraphics[width=0.23\textwidth]{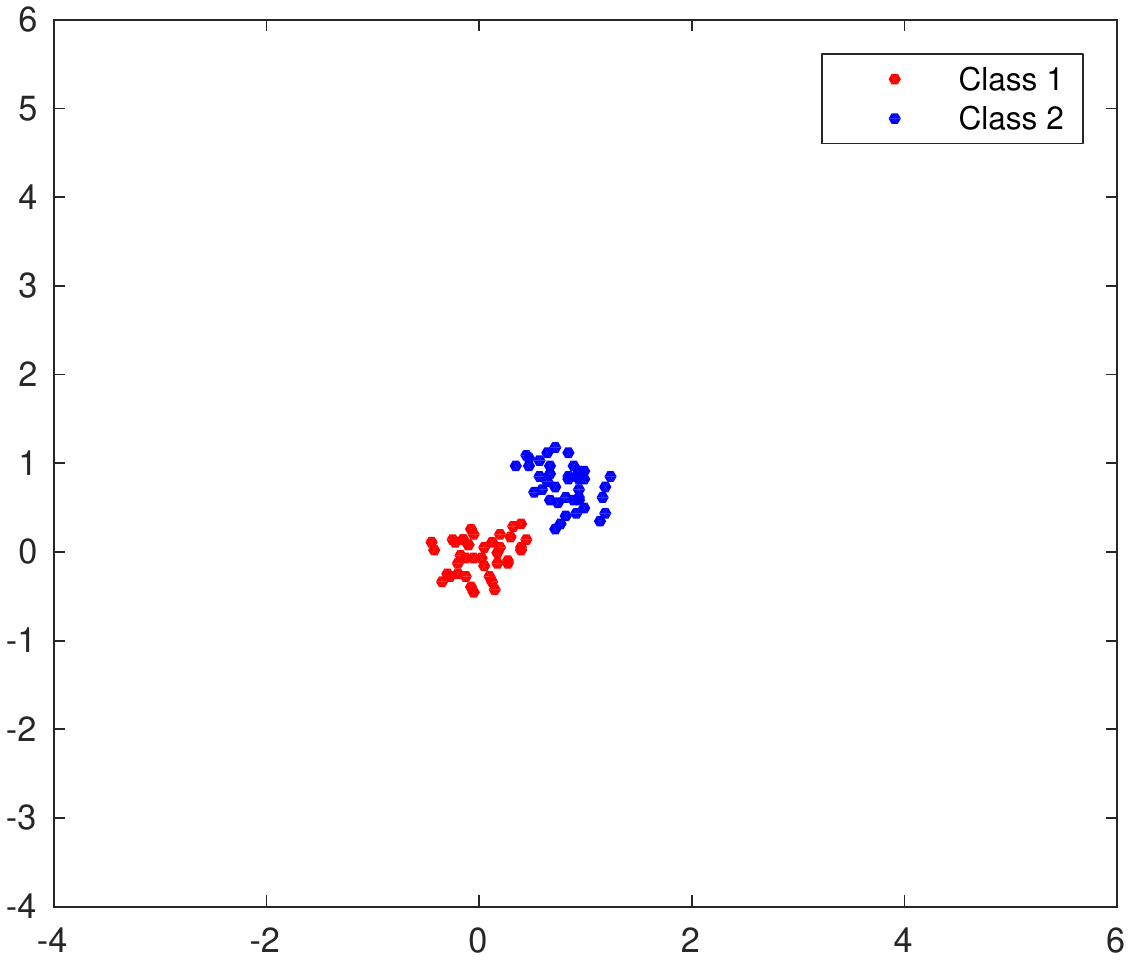}}
	\quad
	\subfloat[Data with $10\%$ of Outliers]{\includegraphics[width=0.23 \textwidth]{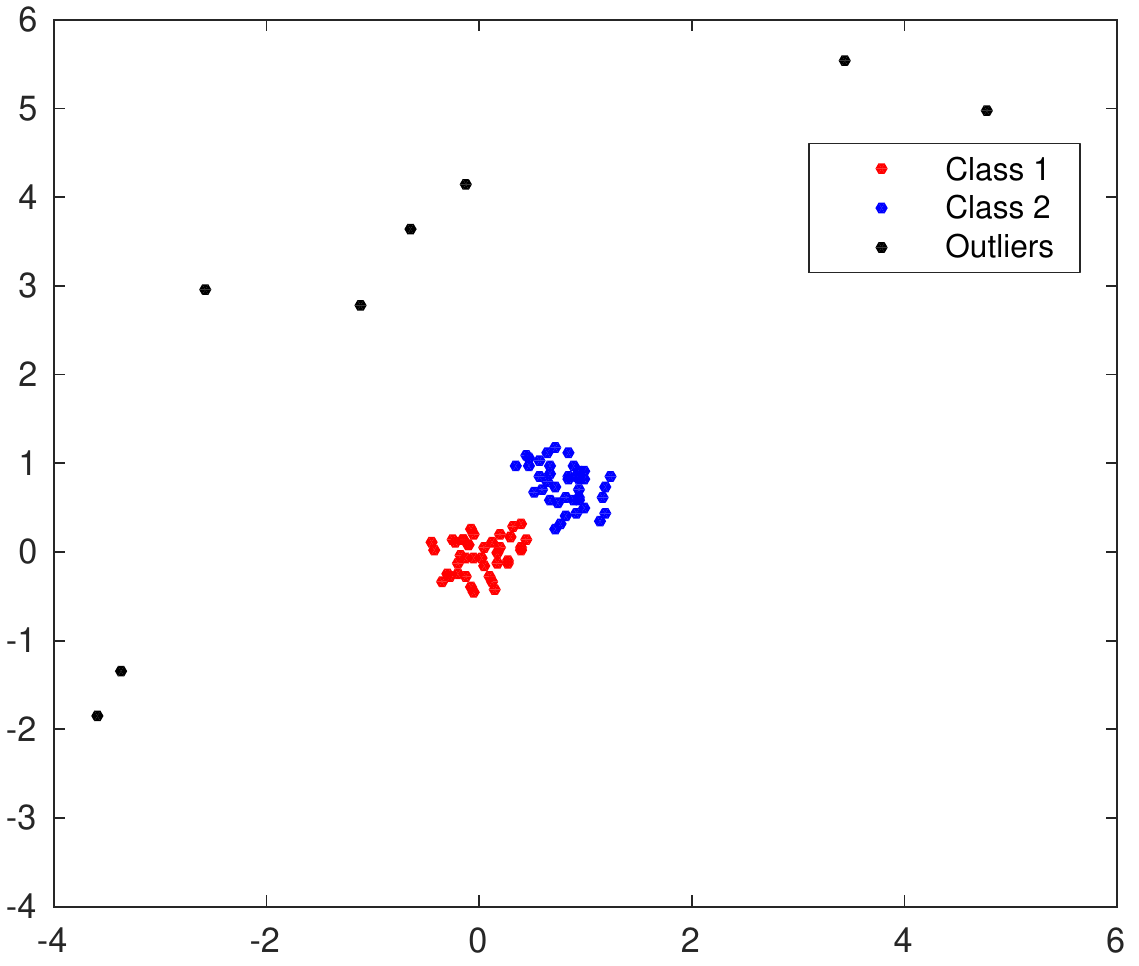}}
	\quad
	\subfloat[Data with $20\%$ of Outliers]{\includegraphics[width=0.23 \textwidth]{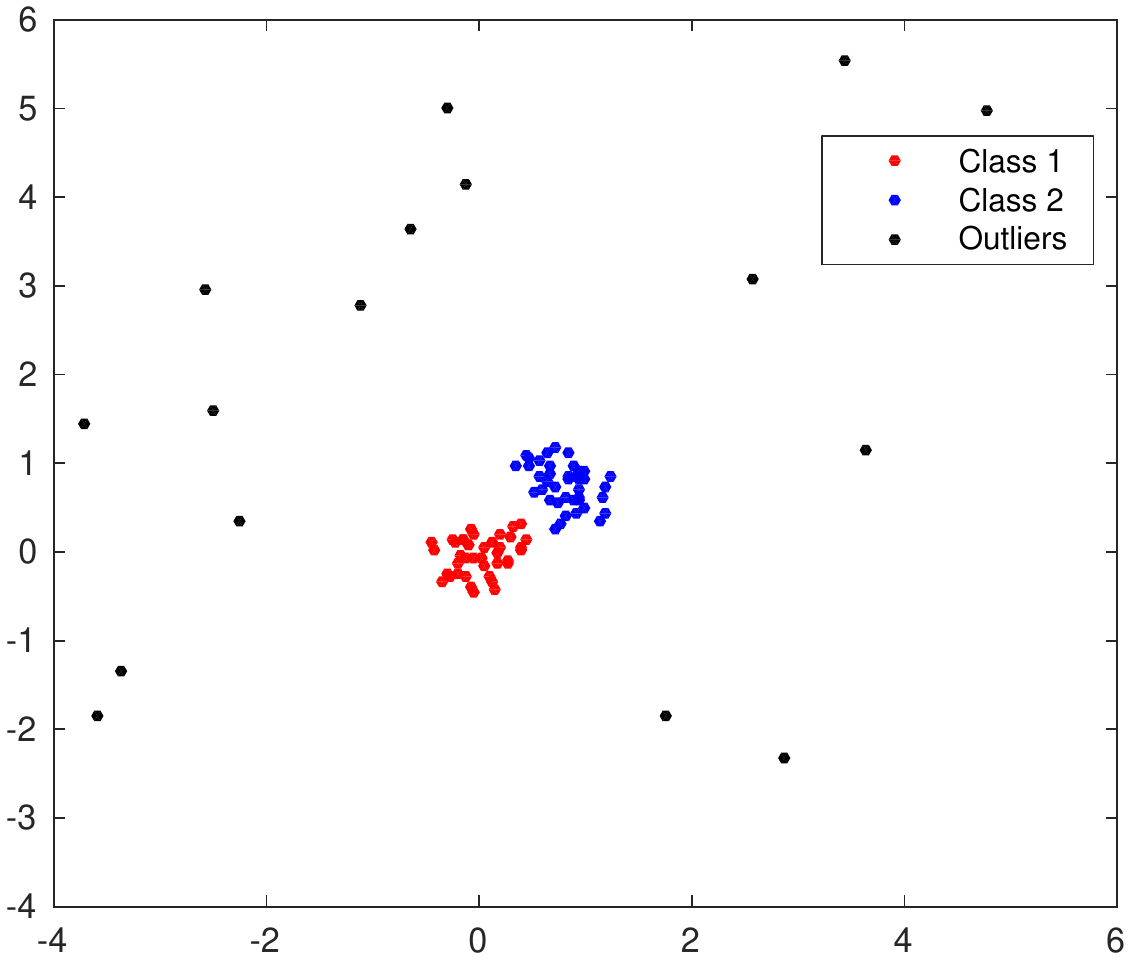}}
	\quad
	\subfloat[Data with $30\%$ of Outliers]{\includegraphics[width=0.23 \textwidth]{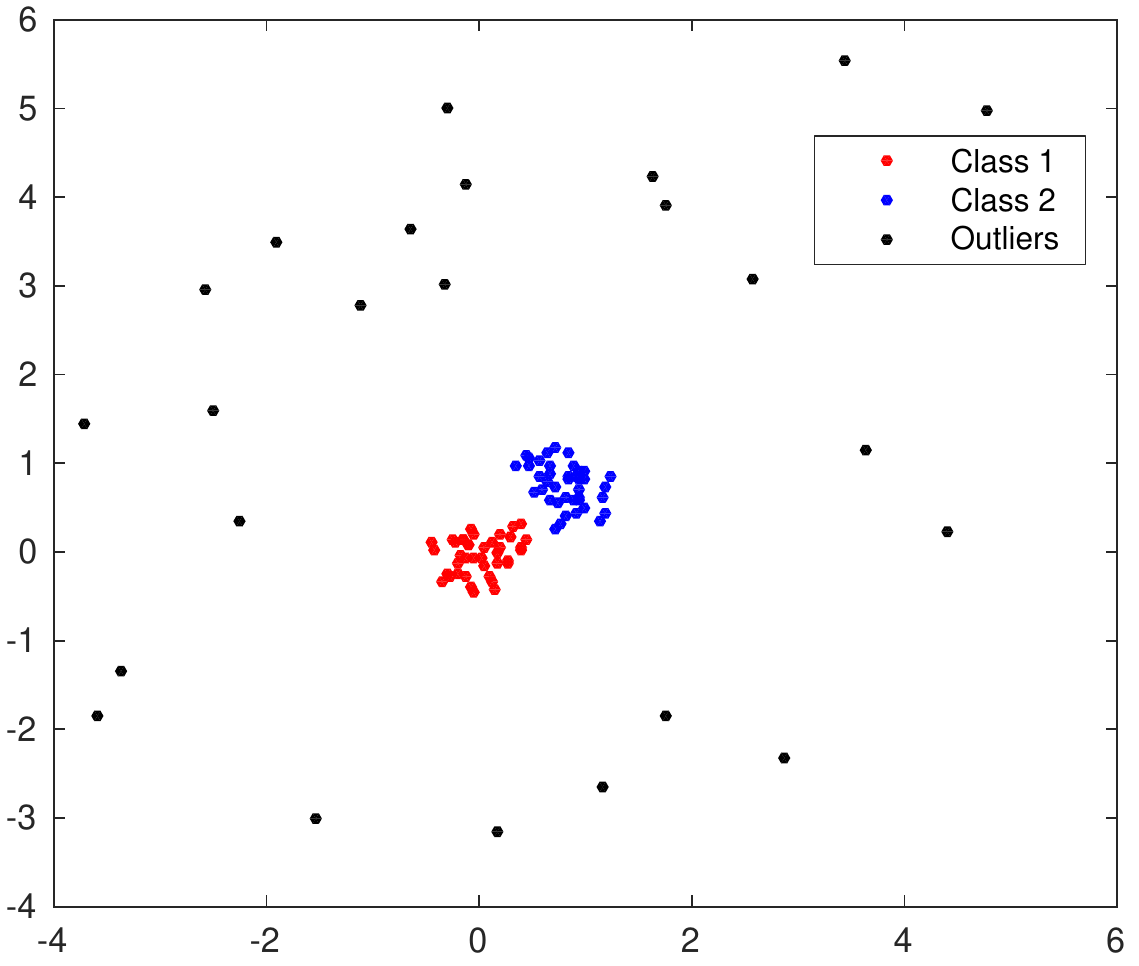}}
	\caption{Synthetic dataset with different percentage of outliers. (a) data with $0\%$ of outliers, (b) data with $10\%$ of outliers, (c) data with $20\%$ of outliers and (d) data with $30\%$ of outliers.}
	\label{img:syndata2}
\end{figure}

The robustness of the algorithms have been evaluated according to the misclassification ($HUL$ and $ARI$ indices) and according to the representative detection both in the presence of outliers. To evaluate this last index, D’Urso et al. \cite{d2015trimmed} introduced an index of robustness detection ($rd$) aiming to assess the robustness in the presence of outliers. This index compares the representatives provided by the algorithms %in the presence of outliers 
with the ideal representative of the a priori classes. Here we adopt this index as follows:

\begin{equation}
rd=\frac{\Delta(\mathbf{g}_{1}^{id},\mathbf{g}_1)+\Delta(\mathbf{g}_{1}^{id},\mathbf{g}_2)+\Delta(\mathbf{g}_{2}^{id},\mathbf{g}_1)+\Delta(\mathbf{g}_{2}^{id},\mathbf{g}_1)}{2\Delta(\mathbf{g}_{1}^{id},\mathbf{g}_{2}^{id})}
\end{equation}

\noindent where $\mathbf{g}_1$ and $\mathbf{g}_2$ denote, respectively, the representatives of clusters 1 and 2 provided by the algorithms, $\mathbf{g}_{1}^{id}$ and $\mathbf{g}_{2}^{id}$ denotes, respectively, the "ideal" representatives of the a priori classes 1 and 2, and $\Delta$ is a suitable Euclidean distance between vectors.

\subsubsection{Results}
%For each algorithm, 
Table \ref{tab:resSDExp2} presents the results for the $HUL$, $ARI$, and $rd$ indices with different percentages of outliers according to the mean and the standard deviation (in parentheses).

\begin{table}[!htb]
	\caption{Mean and standard deviation (in parentheses) of $HUL$, $ARI$, and $rd$ on the synthetic interval-valued dataset for different percentages of outliers.}\label{tab:resSDExp2}
	\resizebox{\textwidth}{!}
	{
		\begin{tabular} {|p{2.8191cm}|p{1.1cm}|p{1.1cm}|p{1.1cm}|p{1.1cm}|p{1.1cm}|p{1.1cm}|p{1.1cm}|p{1.1cm}|p{1.1cm}|p{1.1cm}|p{1.1cm}|p{1.1cm}|}
			\hline
			\multicolumn{1}{|c|}{}&\multicolumn{3}{c|}{$0\%$ Outliers}&\multicolumn{3}{c|}{$10\%$ Outliers}&\multicolumn{3}{c|}{$20\%$ Outliers}&\multicolumn{3}{c|}{$30\%$ Outliers}\\
			\hline
			Algorithms&$HUL$&$ARI$&$rd$&$HUL$&$ARI$&$rd$&$HUL$&$ARI$&$rd$&$HUL$&$ARI$&$rd$\\
			\hline
FCM-ER-L2       &0.5726&0.9761&1.0010&0.5120&0.8103&1.0272&0.5077&0.6524&1.0489&0.5039&0.4776&1.0765\\    
&(0.0226)&(0.0350)&(0.0014)&(0.0087)&(0.3117)&(0.0328)&(0.0097)&(0.3847)&(0.0471)&(0.0085)&(0.3990)&(0.0633)\\
FCM-ER-L1       &0.7708&0.9683&1.0176&0.7143&0.9683&1.0212&0.7383&0.9663&1.0366&0.7438&0.9567&1.0397\\    
&(0.1351)&(0.0431)&(0.0181)&(0.1308)&(0.0431)&(0.0280)&(0.1495)&(0.0416)&(0.0391)&(0.1346)&(0.0528)&(0.0405)\\     
AFCM-ER-M       &0.6319&0.8726&1.0028&0.5214&0.8285&1.0282&0.5108&0.7343&1.0478&0.5082&0.6231&1.0772\\
&(0.1209)&(0.2517)&(0.0064)&(0.0179)&(0.2573)&(0.0331)&(0.0067)&(0.2429)&(0.0466)&(0.0070)&(0.3188)&(0.0665)\\      
AFCM-ER-Mk      &0.6572&0.4467&1.0137&0.5308&0.1963&1.0537&0.5335&0.2199&1.1731&0.5361&0.2424&1.4651\\
&(0.2210)&(0.4029)&(0.0211)&(0.0964)&(0.2552)&(0.0645)&(0.0691)&(0.2561)&(0.5259)&(0.0831)&(0.3015)&(0.8398)\\      
AFCM-ER-GP-L2   &0.5392&0.9761&1.0011&0.5117&0.8145&1.0271&0.5051&0.6548&1.0486&0.5020&0.4765&1.0677\\
&(0.0646)&(0.0350)&(0.0021)&(0.0094)&(0.3042)&(0.0327)&(0.0070)&(0.3847)&(0.0498)&(0.0066)&(0.3981)&(0.0586)\\       
AFCM-ER-GP-L1   &0.7716&0.9692&1.0181&0.7407&0.9732&1.0245&0.7385&0.9633&1.0347&0.7550&0.9585&1.0443\\
&(0.1335)&(0.0397)&(0.0195)&(0.1449)&(0.0389)&(0.0290)&(0.1478)&(0.0432)&(0.0345)&(0.1366)&(0.0488)&(0.0407)\\       
AFCM-ER-LP-L2   &0.5657&0.9761&1.0010&0.5033&0.3116&1.0328&0.5051&0.4161&1.1113&0.5064&0.3339&1.2806\\
&(0.0338)&(0.0350)&(0.0014)&(0.0300)&(0.3210)&(0.0301)&(0.0309)&(0.3074)&(0.4038)&(0.0544)&(0.3145)&(0.6476)\\       
AFCM-ER-LP-L1   &0.7703&0.9653&1.0182&0.6903&0.9673&1.0178&0.6759&0.9586&1.0194&0.6838&0.9665&1.0214\\
&(0.1351)&(0.0436)&(0.0192)&(0.0996)&(0.0440)&(0.0217)&(0.1029)&(0.0504)&(0.0198)&(0.0777)&(0.0478)&(0.0234)\\       
AFCM-ER-GS-L2   &0.5062&0.9761&1.0009&0.5105&0.8128&1.0271&0.5063&0.6533&1.0506&0.5025&0.4818&1.0677\\
&(0.0175)&(0.0350)&(0.0013)&(0.0075)&(0.3044)&(0.0328)&(0.0101)&(0.3838)&(0.0474)&(0.0073)&(0.3974)&(0.0579)\\      
AFCM-ER-GS-L1   &0.7956&0.9702&1.0207&0.7248&0.9732&1.0237&0.7287&0.9672&1.0345&0.7243&0.9596&1.0395\\
&(0.1454)&(0.0399)&(0.0210)&(0.1368)&(0.0389)&(0.0301)&(0.1589)&(0.0394)&(0.0382)&(0.1365)&(0.0501)&(0.0436)\\       
AFCM-ER-LS-L2   &0.5444&0.8795&1.0010&0.4997&0.1466&1.0332&0.4982&0.1971&1.0542&0.4994&0.2690&1.1254\\
&(0.0271)&(0.2741)&(0.0014)&(0.0233)&(0.2276)&(0.0294)&(0.0103)&(0.2515)&(0.0434)&(0.0118)&(0.2950)&(0.3543)\\       
AFCM-ER-LS-L1   &0.7495&0.9491&1.0194&0.6383&0.8515&1.0266&0.5898&0.7302&1.0212&0.5503&0.5868&1.0181\\
&(0.1804)&(0.0642)&(0.0209)&(0.1724)&(0.2542)&(0.0344)&(0.1506)&(0.3209)&(0.0250)&(0.0868)&(0.3845)&(0.0204)\\
\hline	
\end{tabular}}
\end{table}

Table \ref{tab:resSDExp2} shows that for the values of $ARI$ and $rd$ %values 
for $0\%$ of outliers, methods with Euclidean distance presented the best results compared with those based on Mahalanobis and City-Block distances. However, concerning misclassification measured in $HUL$ and $ARI$ with different percents of outliers, algorithms based on City-Block distance outperform the other approaches, being able to identify the presence of clusters even in a noisy environment, and the performance clustering degrades very slowly as the percentage of outliers increases. Also, those methods are more robust concerning the ability to produce cluster prototypes which are not %too
very dissimilar from the ”ideal” centers ($rd$ index). In conclusion, as expected, it is observed that whatever the index considered, algorithms based on Euclidean and Mahalanobis distances are less robust to the presence of outliers than those based on City-Block distance.

\subsection{Real datasets}

%The algorithms proposed in this paper, as well as the previous proposed algorithms, 
The previous and proposed algorithms were also applied on 15 real datasets available at the $UCI$ machine learning repository \cite{asuncion1994uci}: Automobile, Balance Scale, Haberman's Survival, Statlog (Heart), Image Segmentation, Ionosphere, Iris plants, Mnist, Thyroid gland, User knowledge modeling (UKM), Vehicle, Vertebral column, Wisconsin diagnostic breast cancer (WDBC), Wall-Following Robot Navigation (WFRN) and Wine. Table \ref{tab:realDataset} briefly describes %shortly 
the datasets %considered 
in which $N$ represents the number of patterns, $P$ represents the number of variables and $C$ the number of a priori classes. We can see that several different sample sizes, number of attributes, and the number of classes were taken into account.

\begin{table}[htb!]
	\caption{Summary of the real datasets.\label{tab:realDataset}}
	\centering
	\resizebox{\textwidth}{!}{
	\begin{tabular}{ |rccc|rccc|rccc|}
		\hline
		Dataset & $N$ & $P$ & $C$   &Dataset & $N$ & $P$ & $C$  &Dataset & $N$ & $P$ & $C$ \\
		\hline
		Automobile&205&25&6  &Balance Scale&625&4&3 &Haberman & 306&3&2 \\
		Heart&270  &13&2    &Image Segmentation&2310&19&7&Ionosphere&351&33&2 \\
        Iris plants & 150 & 4 & 3    &Mnist&14 780&784&2    &Thyroid gland&215&5&3\\
		UKM&403&5&4    &Vehicle&846&18&4     &Vertebral column &310&6&3\\
        WDBC&569&30&2   &WFRN&5456&4&4  &Wine &178&13&3\\
		\hline
	\end{tabular}}
\end{table}

\subsubsection{Experimental setting}
For real datasets, the choice of the parameter value for the algorithms followed the same procedure used in Section \ref{sect:expSetting}, and it was varied between 0.01 to 300 (with step 0.01). Each algorithm was executed on each dataset 100 times, and the cluster centers were initialized randomly at each time. The best result for each algorithm was selected according to its respective objective function. The parameter $\varepsilon$ was set to $10^{-5}$, the maximum number of iterations $T$ was 100, and for each dataset, the number of clusters was set equal to the number of a priori classes. From the fuzzy partition provided by each algorithm, a hard partition was obtained as described in Section \ref{sect:expSetting}. The $HUL$ and $ARI$ indices were considered to %assess
evaluate the misclassification.

\subsubsection{Results}
Table \ref{tab:perfRD} gives the results provided by the algorithms on real datasets and the performance rank of each algorithm (in parenthesis) according to the indices and datasets. % considered. 
Besides, Table \ref{tab:oarRD} presents the average performance ranking of the clustering algorithms according to both indices computed from Table \ref{tab:perfRD}. It is also shown the performance ranking of the clustering algorithms (in parentheses) according to the average performance ranking.

\begin{table}[htbp]
	\centering
	\caption{Algorithms performance for real datasets.}\label{tab:perfRD}
	\resizebox{\textwidth}{!}{
\begin{tabular}	
{|p{2.9cm}|p{2.5cm}|p{2.5cm}|p{2.5cm}|p{2.5cm}|p{2.5cm}|p{2.5cm}|}
\hline
\multicolumn{1}{|c|}{}&\multicolumn{2}{c|}{Automobile}&\multicolumn{2}{c|}{Balance Scale}&\multicolumn{2}{c|}{Haberman}\\
\hline
Algorithms & $HUL$ & $ARI$& $HUL$ & $ARI$& $HUL$ & $ARI$\\
\hline	
FCM-ER-L2&0.5692 (8.0)&0.0941 (10.0)&0.4568 (6.0)&0.1420 (2.0)&0.5892 (8.0)&-0.0026 (10.0)\\ 
FCM-ER-L1&0.7180 (1.0)&0.1474 (4.0)&0.4299 (11.0)&0.0000 (10.0)&0.4990 (11.0)&-0.0011 (8.0)\\ 
AFCM-ER-M&0.4208 (12.0)&0.1073 (6.0)&0.4535 (7.0)&0.0652 (7.0)&0.5844 (9.0)&0.0035 (6.0)\\ 
AFCM-ER-Mk&0.4806 (11.0)&0.0253 (12.0)&0.5741 (1.0)&0.1131 (3.5)&0.6086 (1.0)&0.1596 (3.0)\\ 
AFCM-ER-GP-L2&0.7073 (3.0)&0.1415 (5.0)&0.5121 (2.0)&0.1131 (3.5)&0.6025 (5.0)&-0.0027 (11.0)\\ 
AFCM-ER-GP-L1&0.7084 (2.0)&0.2160 (1.0)&0.4299 (11.0)&0.0000 (10.0)&0.4987 (12.0)&-0.0011 (8.0)\\ 
AFCM-ER-LP-L2&0.5585 (9.0)&0.1060 (7.0)&0.4640 (5.0)&0.0729 (6.0)&0.6084 (2.0)&0.1456 (4.0)\\ 
AFCM-ER-LP-L1&0.6440 (6.0)&0.0869 (11.0)&0.4977 (4.0)&-0.0110 (12.0)&0.6047 (4.0)&0.1725 (2.0)\\ 
AFCM-ER-GS-L2&0.5700 (7.0)&0.0971 (9.0)&0.4474 (8.0)&0.2934 (1.0)&0.5097 (10.0)&-0.0040 (12.0)\\ 
AFCM-ER-GS-L1&0.6790 (4.0)&0.1604 (3.0)&0.4299 (11.0)&0.0000 (10.0)&0.6018 (6.0)&-0.0011 (8.0)\\ 
AFCM-ER-LS-L2&0.5336 (10.0)&0.0975 (8.0)&0.4992 (3.0)&0.1024 (5.0)&0.6072 (3.0)&0.1001 (5.0)\\ 
AFCM-ER-LS-L1&0.6683 (5.0)&0.2020 (2.0)&0.4299 (9.0)&0.0011 (8.0)&0.6005 (7.0)&0.1789 (1.0)\\ 
\hline
\multicolumn{1}{|c|}{}&\multicolumn{2}{c|}{Heart}&\multicolumn{2}{c|}{Image Segmentation}&\multicolumn{2}{c|}{Ionosphere}\\
\hline
FCM-ER-L2&0.5451 (5.0)&0.3487 (8.0)&0.8027 (6.0)&0.4911 (6.0)&0.5358 (12.0)&0.1588 (4.0)\\ 
FCM-ER-L1&0.6910 (1.0)&0.4227 (4.0)&0.8627 (2.0)&0.5257 (1.0)&0.5377 (8.0)&0.0936 (9.0)\\ 
AFCM-ER-M&0.5156 (7.0)&0.1338 (10.0)&0.1425 (12.0)&0.0041 (12.0)&0.5407 (5.0)&0.0085 (12.0)\\ 
AFCM-ER-Mk&0.5027 (12.0)&-0.0036 (11.5)&0.2101 (11.0)&0.0151 (11.0)&0.5713 (1.0)&0.0321 (11.0)\\ 
AFCM-ER-GP-L2&0.5050 (9.0)&0.3576 (7.0)&0.8365 (3.0)&0.4309 (8.0)&0.5384 (7.0)&0.1588 (4.0)\\ 
AFCM-ER-GP-L1&0.5908 (4.0)&0.1807 (9.0)&0.8659 (1.0)&0.5221 (2.0)&0.5418 (4.0)&0.1045 (8.0)\\ 
AFCM-ER-LP-L2&0.5312 (6.0)&0.4131 (5.0)&0.7281 (9.0)&0.3242 (9.0)&0.5360 (11.0)&0.1406 (6.0)\\ 
AFCM-ER-LP-L1&0.5046 (10.0)&-0.0036 (11.5)&0.6862 (10.0)&0.2979 (10.0)&0.5362 (10.0)&0.0870 (10.0)\\ 
AFCM-ER-GS-L2&0.5045 (11.0)&0.4325 (3.0)&0.8008 (7.0)&0.5021 (4.0)&0.5384 (6.0)&0.1588 (4.0)\\ 
AFCM-ER-GS-L1&0.6792 (3.0)&0.4423 (1.5)&0.8300 (4.0)&0.5137 (3.0)&0.5558 (2.0)&0.1243 (7.0)\\ 
AFCM-ER-LS-L2&0.5111 (8.0)&0.3757 (6.0)&0.7937 (8.0)&0.4864 (7.0)&0.5365 (9.0)&0.1634 (2.0)\\ 
AFCM-ER-LS-L1&0.6794 (2.0)&0.4423 (1.5)&0.8291 (5.0)&0.4928 (5.0)&0.5508 (3.0)&0.2092 (1.0)\\ 
\hline
\multicolumn{1}{|c|}{}&\multicolumn{2}{c|}{Iris plants}&\multicolumn{2}{c|}{Mnist}&\multicolumn{2}{c|}{Thyroid}\\
\hline
FCM-ER-L2&0.7524 (11.0)&0.6199 (11.0)&0.5072 (6.0)&0.9564 (3.0)&0.5997 (12.0)&0.3623 (8.0)\\ 
FCM-ER-L1&0.8538 (6.0)&0.6656 (7.5)&0.5660 (1.0)&0.9503 (8.0)&0.8702 (1.0)&0.7324 (1.0)\\ 
AFCM-ER-M&0.9020 (2.0)&0.9037 (1.0)&0.5538 (3.0)&0.9569 (1.5)&0.6106 (11.0)&0.1136 (11.0)\\ 
AFCM-ER-Mk&0.5870 (12.0)&0.2824 (12.0)&0.5538 (2.0)&0.9569 (1.5)&0.7184 (4.0)&0.0931 (12.0)\\ 
AFCM-ER-GP-L2&0.8878 (4.0)&0.8510 (3.0)&0.5022 (11.0)&0.9561 (4.5)&0.6547 (8.0)&0.5038 (4.0)\\ 
AFCM-ER-GP-L1&0.9481 (1.0)&0.8857 (2.0)&0.5082 (5.0)&0.9532 (6.0)&0.8586 (2.0)&0.7167 (2.0)\\ 
AFCM-ER-LP-L2&0.7811 (8.0)&0.6882 (5.0)&0.5168 (4.0)&0.0513 (9.0)&0.6892 (5.0)&0.6931 (3.0)\\ 
AFCM-ER-LP-L1&0.8931 (3.0)&0.8019 (4.0)&0.5033 (7.0)&0.0169 (10.0)&0.7287 (3.0)&0.4148 (7.0)\\ 
AFCM-ER-GS-L2&0.7607 (9.0)&0.6303 (9.5)&0.5022 (10.0)&0.9561 (4.5)&0.6611 (7.0)&0.4731 (5.0)\\ 
AFCM-ER-GS-L1&0.8535 (7.0)&0.6656 (7.5)&0.5028 (8.0)&0.9511 (7.0)&0.6830 (6.0)&0.3337 (9.0)\\ 
AFCM-ER-LS-L2&0.7549 (10.0)&0.6303 (9.5)&0.5021 (12.0)&0.0041 (12.0)&0.6215 (9.0)&0.4392 (6.0)\\ 
AFCM-ER-LS-L1&0.8584 (5.0)&0.6757 (6.0)&0.5023 (9.0)&0.0096 (11.0)&0.6208 (10.0)&0.1349 (10.0)\\ 
\hline
\end{tabular}}
\end{table}

\begin{table}[!htb]
\ContinuedFloat
\centering
\caption{Algorithms performance for real datasets (continued).}\label{tab:perfRD2}
\resizebox{\textwidth}{!}{
\begin{tabular}
	{|p{2.9cm}|p{2.4cm}|p{2.4cm}|p{2.4cm}|p{2.4cm}|p{2.4cm}|p{2.4cm}|}
	\hline
	\multicolumn{1}{|c|}{}&\multicolumn{2}{c|}{UKM}&\multicolumn{2}{c|}{Vehicle}&\multicolumn{2}{c|}{Vertebral column}\\
\hline
Algorithms & $HUL$ & $ARI$& $HUL$ & $ARI$& $HUL$ & $ARI$\\
\hline	
FCM-ER-L2&0.4900 (10.0)&0.1647 (11.0)&0.5660 (8.0)&0.0736 (11.0)&0.5749 (6.0)&0.2993 (9.0)\\ 
FCM-ER-L1&0.6095 (6.0)&0.1935 (9.0)&0.6503 (3.0)&0.1124 (7.0)&0.6016 (5.0)&0.3330 (8.0)\\ 
AFCM-ER-M&0.7151 (2.0)&0.3530 (1.0)&0.6653 (2.0)&0.1399 (2.0)&0.3811 (12.0)&0.2479 (10.0)\\ 
AFCM-ER-Mk&0.4269 (12.0)&0.0176 (12.0)&0.4362 (12.0)&0.0951 (8.0)&0.4658 (10.0)&0.0045 (12.0)\\ 
AFCM-ER-GP-L2&0.6937 (3.0)&0.3024 (4.0)&0.5535 (9.0)&0.0755 (10.0)&0.5640 (7.0)&0.3344 (7.0)\\ 
AFCM-ER-GP-L1&0.7373 (1.0)&0.3499 (2.0)&0.6435 (4.0)&0.1269 (5.0)&0.6131 (4.0)&0.3416 (4.0)\\ 
AFCM-ER-LP-L2&0.5459 (7.0)&0.2750 (6.0)&0.5730 (7.0)&0.1299 (4.0)&0.5595 (9.0)&0.3526 (3.0)\\ 
AFCM-ER-LP-L1&0.6633 (4.0)&0.3264 (3.0)&0.6695 (1.0)&0.1425 (1.0)&0.6204 (3.0)&0.3398 (5.0)\\ 
AFCM-ER-GS-L2&0.4728 (11.0)&0.1679 (10.0)&0.5416 (10.0)&0.0698 (12.0)&0.5628 (8.0)&0.3368 (6.0)\\ 
AFCM-ER-GS-L1&0.5236 (8.0)&0.2327 (7.0)&0.5043 (11.0)&0.1135 (6.0)&0.6984 (1.0)&0.4365 (1.0)\\ 
AFCM-ER-LS-L2&0.6180 (5.0)&0.2858 (5.0)&0.5865 (6.0)&0.0792 (9.0)&0.4628 (11.0)&0.0269 (11.0)\\ 
AFCM-ER-LS-L1&0.5107 (9.0)&0.2319 (8.0)&0.6047 (5.0)&0.1353 (3.0)&0.6417 (2.0)&0.3558 (2.0)\\ 
\hline
\multicolumn{1}{|c|}{}&\multicolumn{2}{c|}{WDBC}&\multicolumn{2}{c|}{WFRN}&\multicolumn{2}{c|}{Wine}\\
\hline
FCM-ER-L2&0.5677 (5.0)&0.6895 (6.0)&0.4972 (10.0)&0.1581 (10.0)&0.5550 (9.0)&0.5953 (8.0)\\ 
FCM-ER-L1&0.8032 (2.0)&0.7551 (4.0)&0.5527 (5.0)&0.1591 (9.0)&0.7601 (4.0)&0.8804 (3.0)\\ 
AFCM-ER-M&0.5314 (12.0)&0.0100 (12.0)&0.5893 (1.0)&0.1451 (11.0)&0.9765 (1.0)&0.9651 (1.0)\\ 
AFCM-ER-Mk&0.5451 (7.0)&0.1578 (10.0)&0.5381 (6.0)&0.2165 (4.0)&0.5014 (12.0)&0.2619 (12.0)\\ 
AFCM-ER-GP-L2&0.5318 (11.0)&0.6954 (5.0)&0.5297 (8.0)&0.1608 (8.0)&0.6149 (7.0)&0.6316 (7.0)\\ 
AFCM-ER-GP-L1&0.8207 (1.0)&0.7736 (3.0)&0.5252 (9.0)&0.2777 (3.0)&0.7730 (3.0)&0.8804 (3.0)\\ 
AFCM-ER-LP-L2&0.5342 (9.0)&0.4513 (8.0)&0.5548 (3.0)&0.3419 (2.0)&0.6505 (6.0)&0.5114 (10.0)\\ 
AFCM-ER-LP-L1&0.7963 (3.0)&0.7925 (1.0)&0.5530 (4.0)&0.3543 (1.0)&0.7905 (2.0)&0.8804 (3.0)\\ 
AFCM-ER-GS-L2&0.7121 (4.0)&0.7802 (2.0)&0.4847 (12.0)&0.1887 (5.0)&0.5147 (11.0)&0.5083 (11.0)\\ 
AFCM-ER-GS-L1&0.5620 (6.0)&0.5057 (7.0)&0.5613 (2.0)&0.1814 (7.0)&0.5869 (8.0)&0.7185 (6.0)\\ 
AFCM-ER-LS-L2&0.5332 (10.0)&0.1390 (11.0)&0.4850 (11.0)&0.1864 (6.0)&0.5213 (10.0)&0.5188 (9.0)\\ 
AFCM-ER-LS-L1&0.5388 (8.0)&0.4292 (9.0)&0.5369 (7.0)&0.0717 (12.0)&0.6601 (5.0)&0.8185 (5.0)\\
\hline
\end{tabular}}
\end{table}

\begin{table}[!htb]
\caption{Average performance ranking in real datasets.\label{tab:oarRD}}
\resizebox{\textwidth}{!}{
\begin{tabular}{|l|c|c|c|c|c|c|}
\hline
Index &FCM-ER-L2&FCM-ER-L1&AFCM-ER-M&AFCM-ER-Mk&AFCM-ER-GP-L2&AFCM-ER-GP-L1\\
  \hline	   
HUL&8.1 (10.0)&4.5 (2.0)&6.5 (7.0)&7.6 (9.0)&6.5 (6.0)&4.3 (1.0)\\
ARI&7.8 (11.0)&6.2 (7.0)&6.9 (9.0)&9.0 (12.0)&6.1 (5.0)&4.5 (1.0)\\
\hline

Index &AFCM-ER-LP-L2&AFCM-ER-LP-L1&AFCM-ER-GS-L2&AFCM-ER-GS-L1&AFCM-ER-LS-L2&AFCM-ER-LS-L1\\
  \hline	   
HUL&6.7 (8.0)&4.9 (3.0)&8.7 (12.0)&5.8 (4.0)&8.3 (11.0)&6.1 (5.0)\\
ARI&5.8 (3.0)&6.1 (6.0)&6.5 (8.0)&6.0 (4.0)&(10.0)&(2.0)\\
   \hline
\end{tabular}}
\end{table}

Table \ref{tab:oarRD} shows that the algorithm AFCM-ER-GP-L1 presents the best average performance ranking for real datasets whatever the considered index. Moreover, the algorithms FCM-ER-L1 and AFCM-ER-LP-L1 achieved the second and third best average ranking, respectively, for the index $HUL$ and, AFCM-ER-LS-L1 and AFCM-ER-LP-L2 for $ARI$. Finally, the algorithm AFCM-ER-GS-L2 obtained the worst results for $HUL$ and AFCM-ER-Mk for $ARI$.

\subsection{Brodatz texture images for segmentation}

Image segmentation is a challenging but essential task in many image analysis or computer vision applications. This section presents different experiments with texture images to evaluate the performance and robustness of the proposed algorithms in texture image segmentation with and without noise. The used images were taken from the Brodatz texture dataset \cite{brodtimage} and they are employed to demonstrate the performance of the algorithms for datasets with high dimensionality. The texture images without and with Gaussian noise (mean 0, variance 0.3) are shown in Figure \ref{img:texturesImages}. Moreover, in Fig. \ref{img:texturesImages} (c), (f) and (i), we can see the corresponding ideal segmentation results used as a reference to determine the segmentation performance quantitatively. The images are synthesized with different kinds of texture images: two-textural image (D4 and D49), five-textural image (D21, D22, D49, D53, and D55), and seven-textural image (D3, D6, D21, D49, D53, D56, and D93).

\begin{figure}[!ht]
	\centering
	\subfloat[2-textural image]	{\includegraphics[width=0.13 \textwidth]{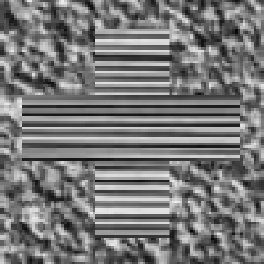}}
	\quad
	\subfloat[2-textural image with Gaussian noise]	{\includegraphics[width=0.13\textwidth]{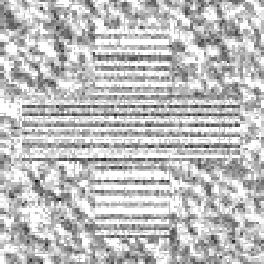}}
	\quad
	\subfloat[Ideal segmentation of 2-textural image]	{\includegraphics[width=0.16\textwidth]{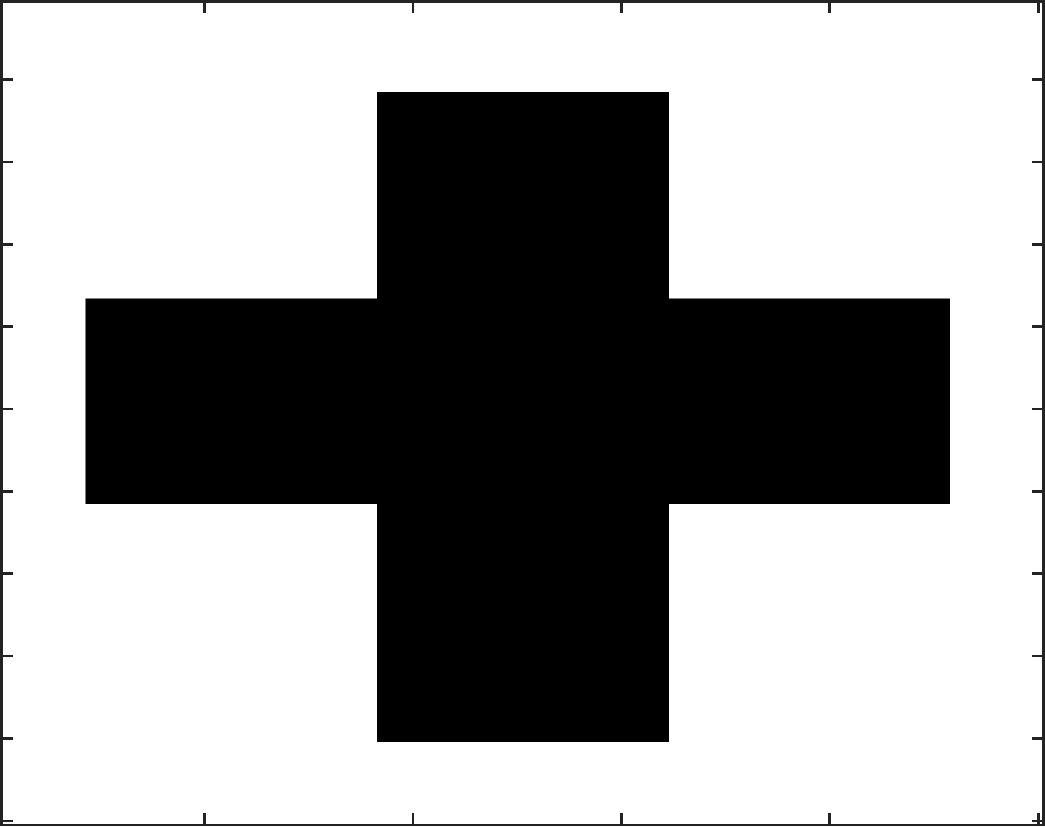}}
	\quad
	\subfloat[5-textural image] {\includegraphics[width=0.13\textwidth]{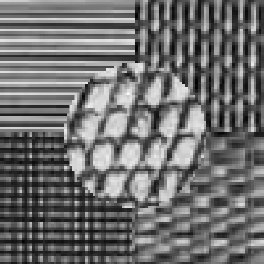}}{}
	\quad
	\subfloat[5-textural image with Gaussian noise] {\includegraphics[width=0.13\textwidth]{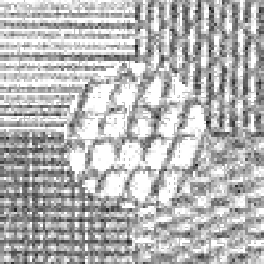}}
	\quad
	\subfloat[Ideal segmentation of 5-textural image] {\includegraphics[width=0.16\textwidth]{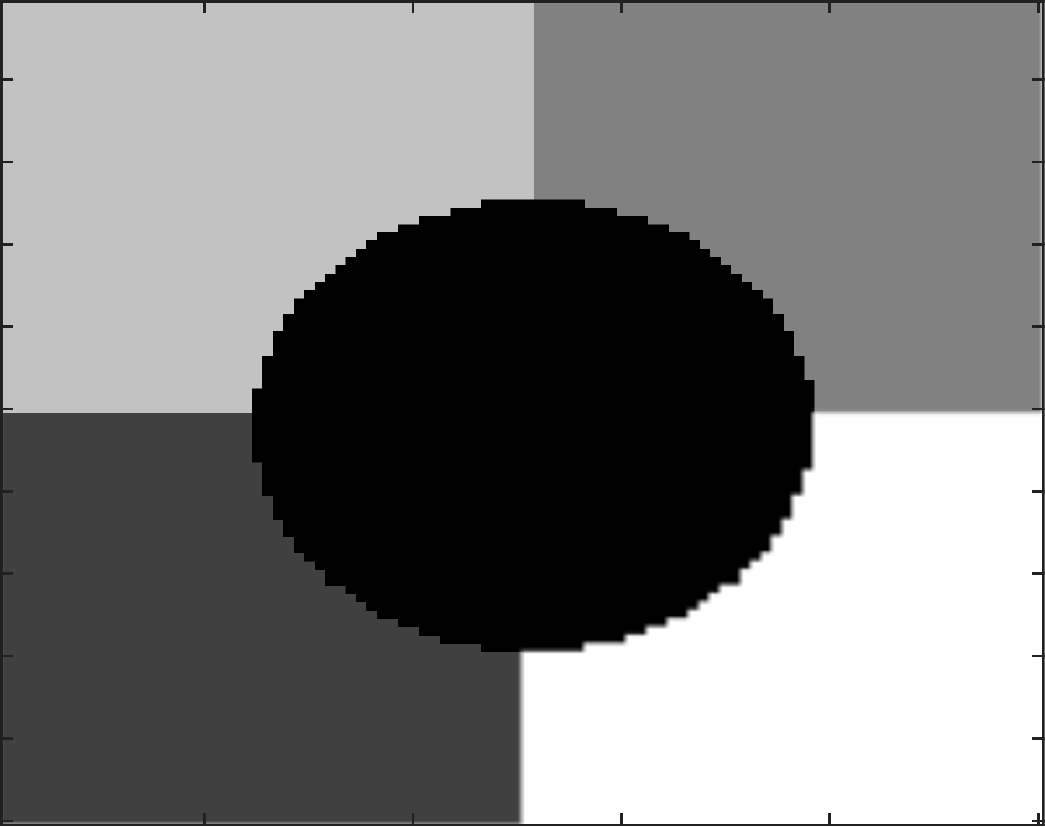}}
	\quad
	\subfloat[7-textural image] {\includegraphics[width=0.13\textwidth]{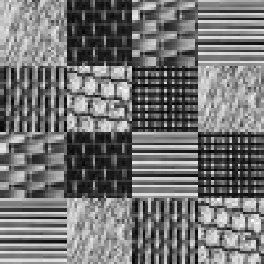}}{}
	\quad
	\subfloat[7-textural image with Gaussian noise] {\includegraphics[width=0.13\textwidth]{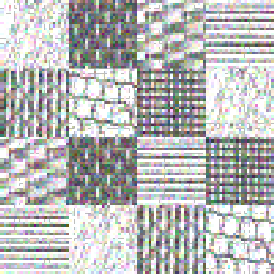}}{}
	\quad	
	\subfloat[Ideal segmentation of 7-textural image] {\includegraphics[width=0.16\textwidth]{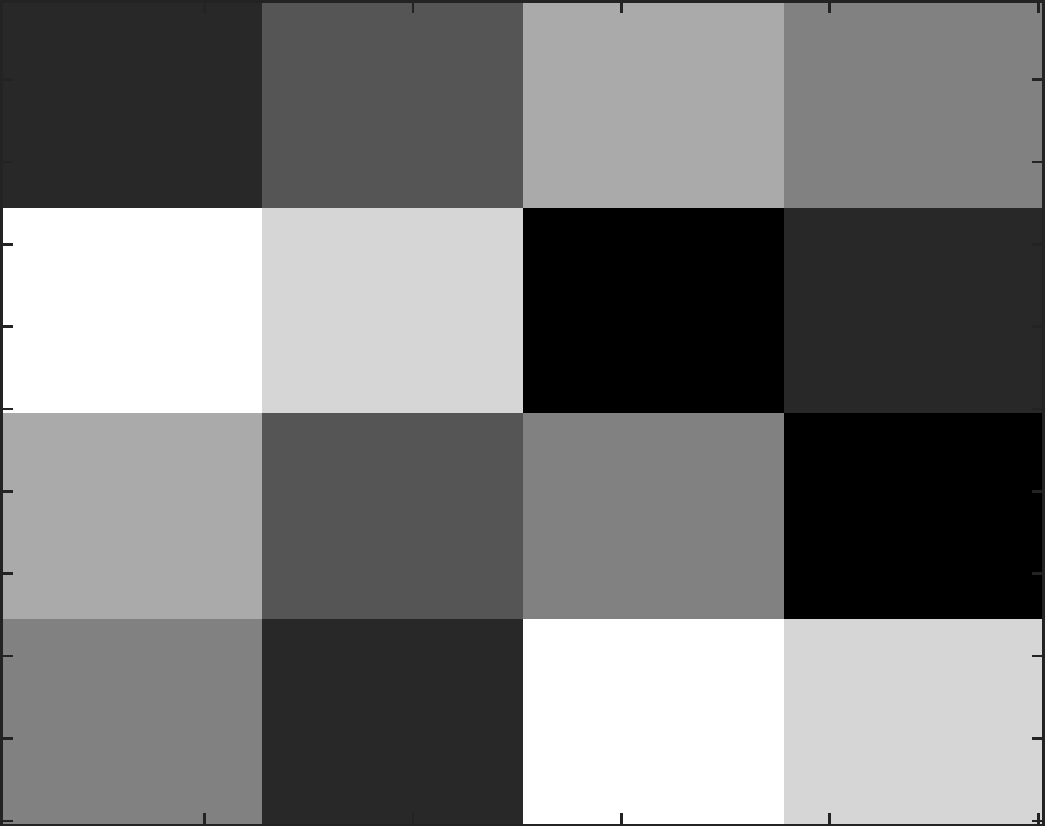}}{}	
	\caption{Two, five and seven-textural images. (a), (d) and (g) show the original 2, 5 and 7 textural images respectively. (b), (e) and (h) present the image with Gaussian noises $N(0,0.3)$ and (c), (f) and (i) are the ideal segmentation result of the original image.}
	\label{img:texturesImages}
\end{figure}

\subsubsection{Experimental setting}

To extract the features of the texture images was used the Gabor filter as in Ref. \cite{kyrki2004simple}. A filter bank with six orientations (every $\ang{30}$ ) and five frequencies starting from 0.4 was created by extracting 30-dimensional features for every pixel of the $100\times 100$ texture images filtered by the filter bank. After extracting the texture features, we used the algorithms to segment each texture image. The choice of the parameter values was obtained as in previous sections, and the $T_u$ value varied between 0.1 to 100 (with step 0.1). The algorithms were executed on each dataset 10 times, and we selected the best result according to their respective objective function.

\subsubsection{Results}

Tables \ref{tab:imageResults} and \ref{tab:imageResultsN} present the clustering results obtained for the algorithms on images without and with noise respectively according to $HUL$ and $ARI$. Also, Figures \ref{img:ResultsTextImagesCross}, \ref{img:ResultsTextImagesCircle} and \ref{img:ResultsTextImagesImage7} show the unsupervised segmentation results using each algorithms on 2, 4 and 7 textural images respectively without and with Gaussian noise.

\begin{table}[!htb]
	\caption{$HUL$ and $ARI$ index for 2, 5 and 7 textural image without noise.\label{tab:imageResults}}
	\centering
	\resizebox{\textwidth}{!}{
		\begin{tabular}%{|l|c|c|c|c|c|c|}
		{|p{2.9cm}|p{1.9cm}|p{1.9cm}|p{1.9cm}|p{1.9cm}|p{1.9cm}|p{1.9cm}|}
		\hline
		& \multicolumn{2}{c|}{2-Textural Image}&\multicolumn{2}{c|}{5-Textural Image}&\multicolumn{2}{c|}{7-Textural Image}\\
		 \hline
		 Algorithms&$HUL$&$ARI$&$HUL$&$ARI$&$HUL$&$ARI$\\
		 \hline
FCM-ER-L2		&0.5547 (6)&0.7438 (4)&0.6484(5)&0.7572 (3)&0.7577 (3)&0.4716 (5)\\
FCM-ER-L1		&0.7853 (1)&0.7222 (7)&0.5780 (6)&0.7502 (4)&0.7225 (6)&0.5191 (2)\\
AFCM-ER-M		&0.5398 (10)&0.0079 (10)&0.2990 (12)&0.1906 (10)&0.8765 (1)&0.5231 (1)\\
AFCM-ER-Mk		&0.5365 (11)&-0.0236 (11)&0.5558 (8)&0.1792 (11)&0.6561 (10)&0.2157 (10)\\
AFCM-ER-GP-L2	&0.5710 (5)&0.7438 (4)&0.6873 (3)&0.7740 (2)&0.8063 (2)&0.4981 (4)\\
AFCM-ER-GP-L1	&0.6980 (3)&0.7571 (2)&0.6487 (4)&0.7826 (1)&0.6939 (7)&0.5051 (3)\\
AFCM-ER-LP-L2	&0.5506 (7)&0.7266 (6)&0.6964 (2)&0.4811 (8)&0.6581 (9)&0.2632 (9)\\
AFCM-ER-LP-L1	&0.7005 (2)&0.7740 (1)&0.5580 (7)&0.6054 (6)&0.6862 (8)&0.4319 (8)\\
AFCM-ER-GS-L2	&0.5399 (9)&0.7438 (4)&0.5160 (10)&0.4860 (7)&0.7403 (5)&0.4666 (6)\\
AFCM-ER-GS-L1	&0.5925 (4)&0.7114 (8)&0.5053 (11)&0.1230 (12)&0.6003 (12)&0.1081 (11)\\
AFCM-ER-LS-L2	&0.5359 (12)&-0.0430 (12)&0.7629 (1)&0.4379 (9)&0.7404 (4)&0.4353 (7)\\
AFCM-ER-LS-L1	&0.5420 (8)&0.1129 (9)&0.5368 (9)&0.6319 (5)&0.6267 (11)&0.0743 (12)\\
			\hline
		\end{tabular}}
	\end{table}

	\begin{table}[!htb]
		\caption{$HUL$ and $ARI$ index for 2, 5 and 7 textural image with Gaussian noise.\label{tab:imageResultsN}}
		\centering
		\resizebox{\textwidth}{!}{
			\begin{tabular}%{|l|c|c|c|c|c|c|}
			{|p{2.9cm}|p{1.9cm}|p{1.9cm}|p{1.9cm}|p{1.9cm}|p{1.9cm}|p{1.9cm}|}
				\hline
				& \multicolumn{2}{c|}{2-Textural Image}&\multicolumn{2}{c|}{5-Textural Image}&\multicolumn{2}{c|}{7-Textural Image}\\
				\hline
				Algorithms&$HUL$&$ARI$&$HUL$&$ARI$&$HUL$&$ARI$\\
				\hline
FCM-ER-L2		&0.5694 (4)&0.5863 (7)&0.4795 (9)&0.4523 (6)&0.6437 (4)&0.2894 (6)\\
FCM-ER-L1		&0.630 (2)&0.6328 (2)&0.6164 (5)&0.5925 (4)&0.5830 (8)&0.3274 (2)\\
AFCM-ER-M		&0.5392 (8)&0.0434 (8)&0.2021 (12)&0.0678 (12)&0.1484 (12)&0.0390 (12)\\
AFCM-ER-Mk		&0.5195 (12)&0.0015 (9)&0.4318 (10)&0.1805 (10)&0.5412 (9)&0.1373 (9)\\
AFCM-ER-GP-L2	&0.5403 (5)&0.5866 (5.5)&0.7592 (1)&0.6730 (1)&0.6964 (1)&0.3219 (3)\\
AFCM-ER-GP-L1	&0.6581 (1)&0.6427 (1)&0.6589 (4)&0.6515 (2)&0.6808 (2)&0.4451 (1)\\
AFCM-ER-LP-L2	&0.5344 (10)&-0.0015 (11)&0.6910 (3)&0.3515 (9)&0.6139 (7)&0.2076 (8)\\
AFCM-ER-LP-L1	&0.5759 (3)&0.5960 (4)&0.7028 (2)&0.6102 (3)&0.6608 (3)&0.3000 (4)\\
AFCM-ER-GS-L2	&0.5399 (7)&0.5866 (5.5)&0.4844 (8)&0.4692 (5)&0.4767 (10)&0.2912 (5)\\
AFCM-ER-GS-L1	&0.5403 (6)&0.6045 (3)&0.5269 (7)&0.1018 (11)&0.6418 (5)&0.0814 (10)\\
AFCM-ER-LS-L2	&0.5361 (9)&-0.0375 (12)&0.4309 (11)&0.4246 (8)&0.4745 (11)&0.2873 (7)\\
AFCM-ER-LS-L1	&0.5215 (11)&-0.0001 (10)&0.5923 (6)&0.4499 (7)&0.6264 (6)&0.0592 (11)\\
				\hline
			\end{tabular}}
		\end{table}
	
	\begin{figure}[!htb]
		\centering
		{\includegraphics[width=0.14\textwidth]{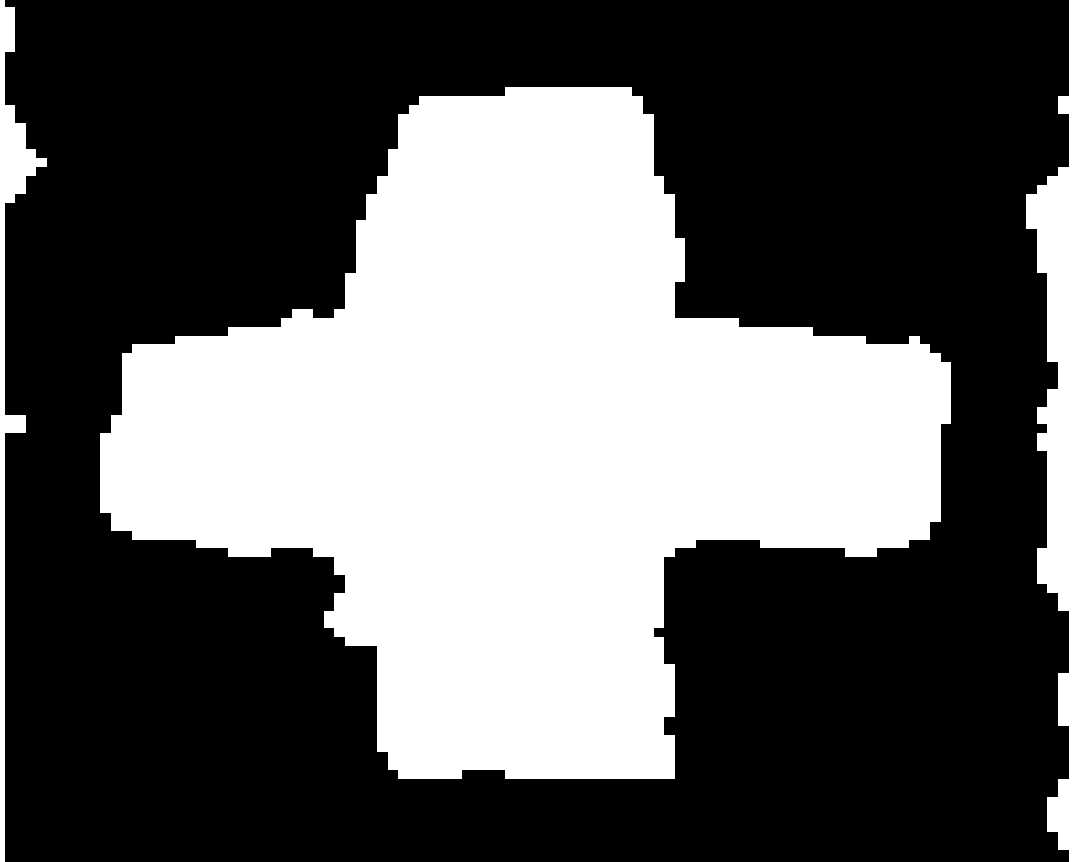}}
		\quad
		{\includegraphics[width=0.14\textwidth]{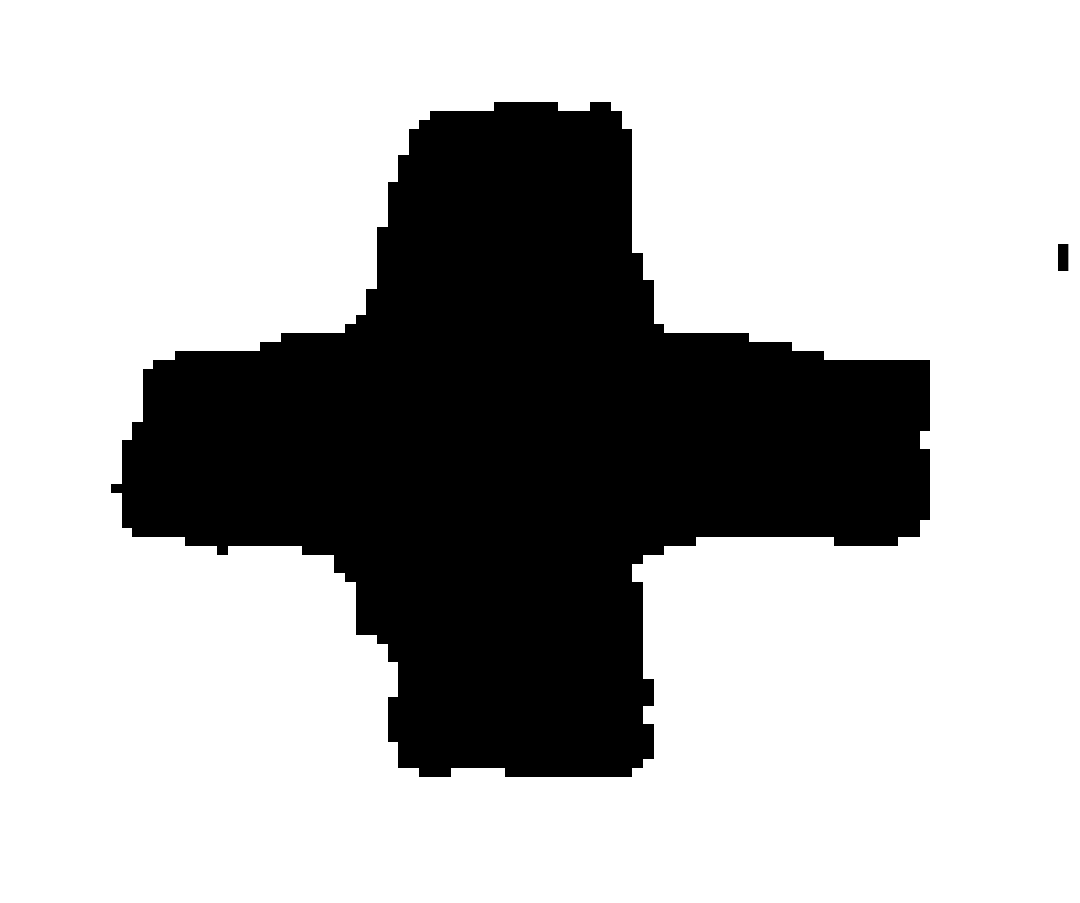}}
		\quad
		{\includegraphics[width=0.14\textwidth]{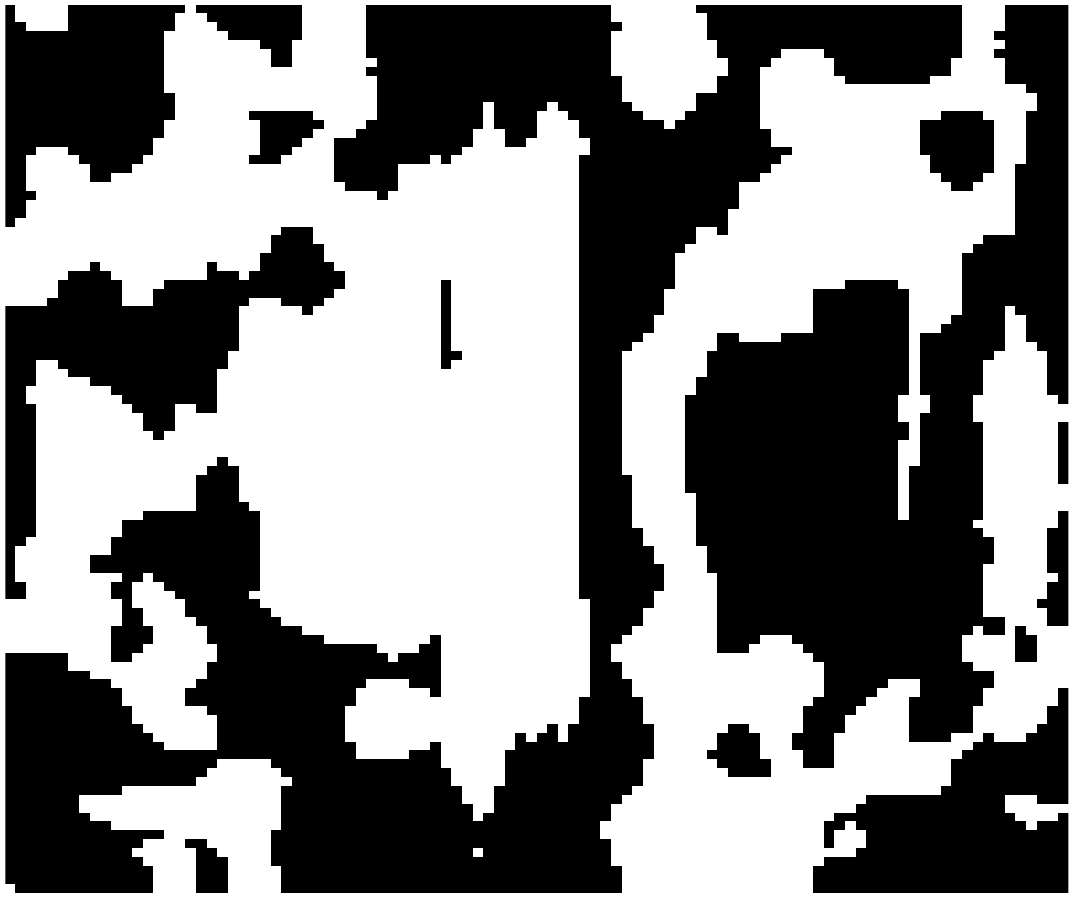}}
		\quad
		{\includegraphics[width=0.14\textwidth]{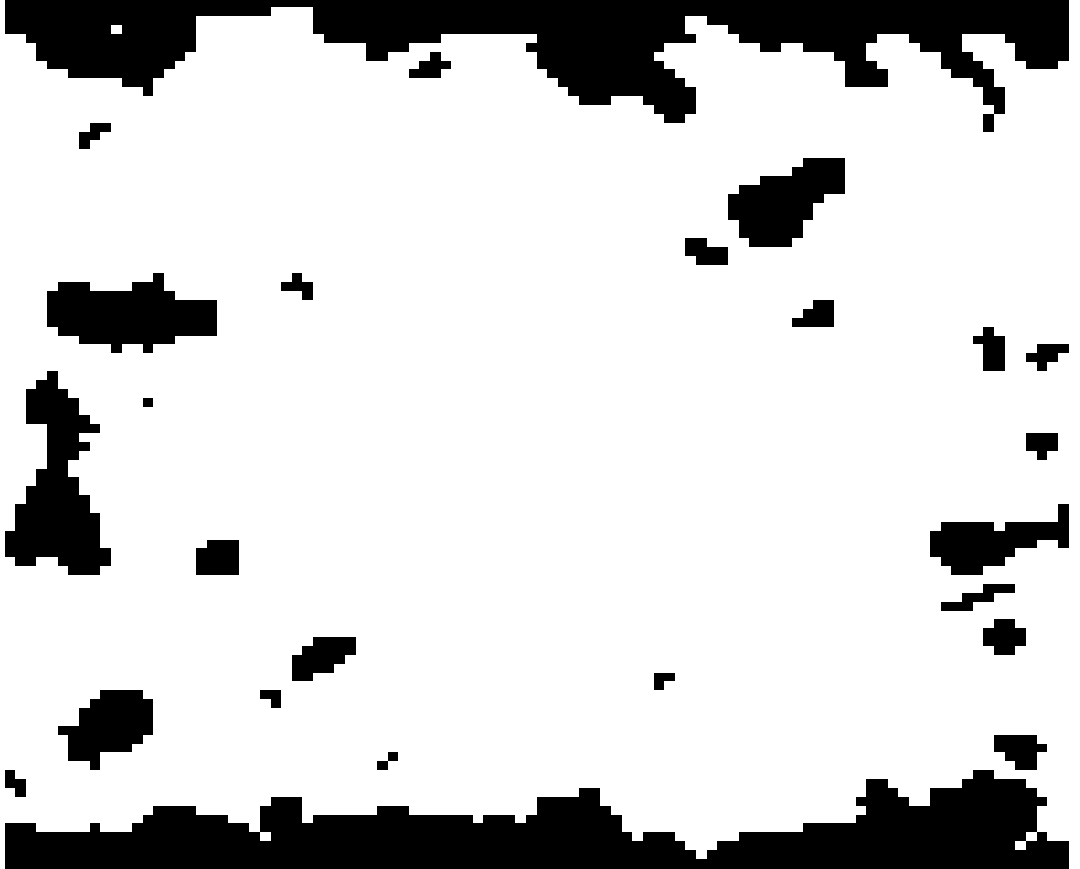}}
		\quad
		{\includegraphics[width=0.14\textwidth]{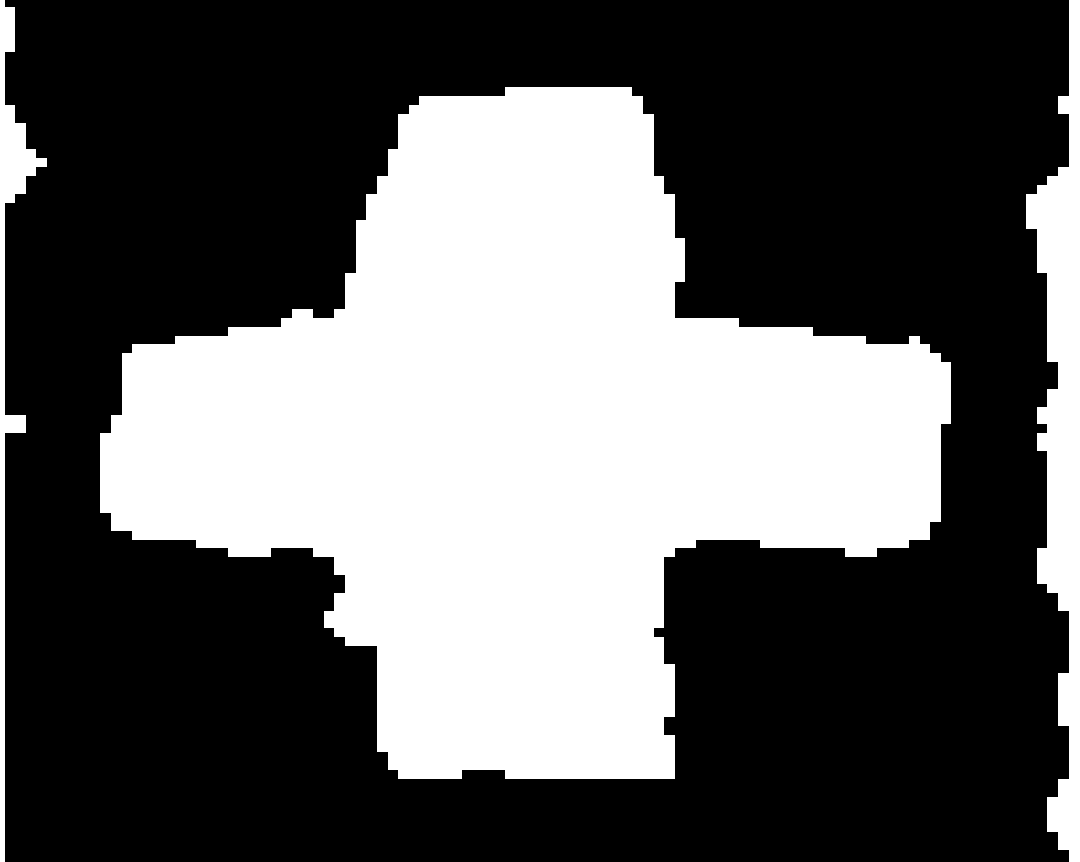}}
		\quad
		{\includegraphics[width=0.14\textwidth]{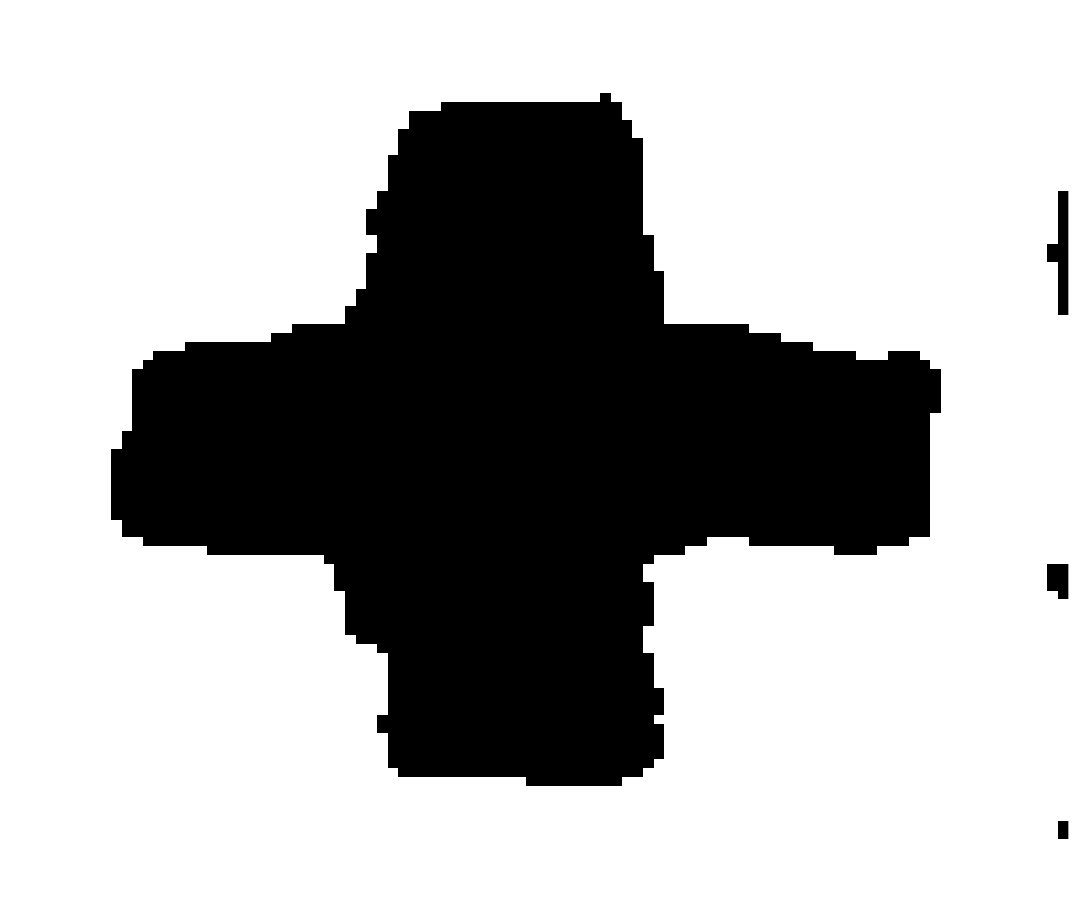}}
		\quad
		\subfloat[FCM-ER-L2] {\includegraphics[width=0.14\textwidth]{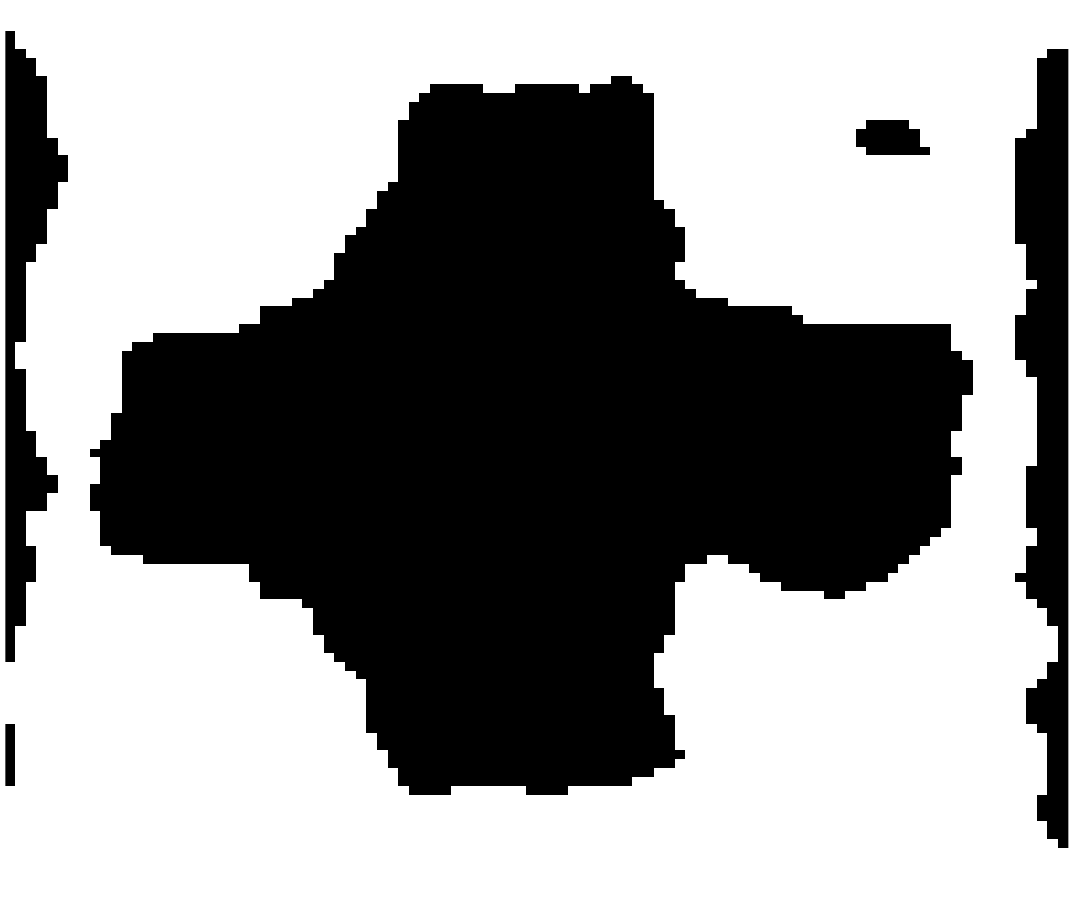}}
		\quad
		\subfloat[FCM-ER-L1] {\includegraphics[width=0.14\textwidth]{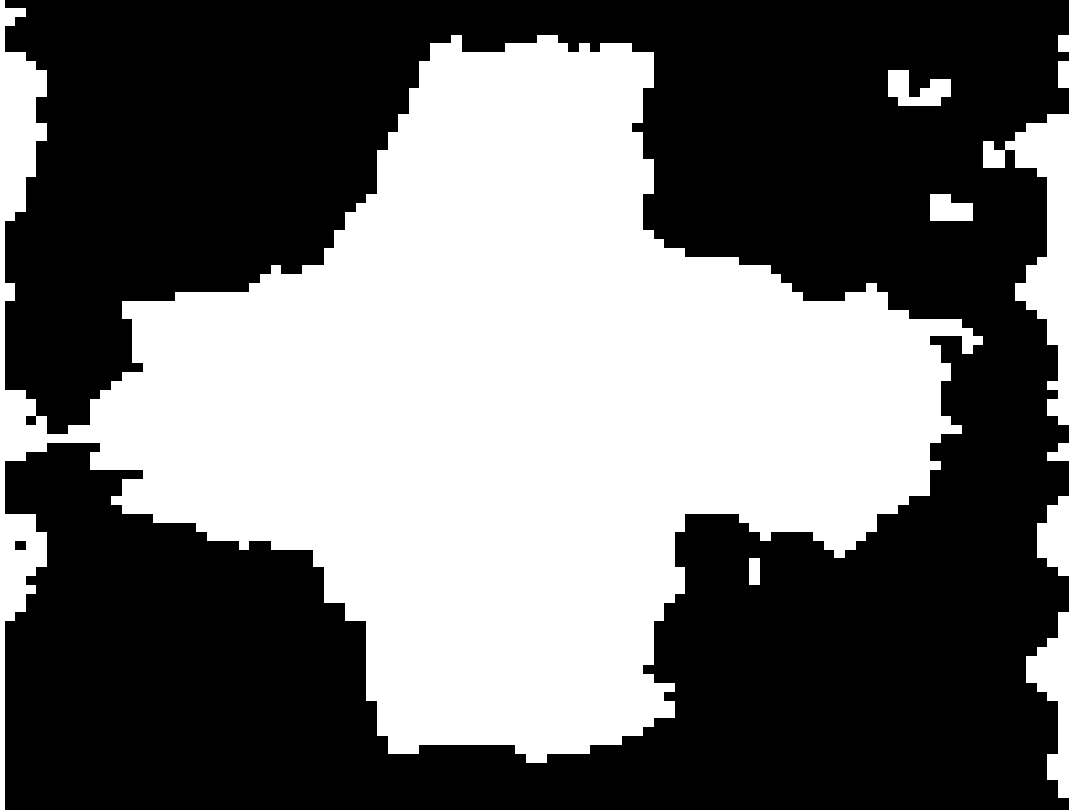}}
		\quad
		\subfloat[AFCM-ER-M] {\includegraphics[width=0.14\textwidth]{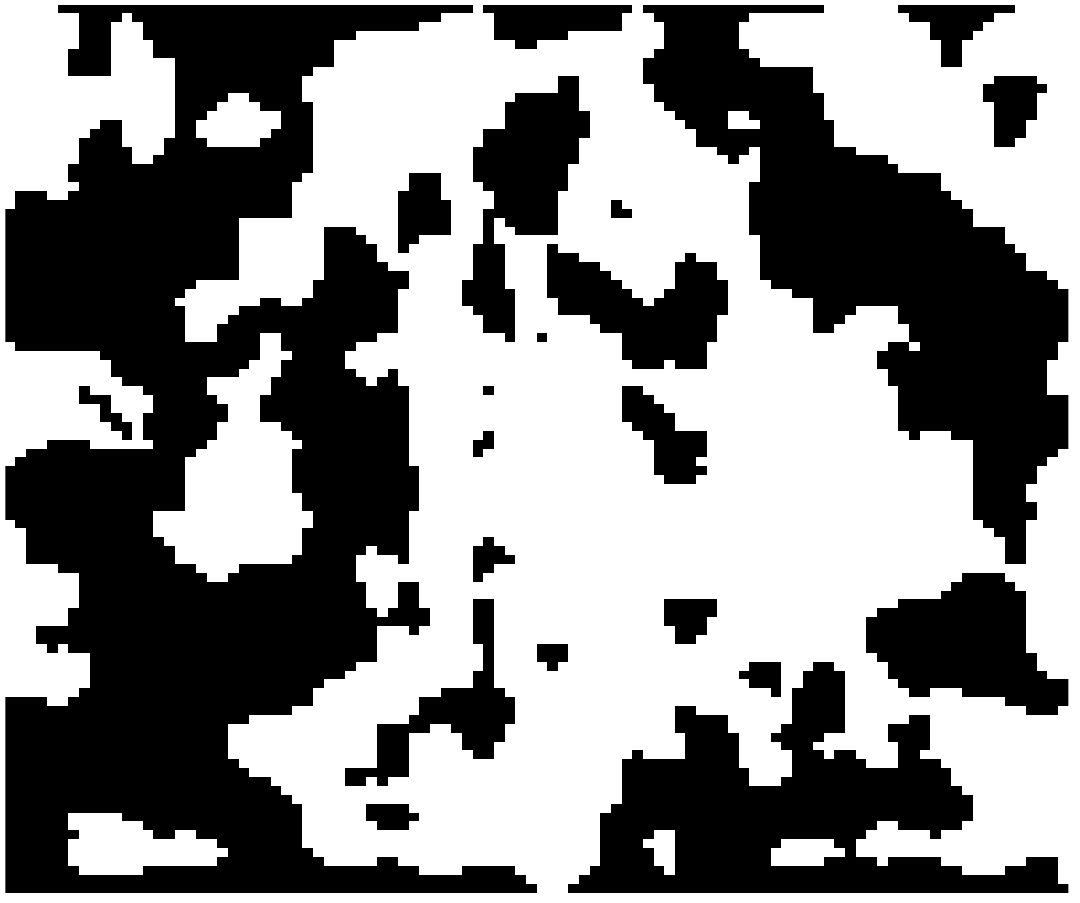}}
		\quad
		\subfloat[AFCM-ER-Mk] {\includegraphics[width=0.14\textwidth]{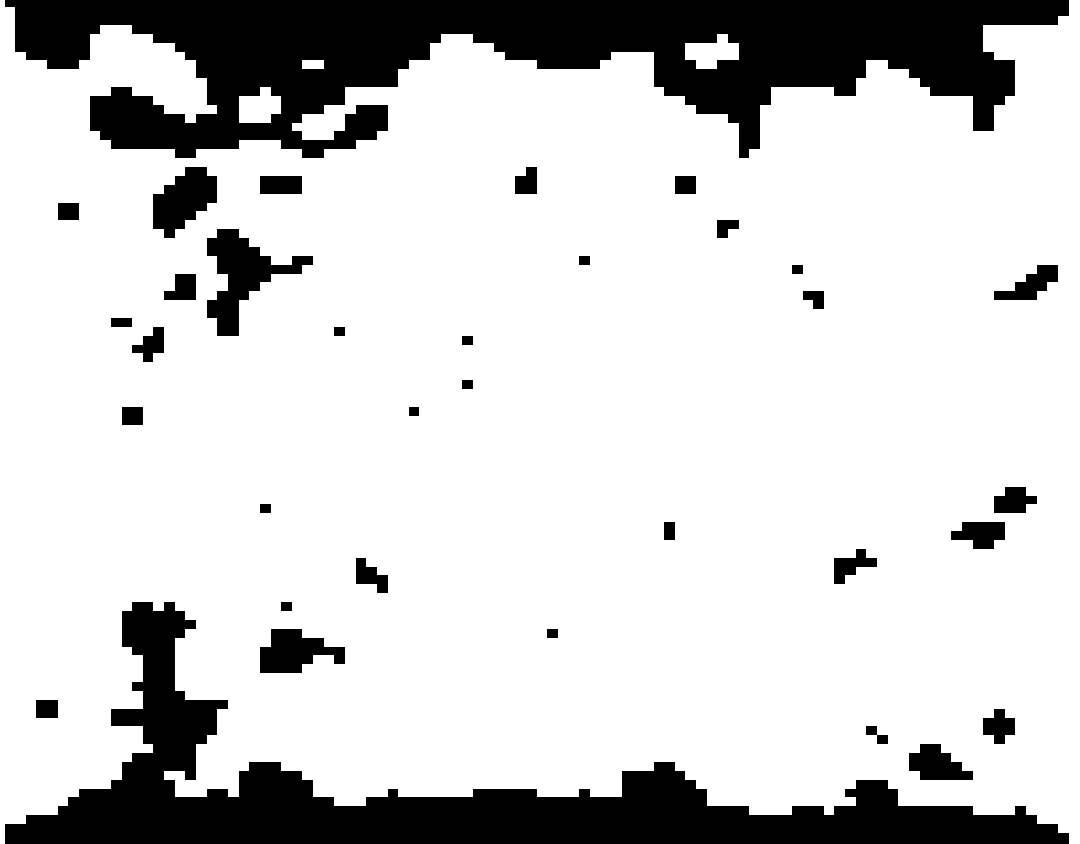}}
		\quad
		\subfloat[AFCM-ER-GP-L2] {\includegraphics[width=0.14\textwidth]{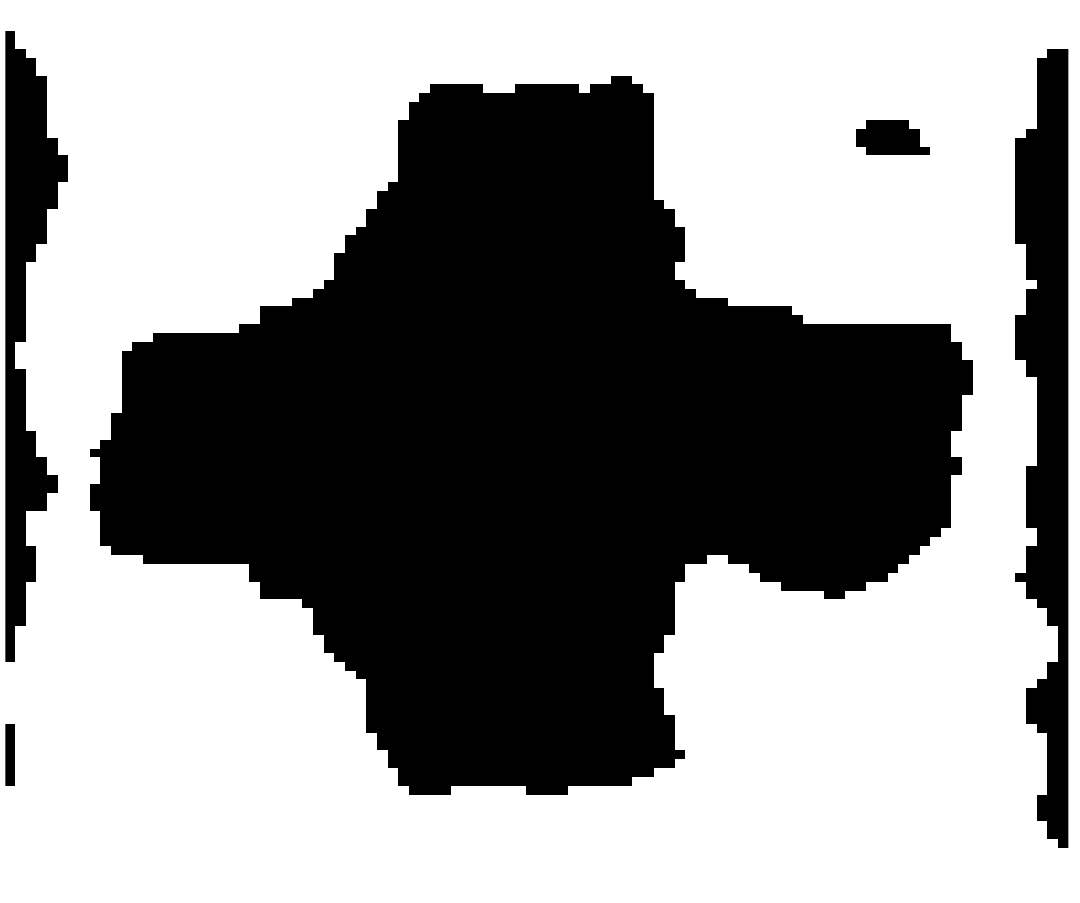}}
		\quad
		\subfloat[AFCM-ER-GP-L1] {\includegraphics[width=0.14\textwidth]{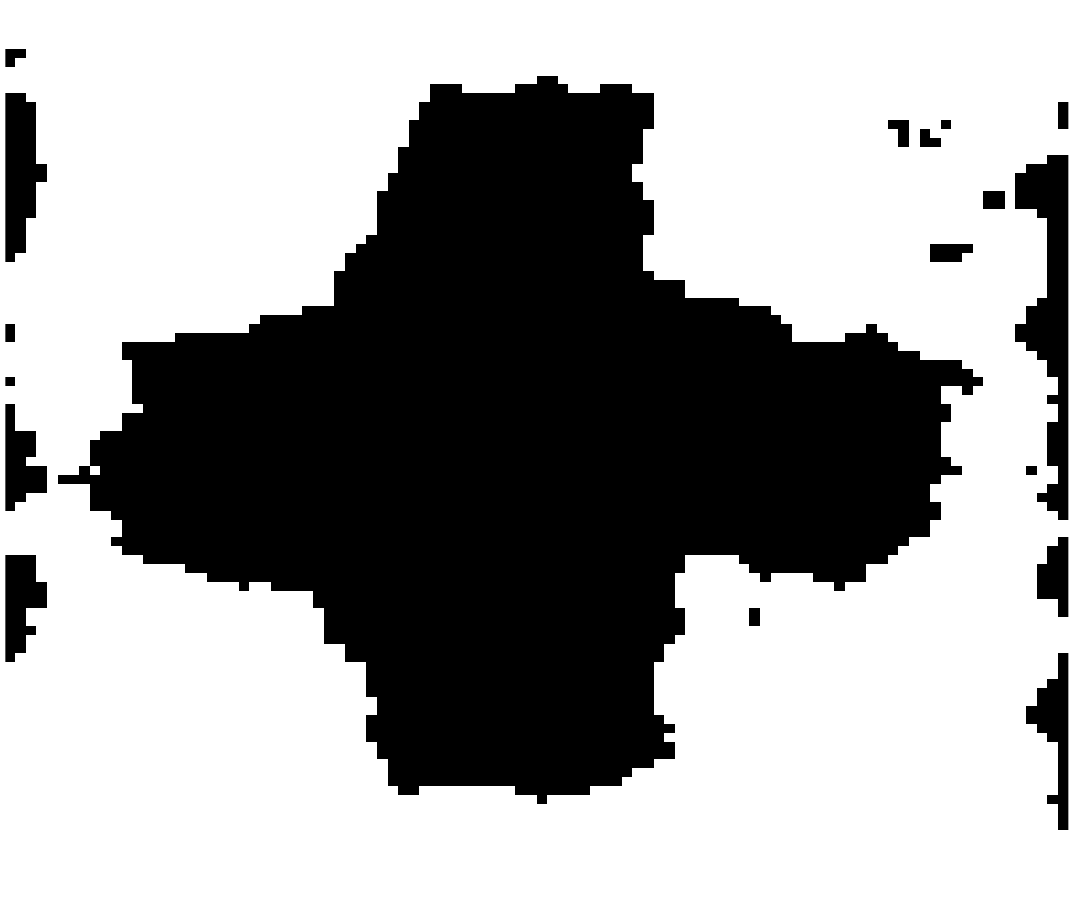}}
		\quad
		{\includegraphics[width=0.14\textwidth]{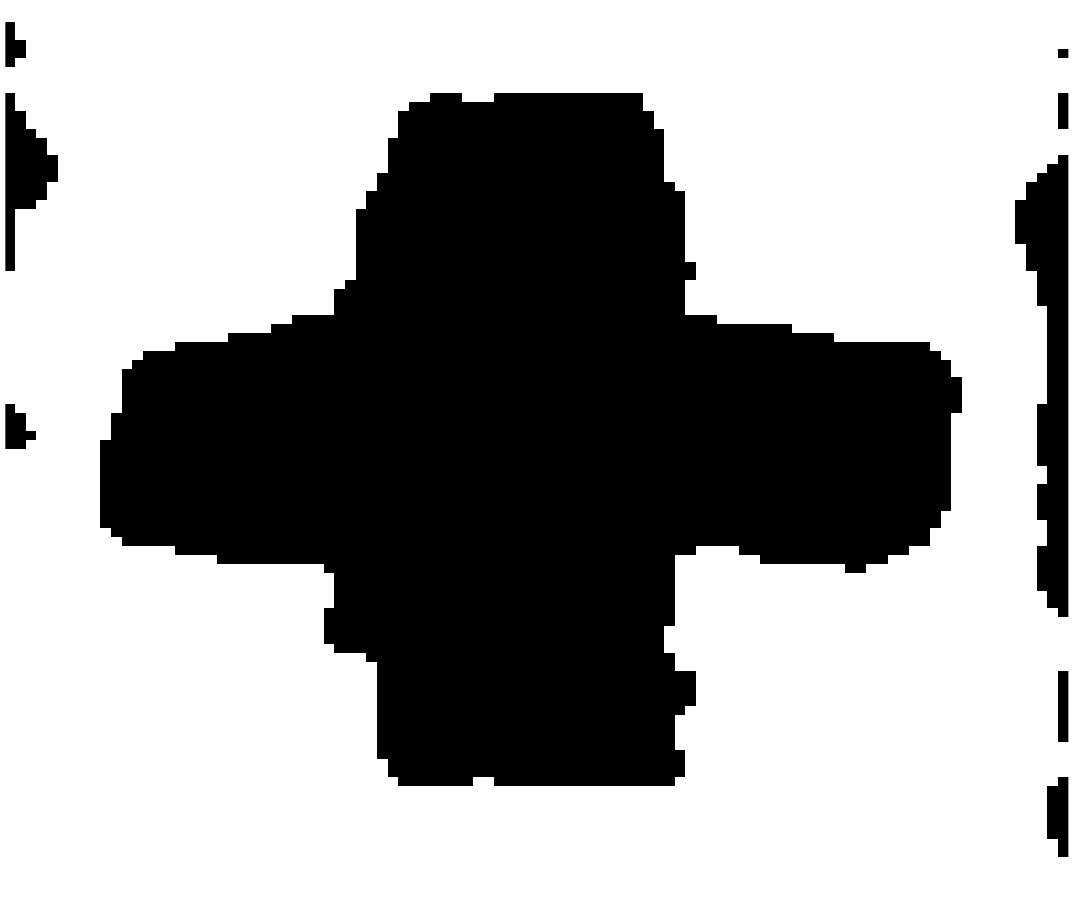}}
		\quad
		{\includegraphics[width=0.14\textwidth]{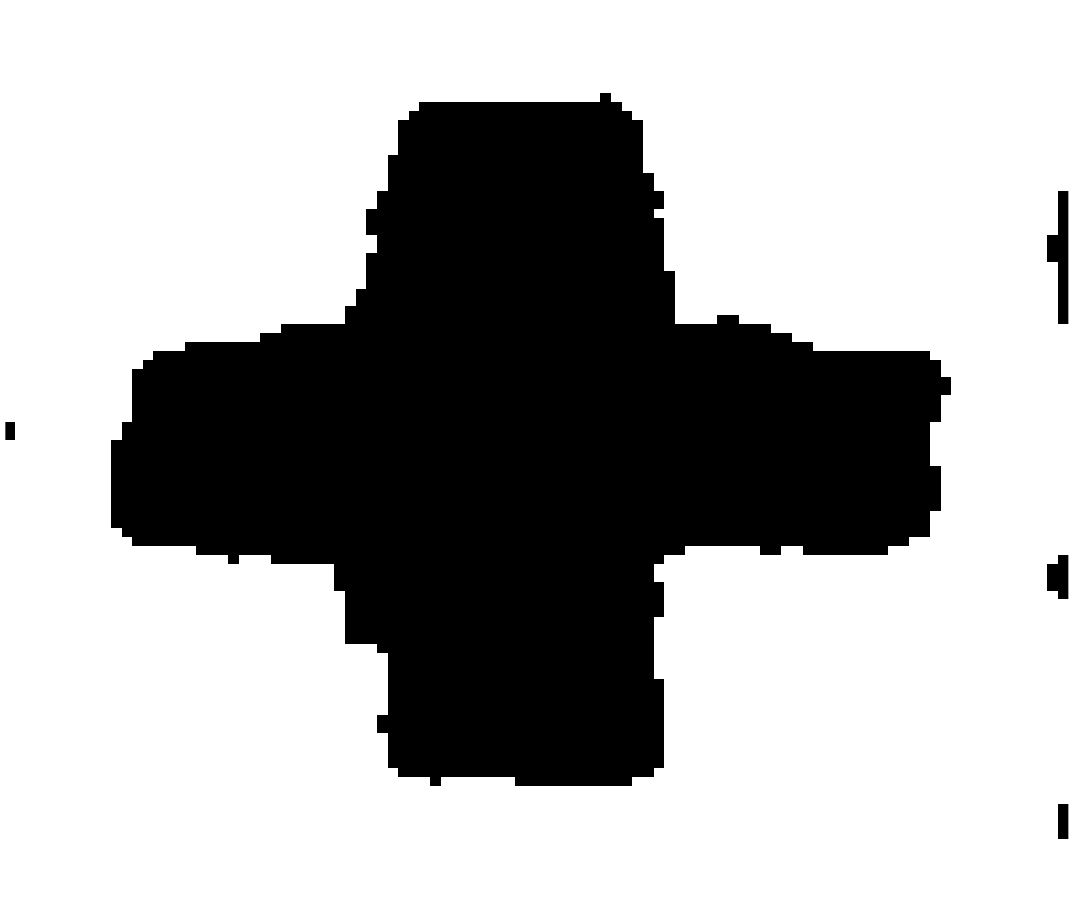}}
		\quad
		{\includegraphics[width=0.14\textwidth]{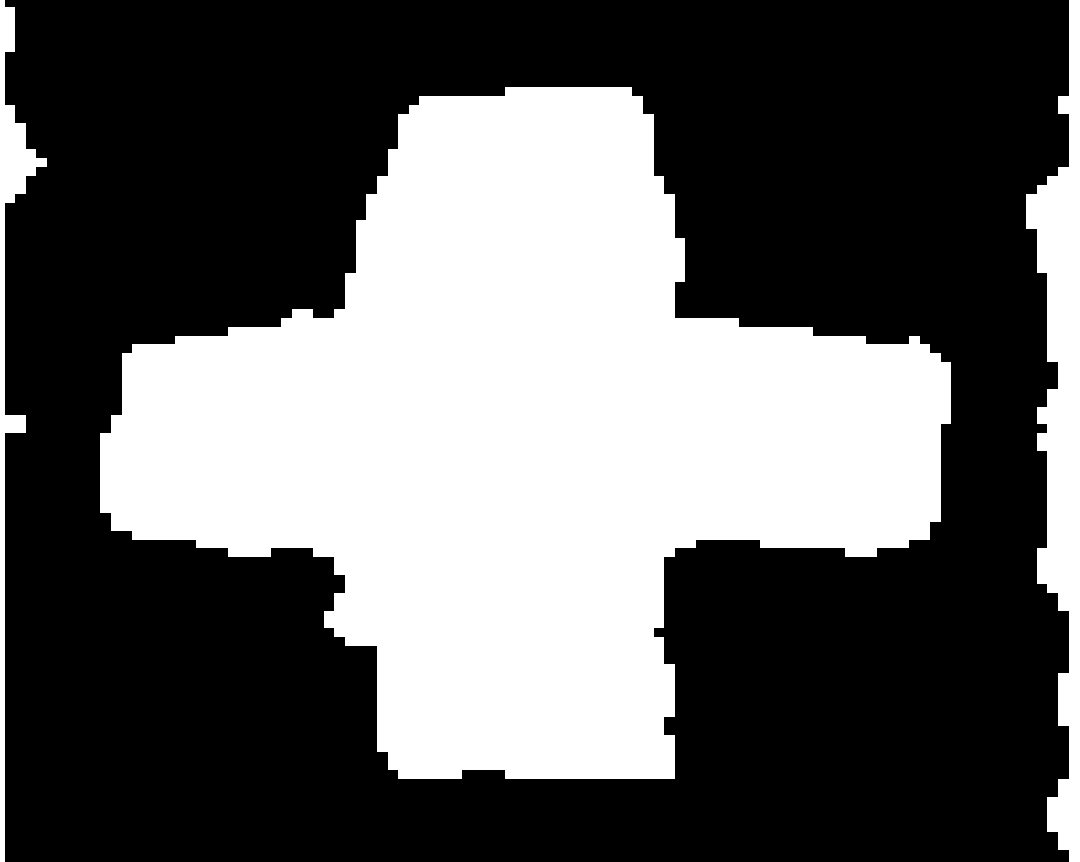}}
		\quad
		{\includegraphics[width=0.14\textwidth]{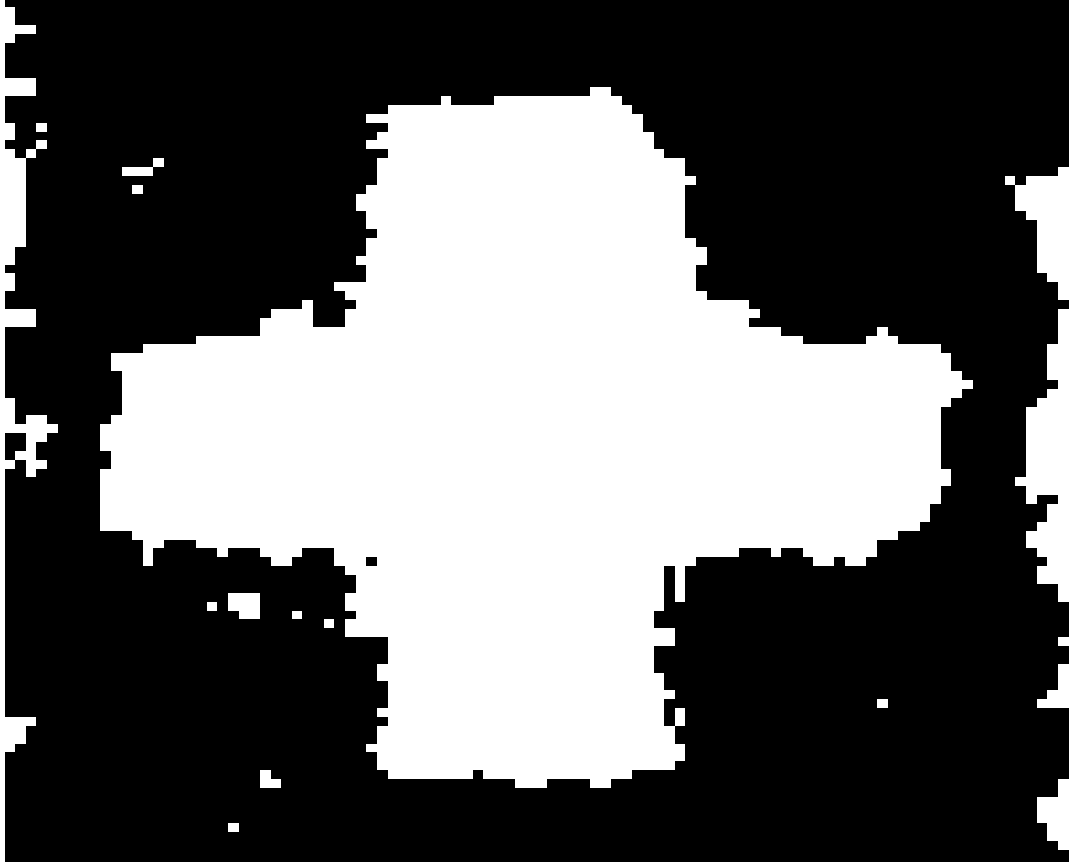}}
		\quad
		{\includegraphics[width=0.14\textwidth]{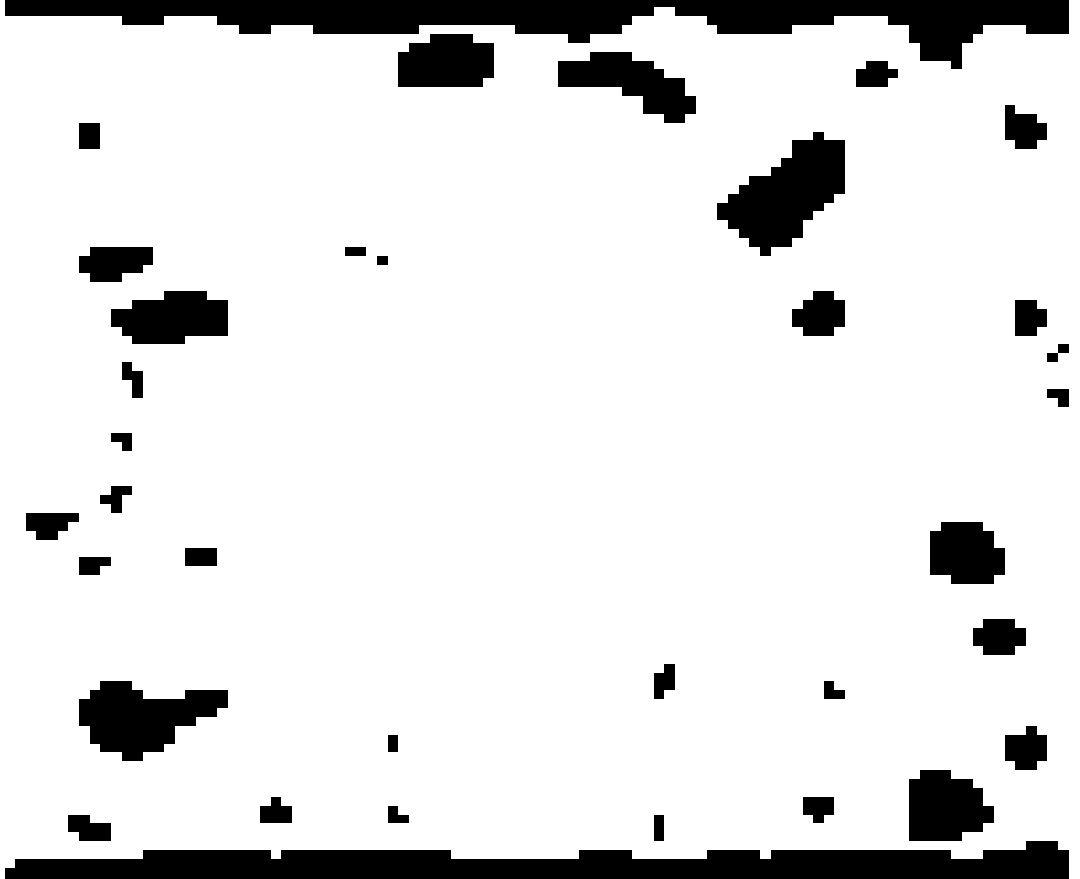}}
		\quad
		{\includegraphics[width=0.14\textwidth]{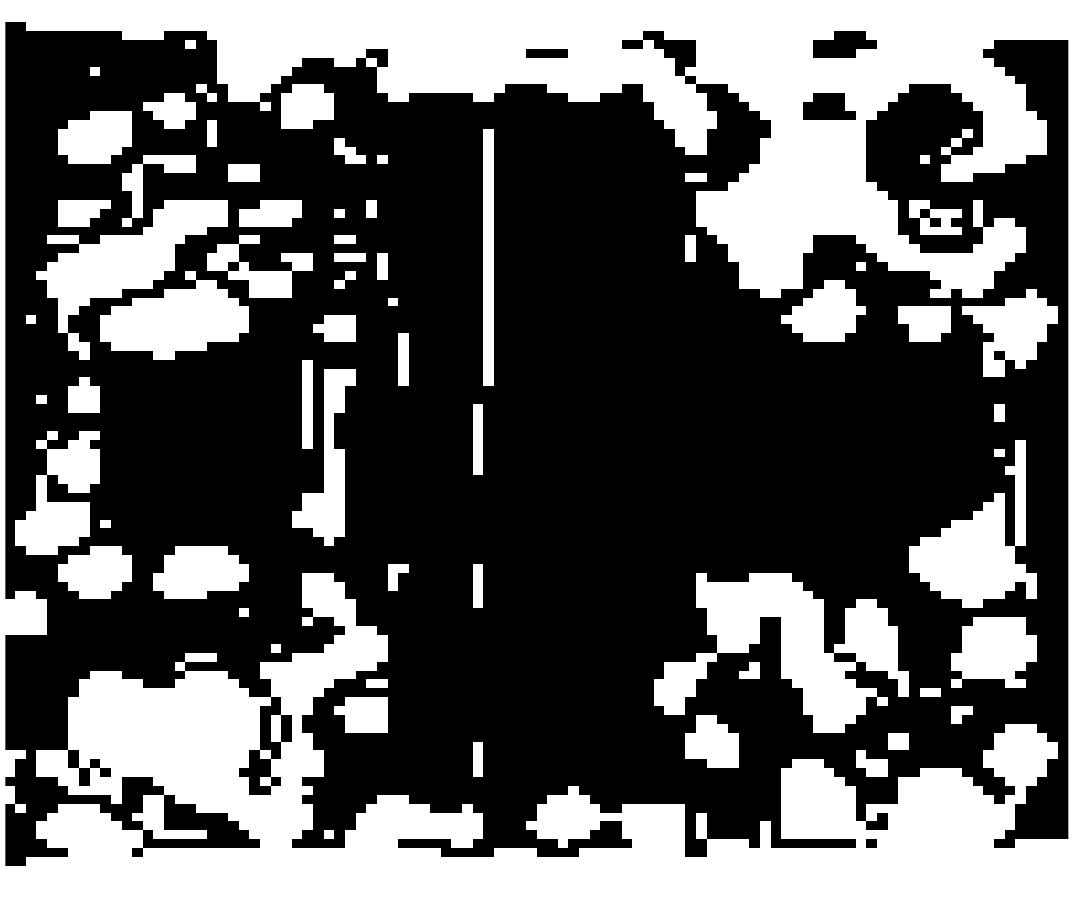}}
		\quad
		\subfloat[AFCM-ER-LP-L2] {\includegraphics[width=0.14\textwidth]{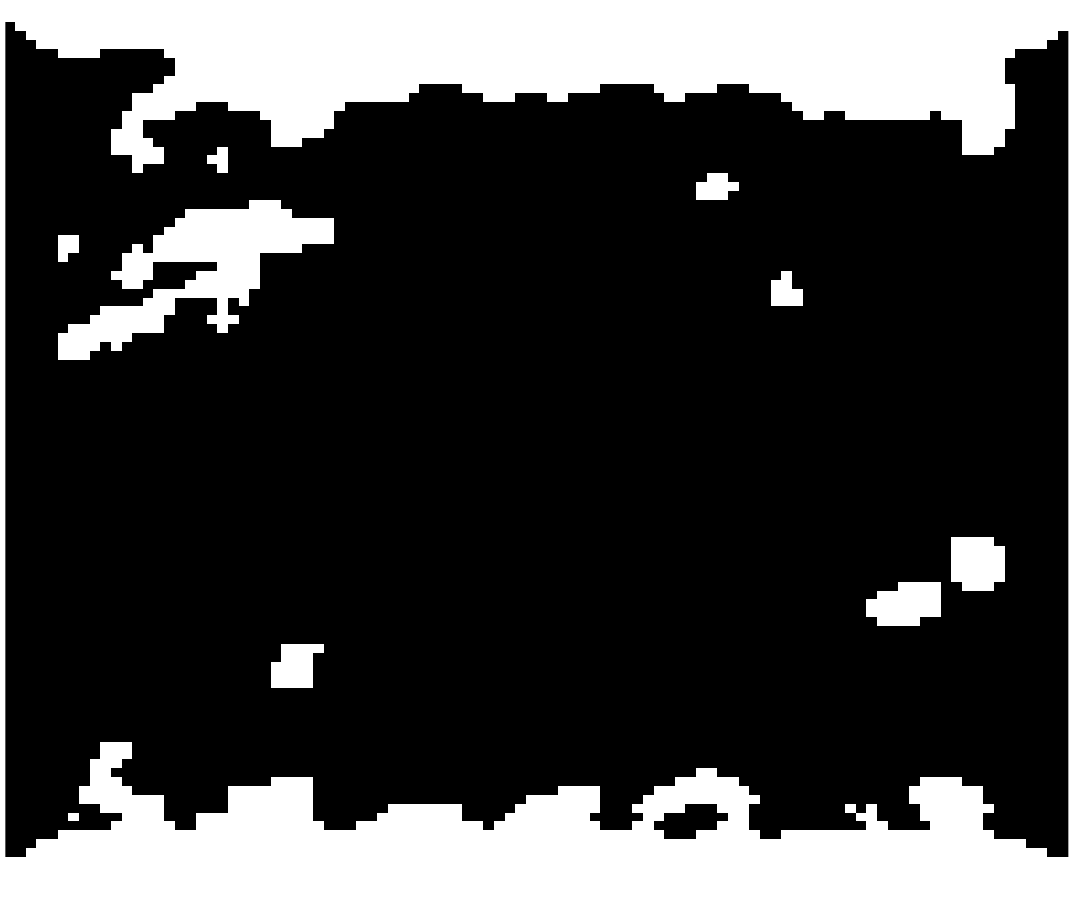}}
		\quad
		\subfloat[AFCM-ER-LP-L1] {\includegraphics[width=0.14\textwidth]{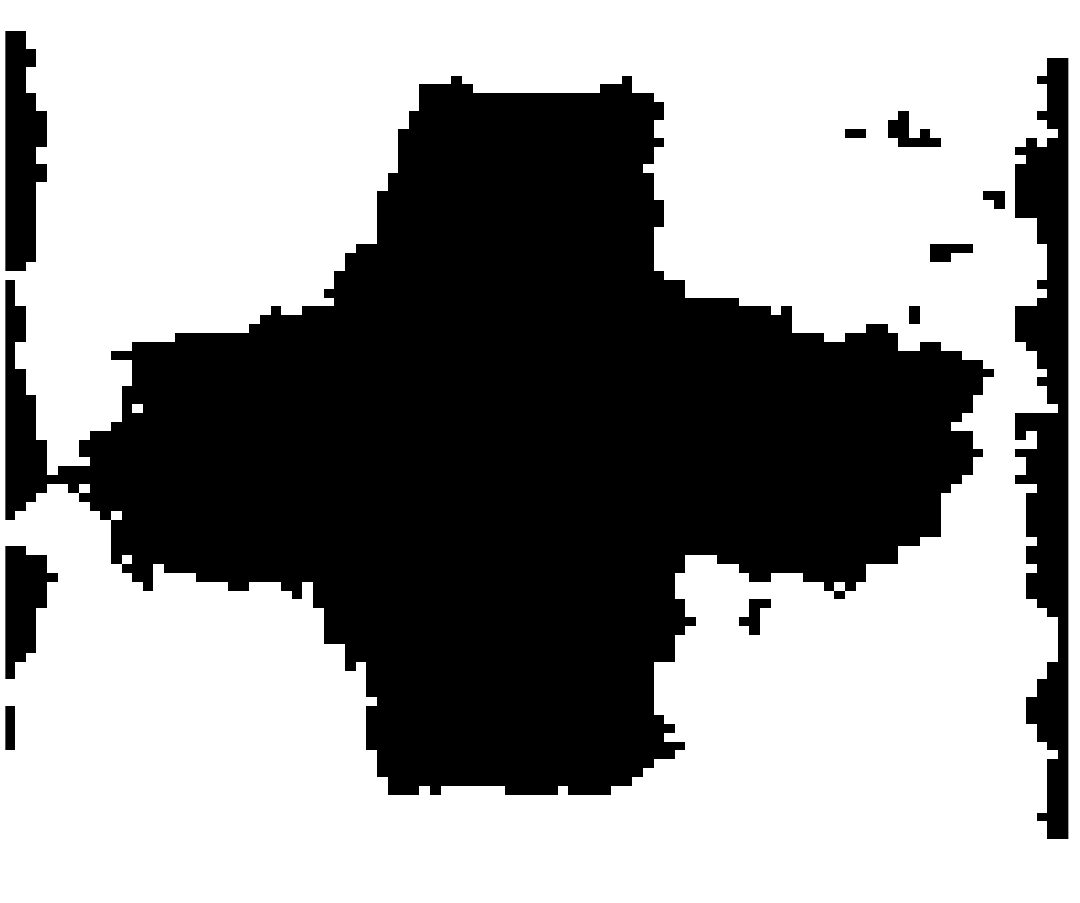}}
		\quad
		\subfloat[AFCM-ER-GS-L2] {\includegraphics[width=0.14\textwidth]{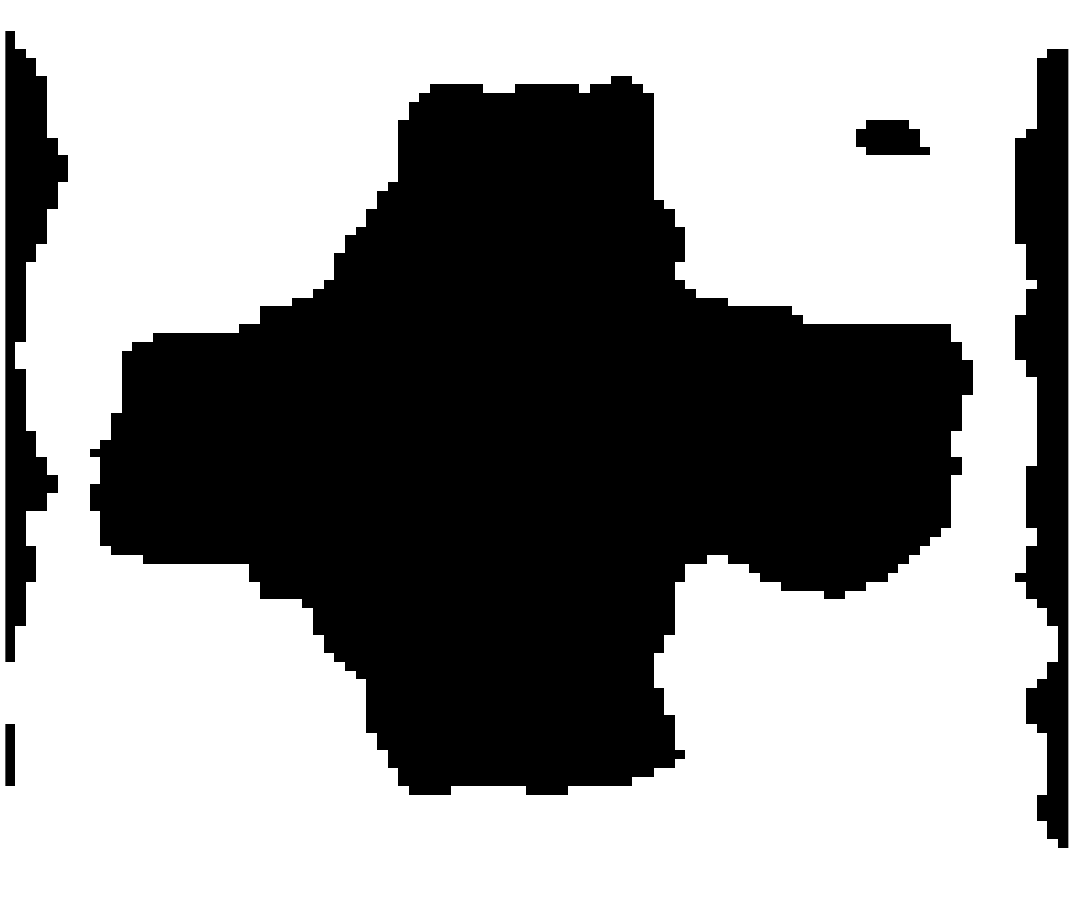}}
		\quad
		\subfloat[AFCM-ER-GS-L1] {\includegraphics[width=0.14\textwidth]{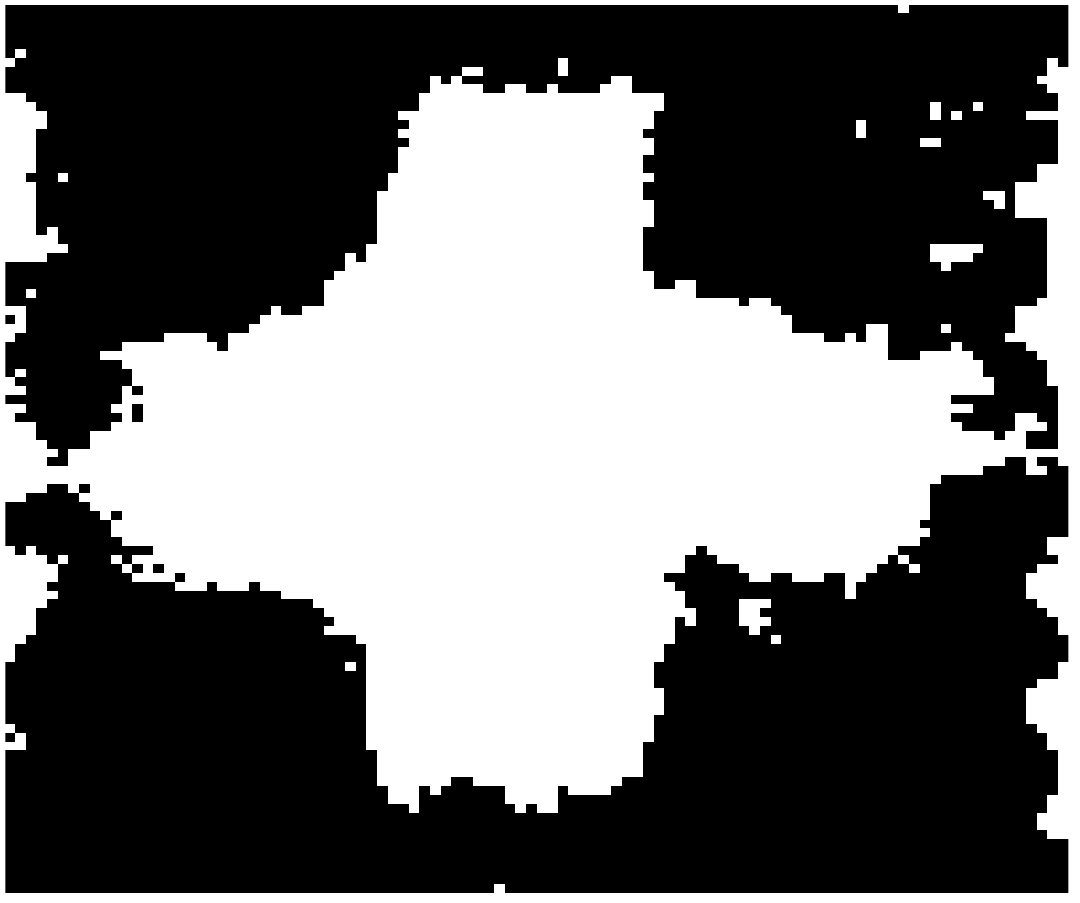}}
		\quad
		\subfloat[AFCM-ER-LS-L2] {\includegraphics[width=0.14\textwidth]{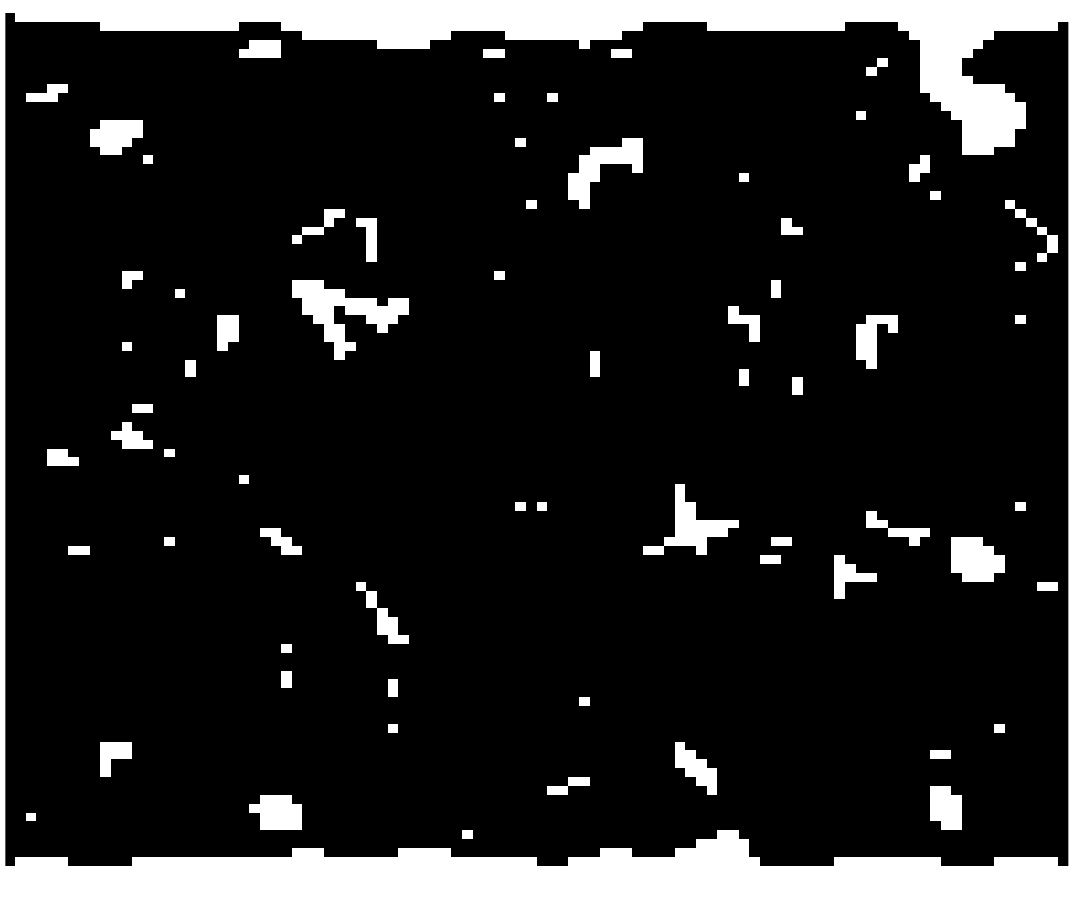}}
		\quad
		\subfloat[AFCM-ER-LS-L1] {\includegraphics[width=0.14\textwidth]{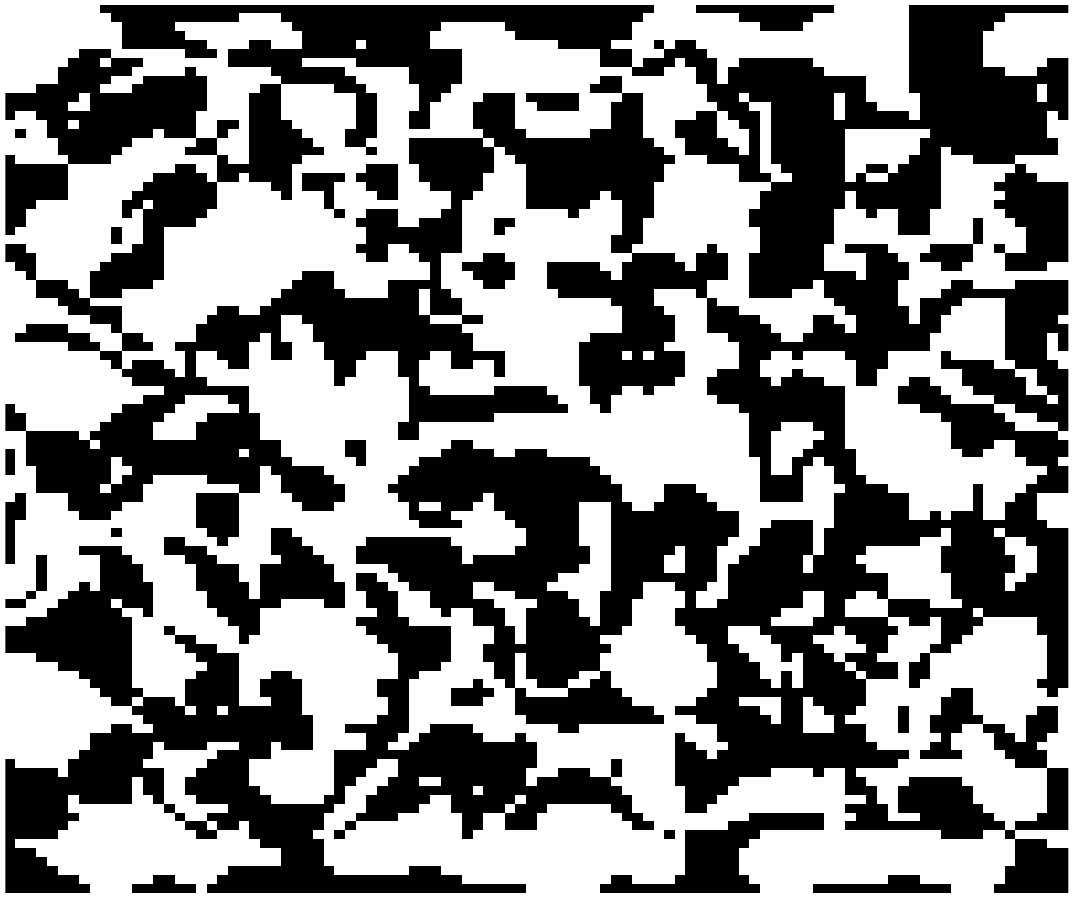}}
		\caption{Segmentation results of the algorithms for the 2-textural image without and with Gaussian noise. The first and the third row show the segmentation results for the original 2-textural image. The second and the fourth row present the obtained segmentation for the 2-textural image with Gaussian noise.}
		\label{img:ResultsTextImagesCross}
	\end{figure}
	
	\begin{figure}[!htb]
		\centering
		{\includegraphics[width=0.14\textwidth]{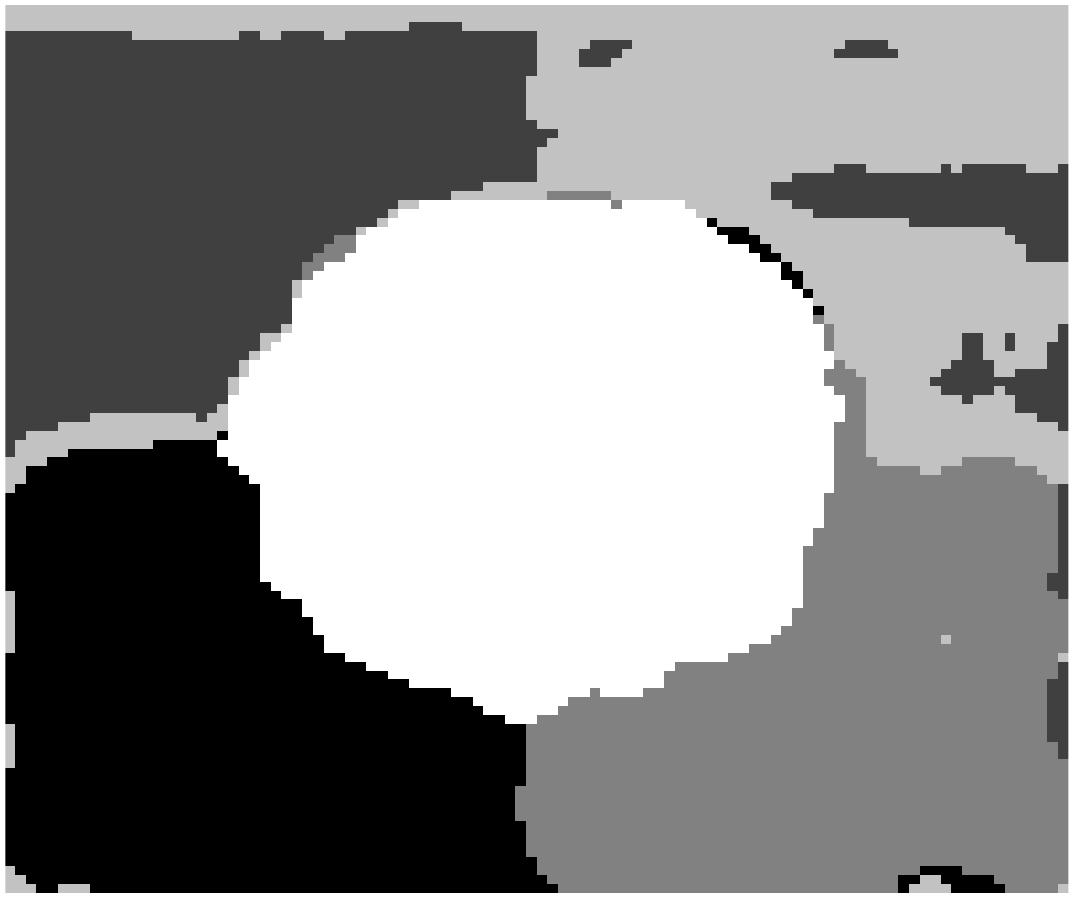}}
		\quad
		{\includegraphics[width=0.14\textwidth]{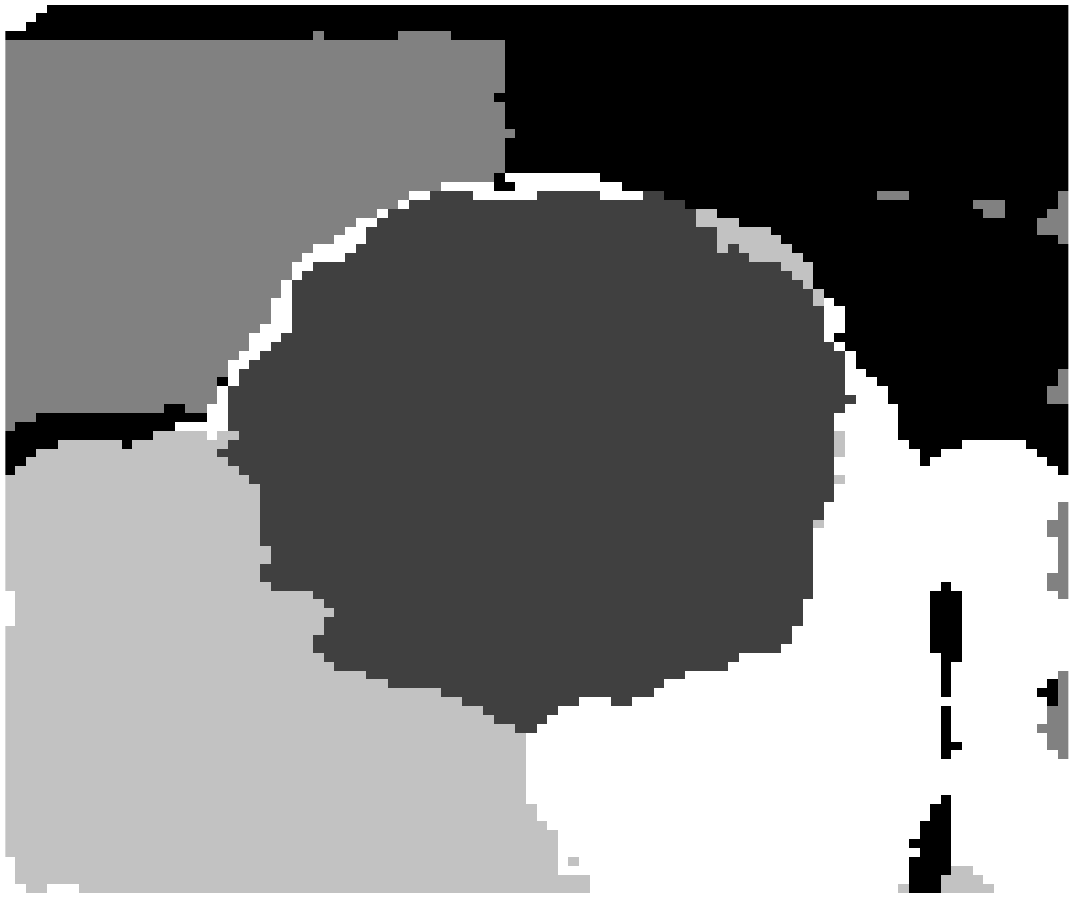}}
		\quad
		{\includegraphics[width=0.14\textwidth]{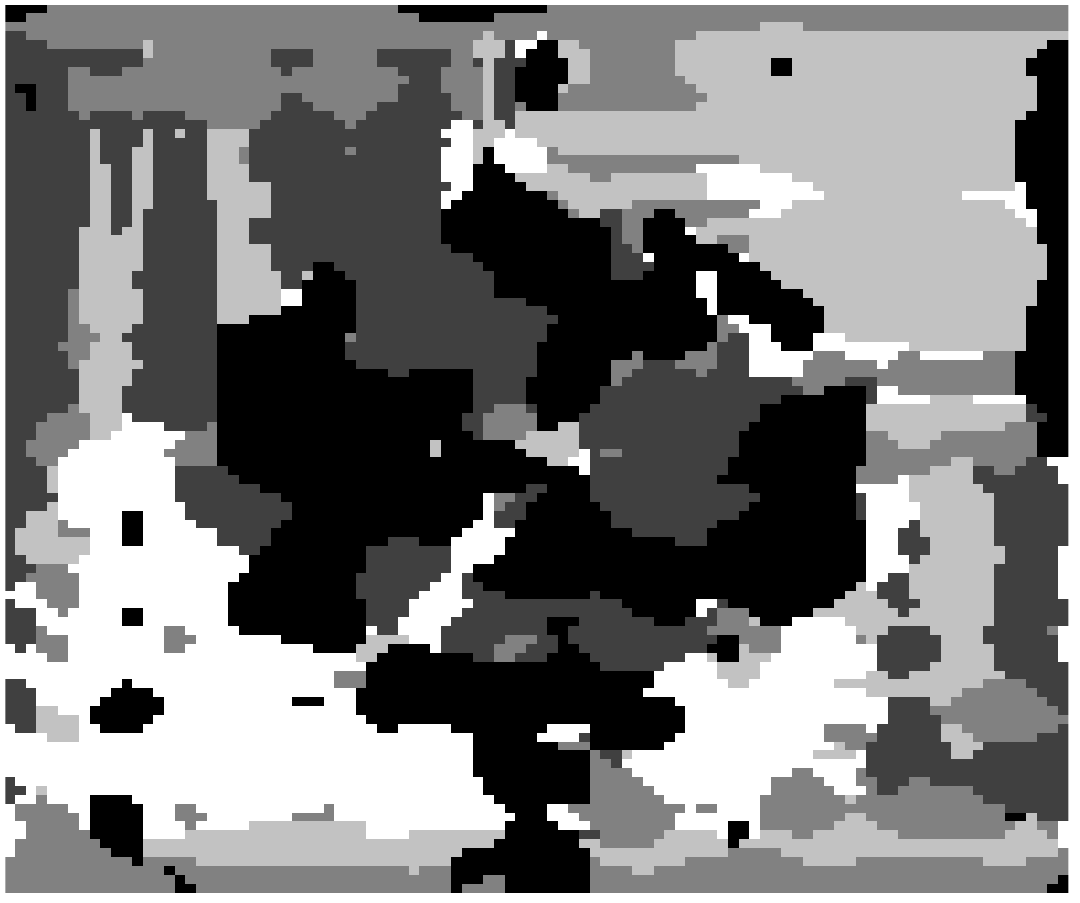}}
		\quad
		{\includegraphics[width=0.14\textwidth]{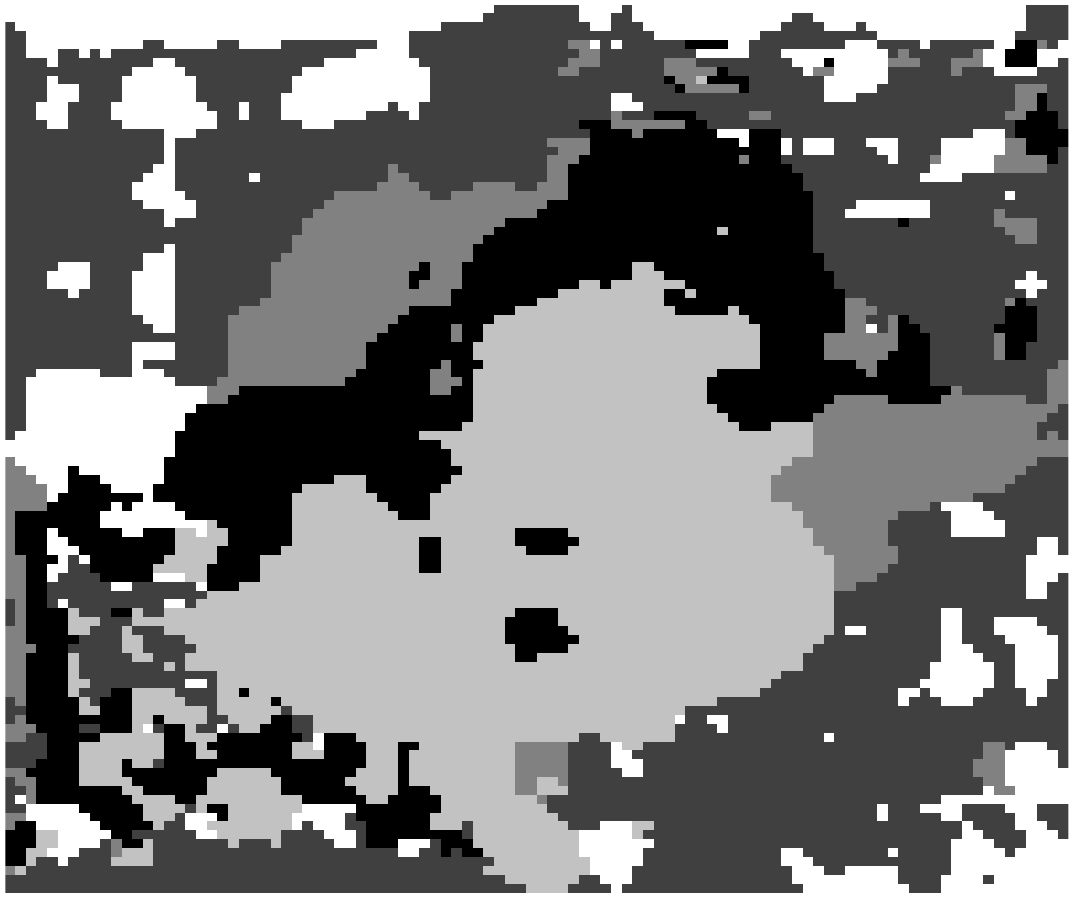}}
		\quad
		{\includegraphics[width=0.14\textwidth]{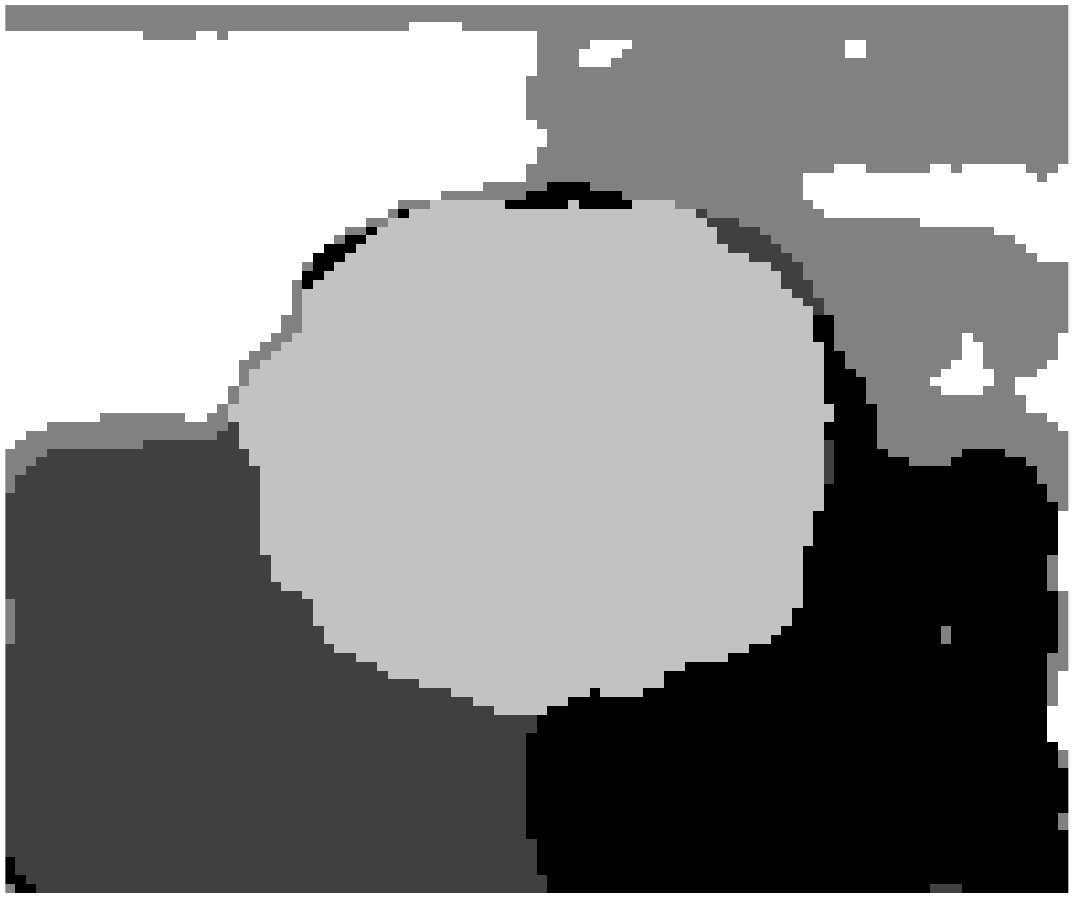}}
		\quad
		{\includegraphics[width=0.14\textwidth]{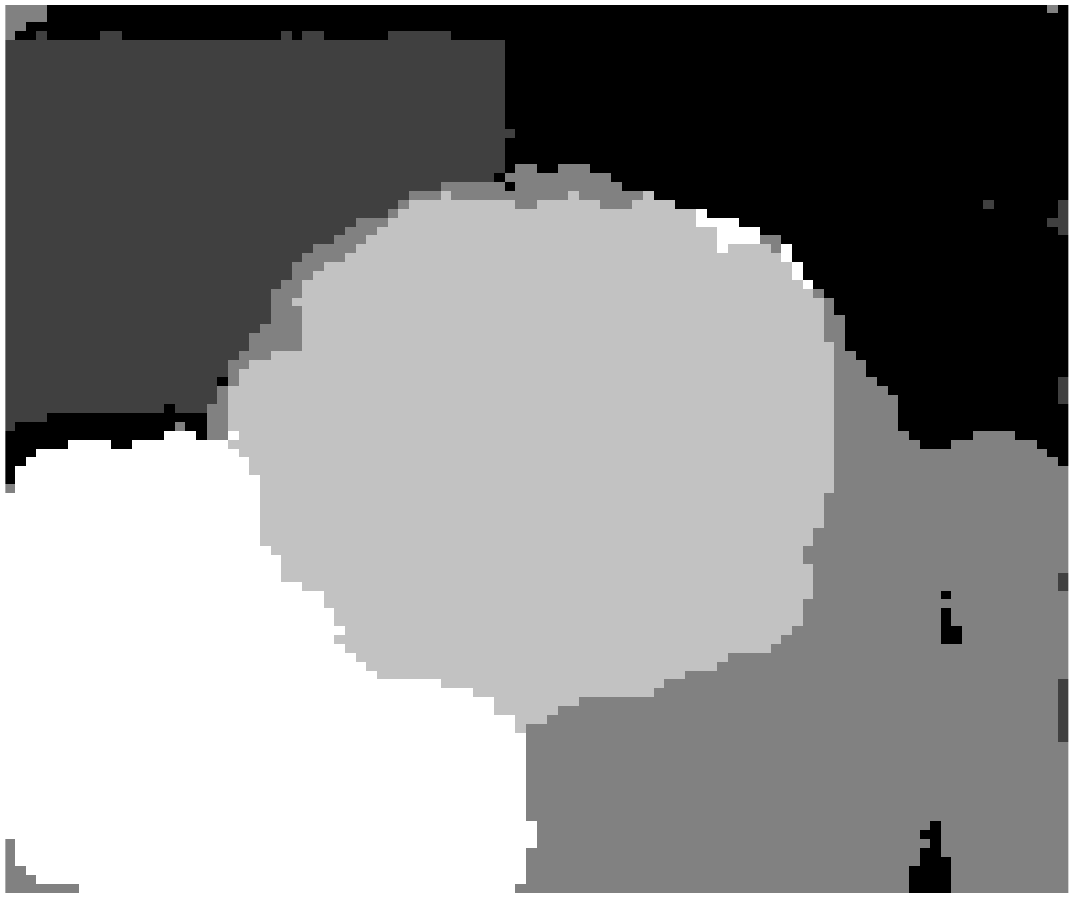}}
		\quad
		\subfloat[FCM-ER-L2] {\includegraphics[width=0.14\textwidth]{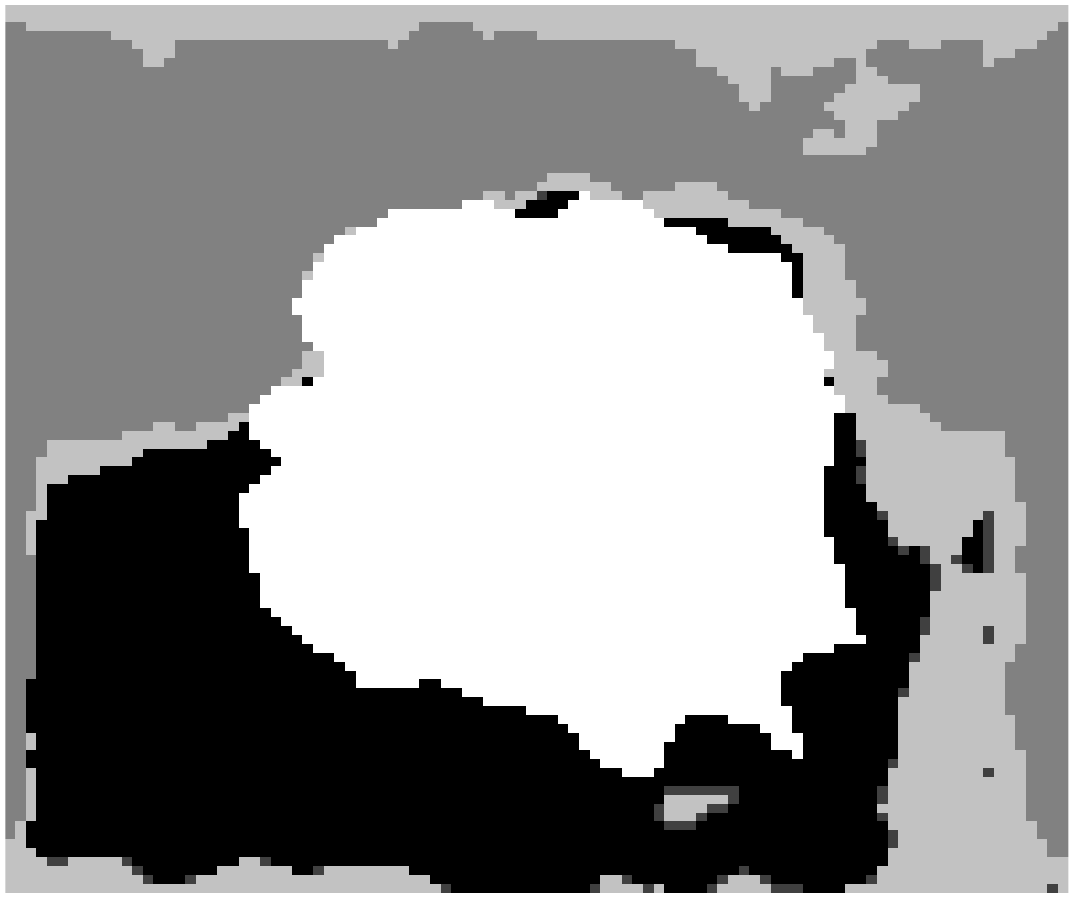}}
		\quad
		\subfloat[FCM-ER-L1] {\includegraphics[width=0.14\textwidth]{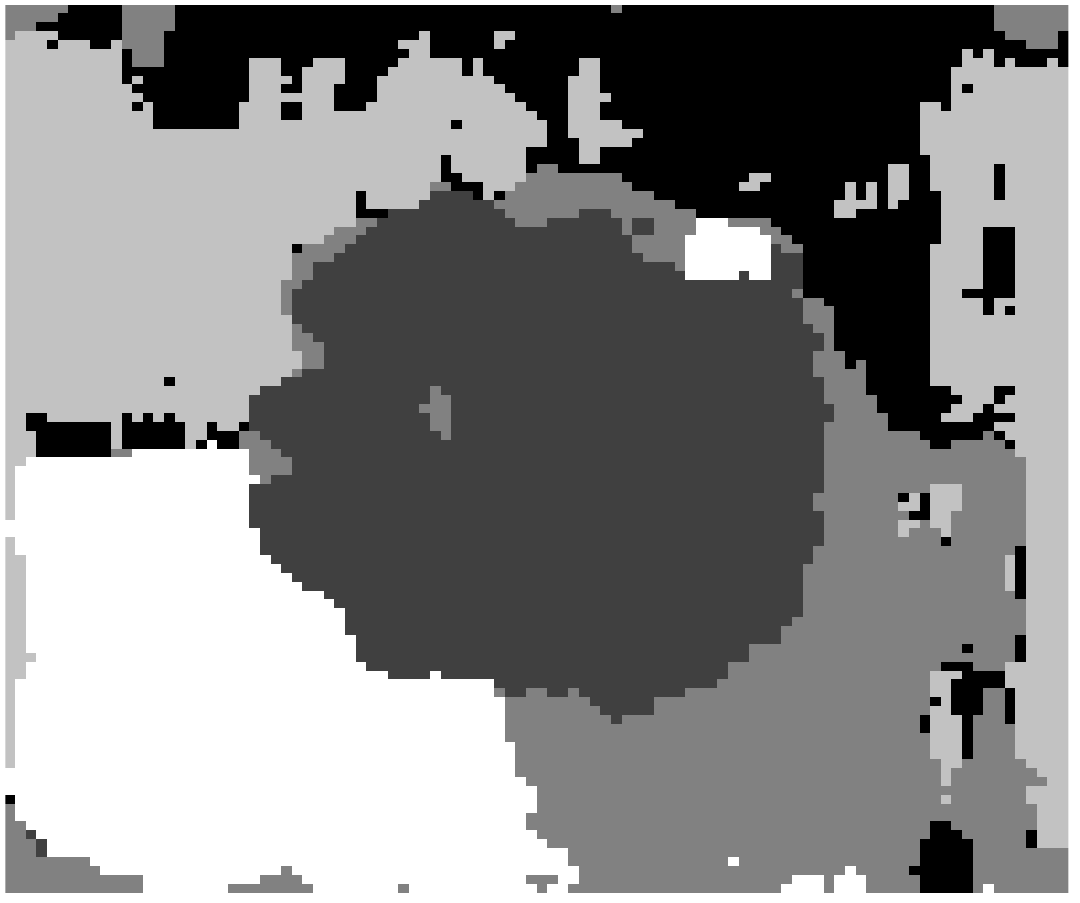}}
		\quad
		\subfloat[AFCM-ER-M] {\includegraphics[width=0.14\textwidth]{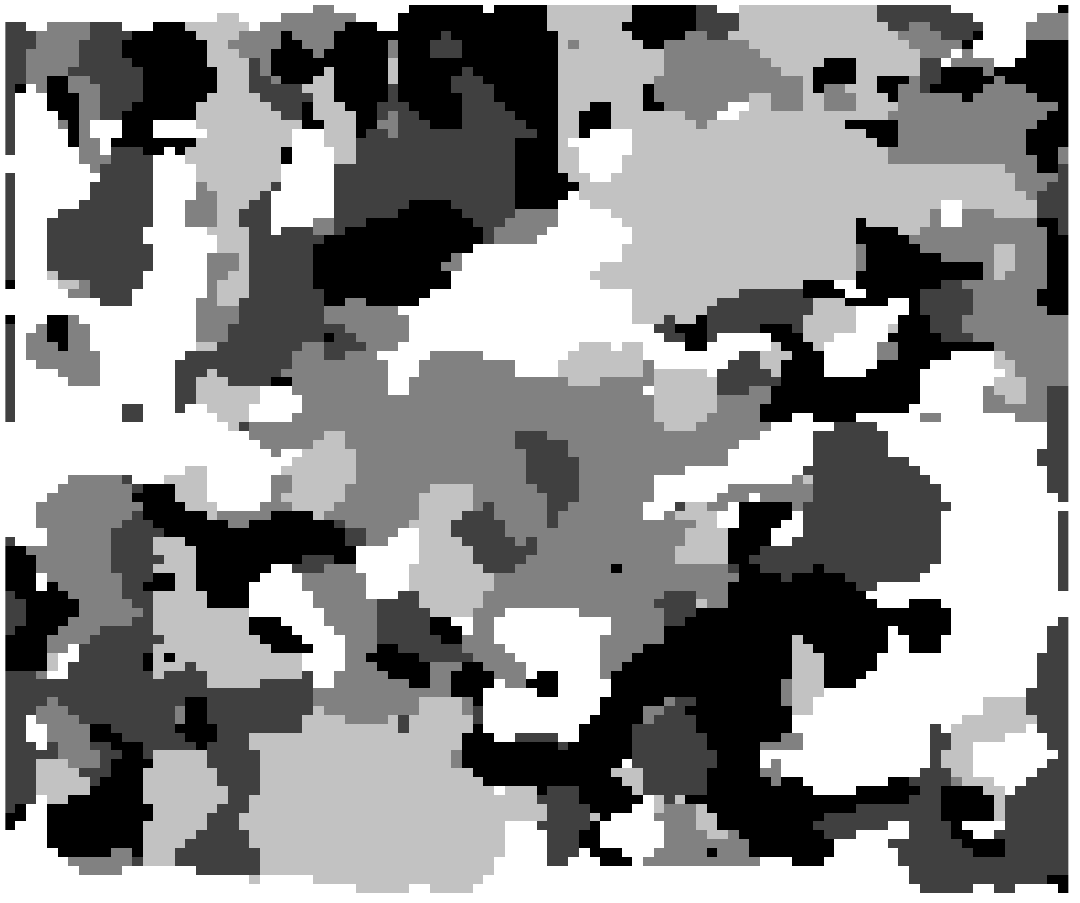}}
		\quad
		\subfloat[AFCM-ER-Mk] {\includegraphics[width=0.14\textwidth]{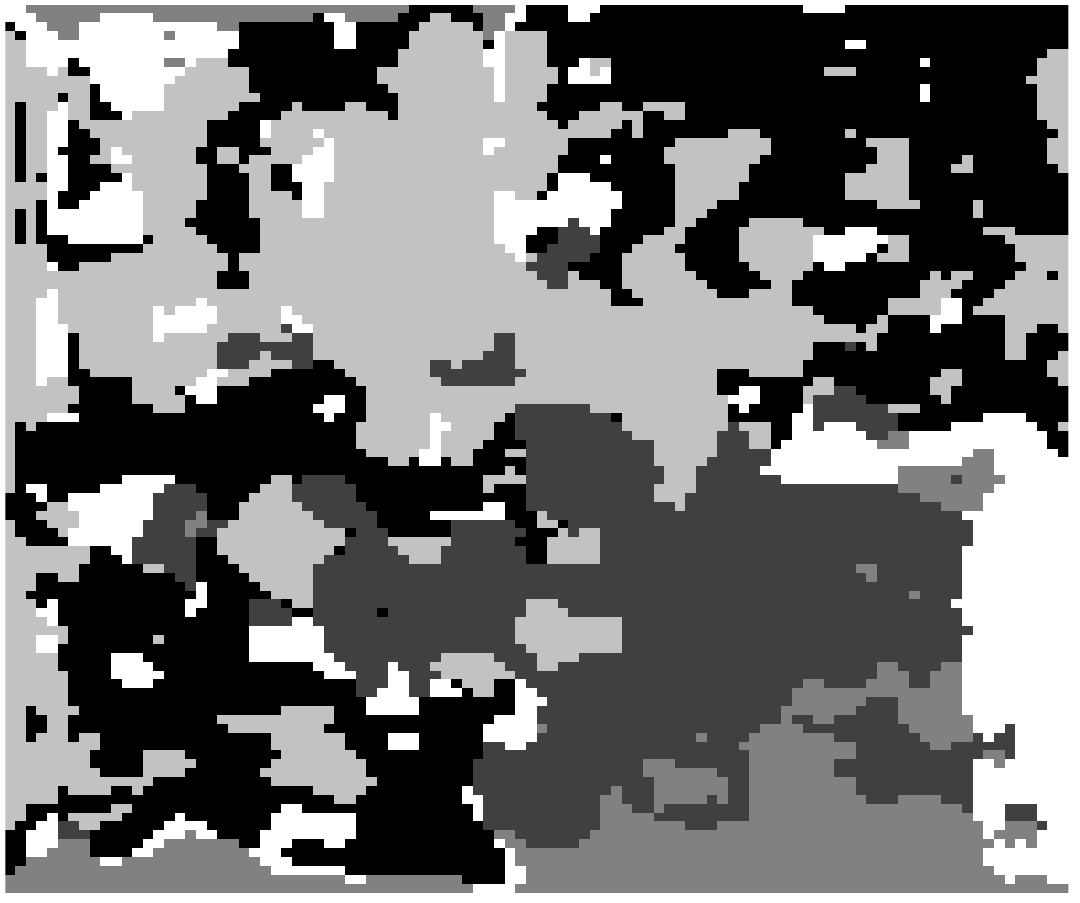}}
		\quad
		\subfloat[AFCM-ER-GP-L2] {\includegraphics[width=0.14\textwidth]{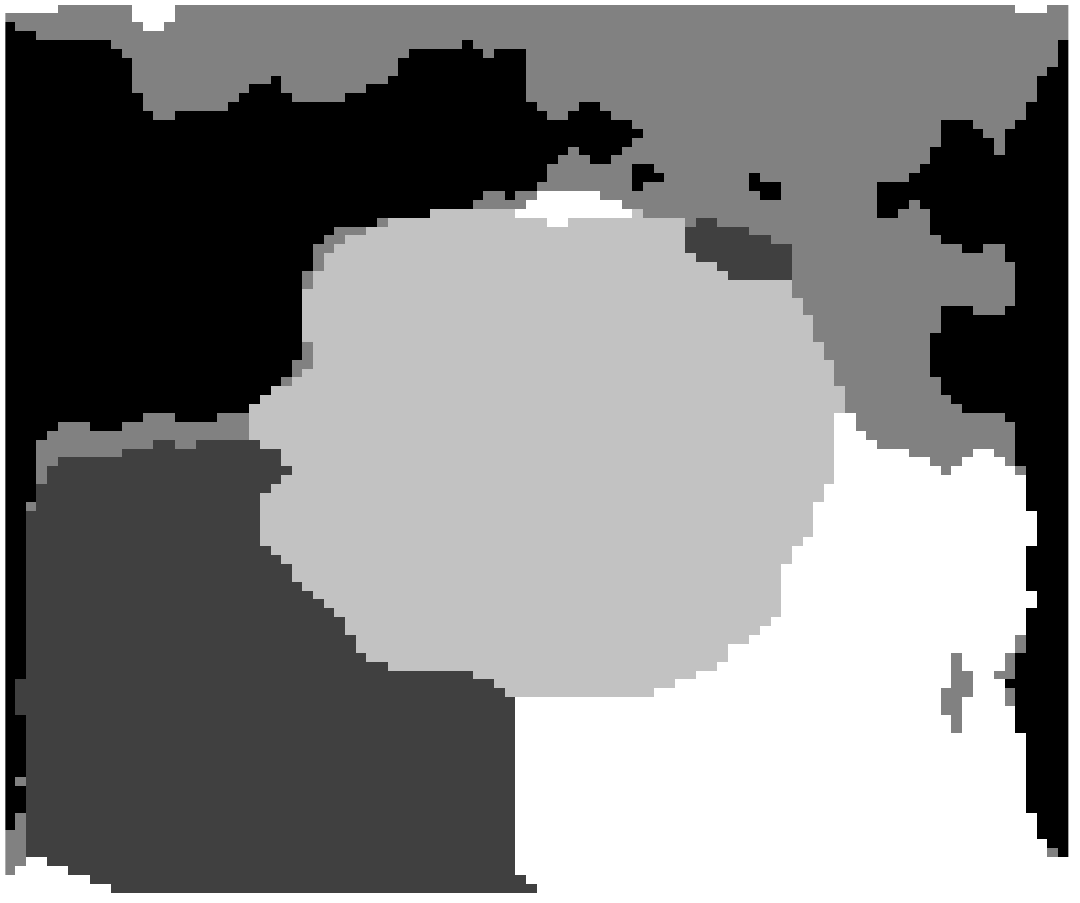}}
		\quad
		\subfloat[AFCM-ER-GP-L1] {\includegraphics[width=0.14\textwidth]{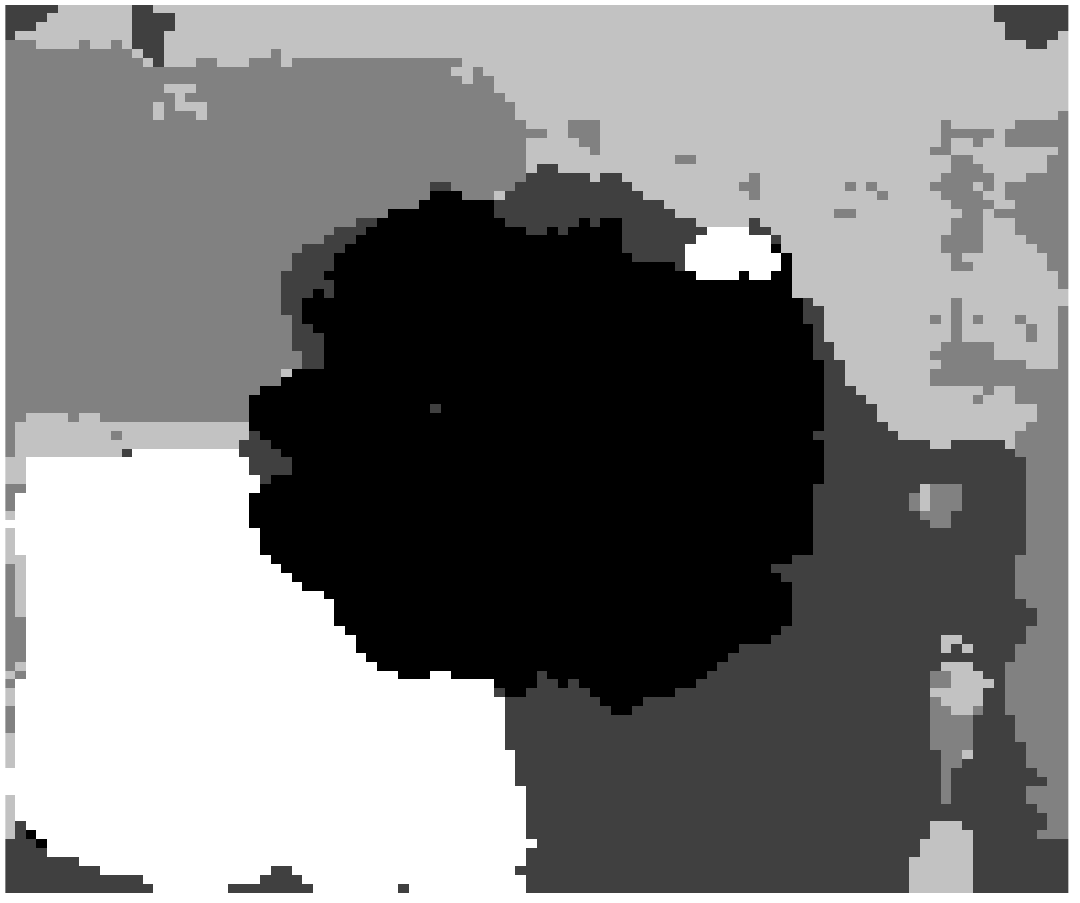}}
		\quad
		{\includegraphics[width=0.14\textwidth]{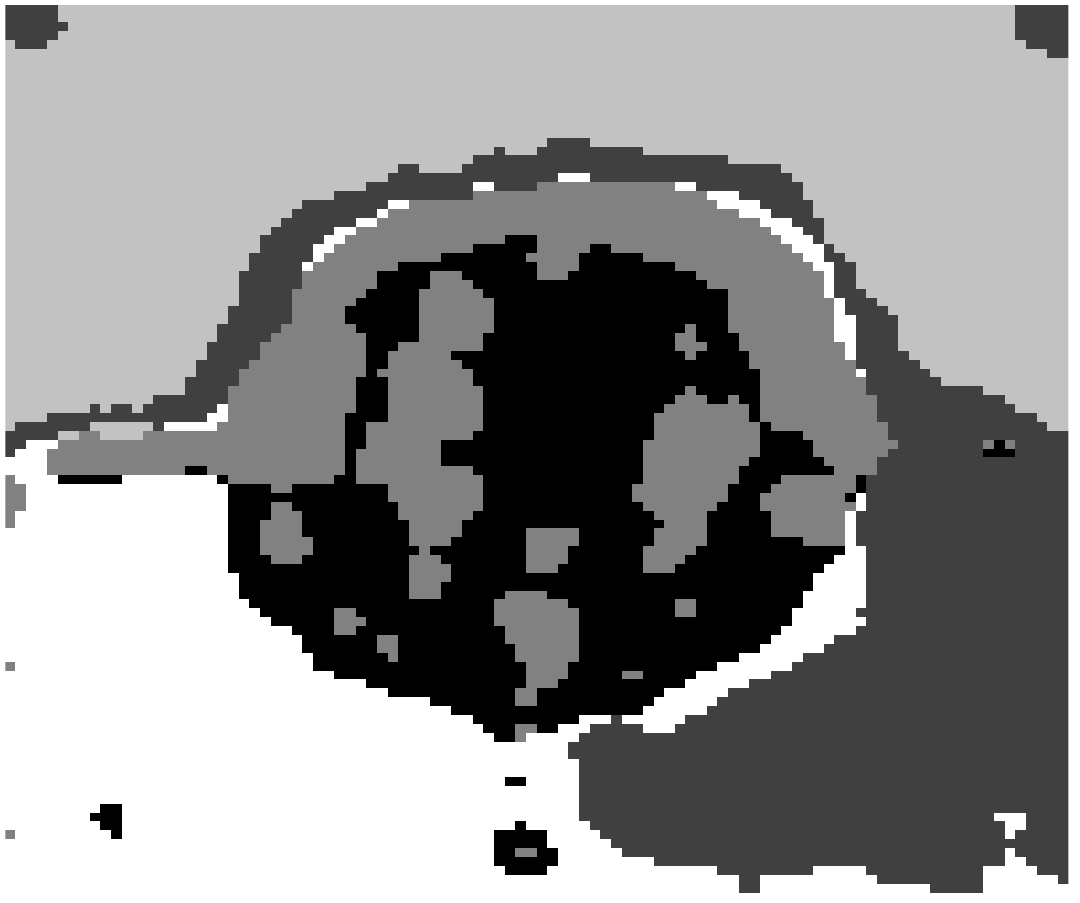}}
		\quad
		{\includegraphics[width=0.14\textwidth]{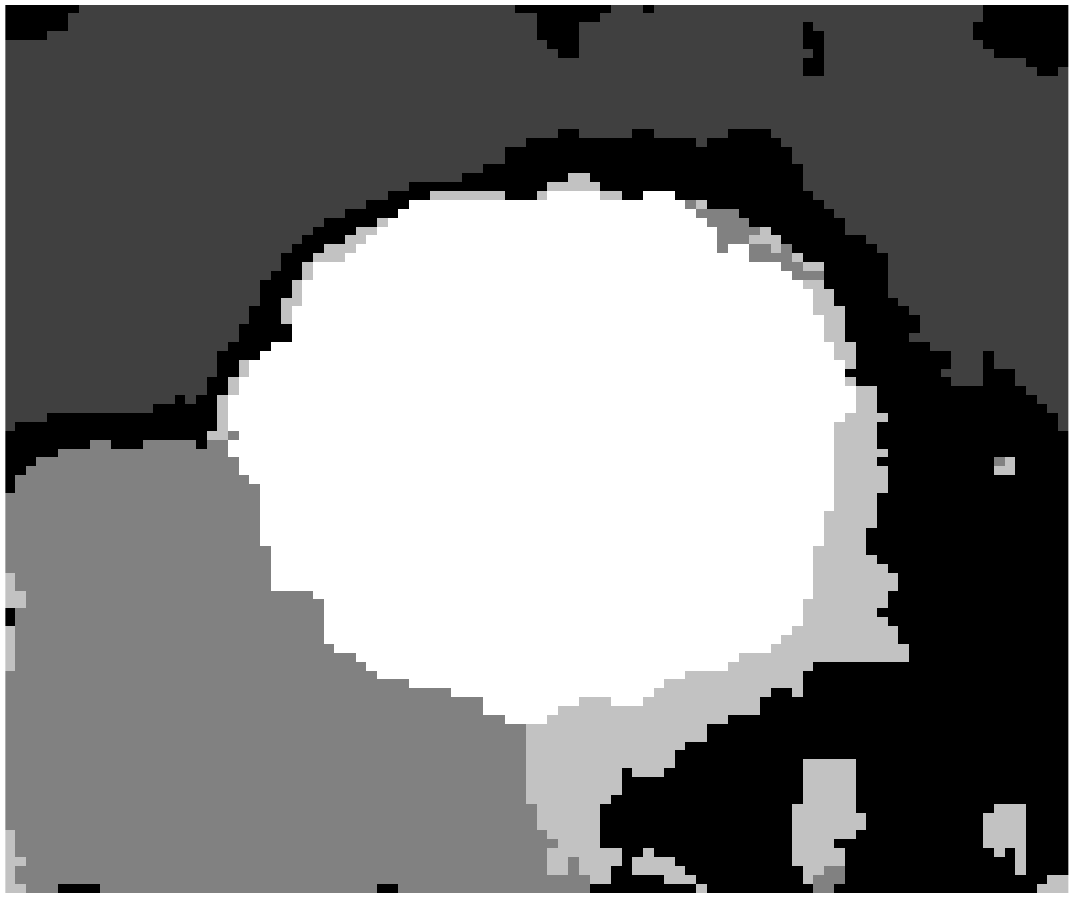}}
		\quad
		{\includegraphics[width=0.14\textwidth]{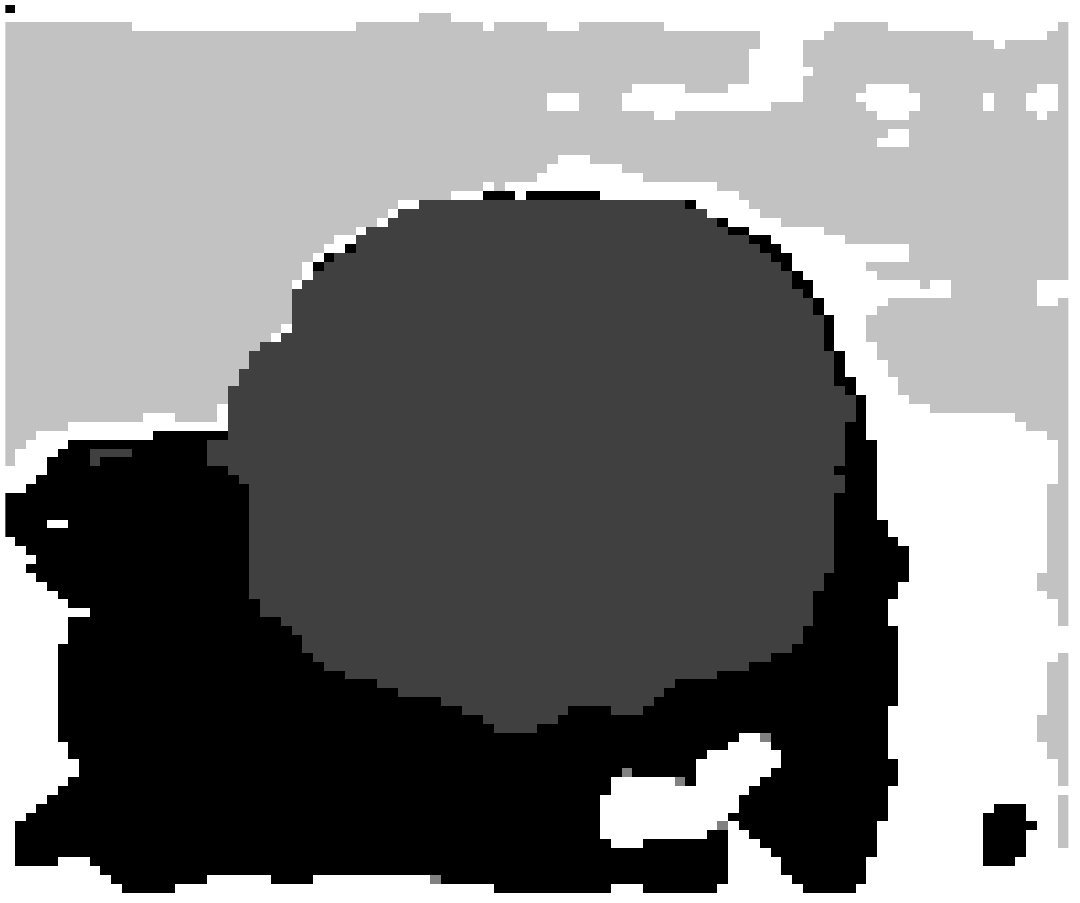}}
		\quad
		{\includegraphics[width=0.14\textwidth]{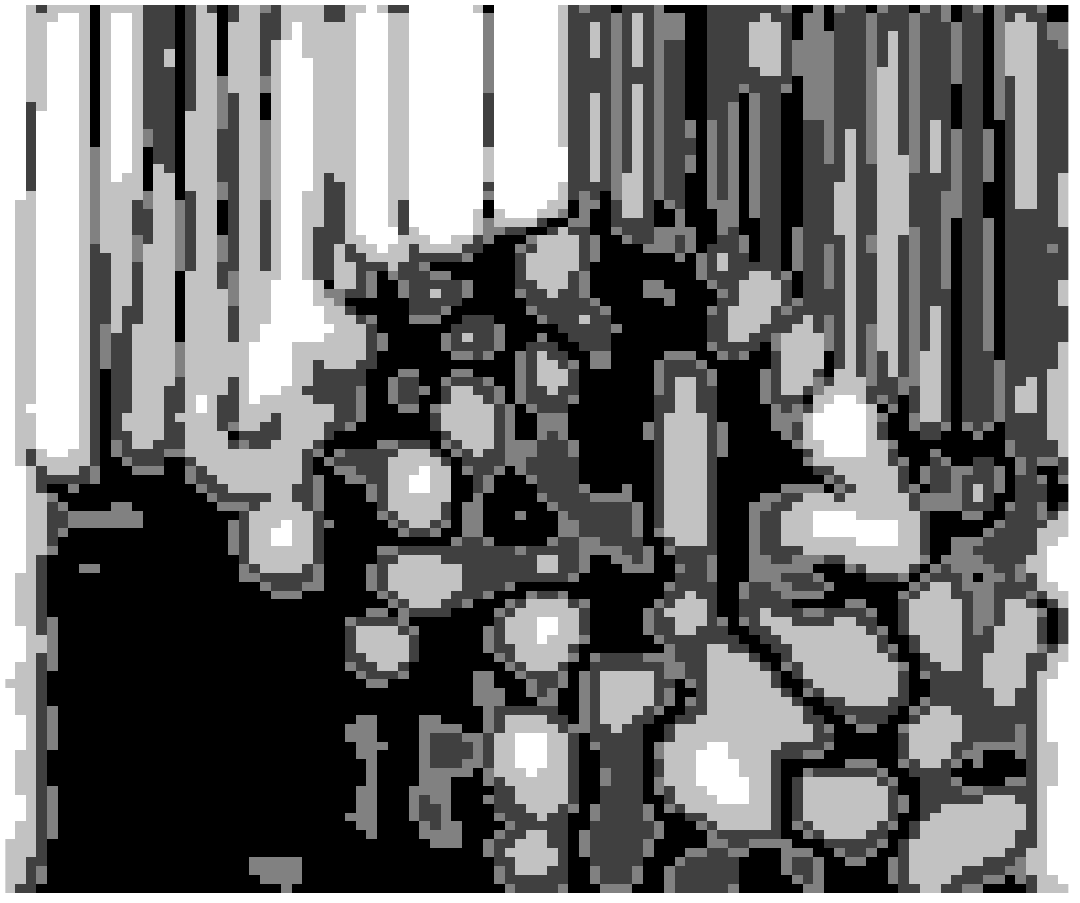}}
		\quad
		{\includegraphics[width=0.14\textwidth]{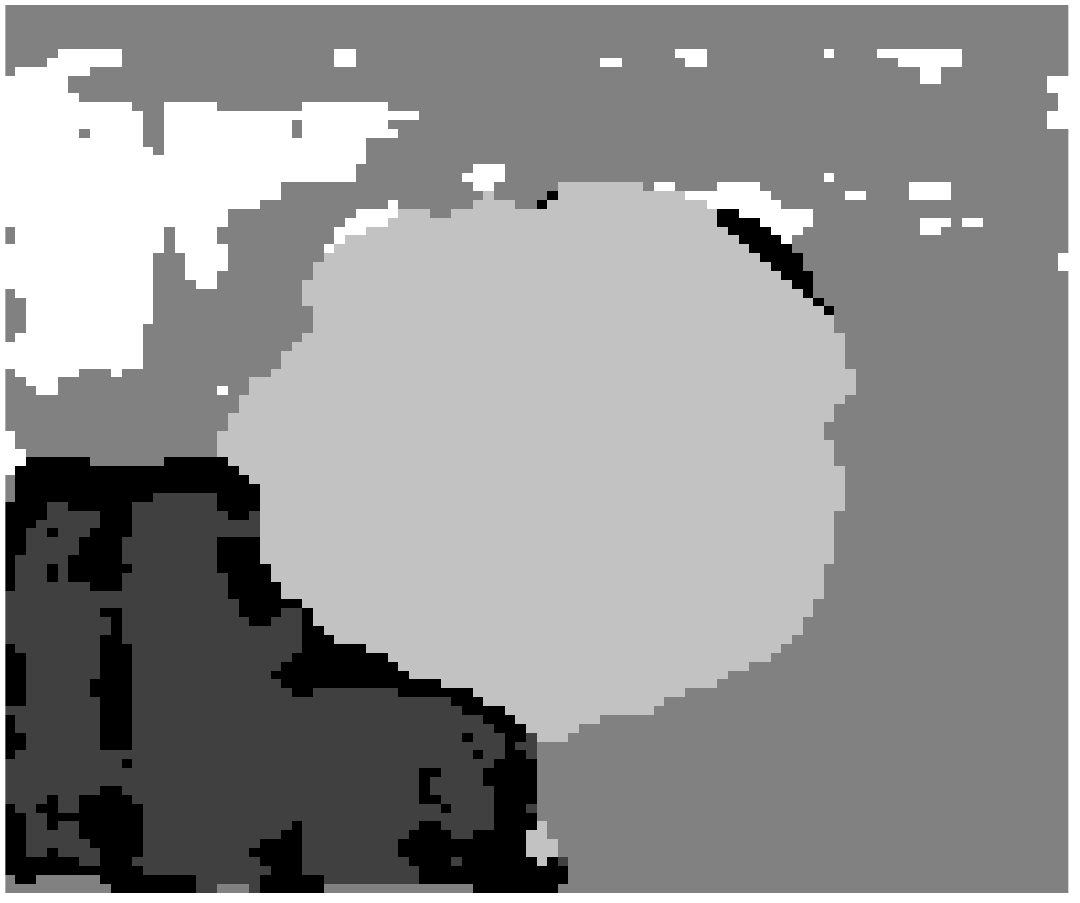}}
		\quad
		{\includegraphics[width=0.14\textwidth]{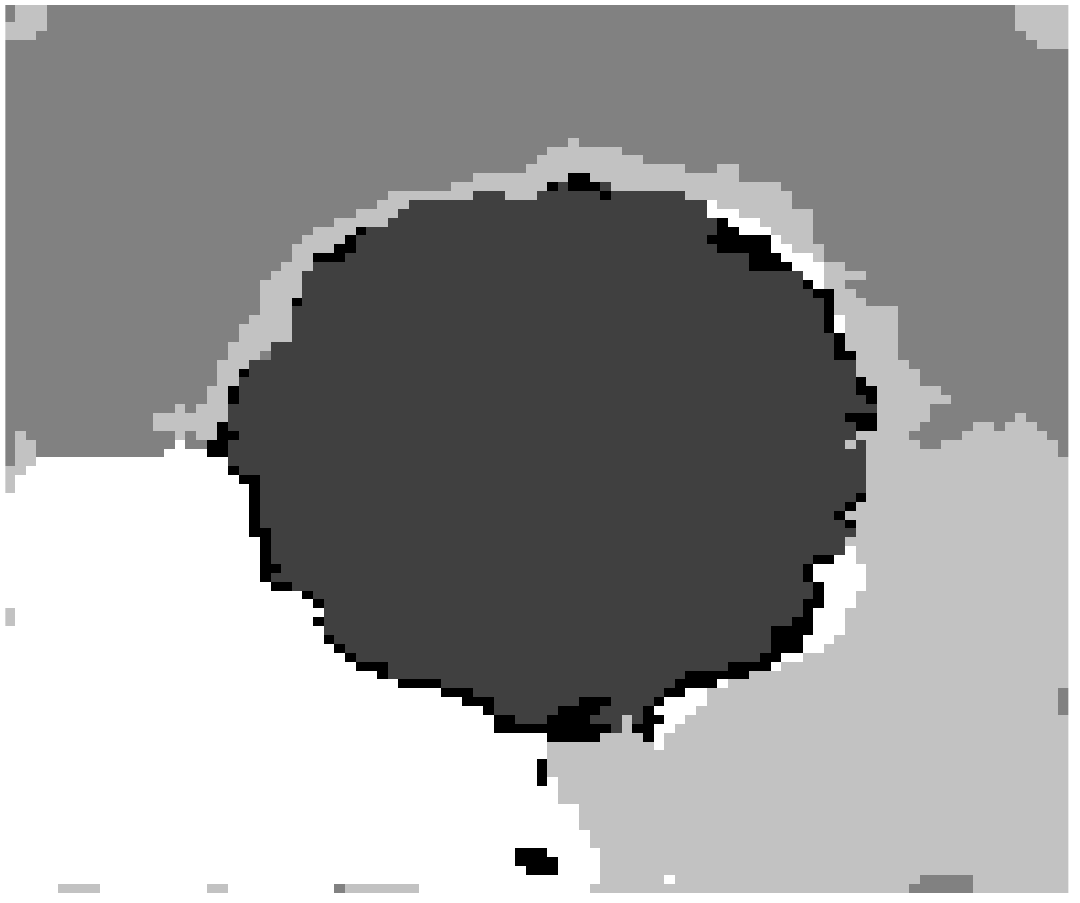}}
		\quad
		\subfloat[AFCM-ER-LP-L2] {\includegraphics[width=0.14\textwidth]{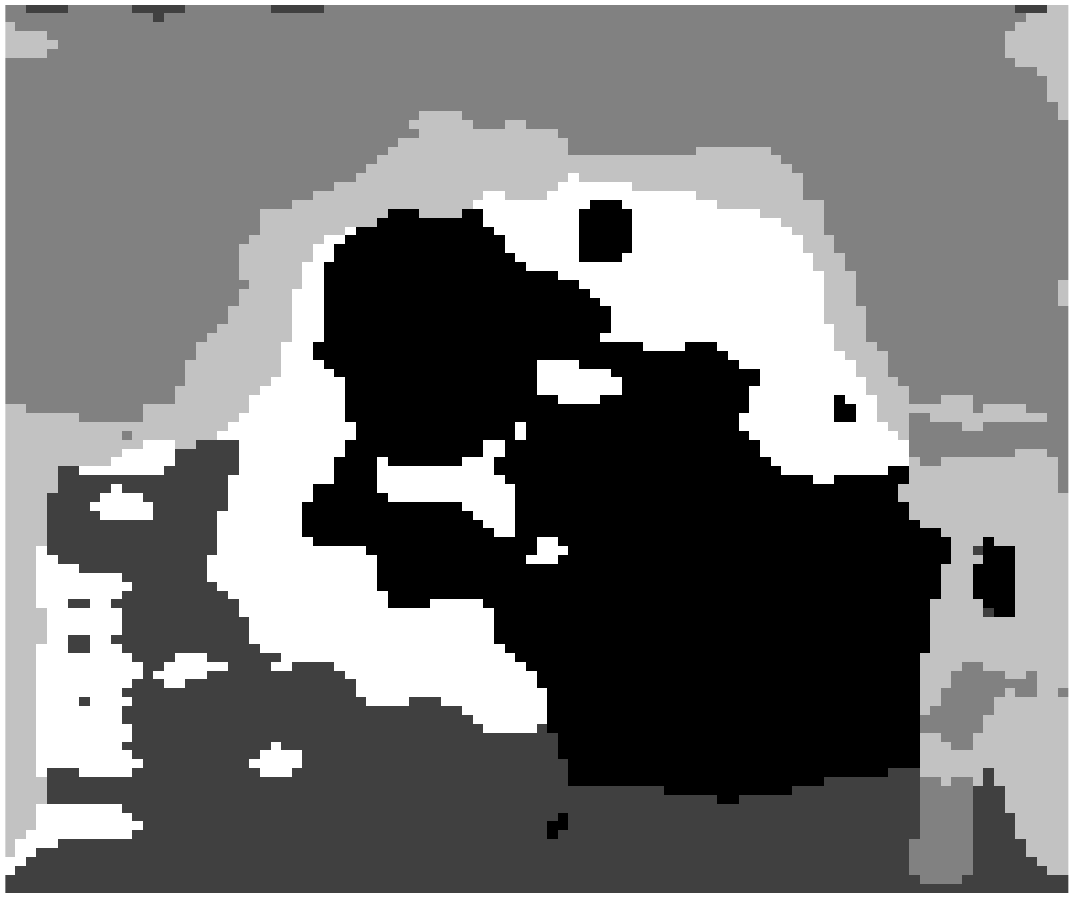}}
		\quad
		\subfloat[AFCM-ER-LP-L1] {\includegraphics[width=0.14\textwidth]{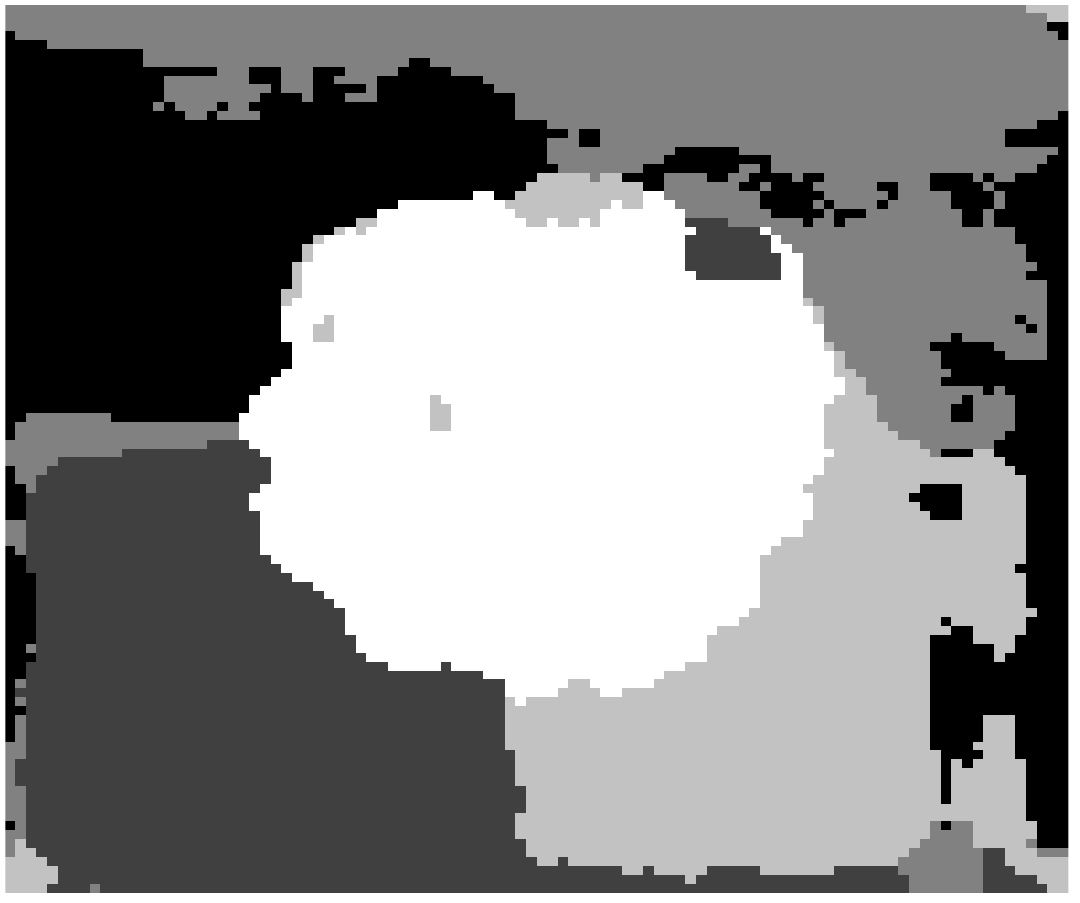}}
		\quad
		\subfloat[AFCM-ER-GS-L2] {\includegraphics[width=0.14\textwidth]{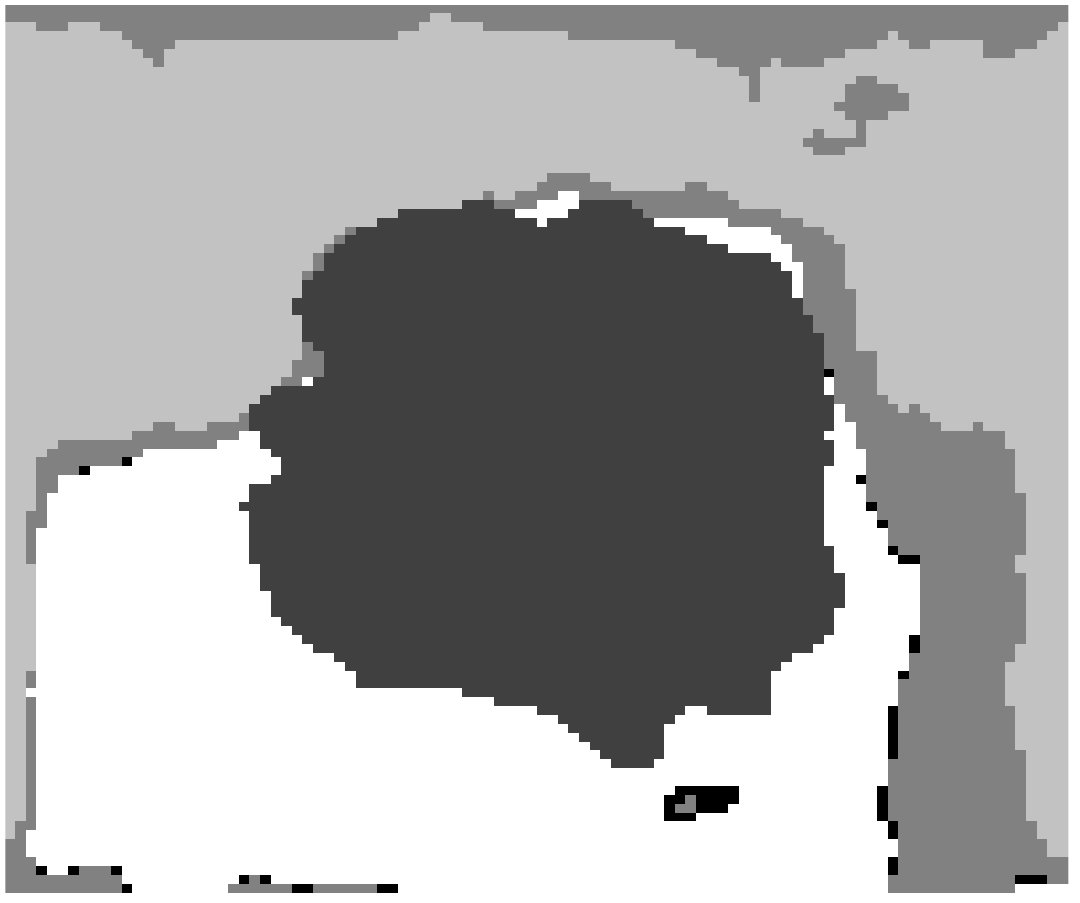}}
		\quad
		\subfloat[AFCM-ER-GS-L1] {\includegraphics[width=0.14\textwidth]{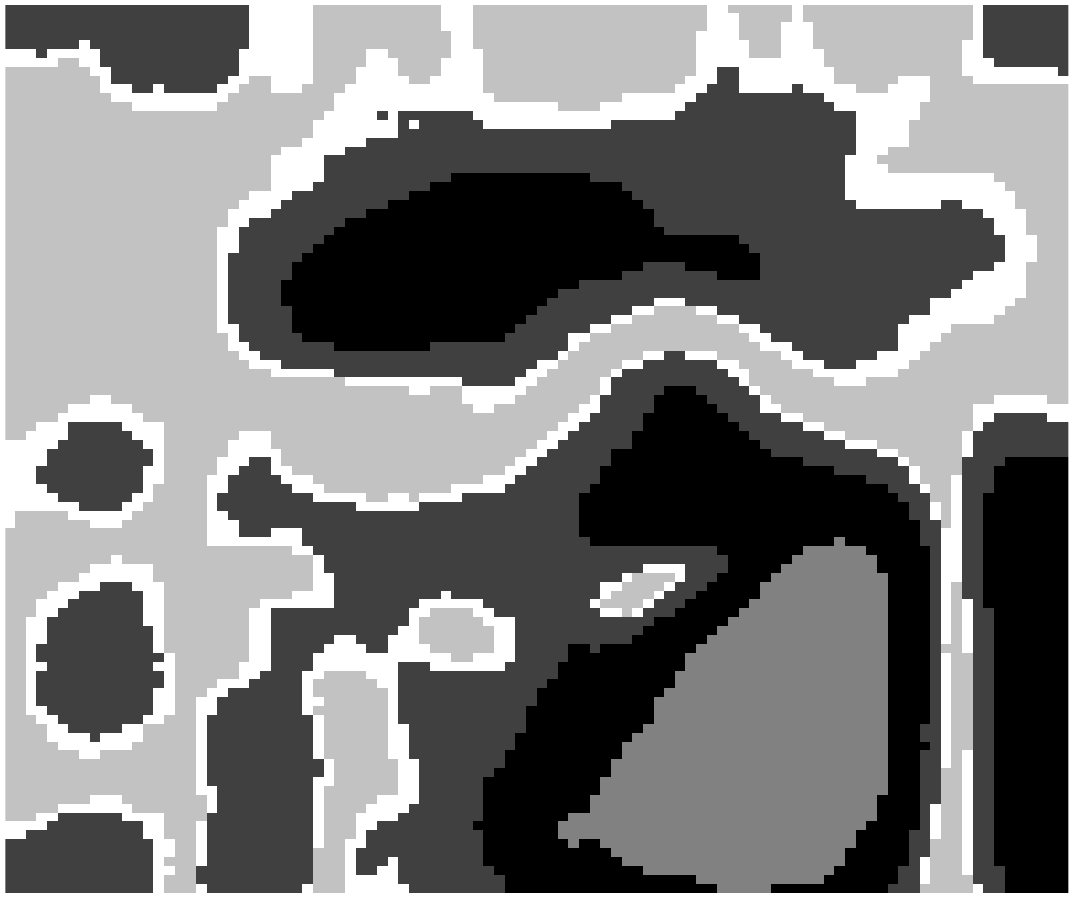}}
		\quad
		\subfloat[AFCM-ER-LS-L2] {\includegraphics[width=0.14\textwidth]{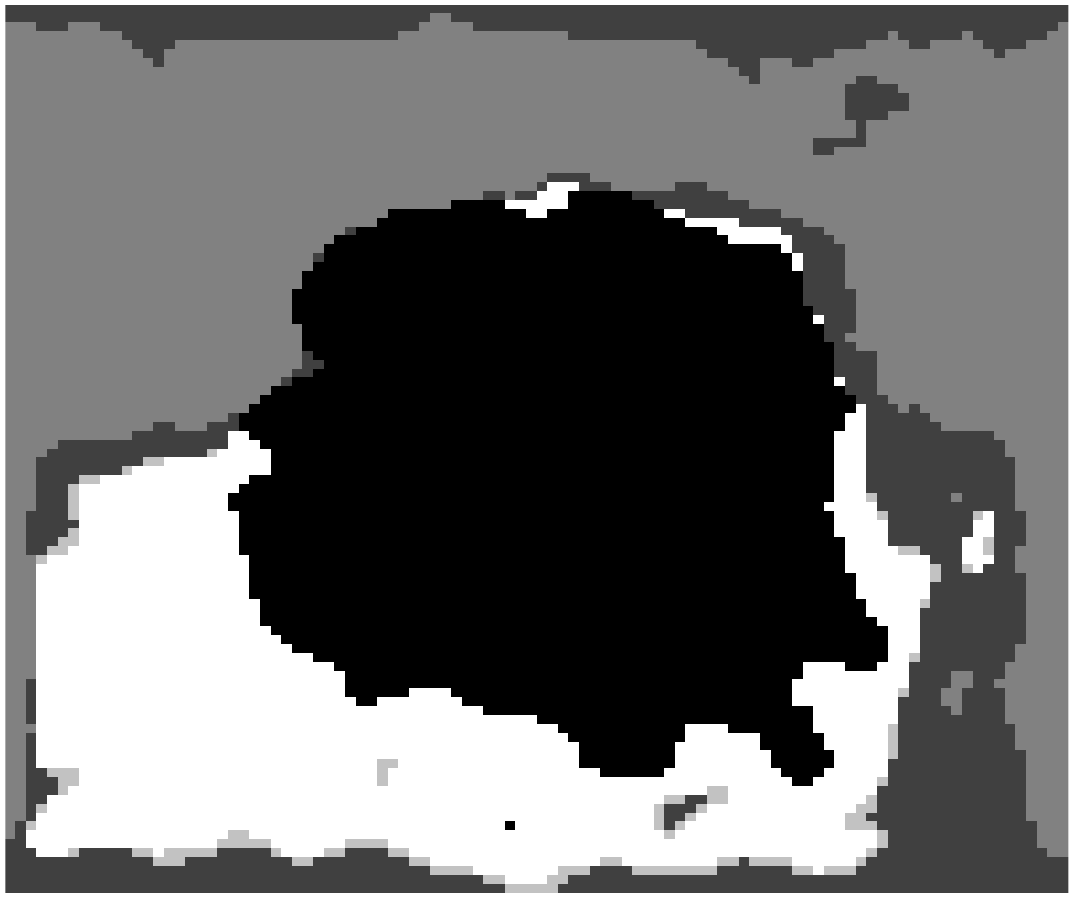}}
		\quad
		\subfloat[AFCM-ER-LS-L1] {\includegraphics[width=0.14\textwidth]{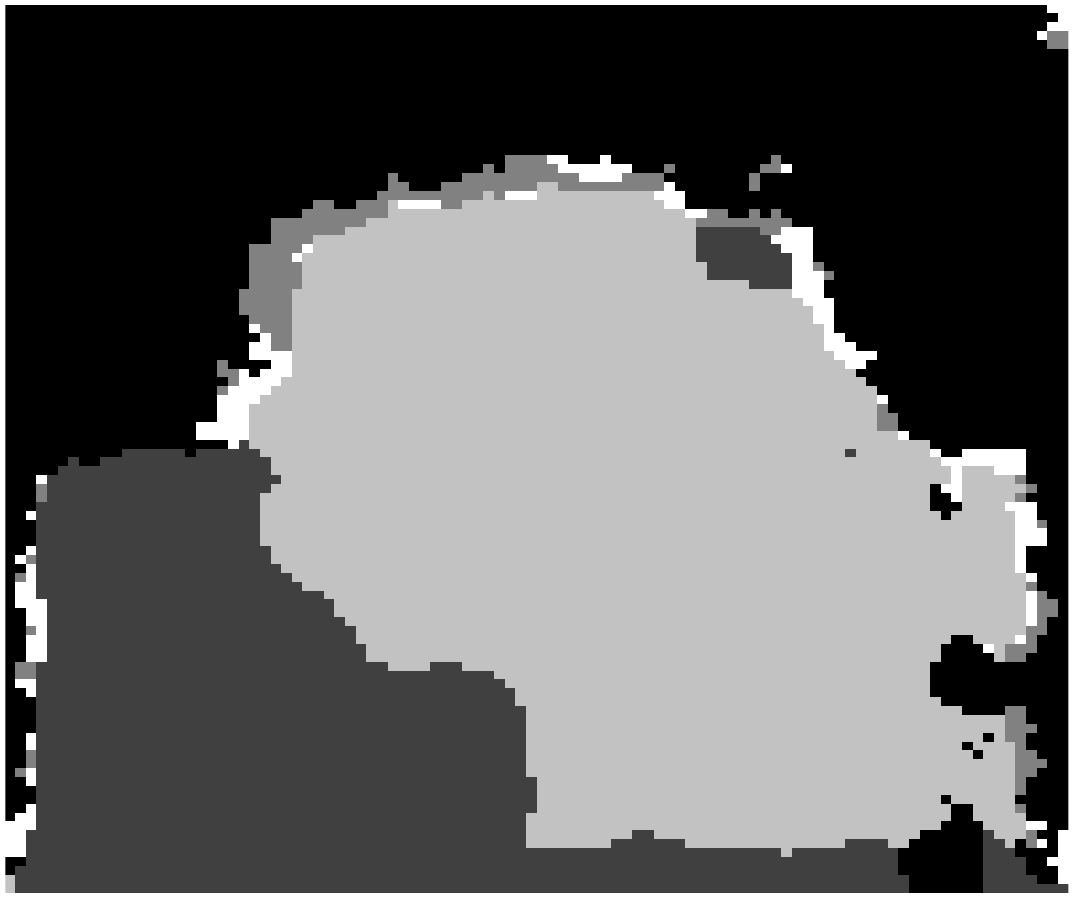}}
		\caption{Segmentation results of each algorithm for the 5-textural image without and with Gaussian noise. The first and the third row show the segmentation results for the original 5-textural image. The second and the fourth row present the obtained segmentation for the 5-textural image with Gaussian noise.}
		\label{img:ResultsTextImagesCircle}
	\end{figure}
	
\begin{figure}[!htb]
		\centering
		{\includegraphics[width=0.14\textwidth]{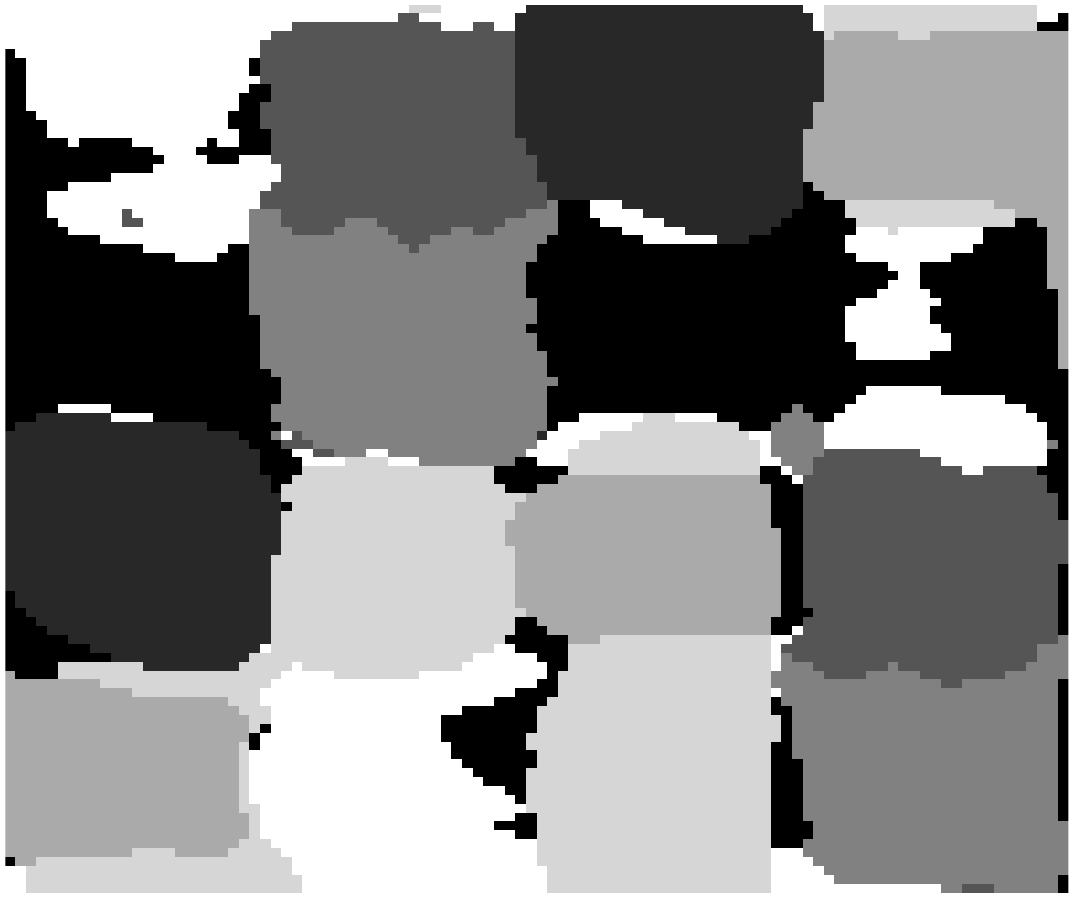}}
		\quad
		{\includegraphics[width=0.14\textwidth]{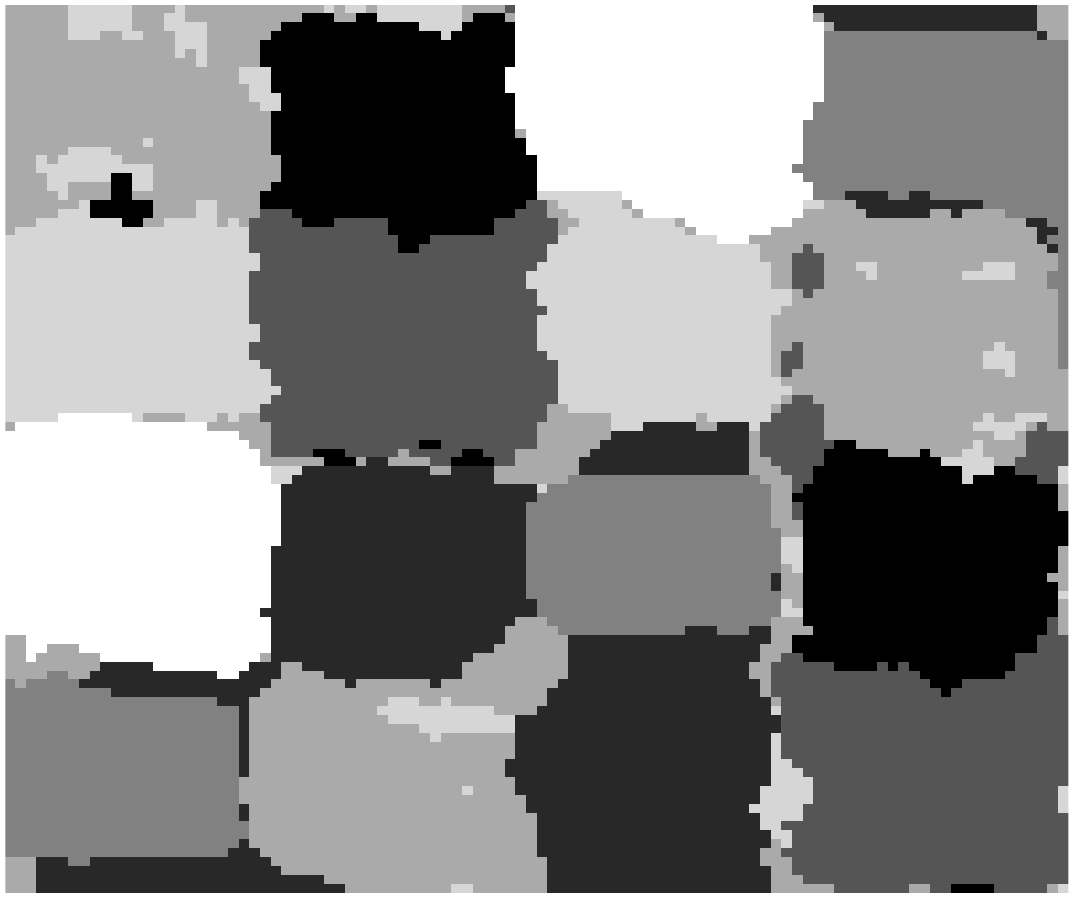}}
		\quad
		{\includegraphics[width=0.14\textwidth]{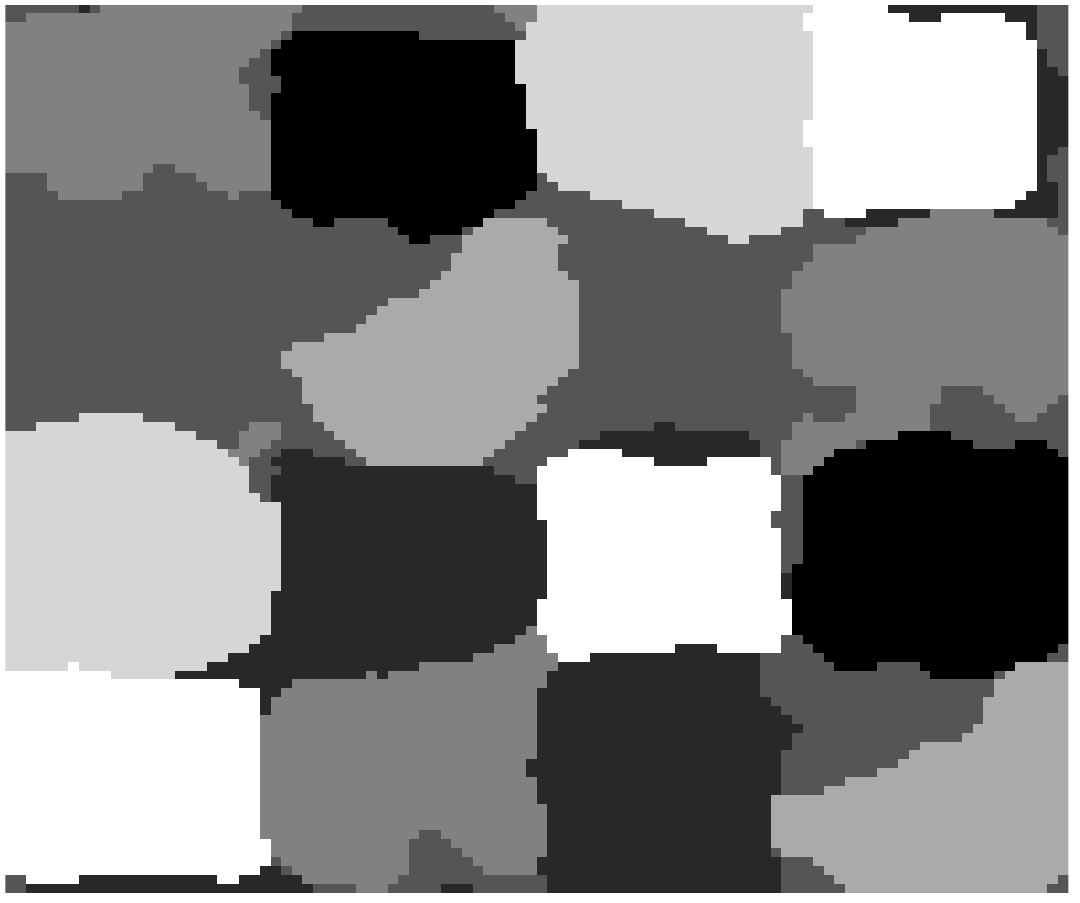}}
		\quad
		{\includegraphics[width=0.14\textwidth]{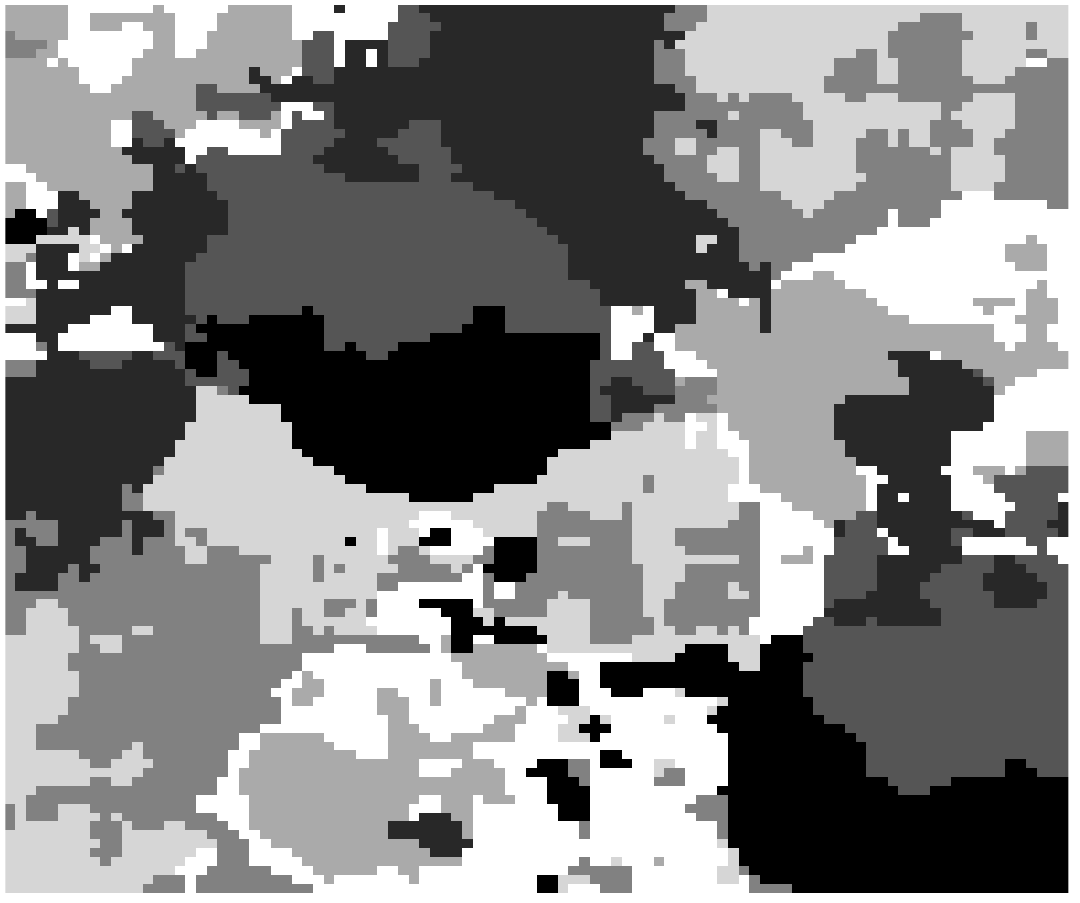}}
		\quad
		{\includegraphics[width=0.14\textwidth]{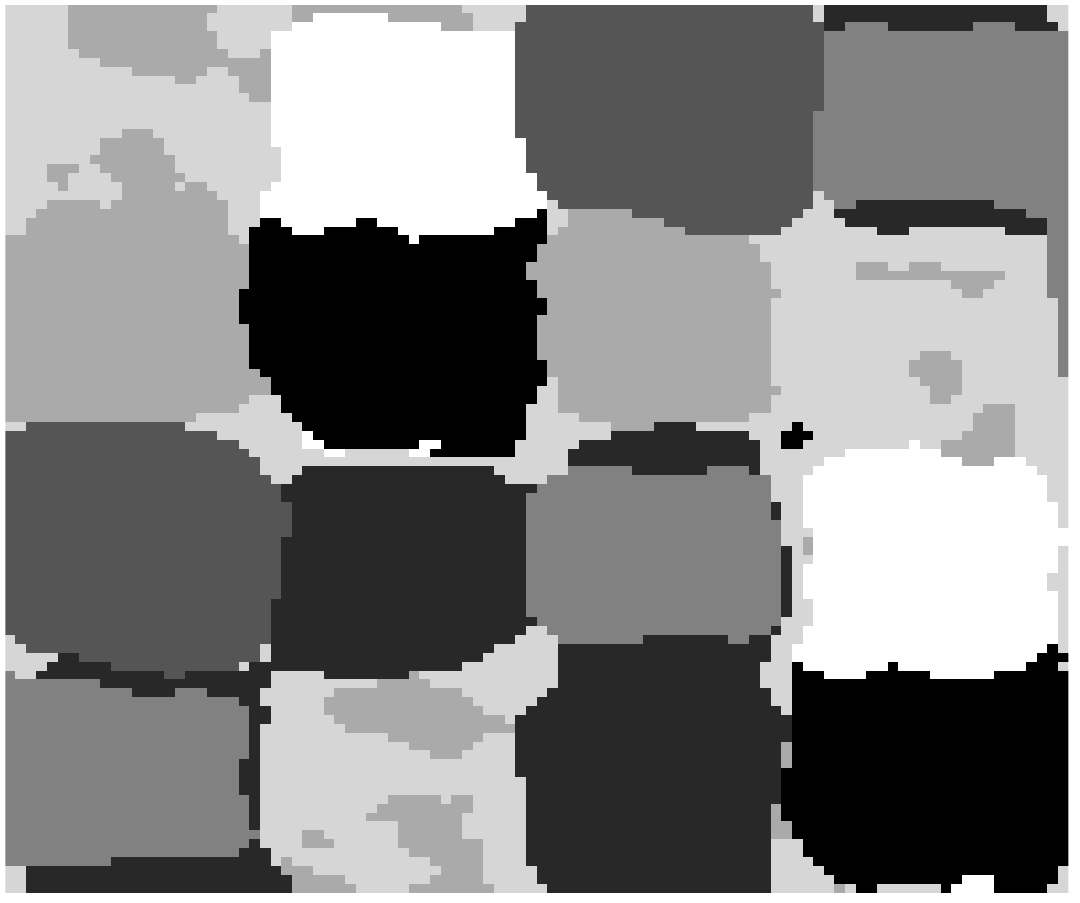}}
		\quad
		{\includegraphics[width=0.14\textwidth]{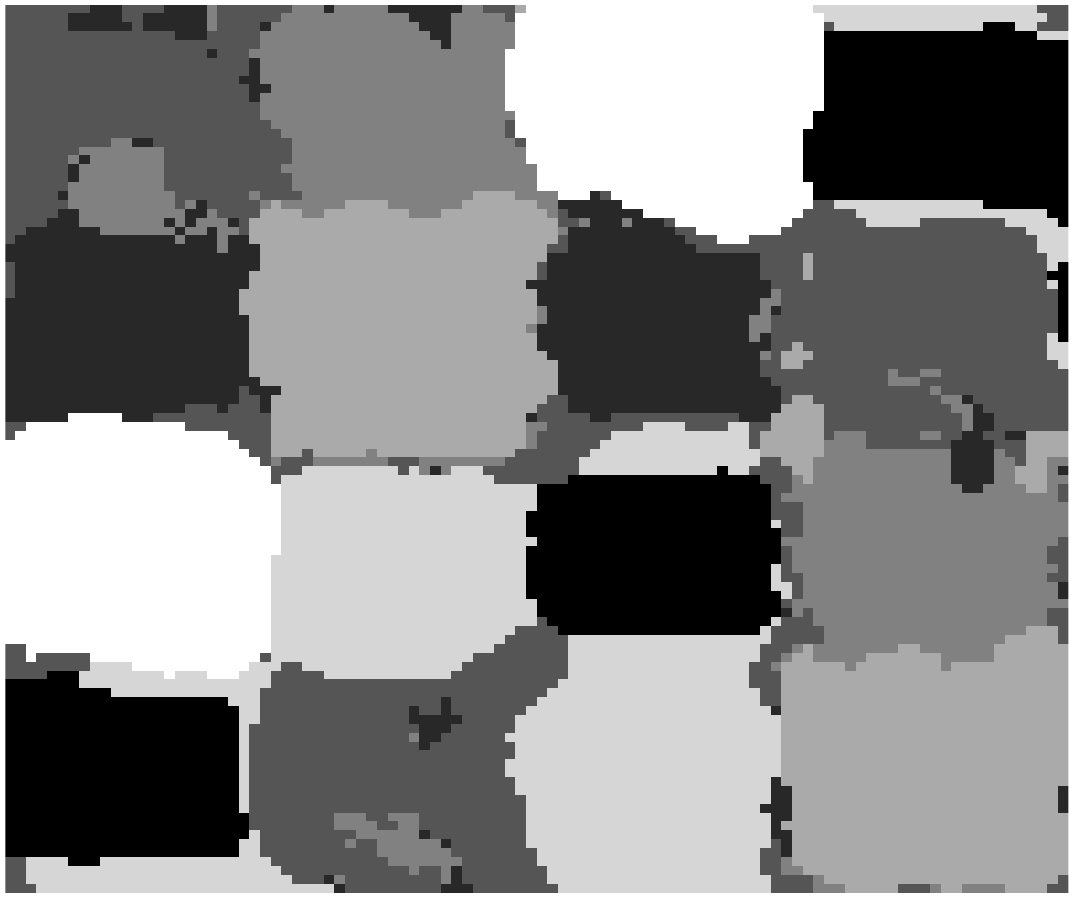}}
		\quad
		\subfloat[FCM-ER-L2] {\includegraphics[width=0.14\textwidth]{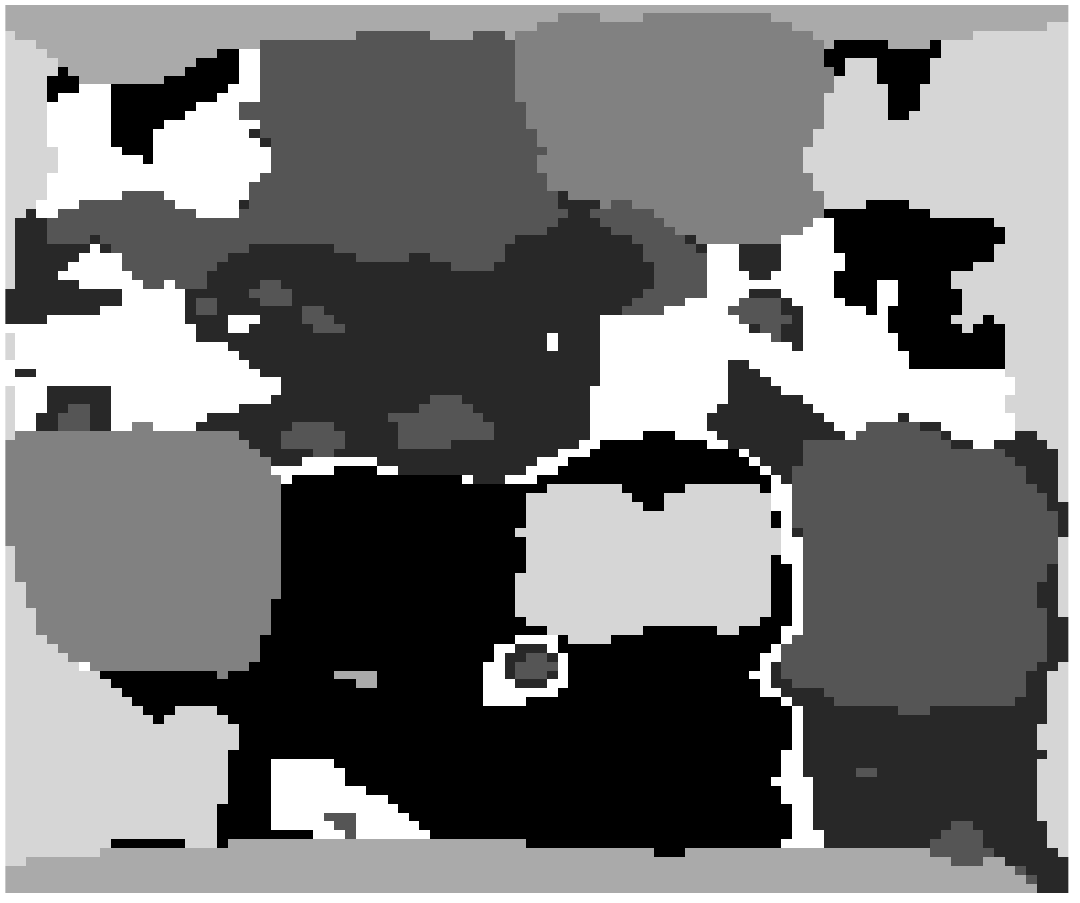}}
		\quad
		\subfloat[FCM-ER-L1] {\includegraphics[width=0.14\textwidth]{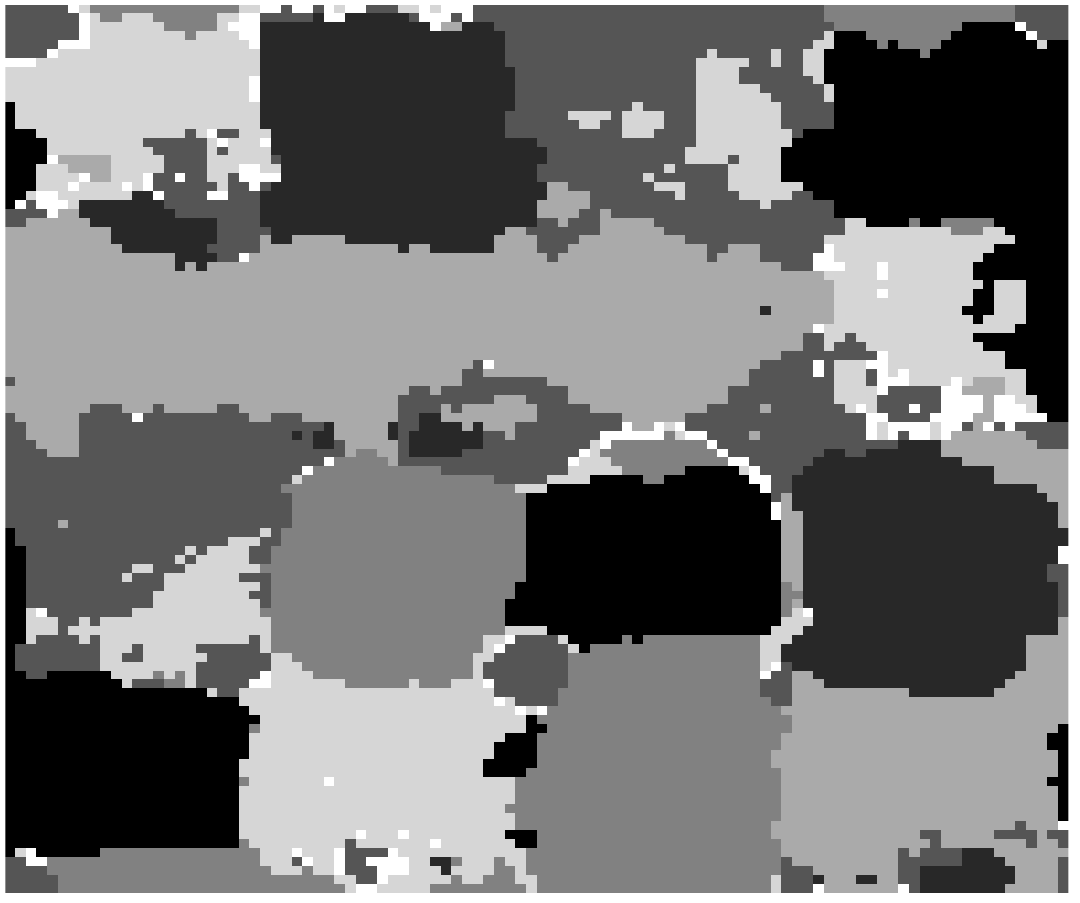}}
		\quad
		\subfloat[AFCM-ER-M] {\includegraphics[width=0.14\textwidth]{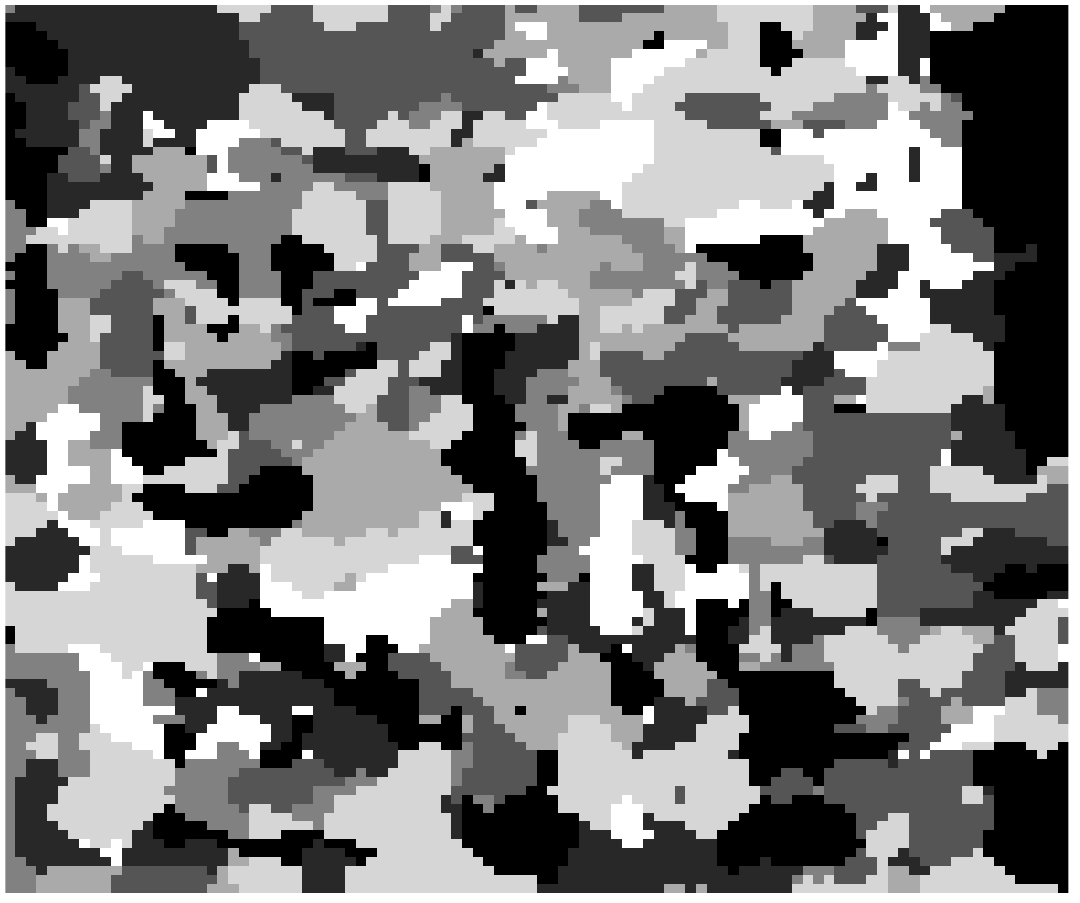}}
		\quad
		\subfloat[AFCM-ER-Mk] {\includegraphics[width=0.14\textwidth]{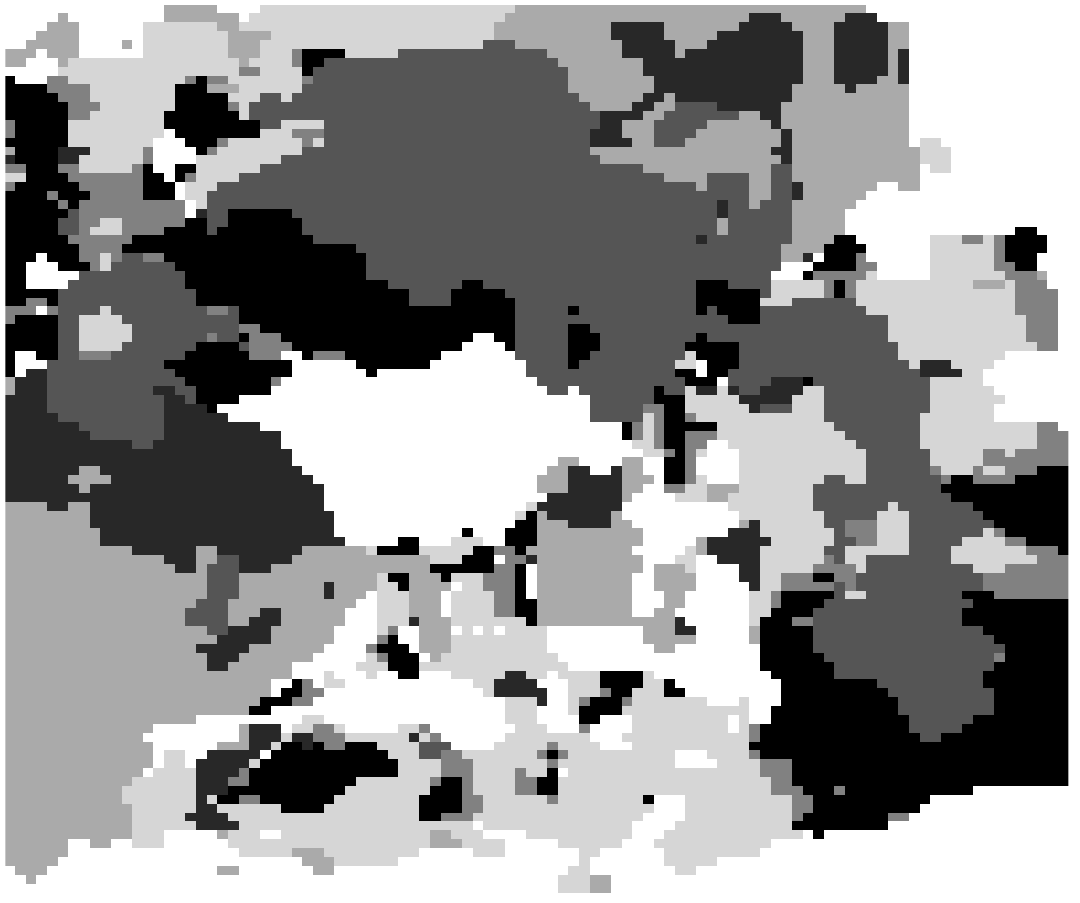}}
		\quad
		\subfloat[AFCM-ER-GP-L2] {\includegraphics[width=0.14\textwidth]{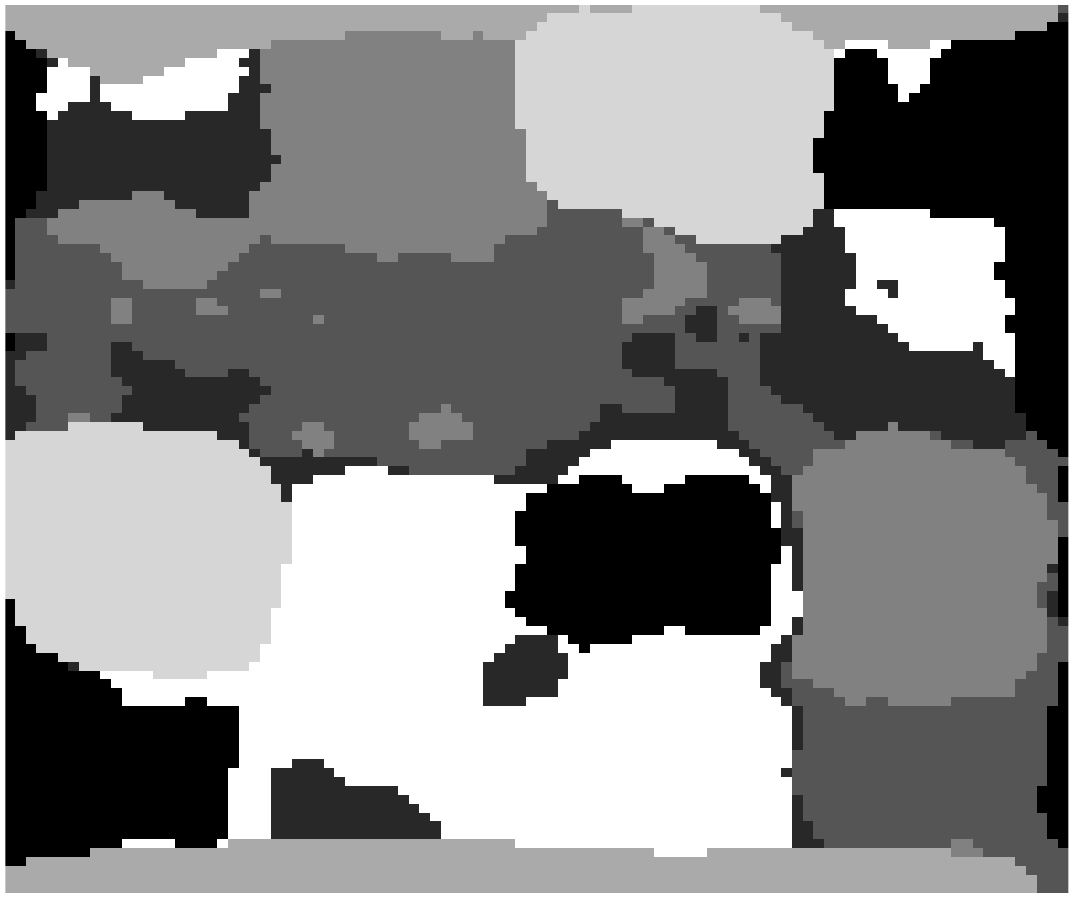}}
		\quad
		\subfloat[AFCM-ER-GP-L1] {\includegraphics[width=0.14\textwidth]{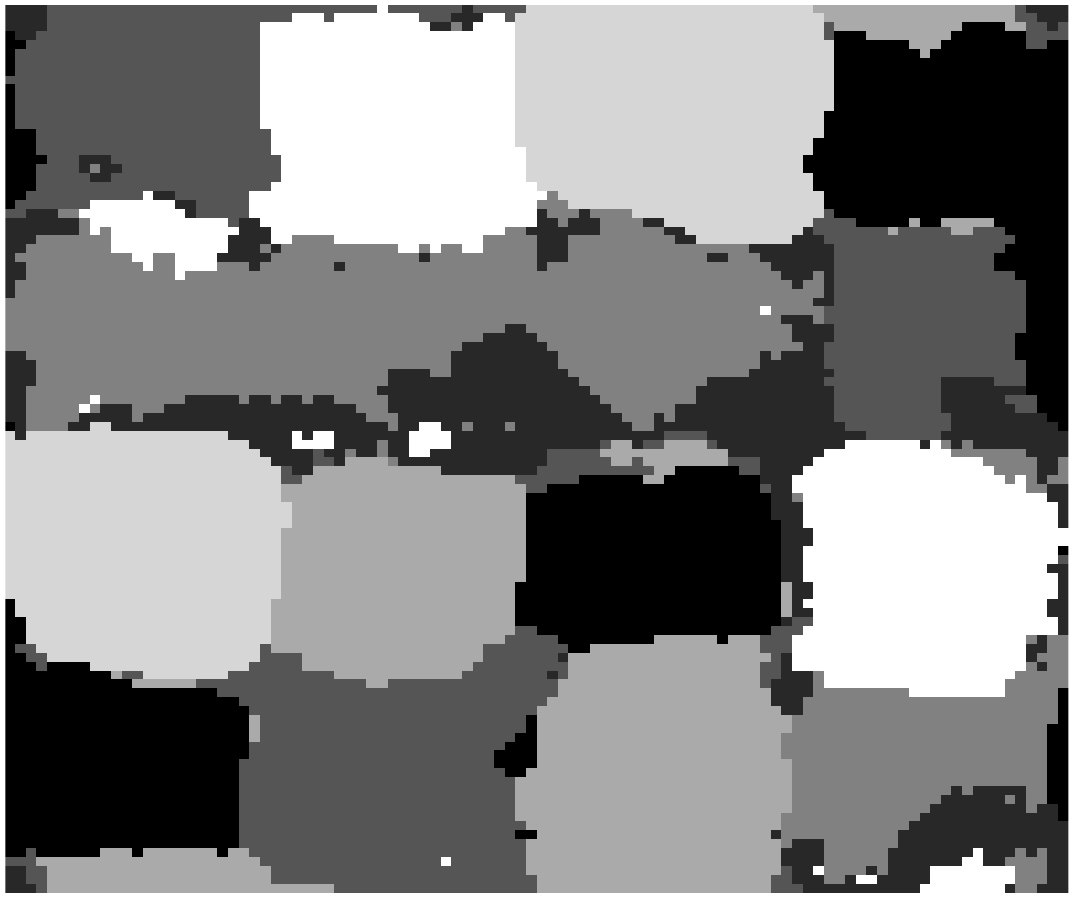}}
		\quad
		{\includegraphics[width=0.14\textwidth]{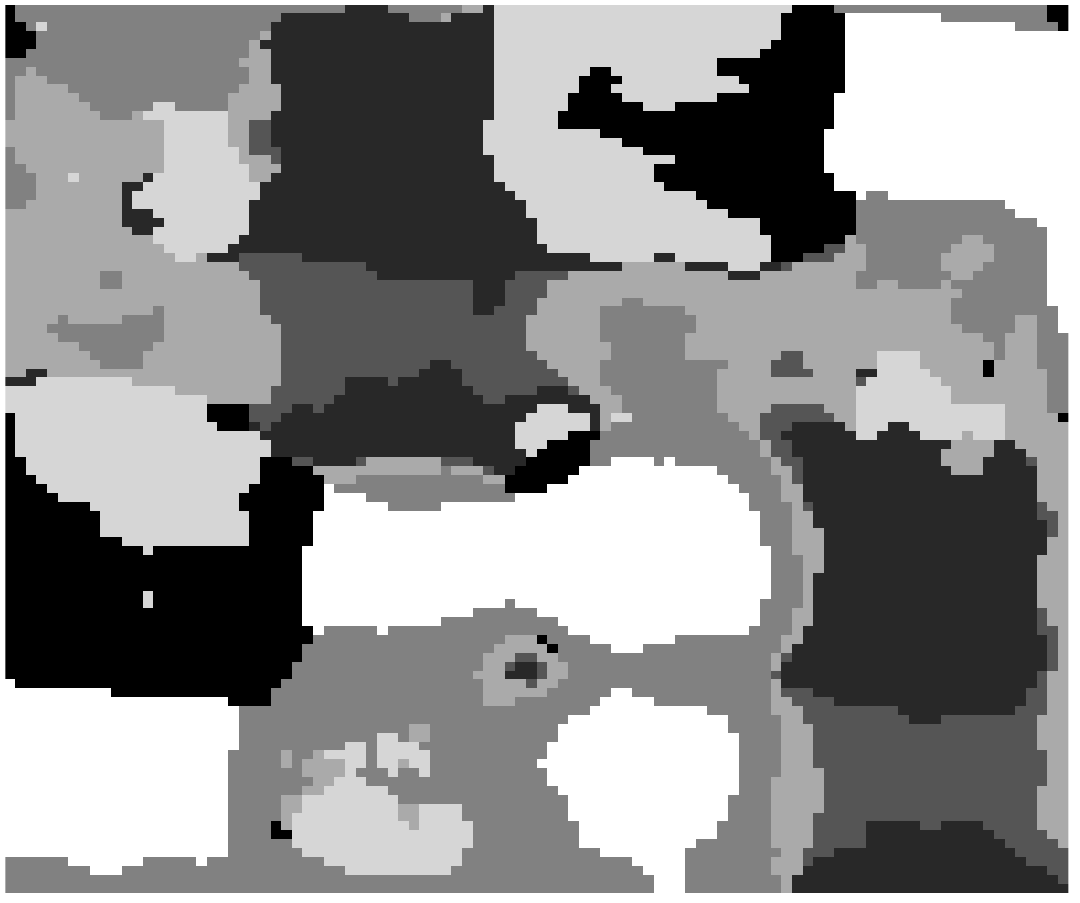}}
		\quad
		{\includegraphics[width=0.14\textwidth]{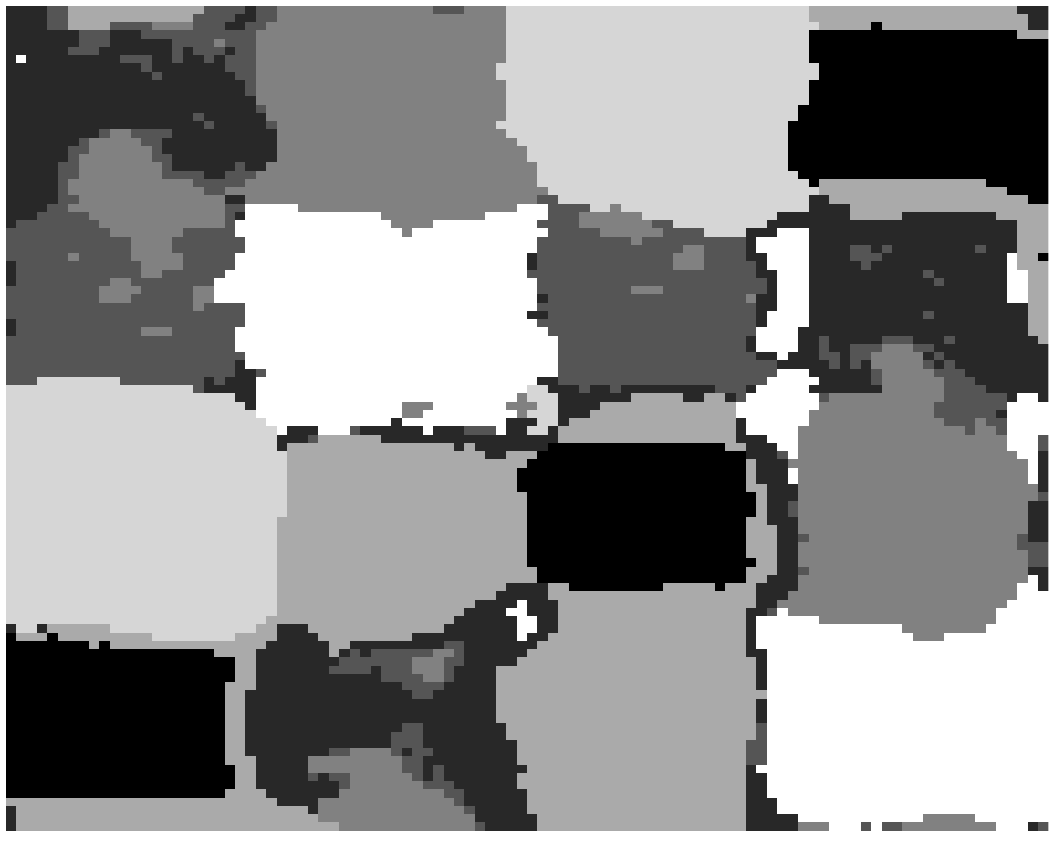}}
		\quad
		{\includegraphics[width=0.14\textwidth]{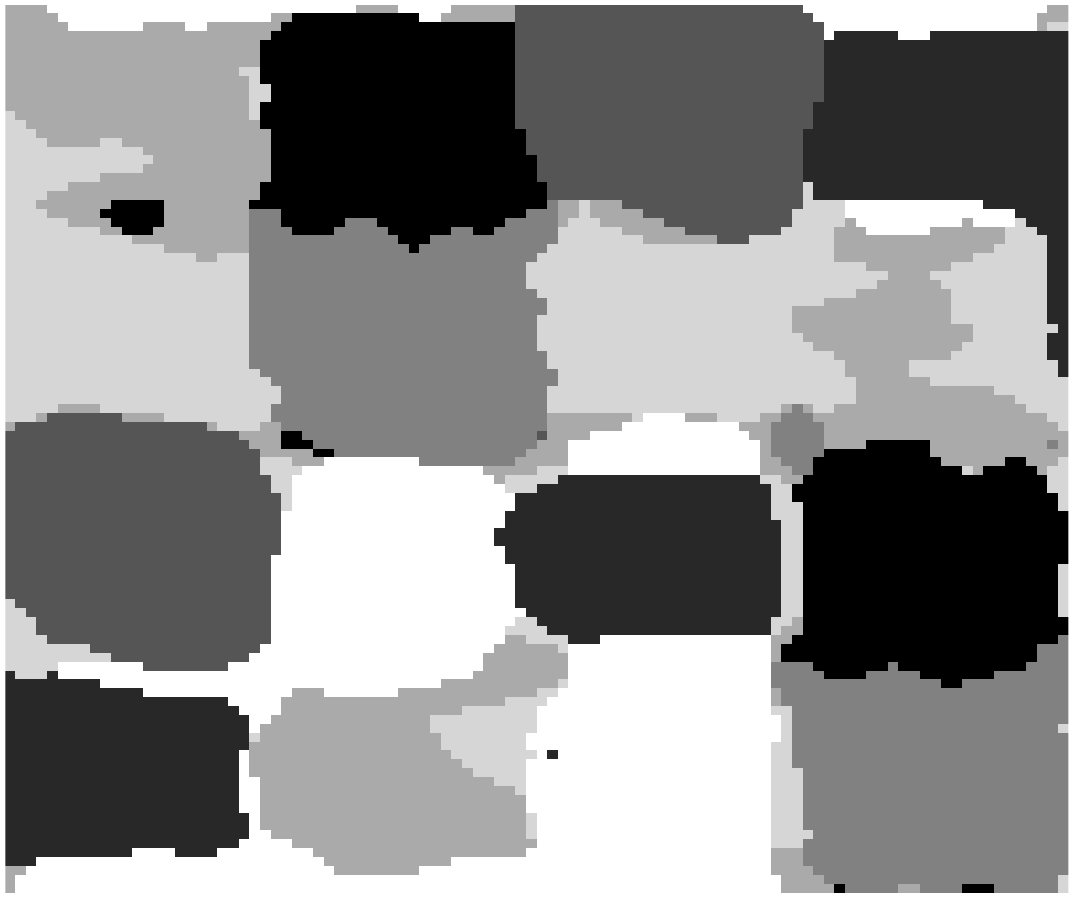}}
		\quad
		{\includegraphics[width=0.14\textwidth]{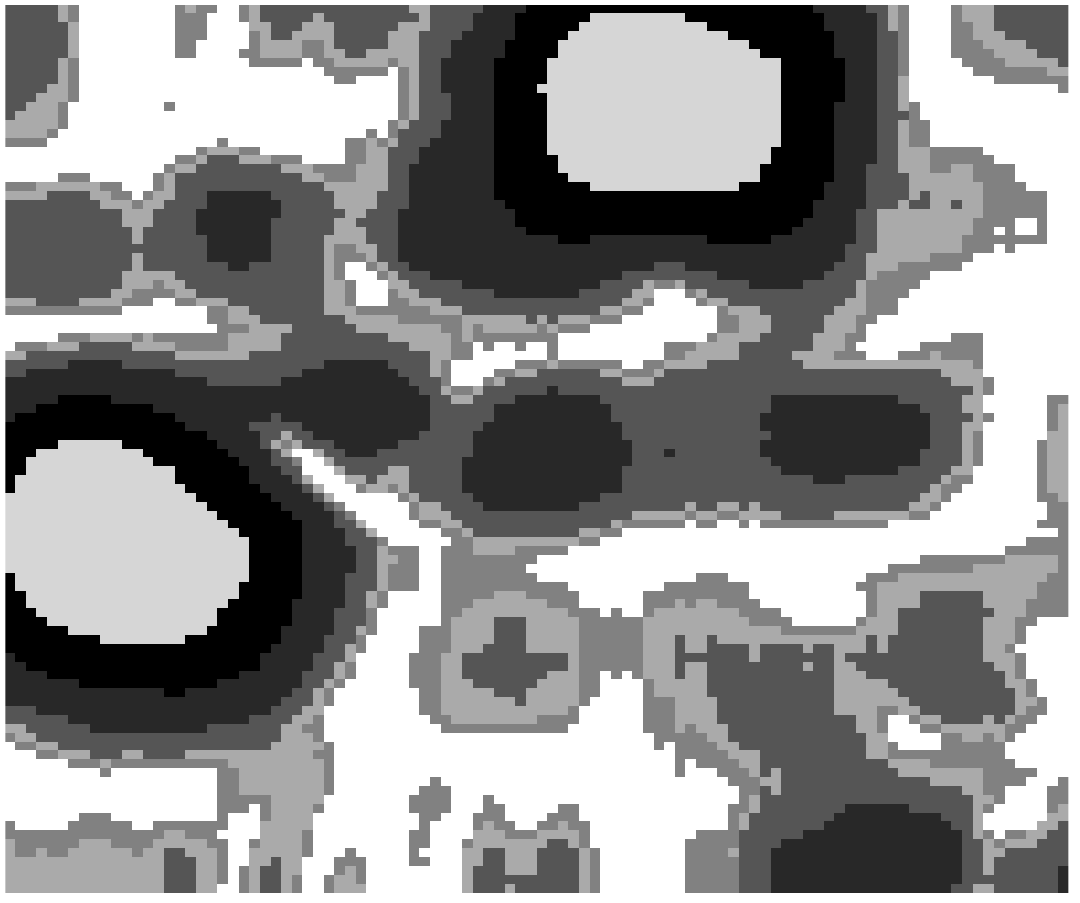}}
		\quad
		{\includegraphics[width=0.14\textwidth]{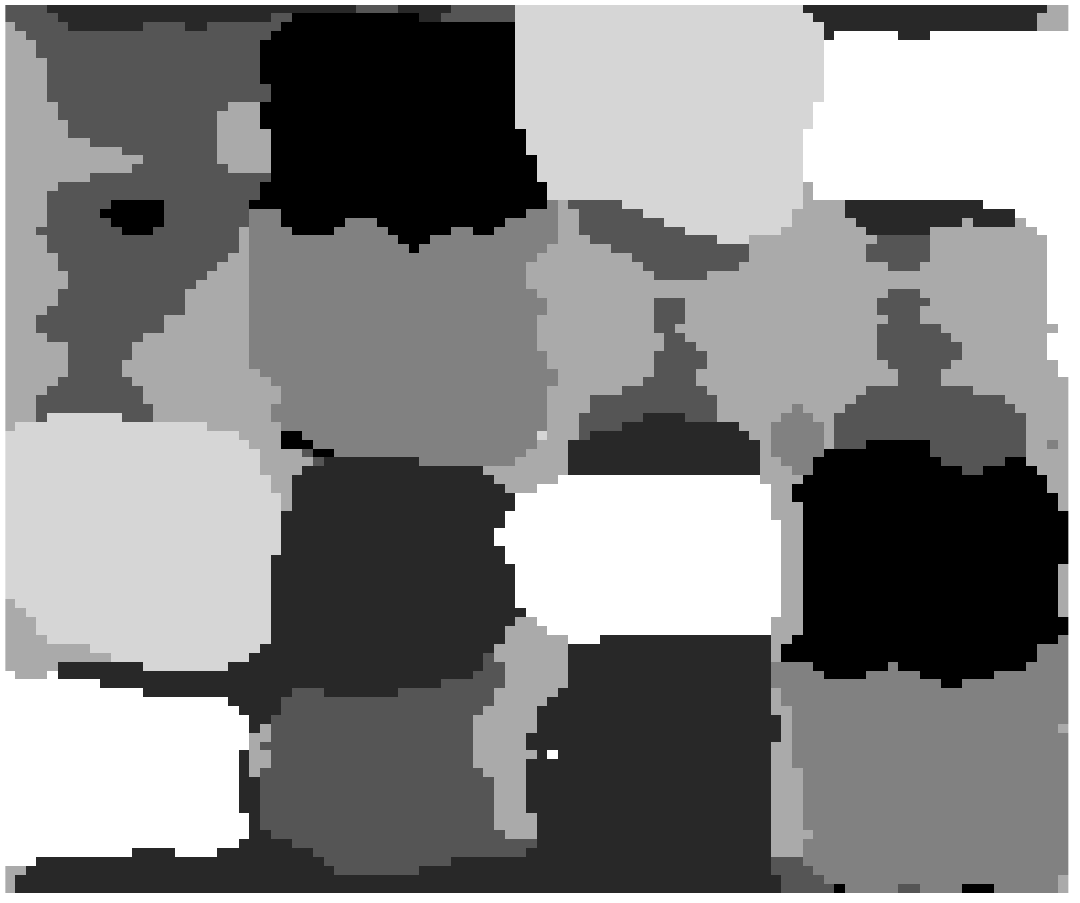}}
		\quad
		{\includegraphics[width=0.14\textwidth]{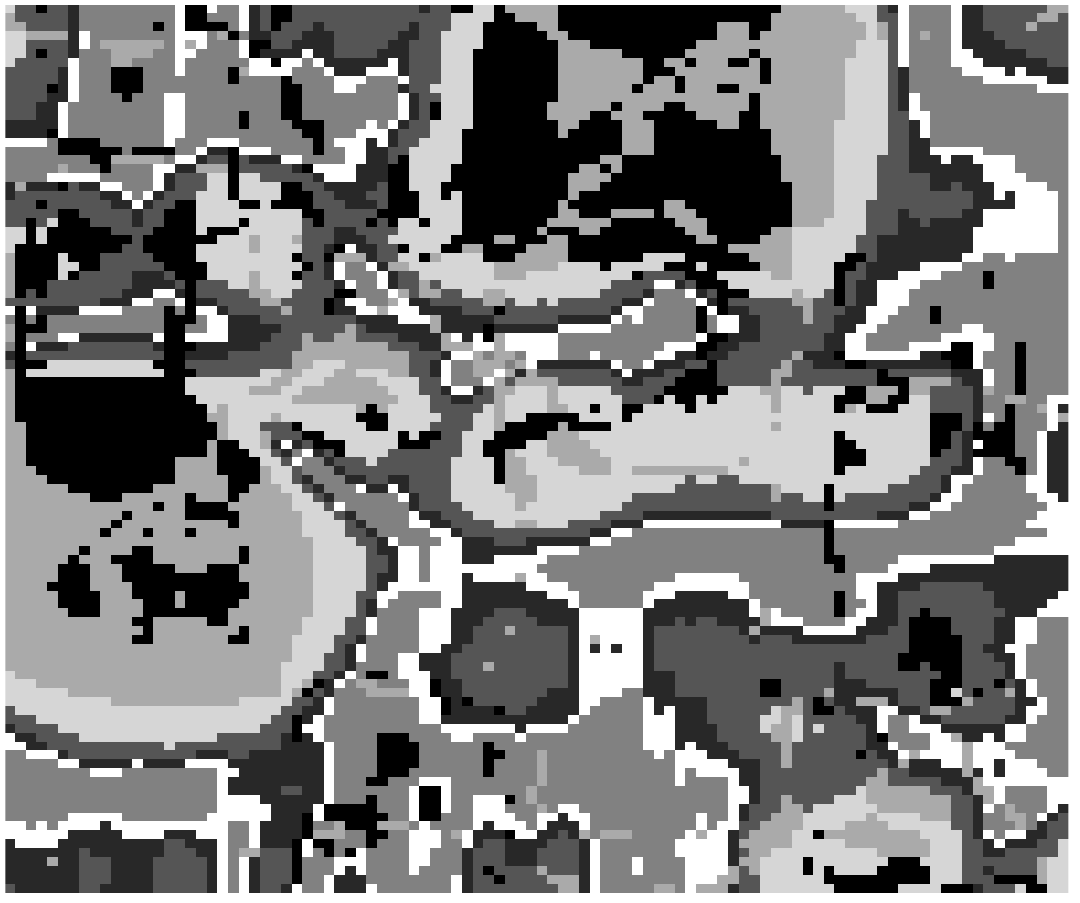}}
		\quad
		\subfloat[AFCM-ER-LP-L2] {\includegraphics[width=0.14\textwidth]{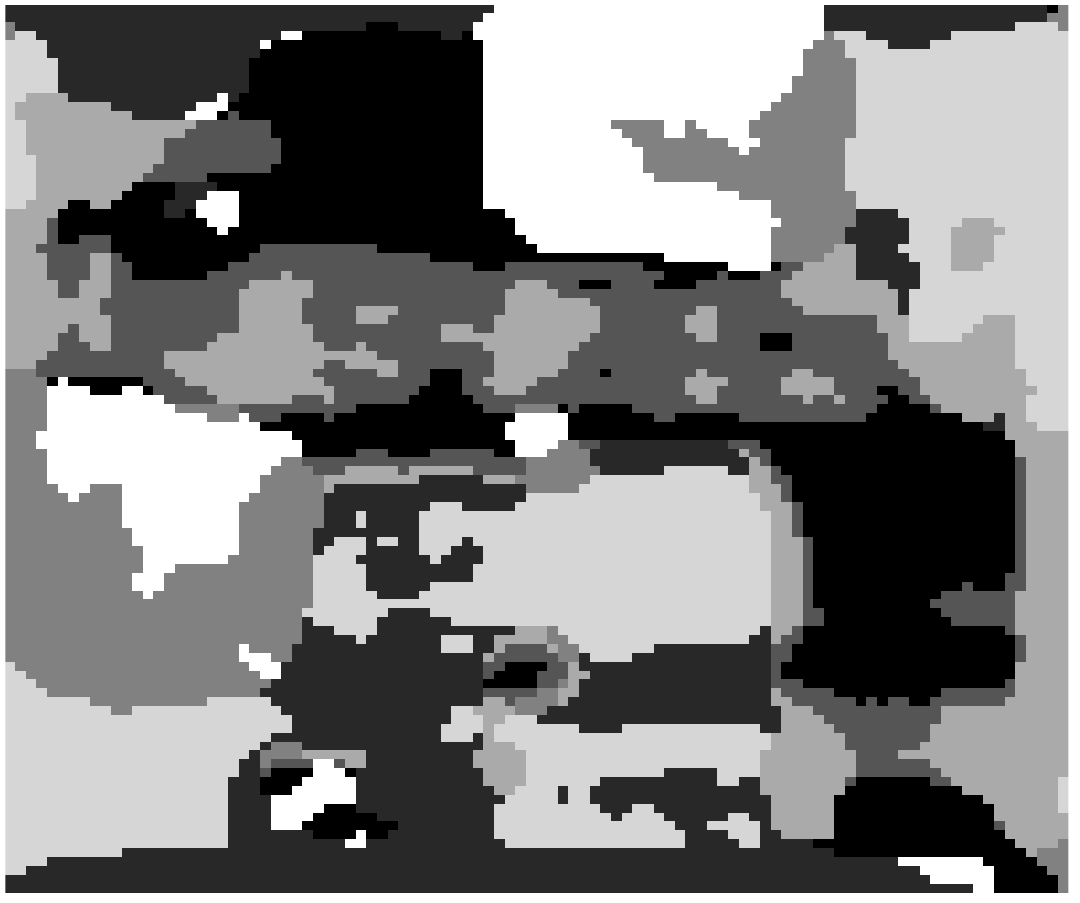}}
		\quad
		\subfloat[AFCM-ER-LP-L1] {\includegraphics[width=0.14\textwidth]{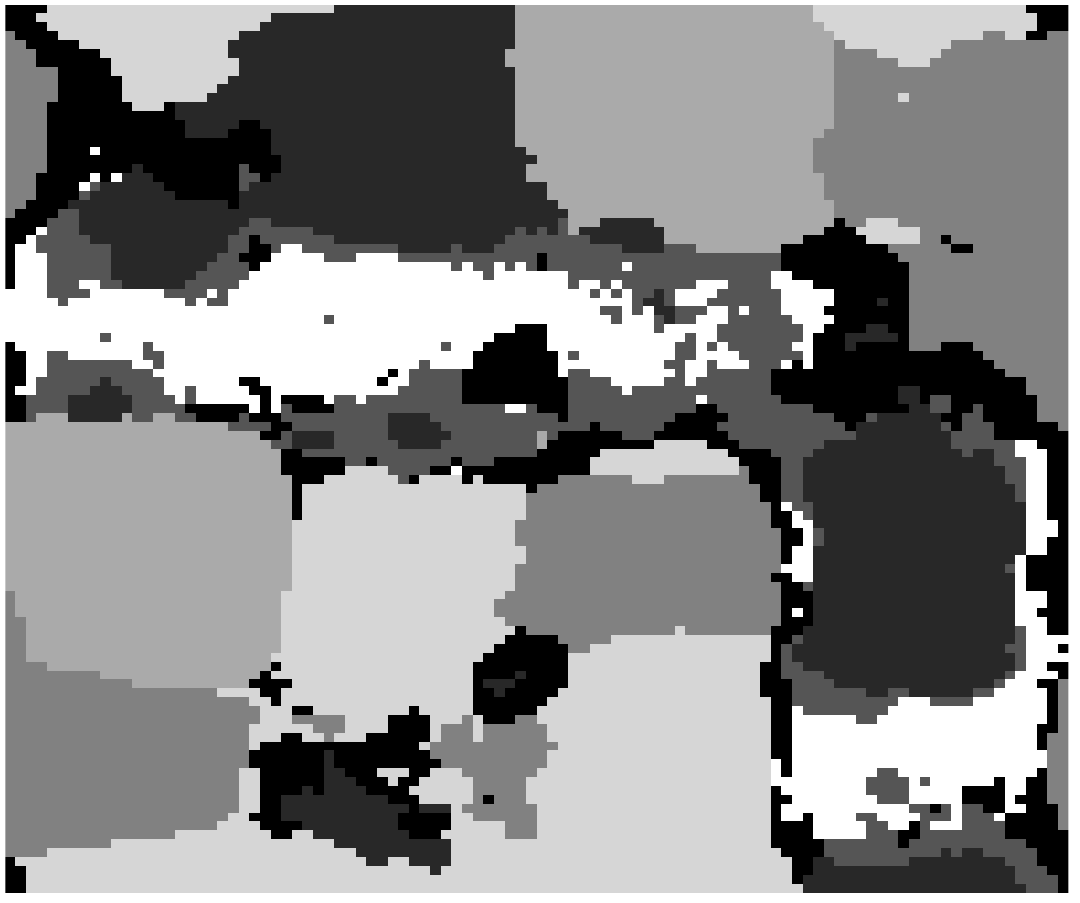}}
		\quad
		\subfloat[AFCM-ER-GS-L2] {\includegraphics[width=0.14\textwidth]{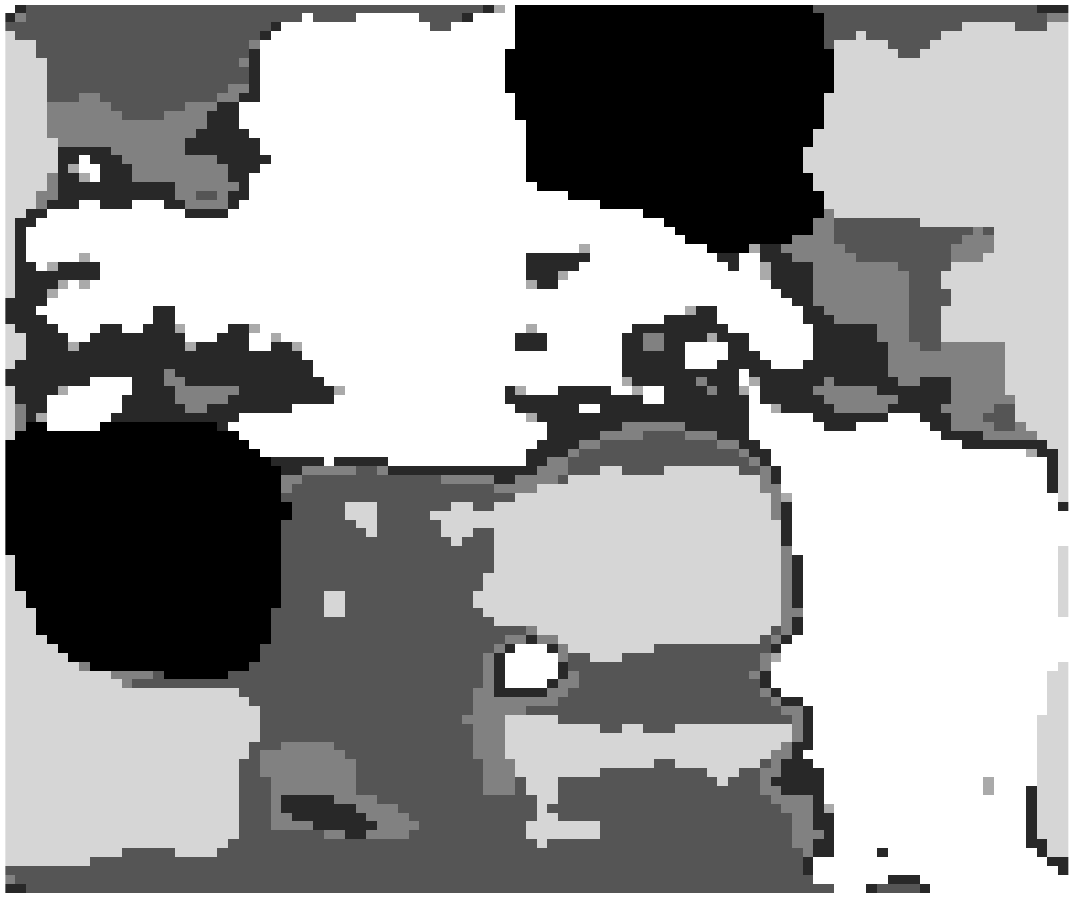}}
		\quad
		\subfloat[AFCM-ER-GS-L1] {\includegraphics[width=0.14\textwidth]{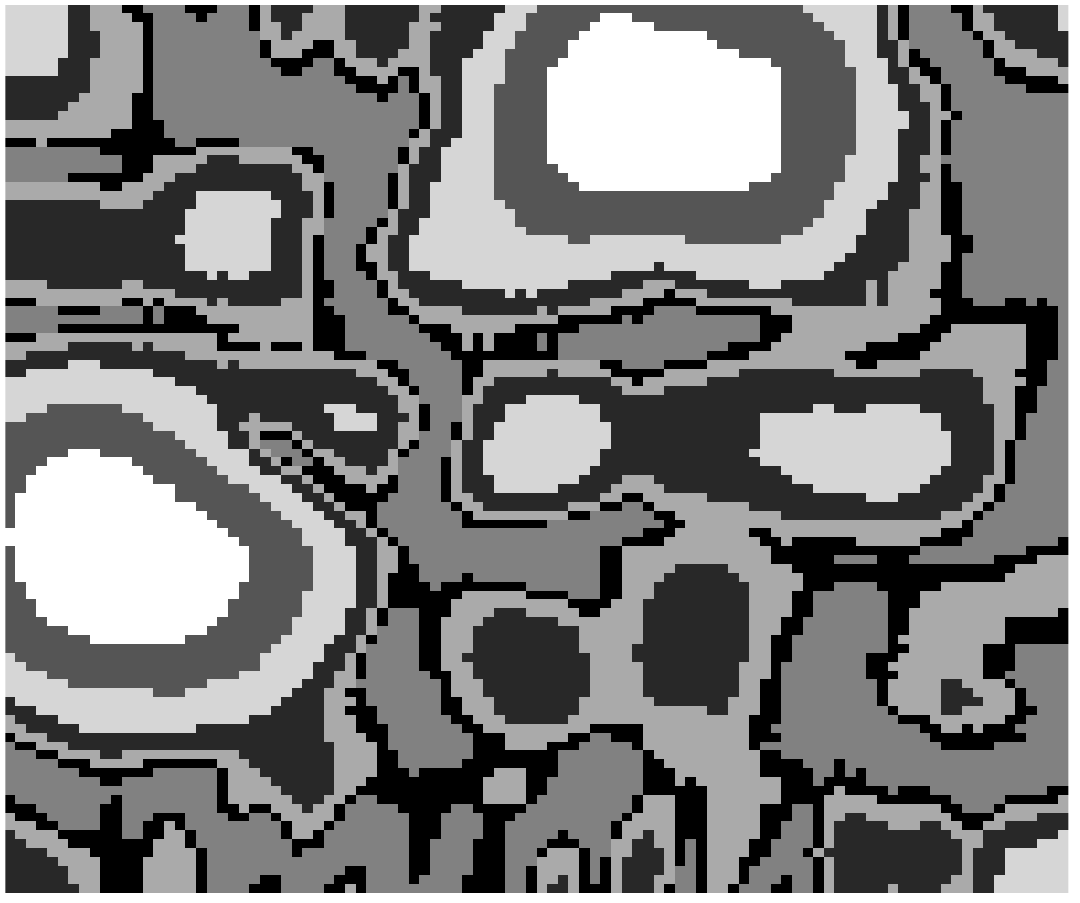}}
		\quad
		\subfloat[AFCM-ER-LS-L2] {\includegraphics[width=0.14\textwidth]{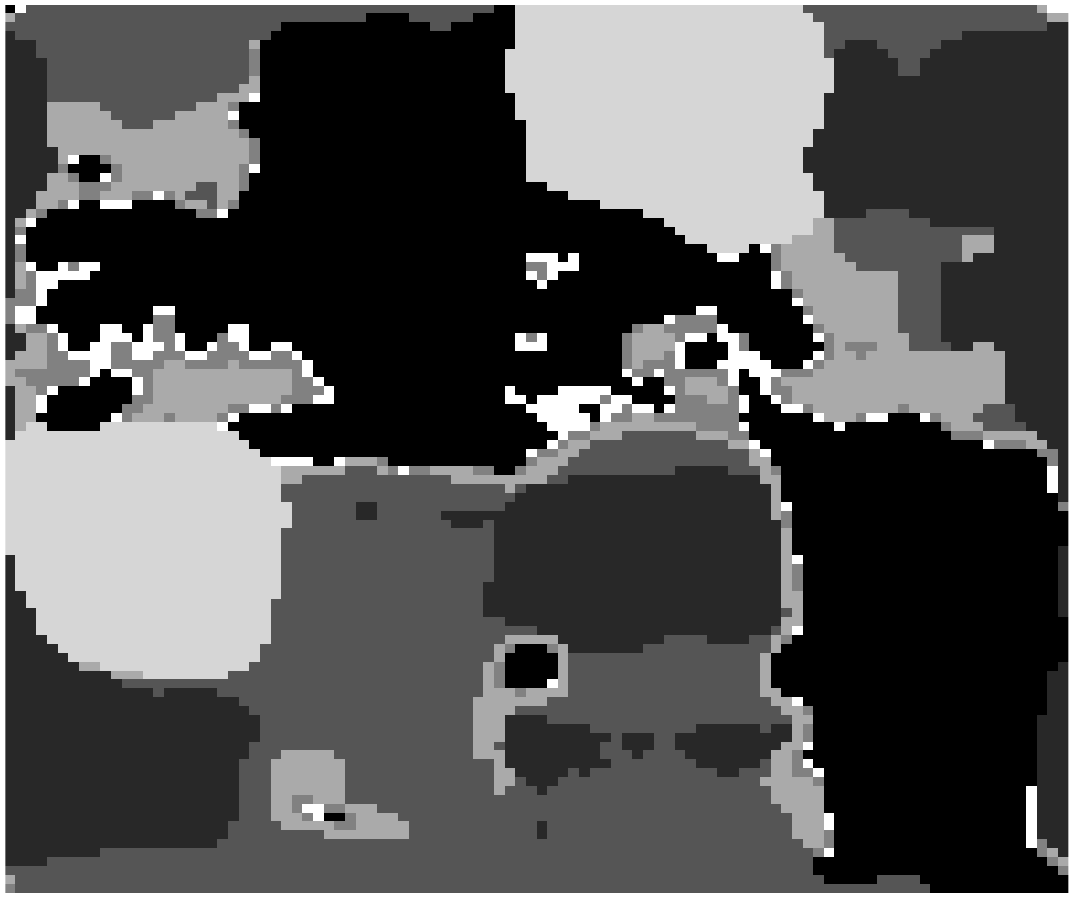}}
		\quad
		\subfloat[AFCM-ER-LS-L1] {\includegraphics[width=0.14\textwidth]{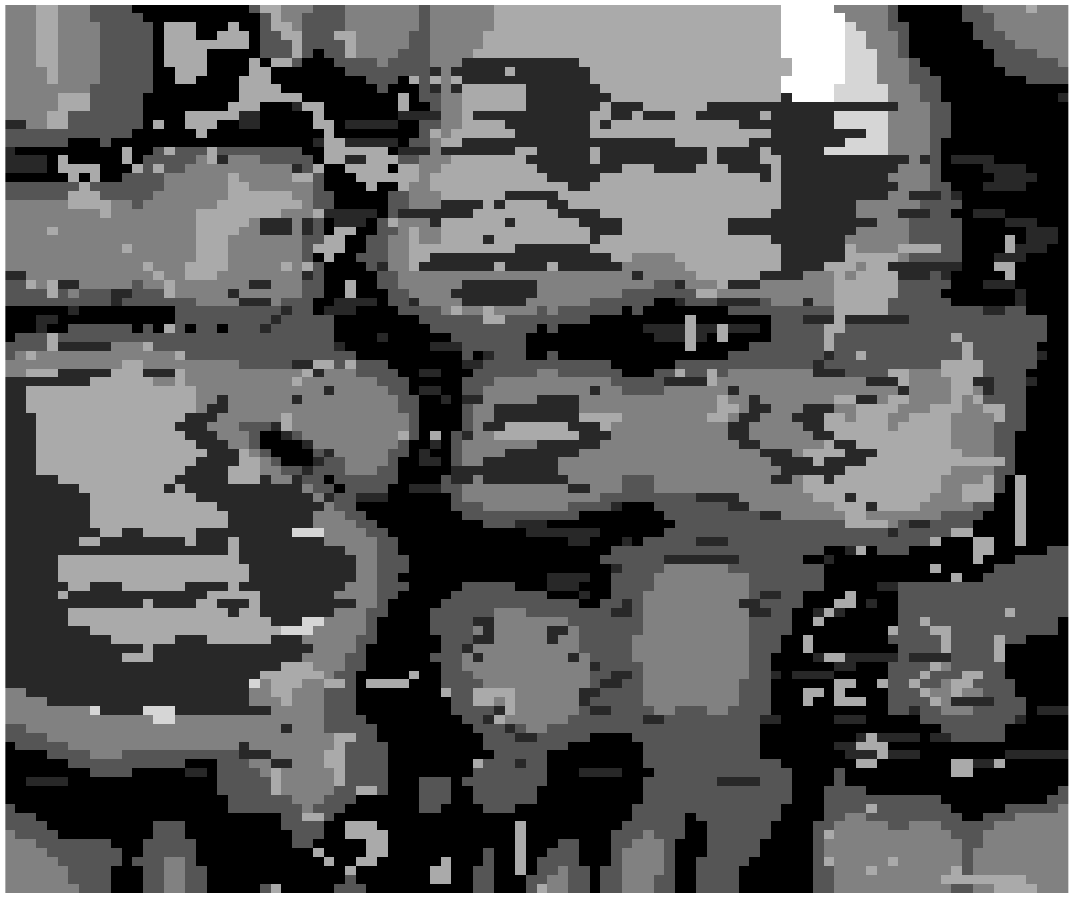}}
		\caption{Segmentation results of each algorithm for the 7-textural image without and with Gaussian noise. The first and the third row show the segmentation results for the original 7-textural image. The second and the fourth row present the obtained segmentation for the 7-textural image with Gaussian noise.}
		\label{img:ResultsTextImagesImage7}
	\end{figure}
	
In this application results, we observe that, in general, methods based on City-Block distance degrade their performance more slowly compared with those based on Mahalanobis and Euclidean distance in a noisy environment. The first and second rows of Figures \ref{img:ResultsTextImagesCross}, \ref{img:ResultsTextImagesCircle}, \ref{img:ResultsTextImagesImage7}, and Table \ref{tab:imageResults} show that for the 2-textural image without noise the best result according to $HUL$ was obtained by FCM-ER-L1 algorithm. However, AFCM-ER-LP-L1 and AFCM-ER-GP-L1 achieved the second and third best performance, respectively. For the $ARI$ values, AFCM-ER-LP-L1 and AFCM-ER-GP-L1 outperform the other approaches. The AFCM-ER-LS-L2 algorithm presented the worst results for both indices. In the case of the 5-textural image, AFCM-ER-LS-L2 showed a higher value of $HUL$ compared with the other algorithms, but AFCM-ER-GP-L1 yielded the highest clustering result for $ARI$ and produced better segmentation results. AFCM-ER-M presented the worst results for $HUL$ and AFCM-ER-GS-L1 for $ARI$. 
For 7-textural image, AFCM-ER-M reached the best results whatever the considered index. 
AFCM-ER-GS-L1 and AFCM-ER-LS-L1 obtained the worst performance for $HUL$ and $ARI$ respectively.

For the case of images with Gaussian noise, Table \ref{tab:imageResultsN} and the second and fourth row of the Figures \ref{img:ResultsTextImagesCross}, \ref{img:ResultsTextImagesCircle} and \ref{img:ResultsTextImagesImage7} show that, for both indexes $HUL$ and $ARI$,  AFCM-ER-GP-L1 and AFCM-ER-GP-L2 obtained the best results for the 2 and 5-textural images respectively. For 7-textural image according to $HUL$, AFCM-ER-GP-L2 presented better clustering results, and AFCM-ER-GP-L1 is the best according to $ARI$. AFCM-ER-Mk achieved the worst results for 2-textural images for $HUL$ and AFCM-ER-LS-L2 for $ARI$. 
The AFCM-ER-M algorithm had the worst performance regarding both indexes for 5 and 7-textural images. 

\section{Conclusions} \label{sect:conclusion}

In this paper, we proposed new fuzzy clustering algorithms based on suitable Adaptive Euclidean, Mahalanobis and City-Block distances, and entropy regularization. Moreover, adaptive distances were used, which changes at each algorithm iteration and can either be the same for all clusters or different from one cluster to another. These kind of dissimilarity measures are suitable to learn the weights of the variables during the clustering process, improving the performance of the algorithms.

The proposed algorithms are based on the minimization of clustering criteria, that is performed on three steps (representation, weighting, and assignment) providing a fuzzy partition, a representative for each fuzzy cluster, and a relevance weight for each variable (if the comparison between the objects and the prototypes of the clusters uses a Euclidean distance) or a matrix of weights (if the comparison between the objects and the prototypes of the clusters uses a quadratic distance). To take into account the relevance weights of the variables, we considerate two types of constraints. In the first type, the sum of the weights of the variables or the sum of the variables weights on each cluster must be equal to one, whereas the other type assumes that the product of the weights of the variables or the product of the weights of the variables on each cluster must be equal to one.

The performance and usefulness of the proposed algorithms have been illustrated through experiments carried out on suitable synthetic and real datasets. In the first simulation study, it is observed that for data with the cluster covariance matrices diagonal and almost the same, %the methods with global adaptive distance for the product of the weights of the variables outperform the methods based on local adaptive distance. The same happens for the case of the sum of the weights of the variables.
almost all the methods with global adaptive distance outperformed their respective variants based on local adaptive distance, i.e., the methods AFCM-ER-M, AFCM-ER-GP-L1, AFCM-ER-GP-L2, AFCM-ER-GS-L1 outperformed, respectively, the methods AFCM-ER-Mk, AFCM-ER-LP-L1, AFCM-ER-LP-L2, AFCM-ER-LS-L1. AFCM-ER-GS-L2 outperforms AFCM-ER-LS-L2 concerning $ARI$ index, but AFCM-ER-LS-L2 surpasses AFCM-ER-GS-L2 regarding $HUL$ index. However, for data with cluster covariance matrices diagonal but unequal, %methods with local adaptive distance presented better results compared with methods based on the global adaptive distance for the product of the weights of the variables. This same behavior was presented for the sum of the weights of the variables.
almost all the methods with local adaptive distance presented better results compared with their respective variants based on global adaptive distance: the methods AFCM-ER-Mk, AFCM-ER-LP-L2, AFCM-ER-LS-L1, AFCM-ER-LS-L2 surpassed, respectively, the methods AFCM-ER-M, AFCM-ER-GP-L2, AFCM-ER-GS-L1, AFCM-ER-GS-L2. AFCM-ER-LP-L1 surpasses AFCM-ER-GP-L1 concerning $ARI$ index, but AFCM-ER-GP-L1 outperforms AFCM-ER-LP-L1 regarding $HUL$ index. %For datasets with cluster covariance matrices diagonal but almost the same, the best result was presented by the proposed algorithm based on Mahalanobis distance, where the covariance matrix is the same for all clusters (AFCM-ER-M). For the last data configuration when the cluster covariance matrices are not diagonal and unequal, the algorithm based on Mahalanobis distance, where the covariance matrices are estimated locally (AFCM-ER-Mk) outperforms the other approaches. 
For datasets with cluster covariance matrices that are almost the same but not diagonal, the best performance for $HUL$ was presented by AFCM-ER-GP-L1 and for $ARI$ by AFCM-ER-M. Concerning this last index, the methods with global adaptive distance outperformed their respective variants based on local adaptive distance, i.e., the methods AFCM-ER-M, AFCM-ER-GP-L1, AFCM-ER-GP-L2, AFCM-ER-GS-L1 and AFCM-ER-GS-L2 outperformed, respectively, the methods AFCM-ER-Mk, AFCM-ER-LP-L1, AFCM-ER-LP-L2, AFCM-ER-LS-L1, and AFCM-ER-LS-L2. Finally, for the last data configuration where the cluster covariance matrices are not diagonal and unequal, the algorithm AFCM-ER-Mk outperforms the other approaches for both indices.

The second simulation study evaluated the robustness of the algorithms in the presence of outliers. For this experiment, three different percentages of outliers have been added to the dataset. The results showed that the algorithms with City-Block distance were more robust and performed better than those based on Euclidean and Mahalanobis distances, being able to identify the clusters in a noisy environment, and the clustering performance degrades very slowly as the percentage of outliers increases. It is also observed that those methods are more robust concerning the ability to produce cluster prototypes which are similar to the ideal centers.

Respect to the benchmark datasets, the proposed AFCM-ER-GP-L1 algorithm presents the best average performance ranking. Moreover, the FCM-ER-L1 and AFCM-ER-LP-L1 algorithms achieved the second and third best average ranking respectively for $HUL$ and, AFCM-ER-LS-L1 and AFCM-ER-LP-L2 for $ARI$. The AFCM-ER-GS-L2 algorithm obtained the worst results for $HUL$ and the AFCM-ER-Mk for $ARI$.

Finally, all algorithms were executed on the Brodatz texture image dataset to examine the clustering performance and robustness for noise-free and noisy texture-image segmentation. For image without noise, FCM-ER-L1 algorithm obtained the best performance according to $HUL$, and AFCM-ER-LP-L1 for $ARI$ in the 2-textural image. For 5-textural image, AFCM-ER-LS-L2 showed a higher value of $HUL$, but AFCM-ER-GP-L1 yielded the highest clustering result for $ARI$ and produced better segmentation results. For 7-textural image, AFCM-ER-M reached the best results whatever the considered index. Moreover, concerning images with Gaussian noise, it was observed that methods based on City-Block distance generally degrade their performance more slowly compared with those based on Mahalanobis and Euclidean distance. 
%The AFCM-ER-GP-L1 and AFCM-ER-GP-L2 algorithms obtained the best results for the 2 and 5-textural images, respectively. For 7-textural image according to $HUL$, AFCM-ER-GP-L2 presented better clustering results and AFCM-ER-GP-L1 for $ARI$.}

%AFCM-ER-GP-L1 and AFCM-ER-GP-L2 obtained the best results for the 2 and 5-textural images respectively. For 7-textural image according to $HUL$, AFCM-ER-GP-L2 presented better clustering results and AFCM-ER-GP-L1 for $ARI$. Another observation is that generally, methods based on City-Block distance degrade their performance more slowly compared with those based on Mahalanobis and Euclidean distance for images with Gaussian noise.

\section*{Acknowledgments}

The authors would like to thank to Funda\c{c}\~ao de Amparo \`a Ci\^encia e Tecnologia do Estado de Pernambuco - FACEPE (PBPG-0402-1.03/17) and Conselho Nacional de Desenvolvimento Cient\'ifico e Tecnol\'ogico - CNPq (303187/2013-1) for their financial support.

\section{References}
\bibliographystyle{elsarticle-num}
\bibliography{bibliografia}

\appendix

\section{Proof of the Proposition \ref{prop:calv}
\label{proof-prop-calcv}}
The covariance matrix or the weights of the variables, which minimize the proposed objective functions are calculated according to the adaptive distance function used:

(a) If the adaptive distance function is given by $(\mathbf{x}_i-\mathbf{g}_k)^T \mathbf{M}(\mathbf{x}_i-\mathbf{g}_k)$, the global covariance matrix $\mathbf{M}$ for all cluster which minimize $J_{AFCM-ER-M}$ under $\det(M)=1$ is obtained with Equation \ref{eq:covM}.

(b) If the adaptive distance function is given by $(\mathbf{x}_i-\mathbf{g}_k)^T \mathbf{M}_k(\mathbf{x}_i-\mathbf{g}_k)$, the local covariance matrices for each cluster $\mathbf{M}_k$ which minimize $J_{AFCM-ER-Mk}$ under $\det(Mk)=1$ is obtained with Equation \ref{eq:covMk}.

(c) If the adaptive distance function is given by $\sum_{j=1}^{P}v_{j}d(x_{ij},g_{kj})$, the vector of weights $v_j, (j=1,...,P)$ which minimizes the criterion $J_{AFCM-ER-GS}$ under $\epsilon$ $[0,1]$ $\forall$ $j$ and $\sum_{j=1}^{P}v_j=1$ has its components in $v_j, (j=1,...,P)$ computed according to Eq \ref{eq:weightsgl2} if $d$ is the square Euclidean distance, else according to Eq. \ref{eq:weightsgl1} if $d$ is the City-Block distance.

(d) If the adaptive distance function is given by $\sum_{j=1}^{P}v_{j}d(x_{ij},g_{kj})$, the vector of weights $\boldsymbol{v} = (v_{1}, \ldots, v_{P})$ which minimizes the criterion $J_{AFCM-ER-GP}$ under $v_j>0$ $\forall$ $j$ and $\prod_{j=1}^{P}v_{j}=1$, has its components $v_j(j=1,...,P)$ computed according to Eq. \ref{eq:weightpgl2} if $d$ is the square Euclidean distance and according to Eq. \ref{eq:weightpgl1} if $d$ is the City-Block distance.

(e) If the adaptive distance function is given by $\sum_{j=1}^{P}v_{kj}|x_{ij}-g_{kj}|$, the vector of weights $\boldsymbol{v}_k = (v_{k1}, \ldots, v_{kP})$ which minimizes the criterion $J_{AFCM-ER-LS-L1}$ under $v_{kj}\geq 0$ $\forall$ $k,j$ $\forall$ $j$ and $\sum_{j=1}^{P}v_{kj}=1$ has its components in $v_{kj}(k=1,...C, j=1,...,P)$ computed according to Eq. \ref{eq:weightsl}.

\begin{proof}
(a) We want to minimize $J_{AFCM-ER-M}$ with respect to $\mathbf{M}$ under $\det(M)=1$. Let the
Lagrangian function be:
\begin{equation}
\mathcal{L}=\sum_{k=1}^{C}\sum_{i=1}^{N} u_{ik}(\mathbf{x}_i-\mathbf{g}_k)^T\mathbf{M}(\mathbf{x}_i-\mathbf{g}_k)+ T_u\sum_{k=1}^{C}\sum_{i=1}^{N}u_{ik}\ln(u_{ik})+\beta\left[1-\det(M)\right]
\end{equation}

Taking the derivative $\frac{\partial \mathcal{L}}{\partial \mathbf{M}}$ and using the identities $\frac{\partial(\mathbf{y}^T\mathbf{M}\mathbf{y})}{\partial \mathbf{M}}=\mathbf{y}\mathbf{y}^T$, $\frac{\partial \det(\mathbf{M})}{\partial \mathbf{M}}=\det(\mathbf{M})\mathbf{M}^{-1}$ which hold for a non-singular matrix $\mathbf{M}$ and any compatible vector $\mathbf{y}$:
\begin{equation}\label{eq:derM1}
\frac{\partial \mathcal{L}}{\partial \mathbf{M}}=\sum_{k=1}^{C}\sum_{i=1}^{N} u_{ik}(\mathbf{x}_i-\mathbf{g}_k)(\mathbf{x}_i-\mathbf{g}_k)^T - \beta\det(M)M^{-1}=0
\end{equation}

It follows that $M^{-1}=\frac{Q}{\beta}$ where $Q=\sum_{k=1}^{C}C_k$ and $C_k=\sum_{i=1}^{N} u_{ik}(\mathbf{x}_i-\mathbf{g}_k)(\mathbf{x}_i-\mathbf{g}_k)^T$ because $\det(\mathbf{M})=1$. As $\det(\mathbf{M^{-1}})=\frac{1}{\det(\mathbf{M})}=1$, from $M^{-1}=\frac{-Q}{\beta}$ it follows that $\det(\mathbf{M^{-1}})=\frac{\det(Q)}{\beta^P}=1$, then $\beta=(\det(Q))^{\frac{1}{P}}$. Moreover, as $\mathbf{M^{-1}}=\frac{Q}{\beta}=\frac{Q}{(\det(Q))^{\frac{1}{P}}}$, it follows also that $M=(\det(Q))^{\frac{1}{P}} Q^{-1}$.

An extremum value of $J_{AFCM-ER-M}$ is reached when $M=(\det(Q))^{\frac{1}{P}} Q^{-1}$. This extremum value is $J_{AFCM-ER-M}((\det(Q))^{\frac{1}{P}} Q^{-1})=trace[Q(\det(Q))^{\frac{1}{P}} Q^{-1}]=p\det(Q))^{\frac{1}{P}}$. On the other hand $J_{AFCM-ER-M}(\mathbf{I})=trace[Q\mathbf{I}]=trace[Q]$. As a positive definite symmetric matrix, $Q=\mathbf{P}\Lambda \mathbf{P}^T$ (according to the singular value decomposition procedure) where: $\mathbf{P}\mathbf{P}^T=\mathbf{P}^T \mathbf{P}=\mathbf{I}$, $\Lambda=diag(\varsigma_1,...,\varsigma_P)$, and $\varsigma_j(j=1,...,P)$ are the eigenvalues of $Q$. Thus $J_{AFCM-ER-M}(\mathbf{I})=trace[\mathbf{P}\Lambda \mathbf{P}^T]=trace[\Lambda]=\sum_{j=1}^{P}\varsigma_j$. Moreover, $\det(Q)=\det(\mathbf{P}\Lambda \mathbf{P}^T)=\det(\Lambda)=\prod_{j=1}^{P}\varsigma_j$. As it is well known that the arithmetic mean is greater than the geometric mean, i.e., $(1/P)(\varsigma_1+...+\varsigma_P)>\{\varsigma_1\times..\times\varsigma_P\}^{1/P}$ (the equality holds only if $\varsigma_1=...=\varsigma_P$, it follows that $J_{AFCM-ER-M}(\mathbf{I})>J_{AFCM-ER-M}(\det(Q))^{\frac{1}{P}} Q^{-1})$. Thus, we conclude that this extreme is a minimum.

\textbf{Remark}: The matrix $C_k$ is related to the fuzzy covariance matrix in the $k$-th cluster, and therefore the matrix $\mathbf{M}$ is related to the pooled fuzzy covariance matrix.

(b) Following a similar reasoning as in part (a) we conclude that $\mathbf{M}_k=[\det(C_k)]^\frac{1}{P} C_k^{-1}$  with $\quad C_k=\sum^{N}_{i=1}u_{ik}(\mathbf{x}_i-\mathbf{g}_k)(\mathbf{x}_i-\mathbf{g}_k)^T$.

(c) We want to minimize $J_{AFCM-ER-LS-L1}$ with respect to $v_{kj},(k=1,...C, j=1,...,P)$ under $\epsilon [0,1]$ $\forall j $ and $\sum_{j=1}^{P}v_{kj}=1$. We use the Lagrangian multiplier %technique 
to solve the unconstrained minimization problem in Eq. \ref{eq:functionsl}.
  
\begin{align}\label{eq:lagvsl}
  \mathcal{L}=\sum_{k=1}^{C}\sum_{i=1}^{N} u_{ik} \sum_{j=1}^{P} v_{kj}|x_{ij}-g_{kj}| + T_u\sum_{k=1}^{C}\sum_{i=1}^{N}u_{ik}\ln(u_{ik})\nonumber\\
  +T_v\sum_{j=1}^{P}v_{kj}\ln(v_{kj}) - \sum_{k=1}^{C}\gamma_k \left[\sum_{j=1}^{P}v_{kj}-1\right]
\end{align}

Taking the partial derivative of $\mathcal{L}$ in Eq. \ref{eq:lagvsl} with respect to $v_{kj}$ and setting the gradient to zero we have:

\begin{equation}\label{eq:lagvsl1}
  \frac{\partial \mathcal{L}}{\partial v_{kj}}=\sum_{i=1}^{N} u_{ik}|x_{ij}-g_{kj}| + T_v(\ln(v_{kj}) + 1) - \gamma_k=0
\end{equation}

From Eq. \ref{eq:lagvsl1}, is obtained
\begin{equation}\label{eq:formvsl1}
  v_{kj}= \exp\{\frac{\gamma_k}{T_v}-1\}\exp\{-\frac{\sum_{i=1}^{N}u_{ik}|x_{ij}-g_{kj}|}{T_v}\}
\end{equation}

Substituting Eq. \ref{eq:lagvsl1} in  $\sum_{w=1}^{P}v_{kj}=1$ we have
\begin{equation}\label{eq:sumvl1}
  \sum_{w=1}^{P}v_{kw}=\sum_{w=1}^{P}\exp\{\frac{\gamma_k}{T_v}-1\}\exp\{-\frac{\sum_{i=1}^{N}u_{ik}|x_{iw}-g_{kw}|}{T_v}\}=1
\end{equation}

It follows that
\begin{equation}\label{eq:valorcontvsl1}
  \exp\{\frac{\gamma_k}{T_v}-1\}=\frac{1}{\sum_{w=1}^{P}\exp\{-\frac{\sum_{i=1}^{N}u_{ik}|x_{iw}-g_{kw}|}{T_v}\}}
\end{equation}

Substituting Eq. \ref{eq:valorcontvsl1} in Eq. \ref{eq:lagvsl1} we obtain
\begin{equation}
  v_{kj}=\frac{\exp\{-\frac{\sum_{i=1}^{N}u_{ik}|x_{ij}-g_{kj}|}{T_v}\}}{\sum_{w=1}^{P}\exp\{-\frac{\sum_{i=1}^{N}u_{ik}|x_{iw}-g_{kw}|}{T_v}\}}
\end{equation}

Also we have that
\begin{equation}\label{eq:derv}
\frac{\partial J_{AFCM-ER-LS-L1}}{\partial v_{kj}}=\sum_{i=1}^{N} u_{ik}|x_{ij}-g_{kj}| + T_v(\ln(v_{kj}) + 1)
\end{equation}

then, $\frac{\partial^2 J_{AFCM-ER-LS-L1}}{\partial v_{kj}}=\frac{T_v}{v_{kj}}$.

The Hessian matrix of $J_{AFCM-ER-LS-L1}$ with respect to $\mathbf{V}$ is: %evaluated at $v_{kj}=(v_{k1}, \ldots, v_{kP})$ is

\[\partial^2 J_{AFCM-ER-LS-L1}(\mathbf{V})=
\begin{bmatrix}
   \frac{T_v}{v_{11}} & \dots & 0 \\
    \hdotsfor{3} \\
   0 & \dots & \frac{T_v}{v_{CP}}
\end{bmatrix}
\]

Since we know $T_v > 0$ and $v_{kj} \geq 0$ (according to Eq. \ref{eq:weightsl}), the Hessian matrix $\partial^2 J_{AFCM-ER-LS-L1}(\mathbf{V})$ is positive definite, so that we can conclude that this extremum is a minimum.

(d) Following a similar reasoning as in part (c) we conclude that

\begin{equation*}
  v_j=\frac{exp\{-\frac{\sum_{k=1}^{C}\sum_{i=1}^{N}u_{ik}d(x_{ij},g_{kj})}{T_v}\}}{\sum_{w=1}^{P}exp\{-\frac{\sum_{k=1}^{C}\sum_{i=1}^{N}u_{ik}d(x_{iw},g_{kw})}{T_v}\}}
\end{equation*}

(e) We want to minimize $J_{AFCM-ER-GP}$ with respect to $v_j$, $(k=1,...,C)$, under $v_{j}>0$ $\forall$ $j$ and $\prod_{j=1}^{P}v_{j}=1$. We use the Lagrangian multiplier technique to solve the unconstrained minimization problem in Eq. \ref{eq:functionpg}.

\begin{align}\label{eq:pvargp1}
 \mathcal{L}=\sum_{i=1}^{N}\sum_{k=1}^{C}\sum_{j=1}^{P}u_{ik}v_{j}d(x_{ij},g_{kj})^2
 +T_u\sum_{i=1}^{N}\sum_{k=1}^{C}u_{ik}\ln(u_{ik})-\gamma  \left[\prod_{j=1}^{P}v_{j}-1\right]
\end{align}

Taking the partial derivative of $\mathcal{L}$ in Eq. \ref{eq:pvargp1} with respect to $v_{j}$ and setting the gradient to zero we have:

\begin{equation}\label{eq:pvargp2}
 \frac{\partial \mathcal{L}}{\partial v_{j}}=\sum_{i=1}^{N}\sum_{k=1}^{C}u_{ik}d(x_{ij},g_{kj})-\frac{\gamma}{v_{j}}=0
\end{equation}

From Eq. \ref{eq:pvargp2}, is obtained

\begin{equation}\label{eq:pvargp3}
  v_{j}=\frac{\gamma}{\sum_{i=1}^{N}\sum_{k=1}^{C}u_{ik}d(x_{ij},g_{kj})}
\end{equation}

Substituting Eq. \ref{eq:pvargp3} in $\prod_{j=1}^{P}v_{j}=1$ we have

\begin{equation}\label{eq:pvarlp4}
  \prod_{w=1}^{P}v_{w}=\prod_{w=1}^{P}\frac{\gamma}{\sum_{i=1}^{N}\sum_{k=1}^{C}u_{ik}d(x_{ij},g_{kj})}=1
\end{equation}

It follows that

\begin{equation}\label{eq:pvargp5}
 \gamma=\{\prod_{h=1}^{P}\sum_{i=1}^{N}\sum_{k=1}^{C}u_{ik}d(x_{ij},g_{kj}\}^{\frac{1}{P}}
\end{equation}

Substituting Eq. \ref{eq:pvargp5} in Eq. \ref{eq:pvargp3} we obtain

\begin{equation}\label{eq:pvarlp6}
  v_{j}=\frac{\{\prod_{h=1}^{P}\sum_{i=1}^{N}\sum_{k=1}^{C}u_{ik}d(x_{ij},g_{kj})\}^{\frac{1}{P}}}{\sum_{i=1}^{N}\sum_{k=1}^{C}u_{ik}d(x_{ij},g_{kj})}
\end{equation}

If we rewrite the criterion $J_{AFCM-ER-GP}$  as $J(v_1,...,v_P)=\sum_{j=1}^{P}v_{j}J_{j}$ where $J_{j}\sum_{i=1}^{N}u_{ik}d(x_{ij},g_{kj})$ and $T_u\sum_{i=1}^{N}\sum_{k=1}^{C}u_{ik}\ln(u_{ik})$) is seem like a constant. Thus, an extreme value of $J$ is reached when $J(v_{1},...,v_{P})=p\{J_{1},...,J_{P}\}^\frac{1}{P}$. As $J(1,...,1)=\sum_{j=1}^{P}J_{j}=J_{1}+...+J_{P}$, and as it is well known that the arithmetic mean is greater than the geometric mean, i.e., $\frac{1}{P}\{J_{1}+...+J_{P}\}>\{J_{1}\times...\times J_{P}\}^\frac{1}{P}$, (the equality holds only if $J_{1}=\ldots=J_{P}$), we conclude that this extremum is a minimum.

Thus, Proposition \ref{prop:calv} was proved.

\end{proof}

\section{Proof of the Proposition \ref{prop:calu} \label{proof-prop-calcu}}

\begin{proof}

We want to minimize the clustering criterion with respect to $u_{ik}$ under $u_{ik}\in [0,1]$ and $\sum_{k=1}^{C}u_{ik}=1$.

(a) If the adaptive distance function is given by $(\mathbf{x}_i-\mathbf{g}_k)^T\mathbf{M}(\mathbf{x}_i-\mathbf{g}_k)$ and we want to minimizes $J_{AFCM-ER-M}$ with respect to $u_{ik}$ under $u_{ik}\in [0,1]$ and $\sum_{k=1}^{C}u_{ik}=1$. Let the Lagrangian function be:

\begin{equation}\label{eq:laguik}
\mathcal{L}=\sum_{k=1}^{C}\sum_{i=1}^{N} u_{ik}(\mathbf{x}_i-\mathbf{g}_k)^T\mathbf{M}(\mathbf{x}_i-\mathbf{g}_k)+ T_u\sum_{k=1}^{C}\sum_{i=1}^{N}u_{ik}\ln(u_{ik})-\sum_{i=1}^{N}\lambda_i\left[\sum_{k=1}^{C}u_{ik}-1\right]
\end{equation}

Taking the partial derivative of $\mathcal{L}$ with respect to $u_{ik}$ and setting the gradient to zero we have:

\begin{equation}\label{eq:laguik1}
\frac{\partial \mathcal{L}}{\partial u_{ik}}=(\mathbf{x}_i-\mathbf{g}_k)^T\mathbf{M}(\mathbf{x}_i-\mathbf{g}_k)+T_u(\ln(u_{ik})+1)-\lambda_i=0
\end{equation}

From Eq. \ref{eq:laguik1} is obtained:

\begin{equation}\label{eq:laguik2}
u_{ik}=\exp\{\frac{\lambda_i}{T_u}-1\}\exp\{-\frac{(\mathbf{x}_i-\mathbf{g}_k)^T\mathbf{M}(\mathbf{x}_i-\mathbf{g}_k)}{T_u}\}
\end{equation}

If $\sum_{w=1}^{C}u_{iw}=1$ then:

\begin{equation}\label{eq:laguik3}
\sum_{w=1}^{C}\exp\{\frac{\lambda_i}{T_u}-1\}\exp\{-\frac{(\mathbf{x}_i-\mathbf{g}_w)^T\mathbf{M}(\mathbf{x}_i-\mathbf{g}_w)}{T_u}\}=1
\end{equation}

From Eq. \ref{eq:laguik3} we have that:

\begin{equation}\label{eq:laguik4}
\exp\{\frac{\lambda_i}{T_u}-1\}=\frac{1}{\sum_{w=1}^{C}\exp\{-\frac{(\mathbf{x}_i-\mathbf{g}_w)^T\mathbf{M}(\mathbf{x}_i-\mathbf{g}_w)}{T_u}\}}
\end{equation}

Substituting Eq. \ref{eq:laguik4} in Eq. \ref{eq:laguik2} we have:

\begin{equation}
u_{ik}=\frac{\exp\{-\frac{(\mathbf{x}_i-\mathbf{g}_k)^T\mathbf{M}(\mathbf{x}_i-\mathbf{g}_k)}{T_u}\}}{\frac{1}{\sum_{w=1}^{C}\exp\{-\frac{(\mathbf{x}_i-\mathbf{g}_w)^T\mathbf{M}(\mathbf{x}_i-\mathbf{g}_w)}{T_u}\}}}
\end{equation}

Additionally, we know that:

\begin{equation}
\frac{\partial J_{AFCM-ER-M}}{\partial u_{ik}}=(\mathbf{x}_i-\mathbf{g}_k)^T\mathbf{M}(\mathbf{x}_i-\mathbf{g}_k)+T_u(\ln(u_{ik})+1)
\end{equation}

and $\frac{\partial^2 J_{AFCM-ER-M}}{\partial u_{ik}}=\frac{T_u}{u_{ik}}$.

The Hessian matrix of $J_{AFCM-ER-M}$ according to $\mathbf{U}$ is:%evaluated at $u_{ik}=(u_{i1},\dots,u_{iC})$ is:

\[\partial^2 J_{AFCM-ER-M}(\mathbf{U})=
\begin{bmatrix}
\frac{T_u}{u_{11}} & \dots & 0 \\
\hdotsfor{3} \\
0 & \dots & \frac{T_u}{u_{NC}}
\end{bmatrix}
\]

Since $T_u > 0$ and $u_{ik} \geq 0$, the Hessian matrix $\partial^2 J_{AFCM-ER-M}(\mathbf{U})$ is positive definite, so that we can conclude that this extremum is a minimum.

Because the solution for the fuzzy partition does not depend on the distance function, the matrix of membership degree of the objects into the fuzzy clusters for the other proposed approaches is obtained in a similar way as in part (a).

Thus, Proposition \ref{prop:calu} was proved.

\end{proof}

\section{Proof of the Proposition \ref{prop:convergency1} \label{proof-conv-1}}
\begin{enumerate}[i)]
    \item The series $u^{(t)}_{AFCM-ER-M}=J_{AFCM-ER-M}(v_{AFCM-ER-M}^{(t)})=\\J_{AFCM-ER-M}(\mathbf{G}^{(t)}, \mathbf{m}^{(t)}, \mathbf{U}^{(t)}), t=0,1,\dots$,  decreases at each iteration and converge;
    \item The series $u^{(t)}_{AFCM-ER-Mk}=J_{AFCM-ER-Mk}(v_{AFCM-ER-Mk}^{(t)})=\\J_{AFCM-ER-Mk}(\mathbf{G}^{(t)}, \mathbf{M}^{(t)}, \mathbf{U}^{(t)}), t=0,1,\dots$,  decreases at each iteration and converge;
    \item The series $u^{(t)}_{AFCM-ER-GS}=J_{AFCM-ER-GS}(v_{AFCM-ER-GS}^{(t)})=\\J_{AFCM-ER-GS}(\mathbf{G}^{(t)}, \mathbf{v}^{(t)}, \mathbf{U}^{(t)}), t=0,1,\dots$,  decreases at each iteration and converge;
    \item The series $u^{(t)}_{AFCM-ER-GP}=J_{AFCM-ER-GP}(v_{AFCM-ER-GP}^{(t)})=\\J_{AFCM-ER-GP}(\mathbf{G}^{(t)}, \mathbf{v}^{(t)}, \mathbf{U}^{(t)}), t=0,1,\dots$,  decreases at each iteration and converge;
    \item The series $u^{(t)}_{AFCM-ER-LS-L1}=J_{AFCM-ER-LS-L1}(v_{AFCM-ER-LS-L1}^{(t)})=\\J_{AFCM-ER-LS-L1}(\mathbf{G}^{(t)}, \mathbf{V}^{(t)}, \mathbf{U}^{(t)}), t=0,1,\dots$,  decreases at each iteration and converge;
\end{enumerate}

\begin{proof}
\begin{itemize}[i)]
    \item The series $u^{(t)}_{AFCM-ER-M}=J_{AFCM-ER-M}(v_{AFCM-ER-M}^{(t)})=\\J_{AFCM-ER-M}(\mathbf{G}^{(t)}, \mathbf{m}^{(t)}, \mathbf{U}^{(t)}), t=0,1,\dots$,  decreases at each iteration and converge;
\end{itemize}

The objective function $J_{AFCM-ER-M}$ measures the heterogeneity of the partition as the sum of the heterogeneity in each cluster. We will first show that the inequalities (I), (II) and (III) below hold (i.e., the series decreases at each iteration).

$\underbrace{J_{AFCM-ER-M}(\mathbf{G}^{(t)},\mathbf{m}^{(t)},\mathbf{U}^{(t)})}_{u^{(t)}_{AFCM-ER-M}} \overbrace{\geq}^{(I)}J_{AFCM-ER-M}(\mathbf{G}^{(t+1)},\mathbf{m}^{(t)},\mathbf{U}^{(t)})$

$\overbrace{\geq}^{(II)}J_{AFCM-ER-M}(\mathbf{G}^{(t+1)},\mathbf{m}^{(t+1)},\mathbf{U}^{(t)})\overbrace{\geq}^{(III)}\underbrace{J_{AFCM-ER-M}(\mathbf{G}^{(t+1)},\mathbf{m}^{(t+1)},\mathbf{U}^{(t+1)})}_{u^{(t+1)}_{AFCM-ER-M}}$

The inequality (I) holds because $J_{AFCM-ER-M}(\mathbf{G}^{(t)},\mathbf{m}^{(t)},\mathbf{U}^{(t)})=\sum_{k=1}^{C}\sum_{i=1}^{N} (u_{ik}^{(t)}) \, d_{\mathbf{M}^{(t)}}(\mathbf{x}_{i}, \mathbf{g}_{k}^{(t)}) +T_u\sum_{k=1}^{C}\sum_{i=1}^{N} (u_{ik}^{(t)}) \ln(u_{ik}^{(t)})$ and $J_{AFCM-ER-M}(\mathbf{G}^{(t+1)},\mathbf{m}^{(t)},\mathbf{U}^{(t)})=\sum_{k=1}^{C}\sum_{i=1}^{N} (u_{ik}^{(t)}) \, d_{\mathbf{M}^{(t)}}(\mathbf{x}_{i}, \mathbf{g}_{k}^{(t+1)}) +T_u\sum_{k=1}^{C}\sum_{i=1}^{N} (u_{ik}^{(t)}) \ln(u_{ik}^{(t)})$, and according to Section \ref{sect:prototype},

\begin{equation*}
    \mathbf{G}^{(t+1)=(\mathbf{g}_1^{(t+1)},\dots,\mathbf{g}_C^{(t+1)}})=\underbrace{\argmin}_{\mathbf{G}=(\mathbf{g}_1,\dots,\mathbf{g}_C)\in \mathbb{L}^C}\sum_{k=1}^{C}\sum_{i=1}^{N} (u_{ik}^{(t)}) \, d_{\mathbf{M}^{(t)}}(\mathbf{x}_{i}, \mathbf{g}_{k}) +T_u\sum_{k=1}^{C}\sum_{i=1}^{N} (u_{ik}^{(t)}) \ln(u_{ik}^{(t)})
\end{equation*}

Moreover, inequality (II) holds because
$J_{AFCM-ER-M}(\mathbf{G}^{(t+1)},\mathbf{m}^{(t+1)},\mathbf{U}^{(t)})=\\
\sum_{k=1}^{C}\sum_{i=1}^{N} (u_{ik}^{(t)}) \, d_{\mathbf{M}^{(t+1)}}(\mathbf{x}_{i}, \mathbf{g}_{k}^{(t+1)}) +T_u\sum_{k=1}^{C}\sum_{i=1}^{N} (u_{ik}^{(t)}) \ln(u_{ik}^{(t)})$ and according to Proposition \ref{prop:calv},

\begin{equation*}
    \mathbf{m}^{(t+1)}=\underbrace{\argmin}_{\mathbf{m} \in \mathbb{M}}\sum_{k=1}^{C}\sum_{i=1}^{N} (u_{ik}^{(t)}) \, d_{\mathbf{M}}(\mathbf{x}_{i}, \mathbf{g}_{k}^{(t+1)}) +T_u\sum_{k=1}^{C}\sum_{i=1}^{N} (u_{ik}^{(t)}) \ln(u_{ik}^{(t)})
\end{equation*}

The inequality (III) also holds because $J_{AFCM-ER-M}(\mathbf{G}^{(t+1)},\mathbf{m}^{(t+1)},\mathbf{U}^{(t+1)})=\\
\sum_{k=1}^{C}\sum_{i=1}^{N} (u_{ik}^{(t+1)}) \, d_{\mathbf{M}^{(t+1)}}(\mathbf{x}_{i}, \mathbf{g}_{k}^{(t+1)}) +T_u\sum_{k=1}^{C}\sum_{i=1}^{N} (u_{ik}^{(t+1)}) \ln(u_{ik}^{(t+1)})$ and according to Proposition \ref{prop:calu},

\begin{eqnarray}
    \nonumber\mathbf{U}^{(t+1)}=(\mathbf{u}_1^{(t+1)},\dots,\mathbf{u}_N^{(t+1)})=\underbrace{\argmin}_{\mathbf{U}=(\mathbf{u}_1,\dots,\mathbf{u}_N)\in \mathbb{U}^N}\sum_{k=1}^{C}\sum_{i=1}^{N} (u_{ik}) \, d_{\mathbf{M}^{(t+1)}}(\mathbf{x}_{i}, \mathbf{g}_{k}^{(t+1)}) \\\nonumber+T_u\sum_{k=1}^{C}\sum_{i=1}^{N} (u_{ik}) \ln(u_{ik})
\end{eqnarray}

Finally, because the series $u^{(t)}_{AFCM-ER-M}$ decreases and it is bounded $(J(v_{AFCM-ER-M}^{(t)})\geq 0)$ it converges.

The proof of the convergence of the series $u_{AFCM-ER-Mk}^{(t)}, t=0,1,\dots$, $u_{AFCM-ER-GS}^{(t)}, t=0,1,\dots$, $u_{AFCM-ER-GP}^{(t)}, t=0,1,\dots$ and $u_{AFCM-ER-LS-L1}^{(t)}, t=0,1,\dots$ proceeds similarly to the proof of the convergence of the series $u_{AFCM-ER-M}^{(t)}, t=0,1,\dots$ presented above.
\end{proof}

\section{Proof of the Proposition \ref{prop:convergency2} \label{proof-conv-2}}

\begin{enumerate}[i)]
    \item The series $v_{AFCM-ER-M}^{(t)}=(\mathbf{G}^{(t)},\mathbf{m}^{(t)},\mathbf{U}^{(t)}), t=0,1,\dots,$ converges;
    \item The series $v_{AFCM-ER-Mk}^{(t)}=(\mathbf{G}^{(t)},\mathbf{M}^{(t)},\mathbf{U}^{(t)}), t=0,1,\dots,$ converges;
    \item The series $v_{AFCM-ER-GS}^{(t)}=(\mathbf{G}^{(t)},\mathbf{v}^{(t)},\mathbf{U}^{(t)}), t=0,1,\dots,$ converges;
    \item The series $v_{AFCM-ER-GP}^{(t)}=(\mathbf{G}^{(t)},\mathbf{v}^{(t)},\mathbf{U}^{(t)}), t=0,1,\dots,$ converges;
    \item The series $v_{AFCM-ER-LS-L1}^{(t)}=(\mathbf{G}^{(t)},\mathbf{V}^{(t)},\mathbf{U}^{(t)}), t=0,1,\dots,$ converges.
\end{enumerate}

\begin{proof}
\begin{itemize}[i)]
    \item The series $v_{AFCM-ER-M}^{(t)}=(\mathbf{G}^{(t)},\mathbf{m}^{(t)},\mathbf{U}^{(t)}), t=0,1,\dots,$ converges;
\end{itemize}

Assuming that the stationarity of the series $u_{AFCM-ER-M}^{(t)}$ is achieved in the iteration $t=T$, then, we have $u_{AFCM-ER-M}^{(T)}=u_{AFCM-ER-M}^{(T+1)}$ and then $J_{AFCM-ER-M}(v_{AFCM-ER-M}^{(T)})=J_{AFCM-ER-M}(v_{AFCM-ER-M}^{(T+1)})$.

From $J_{AFCM-ER-M}(v_{AFCM-ER-M}^{(T)})=J_{AFCM-ER-M}(v_{AFCM-ER-M}^{(T+1)})$ we arrive at 

\noindent $J_{AFCM-ER-M}(\mathbf{G}^{(T)},\mathbf{m}^{(T)},\mathbf{U}^{(T)})=J_{AFCM-ER-M}(\mathbf{G}^{(T+1)},\mathbf{m}^{(T+1)},\mathbf{U}^{(T+1)})$. This equality, according to Proposition \ref{prop:convergency2} , can be rewritten as the equalities (I)-(III):

$\underbrace{J_{AFCM-ER-M}(\mathbf{G}^{(T)},\mathbf{m}^{(T)},\mathbf{U}^{(T)})}_{u_{AFCM-ER-M}^{(T)}}\overbrace{=}^{(I)}J_{AFCM-ER-M}(\mathbf{G}^{(T+1)},\mathbf{m}^{(T)},\mathbf{U}^{(T)})$

$\overbrace{=}^{(II)}J_{AFCM-ER-M}(\mathbf{G}^{(T+1)},\mathbf{m}^{(T+1)},\mathbf{U}^{(T)})\overbrace{=}^{(III)}J_{AFCM-ER-M}(\mathbf{G}^{(T+1)},\mathbf{m}^{(T+1)},\mathbf{U}^{(T+1)})$

From the first equality (I), we have that $\mathbf{G}^{(T)}=\mathbf{G}^{(T+1)}$, because $\mathbf{G}$ is unique, minimizing $J_{AFCM-ER-M}$ when the fuzzy partition represented by $\mathbf{U}^{(T)}$ and the matrix $\mathbf{m}^{(T)}$ are kept fixed. From the second equality (II), we have that $\mathbf{m}^{(T)}=\mathbf{m}^{(T+1)}$ because $\mathbf{m}$ is unique, minimizing $J_{AFCM-ER-M}$, when the fuzzy partition represented by $\mathbf{U}^{(T)}$ and and the matrix of prototypes $\mathbf{G}^{(T+1)}$ are kept fixed. Furthermore, from the third equality (III), we have that $\mathbf{U}^{(T)}=\mathbf{U}^{(T+1)}$ because $\mathbf{U}$ is unique minimizing $J_{AFCM-ER-M}$ when  the matrix of prototypes $\mathbf{G}^{(T+1)}$ and the matrix $\mathbf{m}^{(T+1)}$ are kept fixed.

Therefore, it can be concluded that $v_{AFCM-ER-M}^{(T)}=v_{AFCM-ER-M}^{(T+1)}$. This conclusion stands for all $t\geq T$ and $v_{AFCM-ER-M}^{(t)}=v_{AFCM-ER-M}^{(T)}, \forall t\geq T$ and it follows that the series $v_{AFCM-ER-M}^{(t)}$ converges.

The proof of the convergence of the series $v_{AFCM-ER-Mk}^{(t)}, t=0,1,\dots$, $v_{AFCM-ER-GS}^{(t)}, t=0,1,\dots$, $v_{AFCM-ER-GP}^{(t)}, t=0,1,\dots$ and $v_{AFCM-ER-LS-L1}^{(t)}, t=0,1,\dots$ proceeds similarly to the proof of the convergence of the series $v_{AFCM-ER-M}^{(t)}$ presented above.
\end{proof}

\end{document}